\colorlet{colorWSPStrong}{red!75}
\colorlet{colorVanillaStrong}{blue!75}
\colorlet{colorWSP}{red!25}
\colorlet{colorVanilla}{blue!25}
\colorlet{colorVanillaClipStrong}{blue!45!purple}
\colorlet{colorVanillaClip}{blue!45!purple!25}
\colorlet{colorWSPNoClipStrong}{orange}
\colorlet{colorWSPNoClip}{orange!25}
\def\th@plain{%
  \thm@notefont{}
  \itshape 
}
\def\th@definition{%
  \thm@notefont{}
  \normalfont 
}
\def\1{\bm{1}}
\DeclareMathAlphabet{\mathsfit}{\encodingdefault}{\sfdefault}{m}{sl}
\SetMathAlphabet{\mathsfit}{bold}{\encodingdefault}{\sfdefault}{bx}{n}
\newcommand{\R}{\mathbb{R}}
\begin{document}

\doparttoc 
\faketableofcontents 
\part{} 

\title{
  Neural Stochastic Differential Equations on Compact State Spaces: Theory, Methods, and Application to Suicide Risk Modeling
}

\author[1,$*$]{Malinda Lu}
\author[1,$*$]{Yue-Jane Liu}
\author[2]{Matthew K.~Nock}
\author[1,$\dagger$]{Yaniv Yacoby}

\affil[1]{Wellesley College}
\affil[2]{Harvard University}
\affil[$*$]{Equal contribution}
\affil[$\dagger$]{Correspondence to: \url{yy109@wellesley.edu}}

\maketitle

\begin{abstract}
Ecological Momentary Assessment (EMA) studies enable the collection of high-frequency self-reports of suicidal thoughts and behaviors (STBs) via smartphones.
Latent stochastic differential equations (SDEs) are a promising model class for EMA data, as it is irregularly sampled, noisy, and partially observed.
But SDE-based models suffer from two key limitations.
(a) These models often violate domain constraints, undermining scientific validity and clinical trust of the model.
(b) Training is numerically unstable without ad hoc fixes (e.g.~oversimplified dynamics) that are ill-suited for high-stakes applications.
Here, we develop a novel class of expressive SDEs whose solutions are provably confined to a prescribed compact polyhedral state space, matching the domains of EMA data. 
In this work, (1) we show why chain-rule based constructions of SDEs on compact domains fail, theoretically and empirically; (2) we derive constraints on drift and diffusion for general and stationary SDEs so their solutions remain in the desired state space; and (3), we introduce a parameterization that maps arbitrary (neural or expert-given) dynamics into constraint-satisfying SDEs. 
On several real EMA datasets, including a large suicide-risk study, our parameterization improves forecasts and optimization dynamics over standard latent neural SDE baselines. 
These contributions pave the way for principled, trustworthy continuous-time models of suicide risk and other clinical time series and extend applications of SDE-based methods (e.g.~diffusion models) to domains with hard state constraints.
\end{abstract}

\section{Introduction and Related Work} \label{sec:intro}

Suicide is a leading cause of death worldwide~\citep{who2025suicide}, yet our ability to understand and predict suicidal thoughts and behaviors (STBs) remains limited~\citep{nock2026understanding}. 
A key difficulty is that transitions from suicidal ideation to action can occur rapidly~\citep{millner2017describing}, leaving narrow windows for observation. 
Ecological Momentary Assessment (EMA) studies~\citep{shiffman2008ecological} address this using smartphones to collect high-frequency self-reports of affects, emotions, and STBs in daily life. 
However, EMA time series are irregularly sampled, noisy, and partially observed, with dynamics influenced by unobserved external factors. 
These characteristics violate the assumptions of many conventional approaches, which rely on regular sampling, complete observations or ad hoc treatment of missingness.

Latent stochastic differential equations (SDEs) (e.g.~\citet{archambeau2007variational,li2020scalable}) are an expressive probabilistic framework that naturally accommodates irregular sampling, explicitly models latent psychological dynamics, and separates process noise from observation noise. 
SDEs take the form, $d z_t = h(t, z_t) \cdot dt + g(t, z_t) \cdot d B_t$, wherein the change in state, $d z_t$, is modeled as a sum of deterministic and stochastic components, the ``drift,'' $h$, and ``diffusion,'' $g$, respectively.
By parameterizing the dynamics (drift and diffusion) with neural networks (NNs), ``neural SDEs'' can capture volatile, nonlinear trajectories, making them one of the few viable models for EMA data.
However, these models suffer from two important limitations.

\paragraph{Limitation 1: Many applications of SDE-based models require SDE solutions to satisfy domain-specific constraints.}
Without additional structure, SDEs with expressive (e.g.~NN-based) dynamics can encode inappropriate inductive biases for a given domain, producing invalid forecasts that call into question the validity of the entire model.
In EMA data, most survey questions use a 0–10 Likert scale, so SDE solutions must also lie in a compact state space.
But instead, these SDEs may yield nonsensical forecasts, like a ``12 out of 10'' intensity of suicidal ideation.
To illustrate the importance of addressing such model misspecification, imagine a weather model that forecasts $70\%$ chance of 70\textdegree F, $20\%$ chance of 80\textdegree F, and $10\%$ chance of 5,000\textdegree F.
Once the model assigns significant probability mass to an impossible event, how correct are the $70\%$ and $20\%$ forecasts? 
And how should the impossible $10\%$ be reallocated?
For suicide prevention, analogous misspecifications are not merely inconvenient; they are dangerous.
A model that allocates probability mass to impossible psychological states may (a) learn incorrect dynamics, leading to false scientific conclusions, and/or (b) mislead downstream clinical decision-making, either missing opportunities to prevent suicide attempts or triggering unnecessary alarms that erode clinicians' trust.

\paragraph{Limitation 2: SDE-based models are unstable during training.}
Numerical instabilities arise both when solving the SDE \emph{and} when backpropagating through the solver (e.g.~\cite{zhang2024trajectory}).
When SDEs are incorporated into deep probabilistic models, these numerical instabilities exacerbate existing training difficulties.
In practice, the adoption of SDE-based models often hinges on simplified dynamics (e.g.~\citet{ansari2023neural,oh2024stable}) and training tricks (like KL-annealing, e.g.~\citet{li2020scalable}).
For suicide prevention, however, these workarounds are problematic: (a) oversimplified dynamics may fail to capture the complex mechanisms that drive rapid shifts in STBs, and (b) dependence on training tricks undermines reproducibility, hinders clinical deployment, and reduces trust and uptake among domain experts.
Finally, although recent work explores promising simulation-free inference for latent SDEs (e.g.~\cite{zhang2024trajectory,bartosh2025sde}), approximate inference methods introduce their own, hard-to-characterize inductive biases (e.g.~\cite{yacoby2020failure,foong2020expressiveness,yacoby2022mitigating,coker22a}); thus, the empirical behavior and trade-offs implicit in simulation-free methods are not yet well understood.

\paragraph{Insight: Both limitations are addressed by enforcing that SDE solutions stay within the data domain.}
Doing so not only addresses model mismatch---it also improves the stability of model fit.
For example, recent work on diffusion models for image data observed that parameterizing SDEs on compact state spaces can improve their performance~\citep{saharia2022photorealistic,lou2023reflected,fishman2023diffusion,christopher2024constrained}. 
We hypothesize that, in early training, unconstrained SDE trajectories often exit the compact domain of the data and require many gradient steps to return, increasing chances of landing in poor local optima.
Later in training, even small perturbations to the dynamics can push trajectories outside the data region.
By constraining dynamics to respect the natural compact data domain, we induce a bias toward admissible models, improving both training dynamics and generalization.
In this work, we focus on compact polyhedra, which represent a large class of commonly-used spaces, including rectangular spaces and simplexes, useful for a variety of temporal natural phenomena (e.g.~\citet{cresson2016stochastic}) as well as models for image data (e.g.~diffusion models \citep{lou2023reflected}).

\paragraph{Shortcomings of existing SDE-based models on compact state spaces.}
Of these recent works, SDEs with reflected dynamics are promising because they apply to \emph{any} SDE-based model.
Reflected SDEs (RSDEs) on a compact space $K$ augment the original SDE with a third term~\citep{pilipenko2014introduction}:
$d z_t = h(t, z_t) \cdot dt + g(t, z_t) \cdot d B_t + r(z_t) \cdot dC_t$, where $r(z_t)$ is a reflecting vector field that pushes $z_t$ back into the interior of $K$, $\mathrm{int}(K)$, when $z_t$ is on the boundary of the space, $\partial K$, and $C_t$ is a non-decreasing process that increases only when $z_t \in \partial K$ (i.e.~$C_t$ satisfies $\int_0^t \mathbb{I}(z_s \notin \partial K) \cdot dC_s = 0$).
Thus, RSDEs behave like SDEs on the interior of the space, but are reflected inwards at the boundary. 
Despite their rich theory, RSDEs have two shortcomings.
First, the instantaneous equal-and-opposite push towards the interior, represented by $r$, may not faithfully describe many phenomena in physics, biology, engineering, and medicine (e.g.~\citet{d2013bounded,rohanizadegan2020discrete}), including the dynamics of STBs.
Second, RSDEs currently lack efficient, high-order solvers (e.g.~\citet{ding2008numerical,fishman2023metropolis}).
These challenges become barriers for applications in suicide research, where model interpretation is important.

Complementing work on RSDEs, recent work has drawn on stochastic viability theory~\citep{aubin1991} to parameterize SDEs on compact state spaces (without reflections) via careful construction of the dynamics (e.g.~\cite{cai1996generation,cresson2012validating,d2013bounded,cresson2016stochastic,cresson2018note,rohanizadegan2020discrete}).
While promising, the dynamics proposed in these works are tailored to specific phenomena and do not readily generalize.
Here, we generalize these ideas to obtain arbitrarily flexible SDE dynamics on any compact polyhedral state space.
We summarize the position of this work relative to these prior approaches in \cref{apx:related-work}.

\paragraph{Contributions.} 
In this paper, we propose a novel class of expressive SDEs on compact polyhedral spaces using insights from stochastic viability theory. 
\textbf{(1)} We explain why chain-rule based approaches struggle theoretically and empirically (\cref{sec:challenges}).
\textbf{(2)} We prove constraints on the drift/diffusion that ensure general and stationary SDEs have an inductive bias for compact state spaces (\cref{sec:theory}). 
\textbf{(3)} We propose a parameterization that provably satisfies these constraints, allowing us to transform any dynamics---whether NN-based or expert-specified---into SDEs whose solutions remain in a prescribed compact polyhedral state space (\cref{sec:method}), enabling us to capture a different class of natural phenomena than RSDEs.
Finally, \textbf{(4)} on several real EMA datasets, we empirically demonstrate that enforcing viability filters out psychologically inadmissible models, thereby steering learning toward dynamics closer to the data-generating process, resulting in improved forecasts and optimization dynamics relative to baselines (\cref{sec:results}).
We include a demo of our method at: \url{https://github.com/mogu-lab/wsp-demo}.

\paragraph{Broader Impact.}
This work impacts two communities: (i) ML researchers developing SDE-based models whose latent states live on constrained domains, and (ii) clinical psychology theorists for whom it is crucial that the dynamics respect clinical knowledge. 
For ML researchers, our method provides a general mechanism for imposing hard state constraints on expressive SDE dynamics, and can be integrated into a broad range of SDE-based frameworks (e.g.~diffusion models~\citep{song2021scorebased} and infinitely deep models~\citep{xu2022infinitely}), while remaining compatible with standard inference methods for latent SDEs~\citep{archambeau2007variational,kidger2021neural,issa2023non,zhang2025efficient}, including simulation-free approaches~\citep{bartosh2025sde}. 
For clinical psychology theorists, our viability results have implications for ongoing efforts to formalize suicide and other mental health challenges as dynamical systems (e.g.~\cite{millner2020advancing,wang2023mathematical,robinaugh2024advancing}).
Ultimately, this work paves way for hybrid models~\citep{schweidtmann2024review} of suicide, jointly guided by domain expertise \emph{and} by data, to advance our understanding of suicide and improve our ability to identify individuals at imminent risk in time for intervention.

\paragraph{Notation.}
Consider the following It\^o SDE: 
\begin{align}
    d z_t = h(t, z_t) \cdot dt + (\mathrm{diag} \circ g)(t, z_t) \cdot d B_t.
    \label{eq:unconstrained}
\end{align}
Here, $t \geq 0$ is time, $z_t \in K$ is the SDE's solution, which lies in a compact space, $K \subset \mathbb{R}^{D_z}$.
Next, $h: \mathbb{R}_{\geq 0} \times K \rightarrow \mathbb{R}^{D_z}$ and $g: \mathbb{R}_{\geq 0} \times K \rightarrow \mathbb{R}_{\geq 0}^{D_z}$ are the drift and diffusion, respectively.
We overload $\mathrm{diag}(\cdot)$ to embed (or extract) a vector into (or from) the diagonal of a matrix, and define $I_D$ as a $D$-dimensional identity matrix.
Finally, we use $e_d$ for the $d$th standard basis vector, $\nabla$ for the Jacobian, $\langle \cdot, \cdot \rangle$ for inner products, and $z_t^d$, $h^d$, and $g^d$ for the $d$th dimension of $z_t$, $h$ and $g$, respectively.

\section{Challenges with Chain-Rule Based Methods}
\label{sec:challenges}

Our goal is to find an expressive parameterization of $h$ and $g$ so that the SDE in \cref{eq:unconstrained} is \emph{viable}:
\begin{definition}[\citet{milian1995stochastic}] \label{def:viable}
A stochastic process $z_t$ is \emph{viable} in $K$ if, for every initial
condition $z_0 \in K$, $\mathbb{P}\left( z_t \in K, \forall t \in [0,\infty) \right) = 1$.
\end{definition}

\paragraph{Transforming SDE solutions on $\mathbb{R}^{D_z}$ to solutions on $K$.}
The simplest way to ensure $z_t$ lies on a compact space $K$ is to derive a closed-form SDE for $f(z_t)$, where $f: \mathbb{R}^{D_z} \to K$.
This is achieved with It\^o's lemma for It\^o SDEs and with the standard chain-rule for Stratonovich SDEs.
While simple, however, this approach does not work theoretically or empirically for three reasons.

\paragraph{Challenge 1: Theory.}
There does not exist a diffeomorphism, $f$, from a non-compact set, $\mathbb{R}^{D_z}$, to a compact set, $K$.
In practice, we often ignore this and map $\mathbb{R}^{D_z}$ to the \emph{interior} of $K$, as in classification models that use sigmoid/softmax to map Euclidean outputs to a unit cube/simplex. However, this can cause pathologies: for linearly separable classes, the logistic regression maximum likelihood estimation drives parameters to infinity~\citep{santner1986note}, undermining interpretability. 
Analogous issues can arise for SDEs parameterizing time-varying Bernoulli probabilities.

\paragraph{Challenge 2: Numerical Stability.}
If we are willing to overlook Challenge 1, we find ourselves with numerically unstable dynamics.
To see this, consider a 1D SDE $y_t \in \mathbb{R}$ with drift $\tilde{h}$ and diffusion $\tilde{g}$, and suppose we construct $z_t \in (0,1)$ via the sigmoid transform $z_t = f(y_t)$.
We begin with It\^o's lemma and plug in $y_t = f^{-1}(z_t) = \mathrm{sigmoid}^{-1}(z_t)$:
\begin{align} 
    d z_t &= \left[ \tilde{h}(t, y_t) \cdot \frac{\partial f(y_t)}{\partial y_t} + \frac{1}{2} \cdot \tilde{g}(t, y_t)^2 \cdot \frac{\partial^2 f(y_t)}{\partial y_t^2} \right] \cdot dt + \left[ \tilde{g}(t, y_t) \cdot \frac{\partial f(y_t)}{\partial y_t} \right] \cdot dB_t, \nonumber \\ 
    &= (z_t - z_t^2) \cdot \left[ \tilde{h}(t, f^{-1}(z_t)) + \tilde{g}(t, f^{-1}(z_t))^2 \cdot \left( \tfrac{1}{2} - z_t \right) \right] \cdot dt + (z_t - z_t^2) \cdot \tilde{g}(t, f^{-1}(z_t)) \cdot dB_t.
    \label{eq:ito}
\end{align}
Here, $f^{-1}$ is unbounded, and can induce unbounded, numerically unstable dynamics. 
Moreover, existence and uniqueness proofs for SDEs typically require linearly bounded dynamics (e.g.~\citet[Theorem 5.2.1]{oksendal2013stochastic}).

\paragraph{Challenge 3: Inductive Bias.}
If we are willing to overlook Challenge 1, we can overcome Challenge 2 as follows.
We observe that arbitrarily expressive $\tilde{h}$ and $\tilde{g}$ can ``undo'' $f^{-1}$ by internally composing it with $f$.
Thus, we can define $h(t, z_t) = \tilde{h}(t, f^{-1}(z_t))$ and $g(t, z_t) = \tilde{g}(t, f^{-1}(z_t))$ and parameterize $h$ and $g$ directly, e.g.~via NNs:
\begin{align} 
    d z_t &= (z_t - z_t^2) \cdot \left[ h(t, z_t) + g(t, z_t)^2 \cdot \left( \tfrac{1}{2} - z_t \right) \right] \cdot dt + (z_t - z_t^2) \cdot g(t, z_t) \cdot dB_t. 
    \label{eq:ito-absorbed}
\end{align}
This way, so long as $h$ and $g$ are bounded, this SDE has bounded dynamics, overcoming Challenge 2.
But, this surfaces yet another challenge: the inductive bias of this SDE is appropriate for few phenomena.
Because the sigmoid maps $\mathbb{R}$ to $(0,1)$, large excursions of $y_t$ toward $\pm\infty$ correspond to vanishingly small changes in $z_t$ near 0 or 1. 
In the induced SDE, this appears as the multiplicative factor, $(z_t - z_t^2)$, on both the drift and diffusion, which tends to $0$ at the boundary, causing trajectories to slow down and effectively ``stick'' there. 
We verify this behavior empirically in \cref{sec:results}.
This behavior is undesirable when modeling suicidal ideation, which can oscillate rapidly during short durations (e.g.~\citet{coppersmith2023mapping}).

\section{Constraints on Dynamics for SDEs on Compact Polyhedra} \label{sec:theory}

Motivated by the challenges from \cref{sec:challenges}, we prove constraints on the drift/diffusion to ensure that both stationary and general SDEs have an inductive bias for compact polyhedral state spaces.
To begin, we define polyhedral subspaces of Euclidean space:
\begin{definition} \label{def:polyhedron}
Let $u, v \in \mathbb{R}^{D_z}$ and $\mathcal{H}(u, v) = \{ z \in \mathbb{R}^{D_z} : \langle z - u, v \rangle \geq 0 \}$ denote a closed half-space. $K \subset \mathbb{R}^{D_z}$ is a polyhedron if it is a finite intersection of closed half-spaces: $K = \bigcap_{s \in \{1, \cdots, S\}} \mathcal{H}(u_s, v_s)$.
\end{definition} 

\paragraph{General SDEs.}
In \cref{thm:milian}, \citet{milian1995stochastic} shows that, under standard linear-growth and Lipschitz assumptions, an It\^o SDE is viable in a polyhedron, $K$, if and only if, on each half-space boundary, the drift and diffusion have no outward-pointing components: thus, stochastic motion cannot push it across the boundary, and any normal motion at the boundary is deterministic and non-outward.
Compared to the sigmoid-based chain-rule construction for the unit cube (\cref{sec:challenges}), these are less restrictive requirements, since they do not force the drift to be 0 at the boundary.
While \cref{thm:milian} also holds for non-compact polyhedra, we focus on compact polyhedra from here on.
In \cref{apx:proof-stratonovich}, we extend this result to Stratonovich SDEs on compact polyhedra.

\begin{theorem}[\citet{milian1995stochastic}] \label{thm:milian}
    \textbf{Suppose} that the drift and diffusion, $h(t, z_t)$ and $g(t, z_t)$, of an It\^o SDE, defined for $t \geq 0$ and $z_t \in \mathbb{R}^{D_z}$, satisfy three conditions: 
    \textbf{(i)} For each $T > 0$, there exists $C_T > 0$ such that for all $z_t \in K$ and $t \in [0, T]$, $\lVert h(t, z_t) \rVert^2 + \lVert g(t, z_t) \rVert^2 \leq C_T \cdot ( 1 + \lVert z_t \rVert^2)$. 
    \textbf{(ii)} For all $T > 0$, $z_t, z_t' \in K$, and $t \in [0, T]$, $\lVert h(t, z_t) - h(t, z_t') \rVert + \lVert g(t, z_t) - g(t, z_t') \rVert \leq C_T \cdot \lVert z_t - z_t' \rVert$. 
    \textbf{(iii)} For each $z_t \in K$, $h(t, z_t)$ and $g(t, z_t)$ are continuous.
    \textbf{Then} $z_t$ is viable in $K$ \textbf{if and only if}: for all $s \in \{1, \dots, S\}$ and $z_t \in K$ such that $\langle z_t - u_s, v_s \rangle = 0$, we have \textbf{(a)} $\langle h(t, z_t), v_s \rangle \geq 0$ and \textbf{(b)} $\langle g(t, z_t) \odot e_d, v_s \rangle = 0$ for $t \geq 0$ and $d \in \{1, \dots, D_z\}$.
\end{theorem}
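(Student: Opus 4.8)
The plan is to exploit that $K$ is cut out by the affine functions $\varphi_s(z) = \langle z - u_s, v_s \rangle$, so that viability in $K$ is equivalent to the simultaneous nonnegativity, for all $s$ and all $t$, of the scalar processes $X^s_t := \varphi_s(z_t)$. Because each $\varphi_s$ is affine its Hessian vanishes, so Itô's lemma collapses to $dX^s_t = \langle v_s, h(t, z_t) \rangle\, dt + \langle v_s, (\mathrm{diag} \circ g)(t, z_t)\, dB_t \rangle$, i.e.\ a one-dimensional Itô process with drift $b_s(t) = \langle v_s, h(t, z_t) \rangle$ and diffusion vector with components $v_s^d\, g^d(t, z_t)$. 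Conditions (a) and (b) are then exactly the statements that, on the face $\{X^s_t = 0\}$, the drift $b_s$ is nonnegative and the diffusion vector is identically zero, since $\langle g \odot e_d, v_s \rangle = g^d\, v_s^d$. Thus the $D_z$-dimensional viability question reduces to a family of one-dimensional boundary-behavior questions for the $X^s_t$.

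\textbf{Sufficiency.} Assuming (a) and (b), I would show each $X^s_t$ cannot become negative. The key is that the regularity hypotheses upgrade the pointwise boundary conditions to quantitative linear control: since $z \mapsto v_s^d\, g^d(t, z)$ is Lipschitz (condition (ii)) and vanishes on the face $\{\varphi_s = 0\} \cap K$, I obtain $|v_s^d\, g^d(t, z)| \leq C\, \mathrm{dist}(z, \{\varphi_s = 0\} \cap K)$, which near each face is of order $X^s_t$; likewise nonnegativity of $b_s$ on the face plus Lipschitzness gives $b_s(t) \geq -C\, X^s_t$. Hence $X^s_t$ obeys an SDE whose diffusion vanishes at least linearly at $0$ and whose drift is bounded below linearly, the classical conditions under which $0$ is non-attainable from above. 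I would make this rigorous by a one-dimensional comparison argument of Yamada--Watanabe type: applying Tanaka's formula to $(X^s_t)^-$ and using the linear bounds to show that its expected local time at $0$ vanishes, so that $(X^s_t)^- \equiv 0$ and therefore $X^s_t \geq 0$ almost surely.

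\textbf{Necessity.} For the converse I would argue contrapositively at a boundary point $z^\star$ with $\varphi_s(z^\star) = 0$, taking $z^\star$ as the initial condition (viability must hold from every point of $K$) and examining short-time behavior of $X^s_t$. If (b) fails, then $v_s^d\, g^d(t, z^\star) \neq 0$ for some $d$, so by continuity (condition (iii)) the diffusion of $X^s_t$ is bounded away from zero near $t = 0$; then $X^s_t$ behaves locally like a time-changed Brownian motion started at $0$, which immediately takes negative values with positive probability, violating viability. If instead (b) holds but (a) fails, i.e.\ $b_s(0) = \langle h(t, z^\star), v_s \rangle < 0$, then the diffusion term is a mean-zero martingale and $\mathbb{E}[X^s_t] = \mathbb{E}\!\left[ \int_0^t b_s(r)\, dr \right]$ has leading term $b_s(0)\, t < 0$ for small $t$, forcing $\mathbb{P}(X^s_t < 0) > 0$ and again contradicting viability.

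\textbf{Main obstacle.} The delicate step is the sufficiency comparison. The coefficients are only Lipschitz rather than smooth, the diffusion of $X^s_t$ is a genuine multivariate Itô integral rather than a single scalar, and, most importantly, the faces of $K$ intersect in lower-dimensional corners where several $X^s_t$ vanish at once, so the distance-to-face bound degrades and the one-dimensional arguments must be run simultaneously and shown to be compatible. I expect the cleanest route is the Tanaka/local-time estimate above, which tolerates non-smooth coefficients and handles corners uniformly by invoking condition (b) for every constraint active at the corner, rather than a smooth Lyapunov function; verifying that the local time at $0$ of each $(X^s_t)^-$ is null using only the linear bounds is where the real work lies.
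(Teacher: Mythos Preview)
The paper does not supply its own proof of this statement: \cref{thm:milian} is quoted verbatim as a result of \citet{milian1995stochastic} and is used as a black box throughout (the appendices prove only the Stratonovich extension, the stationary case, and the WSP result, each by \emph{verifying} the hypotheses (i)--(iii) and (a)--(b) of \cref{thm:milian}, never by re-deriving the theorem itself). So there is no in-paper proof to compare against.

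That said, your outline is the standard route for this kind of viability result and is broadly in line with Milian's original argument: reduce to the scalar processes $X^s_t=\langle z_t-u_s,v_s\rangle$ via It\^o's lemma, then use the Lipschitz hypothesis to upgrade the pointwise boundary conditions (a)--(b) into linear growth/decay bounds near each face, and finish sufficiency with a comparison/local-time argument; necessity via short-time expansion from a boundary initial point is also the expected mechanism. Your identification of the corner issue is apt: the step ``$|v_s^d g^d(t,z)|\le C\,\mathrm{dist}(z,\{\varphi_s=0\}\cap K)\sim C\,X^s_t$'' relies on projecting $z$ onto the \emph{face} rather than the full hyperplane, and near lower-dimensional strata that projection can exit $K$, so the Lipschitz bound must be invoked for all simultaneously active constraints; this is indeed where the technical work concentrates in Milian's paper.
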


\paragraph{Stationary SDEs.}
In many applications, it is important to model stationary dynamics (e.g.~\cite{tank2015bayesian}), particularly over short time horizons (e.g.~\cite{tonekaboni2021unsupervised,weatherhead22learning}). 
Similarly, in EMA studies of STBs, patients are typically tracked for relatively short periods---often from one to several weeks~\citep{ammerman2022using}---and their survey responses are often highly noisy due to unobserved external influences. 
Assuming stationarity for the duration of the study may reduce model complexity and provide a more appropriate inductive bias.
We therefore extend \cref{thm:milian} to stationary SDEs on r-polyhedra, which are, informally speaking, compact polyhedra with non-zero ``volume'' (avoiding polyhedra for which every point is on $\partial K$):
\begin{definition} \label{def:r-polyhedron}
A set $K \subset \mathbb{R}^{D_z}$ is an r-polyhedron if it is a compact polyhedron that is also regular: for every $z \in \partial K$ and every $\epsilon > 0$, there exists $z' \in \mathcal{B}(z, \epsilon)$ in a ball of radius $\epsilon$ centered at $z$ that lies in the interior of $K$.
\end{definition} 

We do this by selecting a diffusion, $g$, that satisfies (i)-(iii) and (b) from \cref{thm:milian}, deriving a closed-form equation for a drift $h$ as a function of $g$ that ensures stationarity, and proving that $h$ also satisfies all conditions from \cref{thm:milian} (see proof in \cref{apx:proof-stationary}).

\begin{theorem} \label{thm:stationary}
Let $K$ be an r-polyhedron, and let $h(z_t)$ and $g(z_t)$ be the drift and diffusion, respectively, of an autonomous It\^o SDE, defined for $t \geq 0$ and $z_t \in \mathbb{R}^{D_z}$.
\textbf{Suppose} that for all $T > 0$, $z_t, z_t' \in K$, and $t \in [0, T]$, there exists $C_T > 0$ such that:
\textbf{(i)} $\lVert g(z_t) \rVert^2 \leq C_T \cdot ( 1 + \lVert z_t \rVert^2)$.
\textbf{(ii)} $\lVert g(z_t) - g(z_t') \rVert \leq C_T\cdot \lVert z_t - z_t' \rVert$ and
$\lVert \mathrm{diag}( \nabla_{z_t} g(z_t) ) - \mathrm{diag}( \nabla_{z_t'} g(z_t') ) \rVert \leq C_T\cdot \lVert z_t - z_t' \rVert$.
\textbf{(iii)} The unnormalized time-marginal, $\tilde{p}(z_t)$, has support on $\mathrm{int}(K)$, and $g(z_t) \odot \nabla_{z_t} \log \tilde{p}(z_t)$ admits a continuous extension to $K$ such that
$\bigl\| g(z_t) \odot \nabla_{z_t} \log \tilde{p}(z_t) - g(z_t') \odot \nabla_{z_t'} \log \tilde{p}(z_t') \bigr\| \leq C_T \cdot \|z_t - z_t'\|$.
\textbf{Suppose further that:} for all $z_t \in \mathrm{int}(K)$, \textbf{(iv)} $g^d(z_t) > 0$ for all $d \in \{ 1, \dots, D_z \}$, and
\textbf{(v)} for each $s\in\{1,\dots,S\}$, there exists $L_s<\infty$ such that $\big\lVert n_s \odot g(z) \big\rVert \le L_s \cdot d_s(z)$, where $n_s = v_s / \lVert v_s \rVert$ and $d_s(z) = \langle z - u_s, n_s \rangle$.
\textbf{Then} for any $z_0 \sim p(z_t)$, the solution $z_t$ is viable in $K$, with time-marginal $p(z_t)$, \textbf{provided that}:
\textbf{(a)} $h(z_t) = \frac{1}{2} \mathrm{diag} \left( \nabla_{z_t} [g(z_t)^2] \right) + \frac{1}{2}  g(z_t)^2 \odot \nabla_{z_t} \log \tilde{p}(z_t)$, and 
\textbf{(b)} For all $s \in \{1, \dots, S\}$ and $z_t \in K$ such that $\langle z_t - u_s, v_s \rangle = 0$, $\langle g(z_t) \odot e_d, v_s \rangle = 0$ for all $d \in \{1, \dots, D_z\}$.
\end{theorem}
The parameterization of $h(z_t)$ in \cref{thm:stationary} is easily implemented by obtaining the score function, $\nabla_{z_t} \log \tilde{p}(z_t)$, via auto-differentiation of $\log \tilde{p}(z_t)$, parameterized by an unconstrained NN.
Also, we note that assumptions (i)-(v) are easily satisfiable---see discussion in \cref{apx:assumptions}. 

\section{Parameterization of Expressive SDEs on r-Polyhedra} \label{sec:method}

We propose a parameterization that maps arbitrary (neural or expert-given) dynamics into constraint-satisfying dynamics from \cref{thm:milian,thm:stationary}. 

\paragraph{Weighted Sums Parameterization (WSP).}
We observe that we can satisfy both constraints on the drift and diffusion using the same mechanism: $\mathrm{WSP}(f, c, t, z) = w(z) \cdot f(t, z) + (1 - w(z)) \cdot c(z)$, using a different choice of $c(z)$ for each.
Here, $f(\cdot)$ is the original, unconstrained dynamics, given by domain experts or by some flexible function class, like NNs;
$c(\cdot)$ is a simple function that satisfies the constraints;
finally, $w(z) \in [0, 1]$ weighs the sum of $f(\cdot)$ and $c(\cdot)$, approaching 0 at $\partial K$ to favor $c(\cdot)$, and approaching 1 at the interior of $K$ to favor $f(\cdot)$.
Of many possible choices, we define:
\begin{align}
    w(z) = \tanh\left( \beta \cdot \prod_s \left[ \frac{e^{-d_s(z)}}{\sum_{s'} e^{-d_{s'}(z)}} \cdot \tanh\left( \alpha \cdot d_s(z) \right) \right] \right), \qquad d_s(z) = \frac{\langle z - u_s, v_s \rangle}{\lVert v_s \rVert}. \label{eq:w}
\end{align}
Here, $d_s(z) \geq 0$ is the shortest distance from $z$ to the boundary of $\mathcal{H}(u_s, v_s)$, and $\alpha, \beta > 0$ determine the transition rate between $f(z)$ and $c(z)$. 
They can be learned jointly with the model parameters.
The intuition behind \cref{eq:w} is that $w(z)$ should approach $0$ as $z$ approaches the \emph{closest} of the $S$ boundaries. 
As such, we take a product of distances from $z$ to each boundary, weighted by a softmin; this, in a sense, ``selects the closest distance.''
By taking a product of these weighted distances, we obtain a function that is $0$ at all boundaries and positive elsewhere, using $\tanh$ to ensure $w(z) \in [0, 1]$.
\cref{fig:intuition} (top-left) visualizes $w(z)$ for different polyhedra.

\begin{figure*}[t!]
    \centering

    \includegraphics[width=0.485\textwidth]{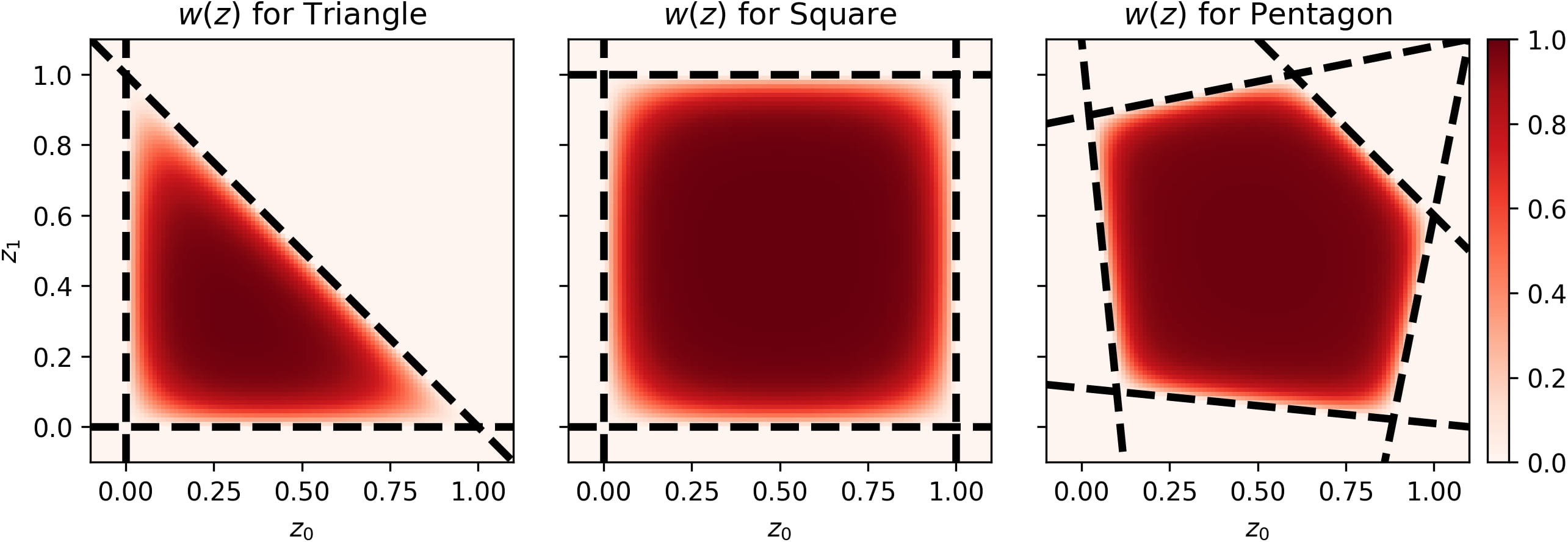}
    ~    
    \includegraphics[width=0.485\textwidth]{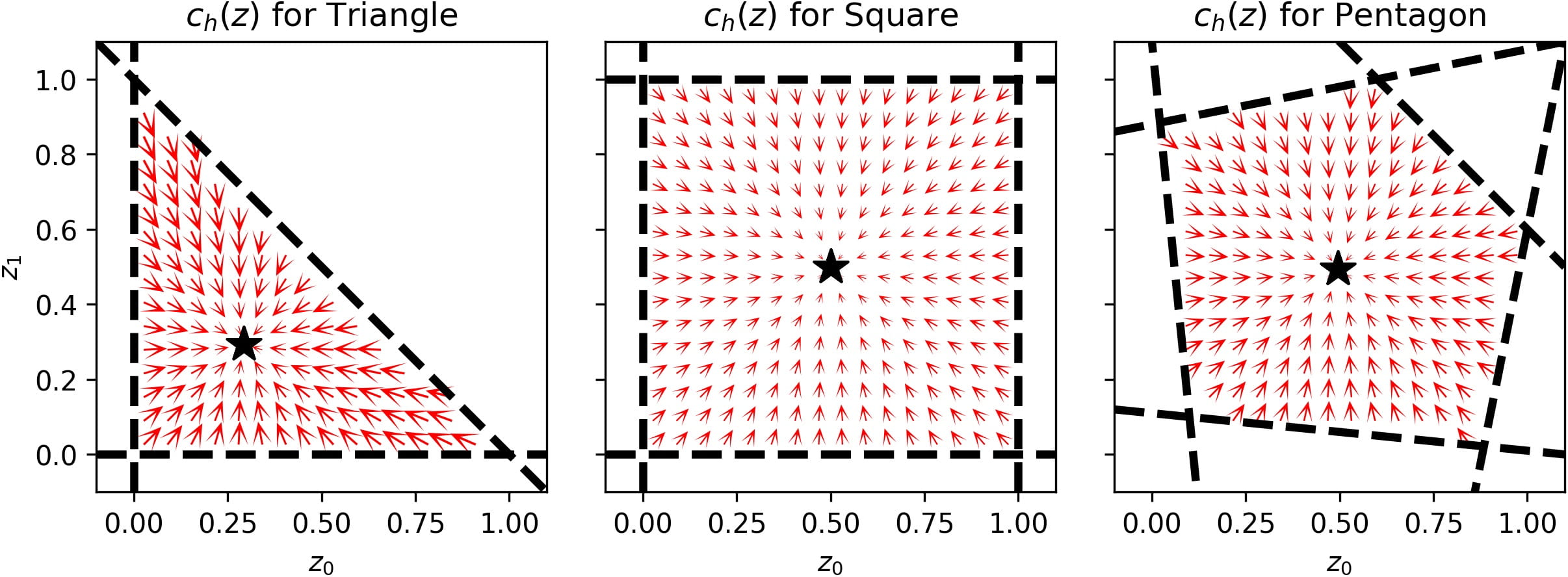}

    \includegraphics[width=0.485\textwidth]{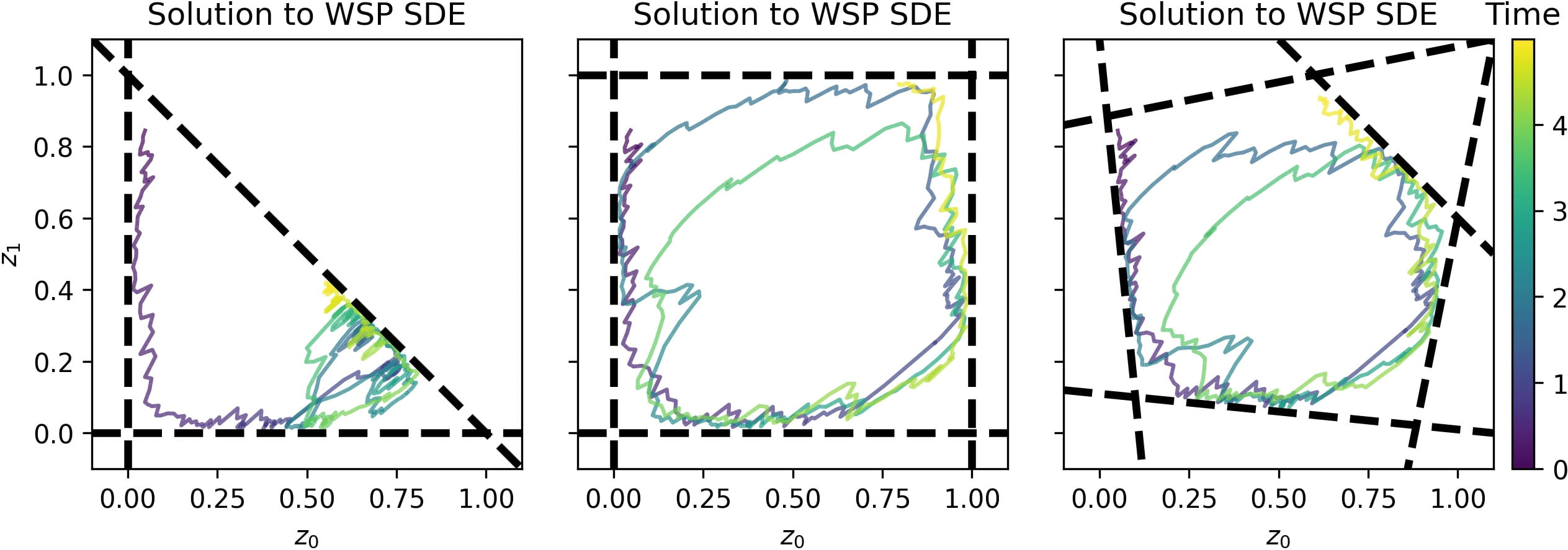}    
    ~
    \includegraphics[width=0.485\textwidth]{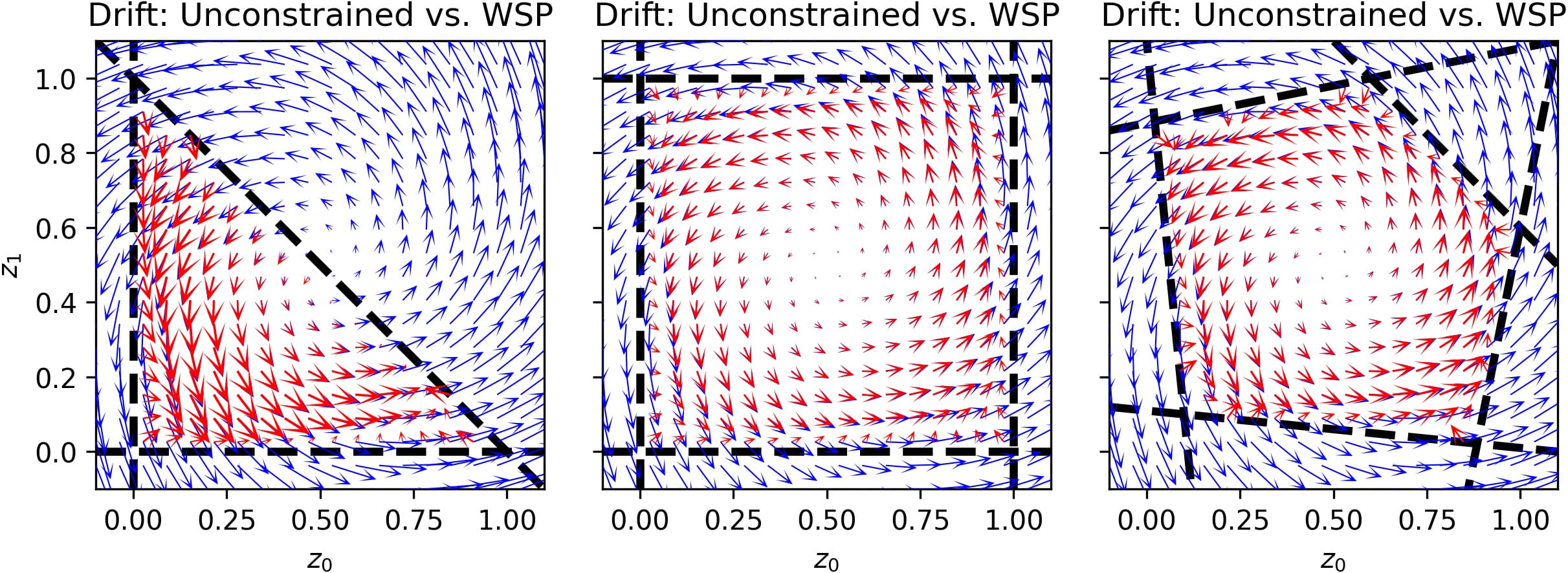}    
    
    \caption{\textbf{Intuition Underlying WSP.} Top left: $w(z)$ from \cref{eq:w}, approaching 0 at the boundaries and 1 in the interior. Top right: $c_h(z)$ from \cref{eq:wsp}, pointing towards the Chebyshev center $\bigstar$. Bottom left: solutions to WSP SDE, successfully remaining in $K$. Bottom right: an \textcolor{colorVanillaStrong}{``expert-given'' drift $\tilde{h}$} vs.~\textcolor{colorWSPStrong}{WSP drift $h$} (\cref{eq:wsp}) matching in the interior of $K$, but differing near the boundary. Details in \cref{apx:expert-dynamics-setup}.}
    \label{fig:intuition}
\end{figure*}

\paragraph{WSP-based SDEs.}
Given any unconstrained dynamics, $\tilde{h}: \mathbb{R}_{\geq 0} \times K \to \mathbb{R}^{D_z}$ and $\tilde{g}: \mathbb{R}_{\geq 0} \times K \to \mathbb{R}_{\geq 0}^{D_z}$, WSP returns new dynamics, $h$ and $g$, that satisfy (a)-(b) from \cref{thm:milian}:
\begin{align}
\begin{split}
    h(t, z_t) &= \mathrm{WSP}(\tilde{h}, c_h, t, z_t), \qquad c_h(z_t) = \gamma \cdot \frac{z^* - z_t}{\lVert z^* - z_t \rVert + \epsilon}, \\
    g(t, z_t) &= \mathrm{WSP}(\tilde{g}, c_g, t, z_t), \qquad c_g(z_t) = 0,    
\end{split}  
\label{eq:wsp}
\end{align}
where $z^*$ is the Chebyshev center of $K$, easily computed via linear programming once per polyhedron~\citep{boyd2004convex}.
Because when $z_t \in \partial K$, $\langle z^* - z_t, v_s\rangle = \langle z^* - u_s, v_s\rangle - \langle z_t - u_s, v_s\rangle = \langle z^* - u_s, v_s\rangle > 0$, $c_h(z_t)$ points to the interior of $K$ with magnitude controlled by $\gamma, \epsilon > 0$, learned jointly with the other model parameters.
We choose the Chebyshev center due to its generality and its large margin to the boundary; in future work, we are excited to investigate alternative choices for $c_h(z_t)$ that incorporate domain expertise. 
\cref{fig:intuition} visualizes $c_h(z)$ and $h$ for different polyhedra, showing WSP SDEs remain viable in $K$.
To instantiate WSP for neural SDEs, we can simply set $\tilde{h}, \tilde{g}$ to NNs with linear and softplus activations, respectively.
Next, we prove that, under the same mild conditions on $\tilde{h}$ and $\tilde{g}$ (e.g.~Lipschitz continuity, etc.), WSP dynamics satisfy the conditions of \cref{thm:milian,thm:stationary} (proof in \cref{apx:proof-wsp}).
One limitation of our proposed $w(z)$ is that $\alpha$ and $\beta$ need to grow quickly with the number of boundaries $S$. 
Below, there is a simple fix for the unit cube.

\begin{theorem} \label{thm:wsp}
    Let $K$ be an r-polyhedron.
    \textbf{Suppose} that $\tilde{h}$ and $\tilde{g}$, defined above, satisfy
    conditions \textbf{(i)-(iii)} of \cref{thm:milian}.
    \textbf{Then} the drift and diffusion $h$ and $g$, defined in
    \cref{eq:wsp}, satisfy \textbf{(i)-(iii)} and \textbf{(a)-(b)} from \cref{thm:milian}.
    \textbf{Suppose further} that $\tilde{g}$ additionally satisfies conditions
    \textbf{(i)}, \textbf{(ii)}, \textbf{(iv)} of \cref{thm:stationary};
    \textbf{then} $g$, defined in \cref{eq:wsp}, satisfies \textbf{(i)-(ii)}, \textbf{(iv)-(v)} and \textbf{(b)} from \cref{thm:stationary}.
\end{theorem}
\cref{thm:wsp} does not cover condition (iii) from \cref{thm:stationary}, which concerns the joint behavior of $g$ \emph{and} the score function, $\nabla_{z_t} \log \tilde{p}(z_t)$.
One simple way to satisfy (iii) is by ensuring $\nabla_{z_t} \log \tilde{p}(z_t)$ is Lipschitz, enabling WSP to easily satisfy both \cref{thm:milian,thm:stationary}.

\paragraph{WSP on the Unit Cube and Simplex.}
For a unit cube, $K=[0,1]^{D_z}$, the constraints decouple across dimensions, so we apply WSP element-wise, reducing the number of elements in the product of \cref{eq:w} from $2D_z$ to $2$.
For a simplex, we first define $K \subset \mathbb{R}^{D_z - 1}$ as the projection of a simplex onto $D_z - 1$ dimensions (e.g.~the triangle in the top-left of \cref{fig:intuition} is the projection of a 3D simplex).
We do this by setting $u,v \in \mathbb{R}^{D_z - 1}$ from \cref{def:polyhedron} as follows.
For $1 \leq s < D_z$, $u_s$ is the zero vector and $v_s = e_s$. 
For $s = D_z$, $u_s$ and $v_s$ are vectors filled with $(D_z - 1)^{-1}$ and $-(D_z - 1)^{-1/2}$, respectively.
Using WSP, we then define an SDE whose state, $z_t$, evolves in this projected space, $K$.
The resulting process has $D_z - 1$ dimensions, each nonnegative with a sum $\leq 1$.
Finally, we recover the full $D_z$-dimensional simplex by adding the last component, $z_t^{D_z} = 1 - \sum_{s=1}^{D_z - 1} z_t^s$, to $z_t$ so that all components are nonnegative and sum to 1.

\begin{figure*}[t!]
    \centering

    \includegraphics[width=0.31\textwidth]{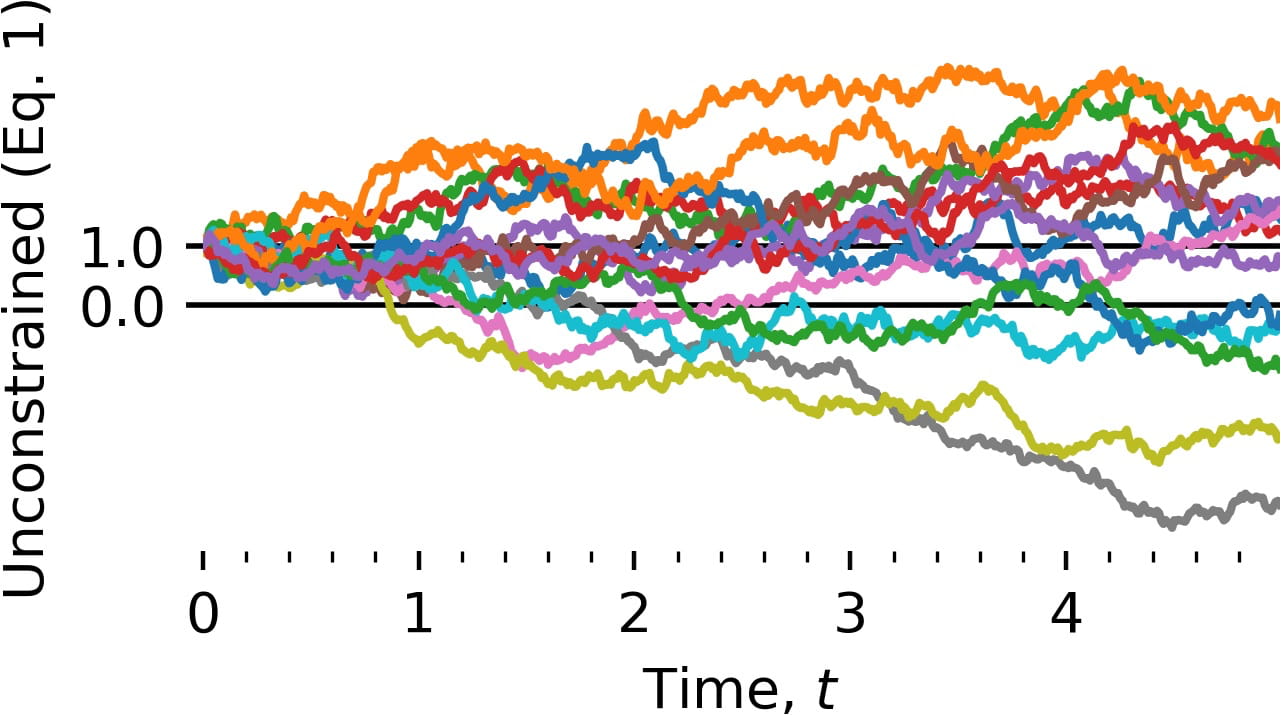}
    ~
    \includegraphics[width=0.32\textwidth]{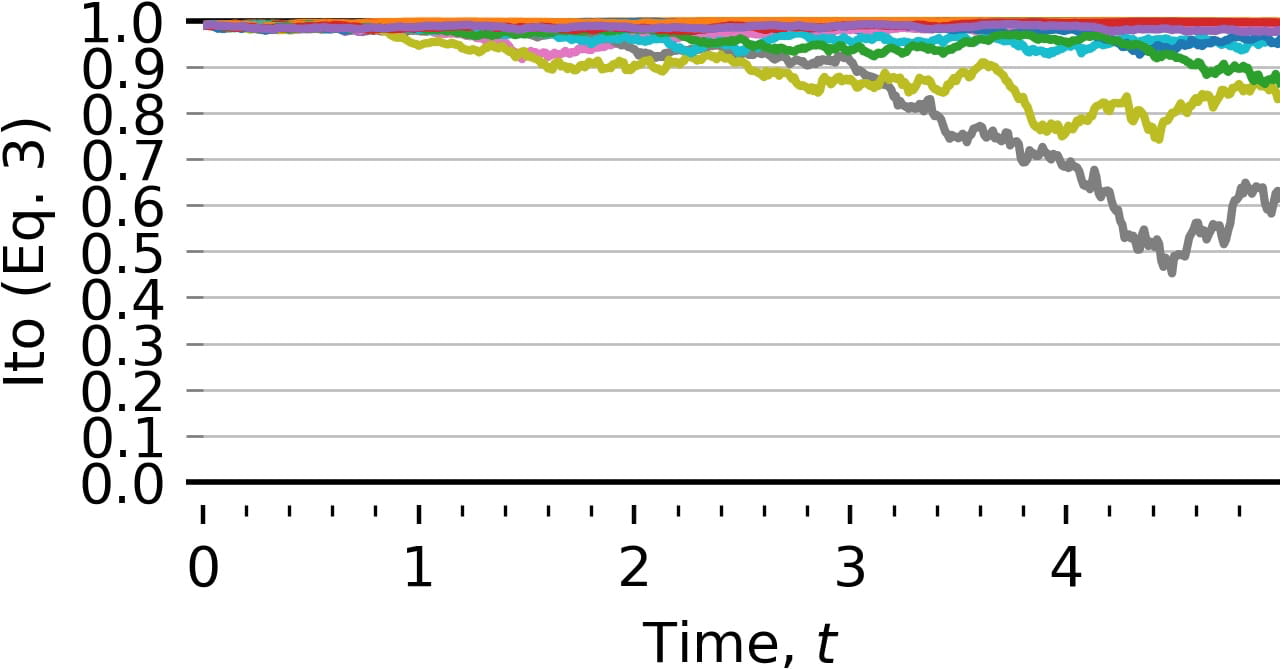} 
    ~
    \includegraphics[width=0.32\textwidth]{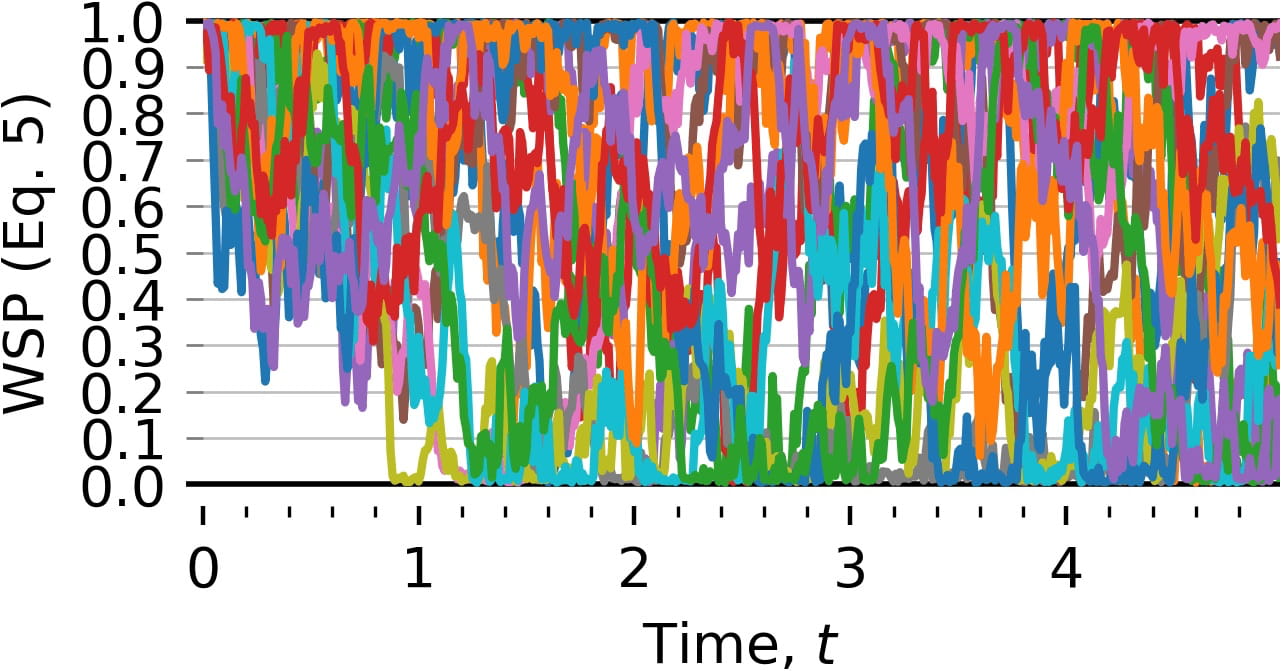}

    \vspace{2mm}
    \hrule
    \hrule
    \vspace{2mm}
    
    \includegraphics[width=1.0\textwidth]{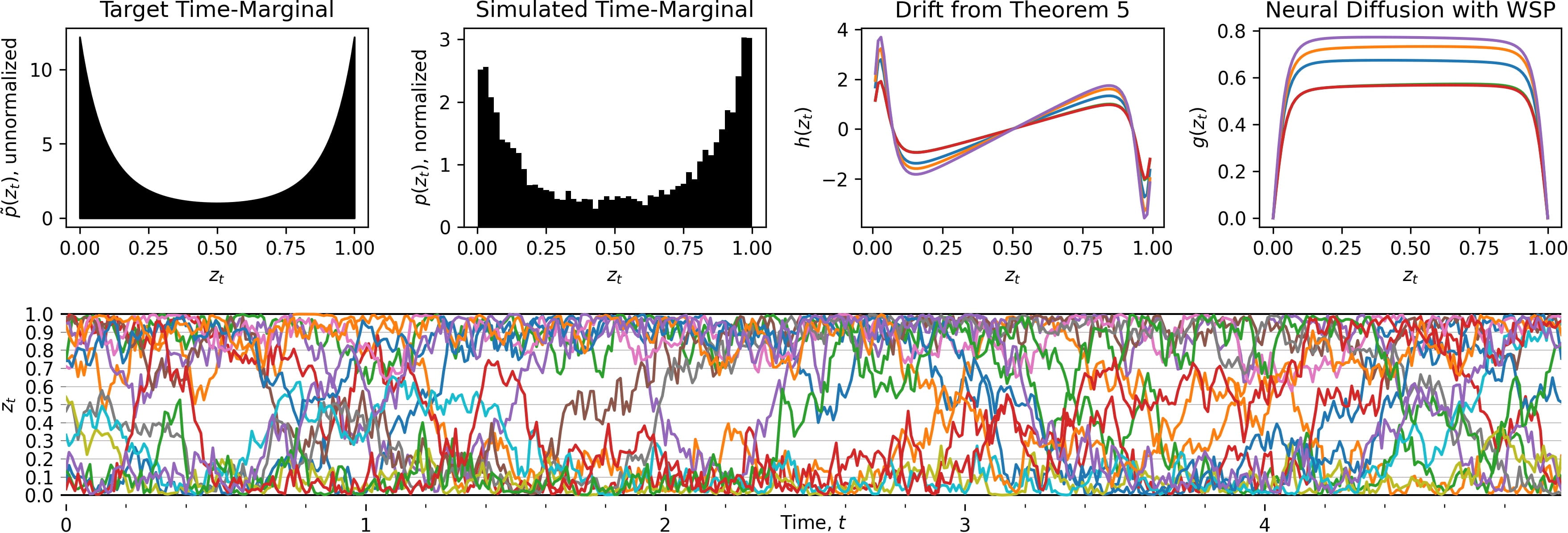}   
    
    \caption{\textbf{Top: Inductive bias of WSP vs.~baselines.} Left: unconstrained SDE (\cref{eq:unconstrained}) with NN quickly leaves $K = [0, 1]$. Middle: SDE transformed via sigmoid (\cref{eq:ito,eq:ito-absorbed}) sticks to the boundary. Right: SDE with WSP (\cref{eq:wsp}) successfully remains in $K$.
    \textbf{Bottom: Stationary WSP matches target time-marginal while remaining viable.} Additional results in \cref{apx:results-stationary}.}
    \label{fig:ito-inductive-bias}
\end{figure*}

\section{Experiments and Results} \label{sec:results}

We present experiments that stress-test WSP against baselines, comparing inductive biases, faithfulness to expert knowledge, and effects on optimization dynamics.

\subsection{Synthetic Data Experiments}

\paragraph{WSP maps expert-given dynamics to viable dynamics on arbitrary polyhedra.}
We \emph{stress-test} WSP by transforming the given dynamics, in which $z_t$ \emph{spirals out} of a target region, to remain viable in three arbitrary polyhedra: triangle, square, pentagon. 
The bottom-right of \cref{fig:intuition} compares the given and the WSP-transformed dynamics in blue and red, respectively, showing that they match on the interior of the space while differing near the boundary.
The bottom-left of \cref{fig:intuition} then shows that sample trajectories from the WSP-transformed dynamics remain within the polyhedra while still spiraling according to the original dynamics. 
Details in \cref{apx:expert-dynamics-setup}.

\paragraph{WSP produces volatile trajectories, qualitatively similar to EMA data, while baselines do not.}
Due to Limitation 2 (\cref{sec:intro}), initialization plays a crucial role in the success of expressive SDE-based models.
As such, to empirically compare the inductive bias of WSP (\cref{eq:wsp}) against baselines (\cref{eq:unconstrained,eq:ito,eq:ito-absorbed}), we solve SDEs given by NNs $h$ and $g$ with randomly sampled weights.
We define the viable region, $K = [0, 1]$, and specifically choose $z_0 = 0.99$ near the boundary to \emph{stress-test} the chain-rule based SDEs in \cref{eq:ito,eq:ito-absorbed} to show that, close to the boundary, they struggle to return to the interior of $K$ (setup in \cref{apx:synthetic-setup}).
While simple, \emph{these experiments show WSP boasts a stark improvement over baselines in its ability to capture volatile data}.
\cref{fig:ito-inductive-bias} (top) shows WSP ensures SDE samples are viable in $K$ (faithful to expert knowledge), while freely moving across $K$ (appropriate inductive bias for volatile data).
Notably, WSP trajectories qualitatively resemble the dynamics of suicidal ideation in EMA data.
In contrast, unconstrained SDEs quickly leave $K$, and SDEs based on It\^o's lemma stick to the boundary.
In \cref{fig:ito-inductive-bias} (with additional results in \cref{apx:results-kl-expansion,apx:results-stationary}), we show these behaviors persist under the Karhunen-Lo\`eve approximation of Brownian motion (from \cref{apx:latent-sde}), and in the stationary construction from \cref{thm:stationary}.

\subsection{Real EMA Data Experiments}

\paragraph{Latent SDEs.}
We model patient $n$'s $d$th EMA response with an ordinal likelihood conditioned on their latent psychological state, $z_t^n \in [0,1]^{D_z}$, evolving via an autonomous Stratonovich SDE:
\begin{align*}
z_{t_0}^n \sim \mathcal{N}_{[0,1]}(\mu_0,\sigma_0^2 \cdot I_{D_z}), \phantom{a} 
dz_t^n = h(z_t^n; \theta) \cdot dt + \mathrm{diag}(g(z_t^n;\theta)) \circ dB_t, \phantom{a} x_{t_m}^{n,d} \mid z_{t_m}^{n,d} \sim \mathrm{Cat}\left(\lambda(z_{t_m}^{n,d};\sigma_\epsilon)\right),
\end{align*}
where $\mathcal{N}_{[0,1]}$ is a truncated normal and $\lambda(\cdot;\sigma_\epsilon)$ is an ordinal link function with fixed cutpoints in $[0,1]$.
Since traditional variational methods for latent SDEs remain strong competitors to simulation-free methods, we choose the traditional method, but approximate Brownian motion with a truncated Karhunen-Lo\`eve expansion, as done by~\citet{ghosh2022differentiable}, allowing us to replace the SDE solver with a stable, high-order, adaptive ODE solver.
Details in \cref{apx:latent-sde}.

\paragraph{Viable Solvers.}
Although WSP is viable in continuous time, standard numerical solvers can step outside $K$ due to discretization.
Developing solvers with viability guarantees is an important future direction. 
In our real data experiments, we therefore include a pragmatic safeguard: we clip the state into $K$ before evaluating drift/diffusion to improve numerical stability.

\paragraph{Baselines.}
We evaluate a set of ablations on 4 real EMA datasets (\cref{apx:real-data}) to isolate the effect of (i) imposing viability through WSP and (ii) applying the clipping safeguard used for numerical stability during training (since viable solvers are an open problem).
Specifically, we compare latent SDE models with the dynamics below. 
Let $(\tilde h,\tilde g)$ be NNs with linear and softplus output activations, respectively, and let $\mathrm{clip}(z)=\min(\max(z,0),1)$ denote an element-wise clipping onto the target unit cube. 
\textcolor{colorVanillaStrong}{\textbf{Vanilla}}:  $(h,g)=(\tilde h,\tilde g)$. 
\textcolor{colorVanillaClipStrong}{\textbf{Vanilla+Clip}}: $(h,g)=(\tilde h\circ \mathrm{clip}, \tilde g\circ \mathrm{clip})$.
\textcolor{colorWSPNoClipStrong}{\textbf{WSP}}: applies \cref{eq:wsp} to $(\tilde h,\tilde g)$, yielding viable dynamics $(h_{\mathrm{WSP}}, g_{\mathrm{WSP}})$. 
\textcolor{colorWSPStrong}{\textbf{WSP+Clip}}: $(h,g)=(h_{\mathrm{WSP}}\circ \mathrm{clip}, g_{\mathrm{WSP}}\circ \mathrm{clip})$.

\paragraph{Evaluation.}
We split each patient's time horizon at the median time step: observations after this point form the ``forecast set'' to stress-test forecasting, while 20\% of time steps before it are held out at random as the ``interpolation set'' (evaluating fit within the training window) to assess whether optimization got stuck in poor local optima.
We evaluate the log posterior predictive on these two heldout sets, as well as constraint satisfaction metrics on the learned dynamics (\cref{apx:real-setup}).

\textbf{Isolating inductive bias under widening forecasts.}
Because STBs are influenced by external factors, they are intrinsically uncertain; thus, latent SDE posteriors quickly revert to the prior with growing forecast horizons. 
Naively comparing forecast distributions would therefore mostly reflect whether a model assigns mass outside the measurement domain, rather than whether its \emph{dynamics} better capture the underlying process. 
We hypothesize that enforcing viability \emph{filters out psychologically inadmissible dynamics, steering learning toward models closer to the ground truth}. 
We therefore fix the variational variances to force more confident forecasts, making differences in inductive bias (rather than uncertainty inflation) observable.
Details in \cref{apx:real-setup}.

\begin{table}[t]
  \caption{\textbf{WSP-based latent neural SDEs satisfy constraints; they improve training dynamics and forecasts on \emph{all} EMA datasets.} We report log posterior predictive on the interpolation and forecasting sets for models trained on a large-scale EMA suicide-risk study (``SMART'') and several EMA mental health datasets (GLOBEM DS2-4)---details in \cref{apx:real-data}. We also report three constraint-satisfaction metrics, summarized below. $\uparrow$/$\downarrow$ indicate whether higher/lower values are better. Details in \cref{apx:real-setup}.}
  \label{table:performance}

  \begin{minipage}[t]{0.77\textwidth}
    \centering
    \scriptsize
    \renewcommand{\arraystretch}{1.2} 
    \begin{tabular}{l|l||c|c|c|c|c}
     & & \multicolumn{2}{c|}{\textbf{Log Predictive}} & \multicolumn{3}{c}{\textbf{Constraint Satisfaction}} \\ \cline{3-7}

    \multirow{-2}{*}{\textbf{Data}} & \multirow{-2}{*}{\textbf{Method}} & Interpolation $\uparrow$  & Forecast $\uparrow$ & DRVP $\uparrow$ & DIVP $\uparrow$ & DIDV $\downarrow$ \\ \hline \hline
    \multirow{4}{*}{SMART} & \cellcolor{colorWSPStrong} \textcolor{white}{WSP+Clip} & \cellcolor{colorWSP}\textbf{-34.17} & \cellcolor{colorWSP}\textbf{-187.64} & \cellcolor{colorWSP}\textbf{1.00} & \cellcolor{colorWSP}\textbf{1.00} & \cellcolor{colorWSP}\textbf{0.00} \\ \cline{2-7}

      & \cellcolor{colorWSPNoClipStrong} \textcolor{white}{WSP} & \cellcolor{colorWSPNoClip}\textbf{-34.00}& \cellcolor{colorWSPNoClip}\textbf{-189.23}& \cellcolor{colorWSPNoClip}\textbf{1.00}& \cellcolor{colorWSPNoClip}\textbf{1.00}& \cellcolor{colorWSPNoClip}\textbf{0.00}
    \\ \cline{2-7}
     & \cellcolor{colorVanillaClipStrong} \textcolor{white}{Vanilla+Clip} & \cellcolor{colorVanillaClip}-44.17 & \cellcolor{colorVanillaClip}-218.20 & \cellcolor{colorVanillaClip}\textbf{1.00}& \cellcolor{colorVanillaClip}0.00& \cellcolor{colorVanillaClip}3.62
     
     \\ \cline{2-7}
    & \cellcolor{colorVanillaStrong} \textcolor{white}{Vanilla} & \cellcolor{colorVanilla}-46.50 & \cellcolor{colorVanilla}-226.50 & \cellcolor{colorVanilla}\textbf{1.00} & \cellcolor{colorVanilla}0.00 & \cellcolor{colorVanilla}3.98 \\ \hline
    
    \multirow{4}{*}{DS2} & \cellcolor{colorWSPStrong} \textcolor{white}{WSP+Clip} & \cellcolor{colorWSP}\textbf{-40.34} & \cellcolor{colorWSP}\textbf{-103.73} & \cellcolor{colorWSP}\textbf{1.00} & \cellcolor{colorWSP}\textbf{1.00} & \cellcolor{colorWSP}\textbf{0.00} \\ \cline{2-7}     
    & \cellcolor{colorWSPNoClipStrong} \textcolor{white}{WSP} & \cellcolor{colorWSPNoClip}\textbf{-46.97}& \cellcolor{colorWSPNoClip}-124.19& \cellcolor{colorWSPNoClip}\textbf{1.00}& \cellcolor{colorWSPNoClip}\textbf{1.00}& \cellcolor{colorWSPNoClip}\textbf{0.00}

    \\ \cline{2-7}
    & \cellcolor{colorVanillaClipStrong} \textcolor{white}{Vanilla+Clip} & \cellcolor{colorVanillaClip}-108.19& \cellcolor{colorVanillaClip}-182.68& \cellcolor{colorVanillaClip}0.89& \cellcolor{colorVanillaClip}0.00 & \cellcolor{colorVanillaClip}6.20 
     \\ \cline{2-7}
    & \cellcolor{colorVanillaStrong} \textcolor{white}{Vanilla} & \cellcolor{colorVanilla}-139.63 & \cellcolor{colorVanilla}-423.54 & \cellcolor{colorVanilla}0.55 & \cellcolor{colorVanilla}0.00 & \cellcolor{colorVanilla}9.23 \\ \hline
    
    \multirow{4}{*}{DS3} & \cellcolor{colorWSPStrong} \textcolor{white}{WSP+Clip} & \cellcolor{colorWSP}\textbf{-34.37} & \cellcolor{colorWSP}\textbf{-95.07} & \cellcolor{colorWSP}\textbf{1.00} & \cellcolor{colorWSP}\textbf{1.00} & \cellcolor{colorWSP}\textbf{0.00} \\ \cline{2-7}
    & \cellcolor{colorWSPNoClipStrong} \textcolor{white}{WSP} & \cellcolor{colorWSPNoClip}\textbf{-34.07}& \cellcolor{colorWSPNoClip}\textbf{-96.12}& \cellcolor{colorWSPNoClip}\textbf{1.00}& \cellcolor{colorWSPNoClip}\textbf{1.00}& \cellcolor{colorWSPNoClip}\textbf{0.00}

    \\ \cline{2-7}
    & \cellcolor{colorVanillaClipStrong} \textcolor{white}{Vanilla+Clip} & \cellcolor{colorVanillaClip}-110.18& \cellcolor{colorVanillaClip}-268.36& \cellcolor{colorVanillaClip}0.96& \cellcolor{colorVanillaClip}0.00& \cellcolor{colorVanillaClip}6.78 
     \\ \cline{2-7}
    & \cellcolor{colorVanillaStrong} \textcolor{white}{Vanilla} & \cellcolor{colorVanilla}-129.48 & \cellcolor{colorVanilla}-558.02 & \cellcolor{colorVanilla}0.59 & \cellcolor{colorVanilla}0.00 & \cellcolor{colorVanilla}4.15 \\ \hline   
    
    \multirow{4}{*}{DS4} & \cellcolor{colorWSPStrong} \textcolor{white}{WSP+Clip} & \cellcolor{colorWSP}\textbf{-41.67} & \cellcolor{colorWSP}\textbf{-93.73} & \cellcolor{colorWSP}\textbf{1.00} & \cellcolor{colorWSP}\textbf{1.00} & \cellcolor{colorWSP}\textbf{0.00} \\ \cline{2-7}
    & \cellcolor{colorWSPNoClipStrong} \textcolor{white}{WSP} & \cellcolor{colorWSPNoClip}\textbf{-41.75}& \cellcolor{colorWSPNoClip}\textbf{-93.88}& \cellcolor{colorWSPNoClip}\textbf{1.00}& \cellcolor{colorWSPNoClip}\textbf{1.00}& \cellcolor{colorWSPNoClip}\textbf{0.00}

    \\ \cline{2-7}
    & \cellcolor{colorVanillaClipStrong} \textcolor{white}{Vanilla+Clip} & \cellcolor{colorVanillaClip}-114.86 & \cellcolor{colorVanillaClip}-120.47& \cellcolor{colorVanillaClip}0.78& \cellcolor{colorVanillaClip}0.02& \cellcolor{colorVanillaClip}6.86
     \\ \cline{2-7}
     & \cellcolor{colorVanillaStrong} \textcolor{white}{Vanilla} & \cellcolor{colorVanilla}-147.43 & \cellcolor{colorVanilla}-246.90 & \cellcolor{colorVanilla}0.58 & \cellcolor{colorVanilla}0.00 & \cellcolor{colorVanilla}8.49 \\ 
    \end{tabular}
  \end{minipage}
  ~
  \begin{minipage}[t]{0.21\textwidth}
    \footnotesize
    \vspace{-43.5pt}
    \textbf{Metrics: DRVP and DIVP} measure the fraction of the boundary that satisfies \cref{thm:milian}'s constraints on the drift and diffusion, respectively. \textbf{DIDV} measures the average difference between the diffusion value at the boundary and 0 (where 0 distance indicates a viable diffusion).
  \end{minipage}
\end{table}

\paragraph{WSP-based latent neural SDEs satisfy constraints, leading to better training dynamics and forecasts on real EMA data over Vanilla neural SDEs.}
\cref{table:performance} shows that:
(1) WSP-based dynamics avoid assigning probability mass to impossible outcomes: the constraint-satisfaction metrics are perfect for WSP, by construction, and are far from perfect for the baselines.
(2) These constraints guide optimization toward better optima: WSP-based models achieve higher interpolation log predictive performance. 
Since WSP constrains the dynamics, Vanilla could theoretically match it; thus, the gap suggests Vanilla training often converges to poorer optima.
(3) Altogether, the WSP-based model exhibits a higher log-predictive for forecasting, indicating its inductive bias is more appropriate for mental health EMA data.
\cref{fig:wsp-qualitative-inline} confirms this qualitatively: in comparison to WSP, the Vanilla dynamics escape the valid $[0, 1]$ space, fit the observed data poorly, and make worse forecasts.
Finally, we note that although WSP substantially improves forecasts, it still cannot reliably forecast a patient's true state for the \emph{entirety} of the second half of the study just from the first half---EMA data is too stochastic for \emph{any} model to be this accurate.
\emph{Instead, these results suggest that embedding additional clinical knowledge in latent SDEs could yield similarly large gains.}

\paragraph{Ablation: Clipping yields modest gains for Vanilla and little to no gain for WSP.}
As shown in \cref{table:performance}, Vanilla+Clip improves over Vanilla on interpolation and forecasting, yet remains substantially worse than WSP, while WSP+Clip is comparable to WSP.
This confirms that WSP's gains stem from viability-constrained dynamics, not clipping.

\begin{figure*}[t!]
    \centering

    \includegraphics[width=0.48\textwidth]{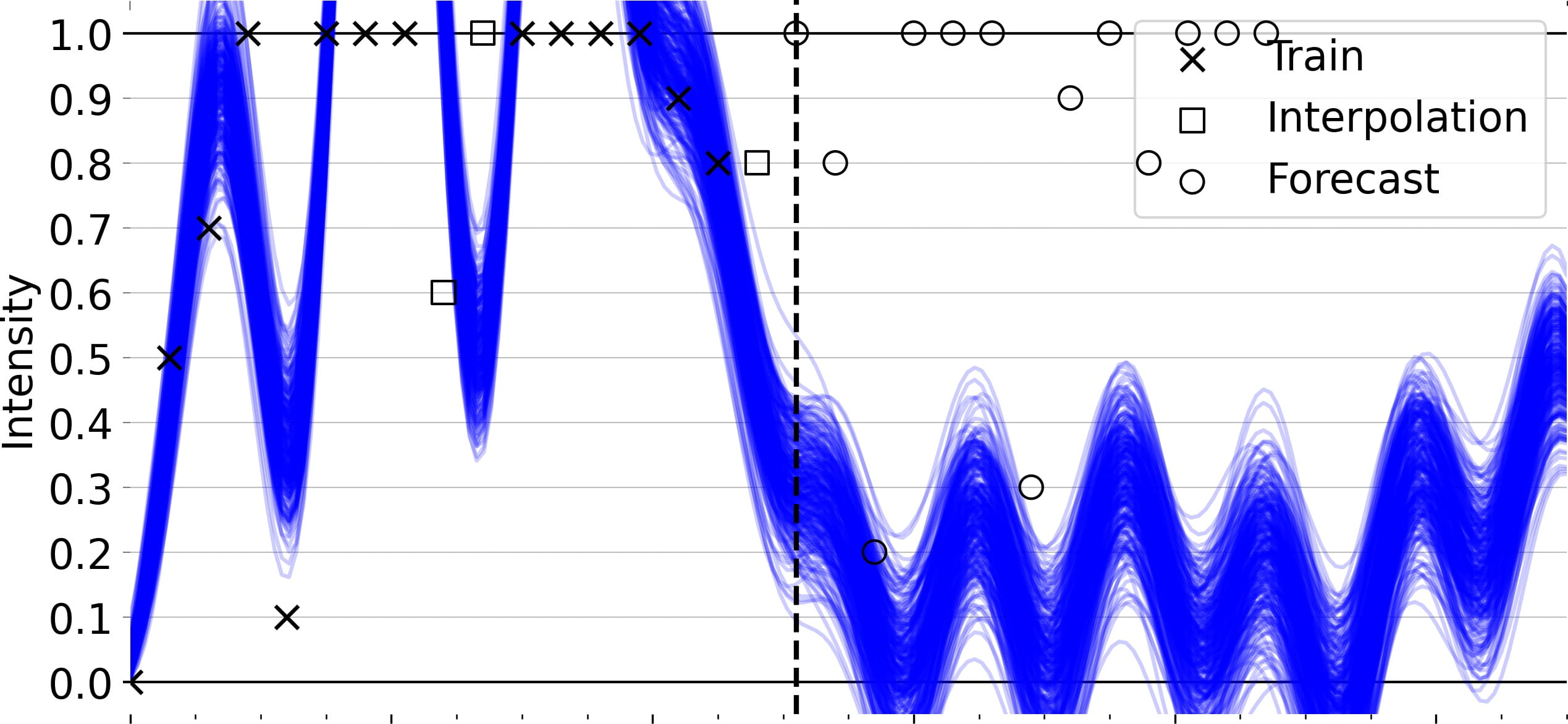}
    ~
    \includegraphics[width=0.48\textwidth]{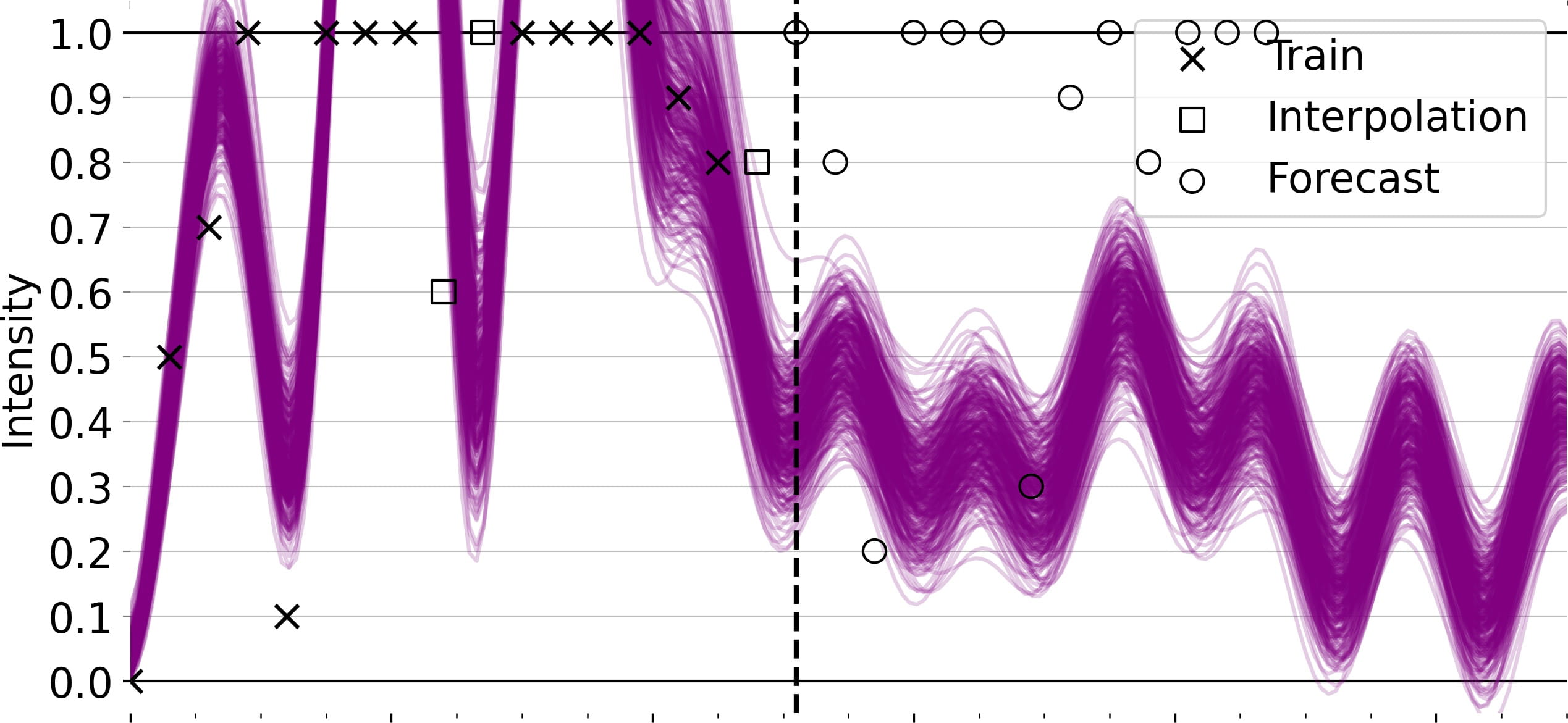}

    \includegraphics[width=0.48\textwidth]{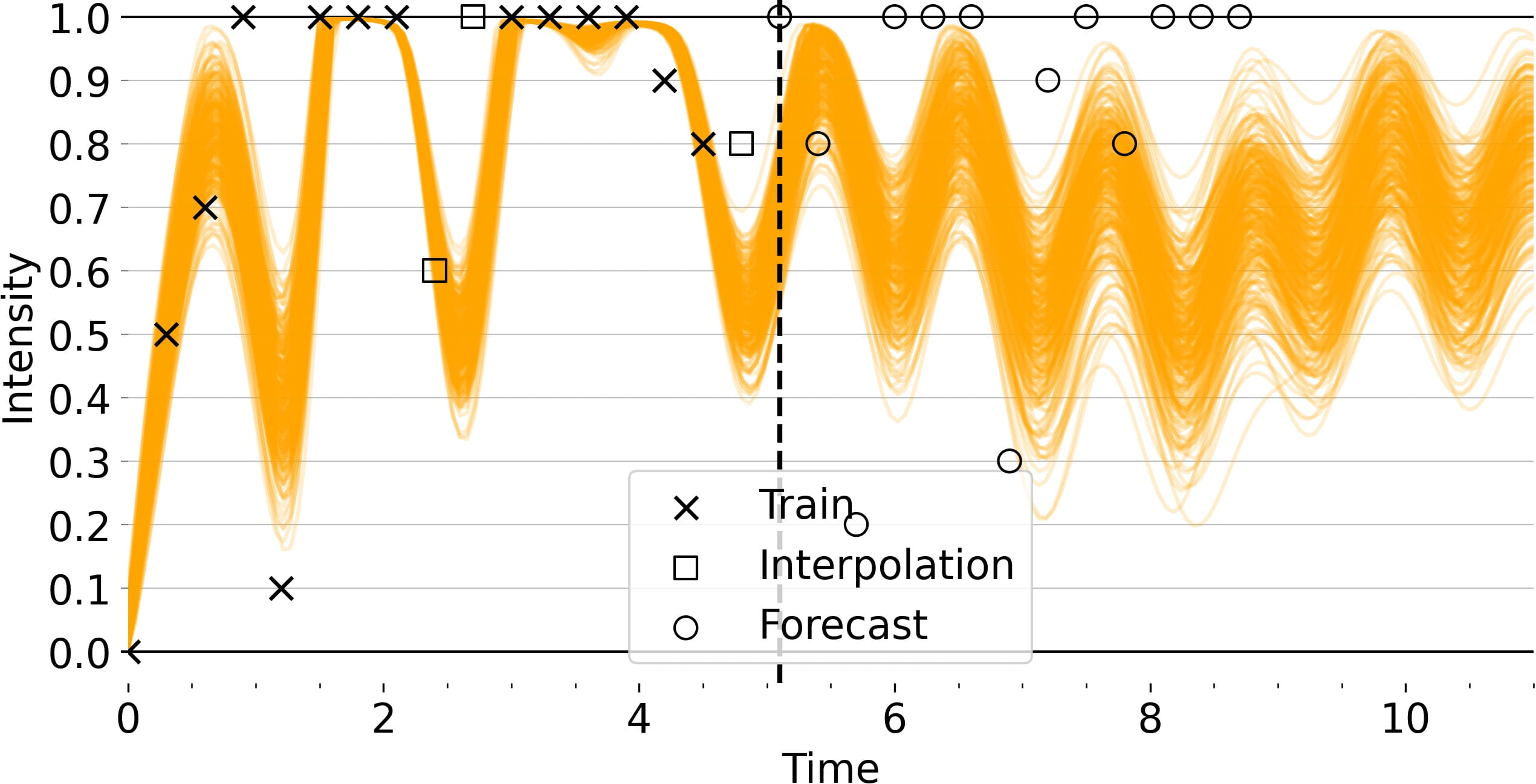}
    ~
    \includegraphics[width=0.48\textwidth]{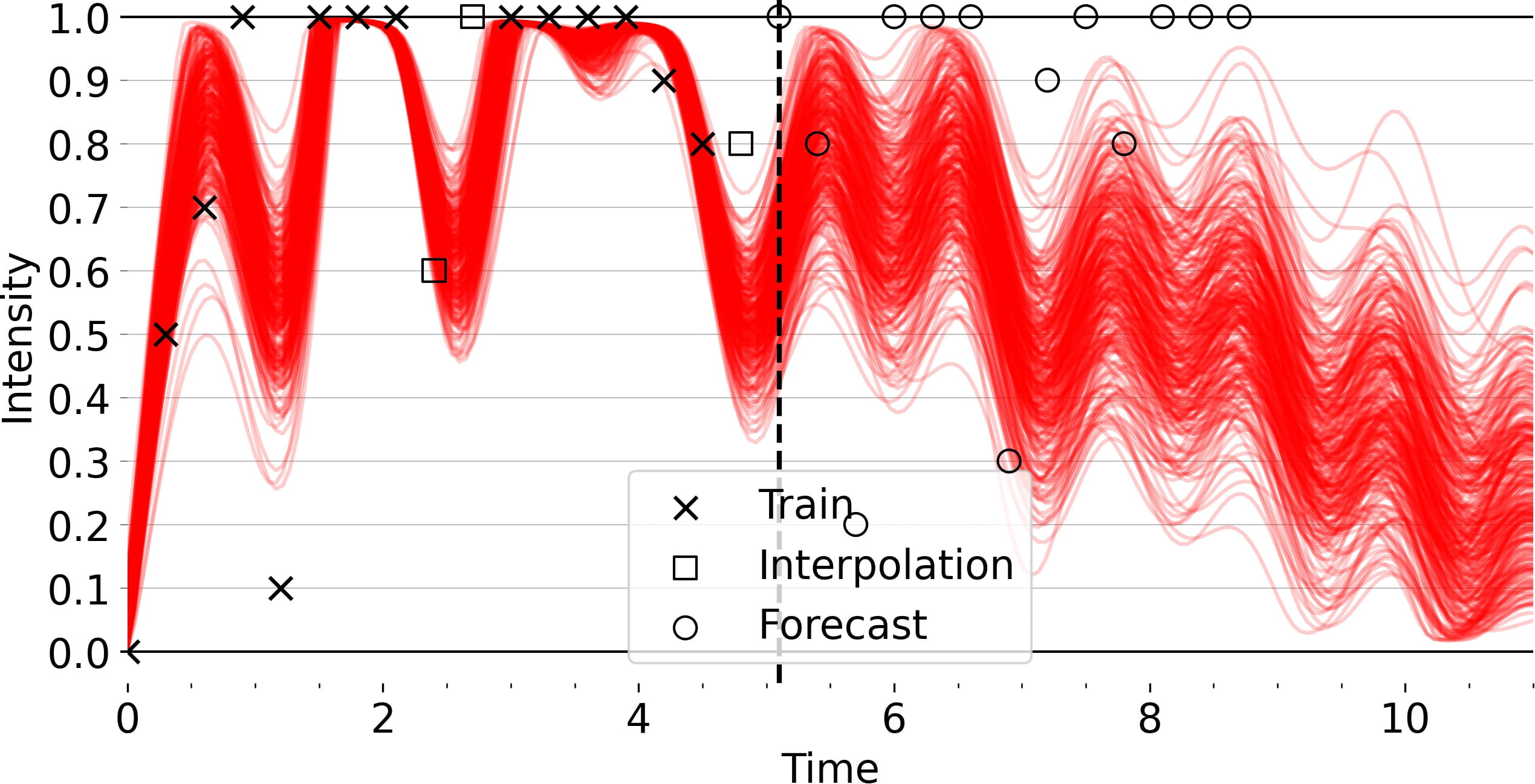}
    
    \caption{\textbf{WSP-based latent neural SDE exhibits better inductive bias than Vanilla neural SDE baseline on the ``SMART'' data.} Each plot visualizes posterior samples for a patient, corresponding to EMA item ``urge to die,'' along with data 0-10 Likert-scale EMA items. Top-left: \textcolor{colorVanillaStrong}{\textbf{Vanilla}}. Top-right: \textcolor{colorVanillaClipStrong}{\textbf{Vanilla+Clip}}. Bottom-left: \textcolor{colorWSPNoClipStrong}{\textbf{WSP}}. Bottom-right: \textcolor{colorWSPStrong}{\textbf{WSP+Clip}}. Additional results in \cref{apx:viz-qualitative}.}
    \label{fig:wsp-qualitative-inline}
\end{figure*}

\section{Discussion and Future Work} \label{sec:discussion}

In this work, we introduce a general method for transforming arbitrary---neural or expert-specified---dynamics into SDEs on arbitrary compact polyhedra. 
Empirically, our method substantially improves optimization and forecasts on EMA data for suicide risk and mental health. 
These results are a first step toward aligning expressive probabilistic models with clinical knowledge, but trustworthy models for scientific insight and intervention require further work, which we outline next.

\paragraph{Hybrid Modeling to Empirically Test Psychological Theories of Suicide.}
WSP naturally supports hybrid models that blend mechanistic, expert-specified structure with data-informed neural components, providing a vehicle for empirically testing psychological theories of suicide~\citep{millner2020advancing}. 
Using WSP, domain experts can encode theoretical constraints into the drift and diffusion---such as directional influences among symptoms---while NNs learn the functional form of these influences. 
In future work, we plan to (a) instantiate specific theories as families of constrained SDEs, and (b) empirically assess these theories by comparing their forecasting performance.

\paragraph{Encoding Relationships Between EMA Items via Linear Inequality Constraints.}
Although our main application focuses on constraints for 0–10 Likert scales, the same framework accommodates a richer class of linear inequality constraints to encode relationships among symptoms. 
For example, if we believe suicidal ideation is driven by an intolerable internal state, we can impose that the ``amount'' of suicidal ideation is always lower than the ``amount'' of internal distress.
In future work, we plan to systematically test such clinically motivated constraints.

\paragraph{Integration with Other Popular SDE-based Models.}
Our results may benefit other types of SDE-based models, like diffusion models~\citep{song2021scorebased} and infinitely deep Bayesian NNs~\citep{xu2022infinitely}, where the data often live in bounded or simplex-like domains (e.g.~pixel values or probabilities), but the underlying SDEs are typically defined on the entire Euclidean space. 
In future work, we hope to evaluate whether our constraint-satisfying dynamics translate into improved optimization and performance for other SDE-based models.

\FloatBarrier
\clearpage

\section*{Acknowledgements}
The authors are grateful for funding from NIMH (U01MH116928), the Fuss Family Research Fund, the Chet and Will Griswold Suicide Prevention Fund, and a gift from an anonymous donor. The authors are also grateful to Wellesley College for supporting YL in the summer of 2025.

\bibliography{references}

\newpage
\appendix

\addcontentsline{toc}{section}{Appendix} 
\part{Appendix} 
\parttoc 

\section{Summary of Related Work} \label{apx:related-work}

{
\footnotesize

\begin{tabularx}{\linewidth}{
>{\raggedright\arraybackslash}p{0.30\linewidth}||
>{\centering\arraybackslash}X
>{\centering\arraybackslash}X
>{\centering\arraybackslash}X
}
\toprule
\textbf{Approach} &
\textbf{Smoothness of Drift/Diffusion} &
\textbf{Any r-Polyhedron} &
\textbf{Expressive} \\
\midrule
\midrule
\textbf{RSDE} (e.g.~\cite{pilipenko2014introduction}) & Non-smooth dynamics activated at boundary &
\checkmark &
Expressive in interior; fixed at boundary \\
\midrule
\textbf{Chain Rule} (\cref{sec:challenges}) &
Smooth on interior; degenerate at boundary &
To the best of our knowledge, limited to spaces explored in prior work &
\checkmark \\
\midrule
\textbf{Handcrafted Viability Constraints} (e.g.~\cite{cai1996generation,cresson2012validating,d2013bounded,cresson2016stochastic,cresson2018note,rohanizadegan2020discrete}) &
\checkmark &
Limited to spaces explored in prior work &
Limited to dynamics explored in prior work \\
\midrule
\textbf{This Work (WSP)} &
\checkmark &
\checkmark &
\checkmark \\
\bottomrule
\end{tabularx}
}

\FloatBarrier

\section{Extending \cref{thm:milian} to Stratonovich SDEs on Compact Polyhedra}
\label{apx:proof-stratonovich}

\begin{corollary} \label{thm:stratonovich} 
    \textbf{Suppose} that the drift and diffusion, $h(t, z_t)$ and $g(t, z_t)$, of a Stratonovich SDE, defined for $t \geq 0$ and $z_t \in \mathbb{R}^{D_z}$, satisfy conditions (i)-(iii) from \cref{thm:milian}.
    Suppose further that for all $T > 0$, $z_t, z_t' \in K$, and $t \in [0, T]$, $\lVert \mathrm{diag} (\nabla_{z_t} g(t, z_t)) - \mathrm{diag}(\nabla_{z_t'} g(t, z_t')) \rVert \leq C_T\cdot \lVert z_t - z_t' \rVert$.
    \textbf{Then} $z_t$ is viable in compact polyhedron $K$ if and only if (a)-(b) from \cref{thm:milian} hold.
\end{corollary}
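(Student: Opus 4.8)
The plan is to reduce the Stratonovich statement to Milian's Ito theorem (\cref{thm:milian}) via the standard Ito--Stratonovich conversion, and then exploit the diagonal structure of the diffusion to show that the conversion's correction term is invisible at the boundary. First I would rewrite the Stratonovich SDE in its equivalent Ito form. For a diagonal diffusion $(\mathrm{diag} \circ g)$, the correction term collapses coordinatewise: the off-diagonal entries vanish, so only the $d$th component of $g$ and its derivative in the $d$th coordinate survive, giving $\tfrac{1}{2}\, g(t, z_t) \odot \mathrm{diag}(\nabla_{z_t} g(t, z_t))$. Thus the equivalent Ito drift is
\[
h^{\mathrm{Ito}}(t, z_t) = h(t, z_t) + \tfrac{1}{2}\, g(t, z_t) \odot \mathrm{diag}(\nabla_{z_t} g(t, z_t)),
\]
with the diffusion $g$ unchanged.

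Next I would verify that $h^{\mathrm{Ito}}$ and $g$ satisfy conditions (i)--(iii) of \cref{thm:milian}, so that Milian's theorem applies to the converted Ito SDE. This is precisely where the extra hypothesis on $\mathrm{diag}(\nabla_{z_t} g)$ enters: since $g$ is Lipschitz on the compact set $K$ it is bounded there, and the added assumption that $\mathrm{diag}(\nabla_{z_t} g)$ is Lipschitz makes it bounded on $K$ as well; hence the correction term, a product of two bounded Lipschitz functions, is itself bounded and Lipschitz. Combined with (i)--(iii) on $h$, this yields linear boundedness, Lipschitz continuity, and continuity of $h^{\mathrm{Ito}}$. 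Invoking \cref{thm:milian} then gives that $z_t$ is viable in $K$ if and only if $h^{\mathrm{Ito}}$ satisfies (a) and $g$ satisfies (b) on each active face.

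The crux is showing that at the boundary the correction term contributes nothing to the inward-pointing condition (a), so that (a) for $h^{\mathrm{Ito}}$ collapses exactly onto (a) for the original Stratonovich drift $h$. Fix a face with $\langle z_t - u_s, v_s \rangle = 0$. Condition (b) forces $g^d(t, z_t)\, v_s^d = 0$ for every coordinate $d$, i.e.~for each $d$ either $g^d(t, z_t) = 0$ or $v_s^d = 0$. In either case the $d$th summand of $\langle g \odot \mathrm{diag}(\nabla_{z_t} g),\, v_s \rangle = \sum_d g^d\, (\partial_{z^d} g^d)\, v_s^d$ vanishes, so the entire correction term is orthogonal to $v_s$. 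Hence $\langle h^{\mathrm{Ito}}, v_s \rangle = \langle h, v_s \rangle$ on the boundary, and the Ito criterion (a) is identical to the Stratonovich one; the ``only if'' direction runs the same identity backwards. I expect the main obstacle to be the regularity bookkeeping in the second step---confirming the correction term inherits the hypotheses needed to apply \cref{thm:milian}---whereas the boundary cancellation in the third step is immediate once (b) is in force, and is really the conceptual heart of the result.
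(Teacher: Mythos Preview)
Your proposal is correct and follows essentially the same route as the paper: convert to the equivalent Ito SDE with drift $h + \tfrac{1}{2}\, g \odot \mathrm{diag}(\nabla_{z_t} g)$, verify the regularity conditions (i)--(iii) for the corrected drift via boundedness-plus-Lipschitz of the product on compact $K$, and then observe that condition (b) forces $g^d(t,z_t)\, v_s^d = 0$ coordinatewise on each face, killing the correction term's contribution to $\langle \cdot, v_s\rangle$. Your explicit treatment of the ``only if'' direction is in fact slightly more complete than the paper's writeup, which establishes the boundary identity but leaves the reverse implication implicit.
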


\subsection{Proof of the forward direction}

\begin{proof}
Given the Stratonovich interpretation of the SDE in \cref{eq:unconstrained},
\begin{align}
    d z_t = h(t, z_t) \cdot dt + (\mathrm{diag} \circ g)(t, z_t) \circ d B_t,
\end{align}
we can write the equivalent It\^o SDE as follows:
\begin{align}
    d z_t &= \hat{h}(t, z_t) \cdot dt + (\mathrm{diag} \circ g)(t, z_t) \cdot d B_t, \label{eq:strat-from-ito}
\end{align}
where,
\begin{align}
    \hat{h}(t, z_t) &= h(t, z_t) + \frac{1}{2} \cdot \mathrm{diag}( \nabla_{z_t} g(t, z_t)) \odot g(t, z_t).    
\end{align}
Since in \cref{eq:strat-from-ito}, the diffusion is unchanged, we only need to show that when $h$ satisfies (i)-(iii) and (a), so does $\hat{h}$.

\paragraph{Proof that $\hat{h}$ satisfies (i) from \cref{thm:milian}.} 
We prove that for each $T > 0$, there exists $C_T > 0$ such that for all $z_t \in K$ and $t \in [0, T]$, $\lVert \hat{h}(z_t) \rVert^2 \leq C_T \cdot ( 1 + \lVert z_t \rVert^2)$. 
We do this as follows:
\begin{align}
    \lVert \hat{h}(z_t) \rVert &= \left\lVert h(t, z_t) + \frac{1}{2} \cdot \mathrm{diag}( \nabla_{z_t} g(t, z_t)) \odot g(t, z_t) \right\rVert \\
    &\leq \lVert h(t, z_t) \rVert + \frac{1}{2} \cdot \lVert \mathrm{diag}( \nabla_{z_t} g(t, z_t)) \odot g(t, z_t) \lVert \\
    &\leq \lVert h(t, z_t) \rVert + \frac{1}{2} \cdot \lVert \mathrm{diag}( \nabla_{z_t} g(t, z_t)) \rVert \cdot \lVert g(t, z_t) \lVert \\
    &\leq \underbrace{\sqrt{C_T' \cdot ( 1 + \lVert z_t \rVert^2)}}_{\text{bounded via condition (i)}} + \underbrace{\frac{1}{2} \cdot \lVert \mathrm{diag}( \nabla_{z_t} g(t, z_t)) \rVert}_{\text{bounded by const. via condition (ii)}} \cdot \underbrace{\sqrt{C_T' \cdot ( 1 + \lVert z_t \rVert^2)}}_{\text{bounded via condition (i)}}
\end{align}
The above line can be written in the form $(1 + B) \cdot \sqrt{C_T' \cdot ( 1 + \lVert z_t \rVert^2)}$, which, when squared, gives us an inequality of the form, $\lVert \hat{h}(z_t) \rVert^2 \leq C_T \cdot ( 1 + \lVert z_t \rVert^2)$.

\paragraph{Proof that $\hat{h}$ satisfies (ii) from \cref{thm:milian}.} 
Here, we prove that for all $T > 0$, $z_t, z_t' \in K$, and $t \in [0, T]$, $\lVert \hat{h}(z_t) - \hat{h}(z_t') \rVert \leq C_T \cdot \lVert z_t - z_t' \rVert$:
\begin{align}
    \lVert \hat{h}(z_t) - \hat{h}(z_t') \rVert
    &= \left\lVert h(t, z_t) + \frac{1}{2} \cdot \mathrm{diag}( \nabla_{z_t} g(t, z_t)) \odot g(t, z_t) - h(t, z_t') - \frac{1}{2} \cdot \mathrm{diag}( \nabla_{z_t'} g(t, z_t')) \odot g(t, z_t') \right\rVert \\
    &= \left\lVert h(t, z_t) - h(t, z_t') + \frac{1}{2} \cdot \mathrm{diag}( \nabla_{z_t} g(t, z_t)) \odot g(t, z_t)  - \frac{1}{2} \cdot \mathrm{diag}( \nabla_{z_t'} g(t, z_t')) \odot g(t, z_t') \right\rVert \\
    &\leq \left\lVert h(t, z_t) -h(t, z_t')\right\rVert + \left\lVert\frac{1}{2} \cdot \mathrm{diag}( \nabla_{z_t} g(t, z_t)) \odot g(t, z_t)  - \frac{1}{2} \cdot \mathrm{diag}( \nabla_{z_t'} g(t, z_t')) \odot g(t, z_t') \right\rVert
\end{align}
Using the trick by~\citet{mo2171798}, we have
\begin{align}
    \lVert \hat{h}(z_t) - \hat{h}(z_t') \rVert
    &\leq \underbrace{\left\lVert h(t, z_t) - h(t, z_t')\right\rVert}_{\leq C_T' \cdot \lVert z_t - z_t' \rVert} \\ \nonumber
    &\quad + \frac{1}{2} \cdot \underbrace{\left\lVert \mathrm{diag}( \nabla_{z_t} g(t, z_t)) - \mathrm{diag}( \nabla_{z_t'} g(t, z_t'))\right\rVert}_{\leq C_T' \cdot \lVert z_t - z_t' \rVert} \cdot \left\lVert g(t, z_t)\right\rVert \\ \nonumber
    &\quad + \frac{1}{2} \cdot \left\lVert \mathrm{diag}( \nabla_{z_t'} g(t, z_t'))\right\rVert \cdot \underbrace{\left\lVert g(t, z_t) - g(t, z_t') \right\rVert}_{\leq C_T' \cdot \lVert z_t - z_t' \rVert} 
\end{align}
Finally, since both $g(t, z_t)$ and $\mathrm{diag}( \nabla_{z_t'} g(t, z_t'))$ are Lipschitz on a bounded domain, they can be bounded by a constant.

\paragraph{Proof that $\hat{h}$ satisfies (iii) from \cref{thm:milian}.} 
Since $\hat{h}$ is comprised of addition and scaling operations on continuous functions, it is also continuous.

\paragraph{Proof that $\hat{h}$ satisfies (a) from \cref{thm:milian}.}
When $\langle z_t - u_s, v_s \rangle = 0$, we show (a) holds for $\hat{h}$ as follows:
\begin{align}
    \langle \hat{h}(t, z_t), v_s \rangle &= \underbrace{\langle h(t, z_t), v_s \rangle}_{\geq 0} + \frac{1}{2} \cdot \sum\limits_{d=1}^{D_z} \underbrace{\frac{\partial g^d(t, z_t)}{\partial z_t^d}}_{\text{bounded}} \cdot \underbrace{g^d(t, z_t) \cdot v_s^d}_{= 0} \geq 0.
\end{align}
The first term is non-negative.
The second term is $0$ since $\langle g(t, z_t) \odot e_d, v_s \rangle = 0$ when condition (b) holds for $g$, and when $0$ is multiplied by the partial (bounded thanks to condition (ii) for $g$), we get $0$. 
Thus, $\langle \hat{h}(t, z_t), v_s \rangle \geq 0$.

\end{proof}

\subsection{Proof of the reverse direction}

\begin{proof}
Suppose $z_t$ is viable in $K$ as a solution to the Stratonovich SDE.
Since the Stratonovich SDE and its equivalent It\^o form in \cref{eq:strat-from-ito} share the same sample paths, $z_t$ is equally viable as a solution to the It\^o SDE with drift $\hat{h}$ and diffusion $g$.
From the forward-direction proof, $\hat{h}$ satisfies conditions (i)--(iii) of \cref{thm:milian}; and $g$ satisfies conditions (i)--(iii) by assumption.
Applying the \emph{only-if} direction of \cref{thm:milian} to the viable It\^o SDE with dynamics $(\hat{h}, g)$ yields, for all $s \in \{1, \dots, S\}$ and $z_t \in K$ such that $\langle z_t - u_s, v_s \rangle = 0$:
\begin{align}
    \langle \hat{h}(t, z_t), v_s \rangle \geq 0, 
    \qquad 
    \langle g(t, z_t) \odot e_d, v_s \rangle = 0 
    \quad \text{for all } d \in \{1, \dots, D_z\}.
\end{align}
The second of these is exactly condition (b) from \cref{thm:milian} for the Stratonovich SDE.
It remains to recover condition (a) for $h$.
At any boundary point where $\langle z_t - u_s, v_s \rangle = 0$, condition (b) gives
$g^d(t, z_t) \cdot v_s^d = \langle g(t, z_t) \odot e_d, v_s \rangle = 0$
for every $d \in \{1, \dots, D_z\}$. Therefore,
\begin{align}
    \langle \hat{h}(t, z_t), v_s \rangle
    &= \left\langle h(t, z_t) 
        + \frac{1}{2} \cdot \mathrm{diag} \left(\nabla_{z_t} g(t, z_t)\right) \odot g(t, z_t),  v_s \right\rangle \\
    &= \langle h(t, z_t), v_s \rangle 
        + \frac{1}{2} \cdot \sum_{d=1}^{D_z} 
        \underbrace{\frac{\partial g^d(t, z_t)}{\partial z_t^d}}_{\text{finite by (ii)}} 
        \cdot 
        \underbrace{g^d(t, z_t) \cdot v_s^d}_{=   0 \ \text{by (b)}} \\
    &= \langle h(t, z_t), v_s \rangle.
\end{align}
Since $\langle \hat{h}(t, z_t), v_s \rangle \geq 0$, we conclude $\langle h(t, z_t), v_s \rangle \geq 0$,
which is condition (a) from \cref{thm:milian} for $h$.

\end{proof}

\section{Proof of \cref{thm:stationary}}
\label{apx:proof-stationary}

To prove \cref{thm:stationary}, we will first derive a sufficient form for the drift, $h$, as a
function of the diffusion, $g$, so that it induces stationary dynamics with target (unnormalized)
time-marginal $\tilde{p}(z_t)$ (\cref{apx:sufficient-drift}). 
Then, we show that, although the drift and diffusion vanish at the boundary (\cref{apx:sufficient-drift-vanish}), $\partial K$ is unreachable from the interior, so the boundary does not form an absorbing state (\cref{apx:not-absorbing-state}).
Finally, we will show that the drift satisfies all conditions from \cref{thm:milian}, implying that it is viable in $K$ (\cref{apx:stationary-drift-satisfies-condition}).

\subsection{Derivation of a drift sufficient for stationary dynamics}
\label{apx:sufficient-drift}

We find $h$ by drawing inspiration from the derivation of \citet{cai1996generation}, which sets
the Fokker--Planck--Kolmogorov (FPK) equation to $0$ to obtain stationarity and then solves for
the dynamics. In contrast, instead of solving for the diffusion (which typically requires computing
intractable integrals), we solve for the drift. This derivation parallels classical Langevin dynamics
and stochastic gradient MCMC (e.g.~\cite{ma2015complete,ormandy2019linking}).

We begin with a general autonomous It\^o SDE of the form,
\begin{align}
    dz_t = h(z_t) \cdot dt + g(z_t) \cdot dB_t,
\end{align}
where $g(z_t)\in\R^{D_z\times D_z}$ is a full matrix. Let
$G(z_t)=g(z_t)\Sigma g(z_t)^\intercal$, where $\Sigma$ is the covariance of the Brownian motion.
Stationarity means $\frac{\partial}{\partial t}p(t,z_t)=0$.
Plugging in the FPK equation gives:
\begin{align}
    0
    &=
    -\sum_{d=1}^{D_z} \frac{\partial}{\partial z_t^d}\Big(h^d(z_t) p(z_t)\Big)
    + \frac{1}{2}\sum_{d=1}^{D_z}\sum_{d'=1}^{D_z}\frac{\partial^2}{\partial z_t^d\partial z_t^{d'}}
    \Big(G^{d,d'}(z_t) p(z_t)\Big) \\
    &=
    \sum_{d=1}^{D_z}\frac{\partial}{\partial z_t^d}\left(
    -h^d(z_t)p(z_t)
    + \frac{1}{2}\sum_{d'=1}^{D_z}\frac{\partial}{\partial z_t^{d'}}\Big(G^{d,d'}(z_t)p(z_t)\Big)
    \right).
\end{align}

For this theorem, we only consider identity-covariance Brownian motion and diagonal diffusion:
$\Sigma=I$ and $g(z_t)\in\R_{\ge 0}^{D_z}$ interpreted as $\mathrm{diag}(g(z_t))$. Then $G$ is
diagonal with entries $g^d(z_t)^2$, and the stationary FPK equation simplifies to
\begin{align}
    0
    &=
    \sum_{d=1}^{D_z}\frac{\partial}{\partial z_t^d}\left(
    -h^d(z_t)p(z_t) +  \frac{1}{2}\frac{\partial}{\partial z_t^d}\Big(g^d(z_t)^2 p(z_t)\Big)
    \right).
\end{align}
One sufficient (strong) way to satisfy this is to enforce the element-wise identities:
for every $d\in\{1,\dots,D_z\}$,
\begin{align}
    h^d(z_t)p(z_t) =  \frac{1}{2} \frac{\partial}{\partial z_t^d}\Big(g^d(z_t)^2 p(z_t)\Big).
\end{align}
Solving for $h^d$ and using $p(z_t)=\tilde{p}(z_t)/A$ yields:
\begin{align}
    h^d(z_t)
    &=
    \frac{1}{2p(z_t)}\frac{\partial}{\partial z_t^d}\Big(g^d(z_t)^2 p(z_t)\Big) \\
    &=
    \frac{1}{2\tilde{p}(z_t)}\frac{\partial}{\partial z_t^d}\Big(g^d(z_t)^2 \tilde{p}(z_t)\Big) \\
    &=
     \frac{1}{2} \frac{\partial}{\partial z_t^d}\big[g^d(z_t)^2\big]
    + \frac{1}{2} g^d(z_t)^2 \frac{\partial}{\partial z_t^d}\log\tilde{p}(z_t).
\end{align}
Equivalently,
\begin{align}
    h(z_t)
    =
     \frac{1}{2} \mathrm{diag} \left(\nabla_{z_t}[g(z_t)^2]\right)
    + \frac{1}{2} g(z_t)^2\odot \underbrace{\nabla_{z_t}\log\tilde{p}(z_t)}_{\text{score function}},
    \label{eq:sufficient-drift}
\end{align}
giving us condition (a) from \cref{thm:stationary}.

\subsection{Proof that the sufficient drift can vanish at the boundary} \label{apx:sufficient-drift-vanish}

We now show that the stationary construction from \cref{eq:sufficient-drift} can be \emph{degenerate} at boundary points if (i) the diffusion vanishes on $\partial K$ as a convenient way to enforce viability and (ii) the score remains bounded on the boundary. 
In that case, $dz_t = 0$, indicating a potential absorbing state that would prevent the intended stationary density on $\mathrm{int}(K)$ from being realized.
In \cref{apx:not-absorbing-state}, we then prove that under the assumptions from \cref{thm:stationary}, the boundary is unreachable from the interior of $K$, fixing the problem.

\begin{lemma} \label{lem:absorbing-boundary}
Let $z_t^\partial\in\partial K$ be any boundary point. Consider the stationary drift construction from \cref{eq:sufficient-drift}, based on a corresponding diffusion, $g(z_t)$, that satisfies all conditions from \cref{thm:milian}, in part, using the following, convenient properties:
\begin{enumerate}[label=(A\arabic*)]
    \item $g^d(z_t) \geq 0$ for all
    $z_t \in \mathbb{R}^{D_z}$ and all $d\in\{1,\dots,D_z\}$;
    \item $g(z_t^\partial)=0$;
    \item $\|\nabla_{z_t}\log\tilde{p}(z_t)\|_2 < \infty$.
\end{enumerate}
Then $h(z_t^\partial)=0$. In particular, at $z_t^\partial$ both drift and diffusion vanish; hence if the
SDE reaches $z_t^\partial$, it remains on the boundary.
\end{lemma}

\begin{proof}
By (A1), $g^d(z_t)^2 \geq 0$ and by (A2), $g^d(z_t^\partial) = 0$,
$z_t^\partial$ is the global minimizer of $g^d(z_t)^2$; therefore,
\begin{align}
    \frac{\partial}{\partial z_t^d}\big[g^d(z_t)^2\big]\Big|_{z_t=z_t^\partial}=0
    \qquad \forall d\in\{1,\dots,D_z\},
\end{align}
and
\begin{align}
    \mathrm{diag} \left(\nabla_{z_t}[g(z_t)^2]\right)\Big|_{z_t=z_t^\partial}=0.
\end{align}
Next, by (A2) and (A3),
\begin{align}
    g(z_t^\partial)^2 \odot \nabla_{z_t}\log\tilde{p}(z_t^\partial)=0.
\end{align}
Substituting these into \cref{eq:sufficient-drift} gives $h(z_t^\partial)=0$. 
Finally, since $g(z_t^\partial)=0$ as well, the SDE has $dz_t=0$ at $z_t^\partial$, so $z_t^\partial$ is absorbing.
\end{proof}

\subsection{Proof that the boundary is unreachable from the interior} \label{apx:not-absorbing-state}

We now prove that, under the assumptions from \cref{thm:stationary}, the boundary is unreachable from the interior, ensuring the induced target marginal remains valid.
Our proof draws on ``McKean's Argument'' (e.g.~Section 2.9, Problem 7 from \citet{mckean2024stochastic}, Proposition 1 from \citet{bru1991wishart}, or Proposition 4.3 by \citet{mayerhofer2011strong}).
Specifically, we first derive a 1D SDE characterizing the distance from $z_t$ to boundary $s$, and apply It\^o's lemma to the logarithm of the distance to the boundary. 
By leveraging the linear bounds on the drift and diffusion, we show that the deterministic components of this log-distance remain finite. 
We then apply Doob's $L^2$ martingale convergence theorem to the remaining stochastic integral to show that it also converges to a finite limit almost surely. 
Since reaching the boundary in finite time would require the log-distance to diverge to $-\infty$, this finite limit yields a contradiction, thus proving the boundary is never reached.

\begin{lemma} \label{lem:boundary-unreachable}
Assume the conditions of \cref{thm:stationary}, and let $z_0\in\mathrm{int}(K)$.
Then
\begin{align}
\mathbb{P}\left(\exists t \in [0, \infty):\ z_t\in\partial K \right)=0.
\end{align}
Equivalently, the boundary is unreachable from the interior.
\end{lemma}

\begin{proof}
Define the per-boundary hitting times and the first exit time from $K$:
\begin{align}
    \tau_s = \inf\{t \ge 0:\ d_s(z_t) = 0\}, \qquad
    \tau = \inf\{t \ge 0:\ z_t \in \partial K\} = \min_s \tau_s.
\end{align}
Since $z_0 \in \mathrm{int}(K)$ and $z_t$ is continuous (the SDE has Lipschitz, 
linearly bounded dynamics by (i)--(ii), so a unique strong solution exists), 
we have $z_t \in \mathrm{int}(K)$ for all $t \in [0, \tau)$.
On the event $\{\tau < \infty\}$, continuity of $z_t$ and each $d_s(z_t)$ ensures 
there exists $s^* \in \{1,\dots,S\}$ such that $d_{s^*}(z_\tau) = 0$ and 
$\tau_{s^*} = \tau$; fix such an $s^*$.
We then define the distance-to-boundary process for boundary $s^*$:
\begin{align}
    y_t = d_{s^*}(z_t) = \langle z_t - u_{s^*}, n_{s^*}\rangle, \qquad 
    n_{s^*} = \frac{v_{s^*}}{\|v_{s^*}\|}.
\end{align}
Since $z_0 \in \mathrm{int}(K)$, we have $y_0 > 0$, and $y_\tau = 0$ on 
$\{\tau < \infty\}$.
All bounds below that are stated for $z_t \in K$ apply on $[0, \tau)$, 
since $z_t \in \mathrm{int}(K) \subset K$ throughout this interval.

\paragraph{A 1D SDE for $y_t$ on $[0,\tau)$.}
Because $d_{s^*}(\cdot)$ is affine, $\nabla_{z_t} d_{s^*}(z)=n_{s^*}$ and $\nabla^2_{z_t} d_{s^*}(z)=0$.
Applying It\^o's lemma yields, for $t<\tau$,
\begin{align}
    dy_t
    &= \langle n_{s^*}, h(z_t) \rangle  dt 
    + (n_{s^*} \odot g(z_t))^\intercal \cdot dB_t.
    \label{eq:y_sde_boundary_appx}
\end{align}
We define the scalar drift and diffusion magnitudes,
\begin{align}
    \mu_{s^*}(z_t) = \langle n_{s^*}, h(z_t)\rangle, \qquad
    \sigma_{s^*}(z_t) = \|n_{s^*} \odot g(z_t)\|_2.
\end{align}
By assumption (iv) in \cref{thm:stationary}, $g(z)>0$ for all $z\in\mathrm{int}(K)$,
hence $\sigma_{s^*}(z_t)>0$ for all $t<\tau$.
Next, we define the 1D process,
\begin{align}
    \widetilde B_t = \int_0^t 
    \frac{(n_{s^*}\odot g(z_u))^\intercal}{\sigma_{s^*}(z_u)} \cdot dB_u,
    \qquad t<\tau,
\end{align}
where $dB_u$ denotes the $D_z$-dimensional Brownian increment.
By L\'evy's characterization of Brownian motion,
$\widetilde B_t$ is a standard 1D Brownian motion on $[0,\tau)$, because 
$\widetilde B_0 = 0$ and because its quadratic variation up to $t$ equals $t$:
\begin{align}
    \int_0^t \left\lVert \frac{(n_{s^*}\odot g(z_u))}{\sigma_{s^*}(z_u)} 
    \right\rVert_2^2 \cdot du 
    = \int_0^t 1 \cdot du 
    = t, \qquad t < \tau.
\end{align}
Therefore, for $t<\tau$, we can write \cref{eq:y_sde_boundary_appx} as:
\begin{align}
    dy_t = \mu_{s^*}(z_t) \cdot dt + \sigma_{s^*}(z_t) \cdot d\widetilde B_t.
    \label{eq:y_sde_scalar_appx}
\end{align}

\paragraph{Bounding $\sigma_{s^*}(z)$ and $\mu_{s^*}(z)$ linearly in $d_{s^*}(z)$.}
By assumption (v), there exists $L_{s^*}<\infty$ such that for all $z\in K$,
\begin{align}
    \sigma_{s^*}(z)=\|n_{s^*}\odot g(z)\|_2 \le L_{s^*} d_{s^*}(z).
    \label{eq:sigma_linear_appx}
\end{align}
Next, we bound $\mu_{s^*}(z_t)=\langle n_{s^*},h(z_t)\rangle$ linearly in 
$d_{s^*}(z_t)$.
Using condition (a) from \cref{thm:stationary},
\begin{align}
\mu_{s^*}(z_t)
&=\left\langle n_{s^*}, \frac{1}{2} \mathrm{diag} \left(\nabla_{z_t}[g(z_t)^2]
\right)\right\rangle
+\left\langle n_{s^*}, \frac{1}{2} g(z_t)^2\odot \nabla_{z_t}\log\tilde p(z_t)
\right\rangle .
\end{align}
For diagonal diffusion, the $d$th component of $\mathrm{diag}(\nabla_{z_t}[g(z_t)^2])$ is
$\frac{\partial}{\partial z_t^d}(g^d(z_t)^2)=2g^d(z_t) 
\frac{\partial}{\partial z_t^d}g^d(z_t)$; hence,
\begin{align}
\left\langle n_{s^*}, \frac{1}{2} \mathrm{diag} 
\left(\nabla_{z_t}[g(z_t)^2]\right)\right\rangle 
&=\sum_{d=1}^{D_z} n_{s^*}^d  g^d(z_t) \frac{\partial}{\partial z_t^d}g^d(z_t) \\
&=\left\langle n_{s^*}\odot g(z_t), \mathrm{diag}(\nabla_{z_t} g(z_t))\right\rangle .
\end{align}
Similarly,
\begin{align}
\left\langle n_{s^*}, \frac{1}{2} g(z_t)^2\odot \nabla_{z_t}\log\tilde p(z_t)
\right\rangle
&= \frac{1}{2}\sum_{d=1}^{D_z} n_{s^*}^d  g^d(z_t)^2 
(\nabla_{z_t}\log\tilde p(z_t))^d \\
&= \frac{1}{2}\left\langle n_{s^*}\odot g(z_t),  
g(z_t)\odot \nabla_{z_t}\log\tilde p(z_t)\right\rangle .
\end{align}
This gives us:
\begin{align}
\mu_{s^*}(z_t)
&= \left\langle n_{s^*}\odot g(z_t), \mathrm{diag}(\nabla_{z_t} g(z_t))\right\rangle
+ \frac{1}{2}\left\langle n_{s^*}\odot g(z_t),  
g(z_t)\odot \nabla_{z_t}\log\tilde p(z_t)\right\rangle
\end{align}
Therefore, by the Cauchy-Schwarz inequality,
\begin{align}
\lVert \mu_{s^*}(z_t) \rVert_2
&\le \|n_{s^*}\odot g(z_t)\|_2 \cdot \|\mathrm{diag}(\nabla_{z_t} g(z_t))\|_2 
+ \frac{1}{2} \|n_{s^*} \odot g(z_t)\|_2 
\cdot \|g(z_t)\odot \nabla_{z_t}\log\tilde p(z_t)\|_2 
\label{eq:mu_bound_step1_appx}
\end{align}
Now, we define the finite constants (since $K$ is compact and the relevant 
quantities are continuous by (i)--(iii)):
\begin{align}
M_{\nabla g}=\sup_{z_t\in K}\|\mathrm{diag}(\nabla_{z_t} g(z_t))\|_2,\quad
M_{gs} = \sup_{z \in K} \|g(z) \odot \nabla \log \tilde{p}(z)\|_2 .
\end{align}
Using assumption (v), $\|n_{s^*} \odot g(z_t)\|_2 \le L_{s^*} d_{s^*}(z_t)$, 
and plugging into \cref{eq:mu_bound_step1_appx} yields, for all $z_t\in K$,
\begin{align}
\lVert \mu_{s^*}(z_t) \rVert_2
&\le L_{s^*} d_{s^*}(z_t) M_{\nabla g}
+ \frac{1}{2} L_{s^*} d_{s^*}(z_t) M_{gs} 
=  \underbrace{L_{s^*} \cdot \left( M_{\nabla g} + \frac{1}{2} M_{gs} \right)}_{A_{s^*}} \cdot d_{s^*}(z_t) .
\label{eq:mu_linear_appx}
\end{align}
Thus, both $\sigma_{s^*}(z)$ and $\mu_{s^*}(z)$ are linearly bounded in 
$d_{s^*}(z)$.

\paragraph{Un-reachability of $y_t = 0$ in finite time.}
For $t<\tau$, we have $y_t=d_{s^*}(z_t)>0$, so $\log y_t$ is well-defined.
Applying It\^o's lemma to $f(y_t)=\log y_t$ in \cref{eq:y_sde_scalar_appx} gives,
\begin{align}
d\log y_t
&=\left(\frac{\mu_{s^*}(z_t)}{y_t}
-\frac12\left(\frac{\sigma_{s^*}(z_t)}{y_t}\right)^{2}\right) dt
+\frac{\sigma_{s^*}(z_t)}{y_t} d\widetilde B_t.
\label{eq:dlogy_appx}
\end{align}
By the linear bounds in \cref{eq:sigma_linear_appx,eq:mu_linear_appx} and 
since $y_t = d_{s^*}(z_t) > 0$ for $t < \tau$, the ratios are uniformly bounded:
\begin{align}
\left|\frac{\mu_{s^*}(z_t)}{y_t}\right|\le A_{s^*},
\qquad
0\le \frac{\sigma_{s^*}(z_t)}{y_t}\le L_{s^*}.
\label{eq:ratios_bounded_appx}
\end{align}
Integrating \cref{eq:dlogy_appx} from $0$ to $t<\tau$ yields:
\begin{align}
\log y_t
= \underbrace{\log y_0}_{\text{finite}}
+ \int_0^t \underbrace{\left(\frac{\mu_{s^*}(z_u)}{y_u}
- \frac{1}{2}\left(\frac{\sigma_{s^*}(z_u)}{y_u}\right)^{2}
\right)}_{\text{bounded by } A_{s^*}+\frac{1}{2}L_{s^*}^2} du
+ \underbrace{\int_0^t \frac{\sigma_{s^*}(z_u)}{y_u}  d\widetilde B_u}_{M_t}.
\label{eq:logy_rep_appx}
\end{align}
The term inside the deterministic integral is bounded by $A_{s^*} + \frac{1}{2}L_{s^*}^2$, so the deterministic integral is finite for any finite $t < \tau$.
The stochastic term, $M_t$, is a continuous local martingale on $[0,\tau)$ (as is any It\^o integral with a locally
square-integrable integrand).
We next show that $\lim_{t\uparrow\tau} M_t$ exists and is finite a.s.~in the event $\{\tau < \infty\}$.

The difficulty is that $M_t$ is only defined on $[0, \tau)$, since the integrand $\frac{\sigma_{s^*}(z_t)}{y_t}$ involves $1/y_t$, which may blow up as
$t \uparrow \tau$.
In particular, although $M_t$ is a local martingale, we cannot directly apply martingale convergence to it.
Instead, for each fixed $T < \infty$, we introduce the stopped process $M^T_t$, to which we can apply the martingale convergence theorem and connect it back to our original $M_t$. 
We define the stopped process,
\begin{align}
M^T_t
= \int_0^{t\wedge\tau\wedge T} \frac{\sigma_{s^*}(z_u)}{y_u}  d\widetilde B_u,
\end{align}
which agrees with $M_t$ for all $t < \tau \wedge T$, but is frozen at its value at time $\tau \wedge T$ thereafter.
The key advantage is that stopping at the deterministic time $T$ bounds the integration horizon; combined with the bound from \cref{eq:ratios_bounded_appx}, the integrand of $M^T_t$ is bounded, so $M^T_t$ is a well-defined It\^o integral with continuous sample paths.
Because $M^T_0 = 0$ and the quadratic variation of $M^T_t$, $\langle M^T_t \rangle_{\infty}$, satisfies,
\begin{align}
\mathbb{E} \left[ \langle M^T_t \rangle_{\infty} \right] 
= \mathbb{E}\left[\lim_{t\to\infty} \int_0^{t\wedge\tau\wedge T} \left(\frac{\sigma_{s^*}(z_u)}{y_u}\right)^{2}du\right] 
= \mathbb{E}\left[\int_0^{\tau\wedge T} \left(\frac{\sigma_{s^*}(z_u)}{y_u}\right)^{2}du\right] 
\le L_{s^*}^2\cdot T 
< \infty,
\end{align}
both conditions of \citet[Proposition~1.23]{revuz2013continuous} are satisfied, so $M^T_t$ is a (true) square-integrable martingale.
We can therefore apply Doob's $L^2$ martingale convergence theorem, which says, $M^T_\infty := \lim_{t\to\infty}M^T_t$ exists and is finite a.s.
Relating $M^T_t$ back to $M_t$, in the event $\{\tau \le T\}$,
\begin{align}
\lim_{t\uparrow\tau}M_t
= \lim_{t\uparrow\tau}M^T_t
= M^T_\tau
= M^T_\infty,
\end{align}
which is finite a.s.
And since $T$ is arbitrary, $\lim_{t\uparrow\tau}M_t$ is finite a.s.~in the event $\{\tau<\infty\}$.

Finally, we derive a contradiction to show the impossibility of the event $\{\tau < \infty\}$.
In this event, since all three terms in \cref{eq:logy_rep_appx} are a.s.~finite,
$\lim_{t\uparrow\tau}\log y_t$ is finite, and therefore,
\begin{align}
\lim_{t\uparrow\tau}y_t = \exp\left(\lim_{t\uparrow\tau}\log y_t\right) > 0.
\end{align}
However, by continuity of $z_t$ and $d_{s^*}$, and since
$\tau_{s^*} = \tau$ means $z_\tau$ lies exactly on boundary $s^*$,
\begin{align}
\lim_{t\uparrow\tau}y_t = y_\tau = d_{s^*}(z_\tau) = 0.
\end{align}
These two conclusions are contradictory, so the event $\{\tau < \infty\}$ must have probability zero.
Therefore $\mathbb{P}(\tau<\infty)=0$, and so $\mathbb{P}\left(\exists t \in [0,\infty):z_t\in\partial K\right)=0$.

\end{proof}

\subsection{Proof that the sufficient drift satisfies conditions from \cref{thm:milian}} \label{apx:stationary-drift-satisfies-condition}

Since \cref{apx:not-absorbing-state} shows the solution to the SDE will never reach the boundary, we already know it is viable.
Still, for completeness, we show here that the drift for the stationary dynamics satisfies conditions from \cref{thm:milian}, allowing us to apply \cref{thm:milian} as well.

\paragraph{Proof that $h$ satisfies (i) from \cref{thm:milian}.}
Here, we prove that for each $T > 0$, there exists $C_T > 0$ such that for all $z_t \in K$ and $t \in [0, T]$, $\lVert h(z_t) \rVert^2 \leq C_T \cdot ( 1 + \lVert z_t \rVert^2)$. 
We do this as follows:
\begin{align}
    \lVert h(z_t) \rVert 
    &= \frac{1}{2} \cdot \left\lVert \mathrm{diag}\left( \nabla_{z_t} [g(z_t)^2] \right) + g(z_t)^2 \odot \nabla_{z_t} \log \tilde{p}(z_t) \right\rVert \\
    &\leq \frac{1}{2} \cdot \left\lVert \mathrm{diag}\left( \nabla_{z_t} [g(z_t)^2] \right) \right\rVert + \frac{1}{2} \left\lVert g(z_t)^2 \odot \nabla_{z_t} \log \tilde{p}(z_t) \right\rVert \\
    &\leq \frac{1}{2} \cdot \underbrace{\left\lVert \mathrm{diag}\left( \nabla_{z_t} [g(z_t)^2] \right) \right\rVert}_{\Circled{1}} + \frac{1}{2} \underbrace{\left\lVert g(z_t) \right\rVert}_{M_g} \cdot \underbrace{\left\lVert g(z_t) \odot \nabla_{z_t} \log \tilde{p}(z_t) \right\rVert}_{M_{gs}}
\end{align}
where $M_g = \sup_{z_t\in K}\|g(z_t)\|_2$ by continuity of $g(z_t)$ on a compact $K$.
Next, we bound the gradient of $g$ using condition (ii):
{\allowdisplaybreaks
\begin{align} 
    \Circled{1} 
    &= \left\lVert \mathrm{diag}\left( \nabla_{z_t} [g(z_t)^2] \right) \right\rVert \\
    &= \left\lVert \lim_{\epsilon \to 0} \begin{bmatrix}
        \frac{g^1(z_t + \epsilon \cdot e_1)^2 - g^1(z_t)^2}{\epsilon} \\
        \vdots \\
        \frac{g^{D_z}(z_t + \epsilon \cdot e_{D_z})^2 - g^{D_z}(z_t)^2}{\epsilon} \\
    \end{bmatrix} \right\rVert \\
    &= \lim_{\epsilon \to 0} \left\lVert \begin{bmatrix}
        \frac{g^1(z_t + \epsilon \cdot e_1)^2 - g^1(z_t)^2}{\epsilon} \\
        \vdots \\
        \frac{g^{D_z}(z_t + \epsilon \cdot e_{D_z})^2 - g^{D_z}(z_t)^2}{\epsilon} \\
    \end{bmatrix} \right\rVert \quad (\text{by continuity of the norm}) \\
    &\leq \sum\limits_{d=1}^{D_z} \lim_{\epsilon \to 0} \left\lVert \frac{g^d(z_t + \epsilon \cdot e_d)^2 - g^d(z_t)^2}{\epsilon} \right\rVert \quad (\text{since the $\ell_2$-norm is upper bounded by the $\ell_1$-norm}) \\
    &= \sum\limits_{d=1}^{D_z} \lim_{\epsilon \to 0} \frac{1}{\epsilon} \cdot \left\lVert g^d(z_t + \epsilon \cdot e_d)^2 - g^d(z_t)^2 \right\rVert \\
    &\leq \sum\limits_{d=1}^{D_z} \lim_{\epsilon \to 0} \frac{1}{\epsilon} \cdot \left\lVert g(z_t + \epsilon \cdot e_d)^2 - g(z_t)^2 \right\rVert \\
    &= \sum\limits_{d=1}^{D_z} \lim_{\epsilon \to 0} \frac{1}{\epsilon} \cdot \left\lVert \left( g(z_t + \epsilon \cdot e_d) - g(z_t) \right) \odot g(z_t + \epsilon \cdot e_d) - g(z_t) \odot \left( g(z_t) - g(z_t + \epsilon \cdot e_d) \right) \right\rVert \label{eq:trick1-mo2171798} \\
    &\leq \sum\limits_{d=1}^{D_z} \lim_{\epsilon \to 0} \frac{1}{\epsilon} \cdot \left\lVert \left( g(z_t + \epsilon \cdot e_d) - g(z_t) \right) \odot g(z_t + \epsilon \cdot e_d) \right\rVert + \frac{1}{\epsilon} \cdot \left\lVert g(z_t) \odot \left( g(z_t) - g(z_t + \epsilon \cdot e_d) \right) \right\rVert \\
    &\leq \sum\limits_{d=1}^{D_z} \lim_{\epsilon \to 0} \frac{1}{\epsilon} \cdot \left\lVert g(z_t + \epsilon \cdot e_d) - g(z_t) \right\rVert \cdot \left\lVert g(z_t + \epsilon \cdot e_d) \right\rVert + \frac{1}{\epsilon} \cdot \left\lVert g(z_t) \right\rVert \cdot \left\lVert g(z_t) - g(z_t + \epsilon \cdot e_d) \right\rVert \\
    &\leq \sum\limits_{d=1}^{D_z} \lim_{\epsilon \to 0} \frac{M_g}{\epsilon} \cdot \left\lVert g(z_t + \epsilon \cdot e_d) - g(z_t) \right\rVert + \frac{M_g}{\epsilon} \cdot \left\lVert g(z_t) - g(z_t + \epsilon \cdot e_d) \right\rVert \\
    &\leq \sum\limits_{d=1}^{D_z} \lim_{\epsilon \to 0} \frac{2 \cdot M_g}{\epsilon} \cdot C_T \cdot \left\lVert \epsilon \cdot e_d \right\rVert \\
    &= 2 \cdot D_z \cdot M_g \cdot C_T \cdot \left\lVert e_d \right\rVert \\
    &= 2 \cdot D_z \cdot M_g \cdot C_T
\end{align}
}
where, in \cref{eq:trick1-mo2171798}, we use the trick from \citet{mo2171798}.
Putting all of this together, we have that $\lVert h(z_t) \rVert$ is bounded by some constant, for which we can always find a new constant $C_T$ to further bound it: $\lVert h(z_t) \rVert^2 \leq C_T \cdot ( 1 + \lVert z_t \rVert^2)$.

\paragraph{Proof that $h$ satisfies (ii) from \cref{thm:milian}.}
Here, we prove that for all $T > 0$, $z_t, z_t' \in K$, and $t \in [0, T]$, $\lVert h(z_t) - h(z_t') \rVert \leq C_T \cdot \lVert z_t - z_t' \rVert$.

We begin as follows:
\begin{align}
\begin{split}
    \lVert h(z_t) - h(z_t') \rVert 
    &\leq \frac{1}{2} \cdot \underbrace{\left\lVert \mathrm{diag}\left( \nabla_{z_t} [g(z_t)^2] \right) - \mathrm{diag}\left( \nabla_{z_t'} [g(z_t')^2] \right) \right\rVert}_{\Circled{2}} \\ 
    &\qquad + \frac{1}{2} \cdot \underbrace{\left\lVert g(z_t)^2 \odot \nabla_{z_t} \log \tilde{p}(z_t) - g(z_t')^2 \odot \nabla_{z_t'} \log \tilde{p}(z_t') \right\rVert}_{\Circled{3}}.
\end{split}
\end{align}
Using the trick by~\citet{mo2171798} again, we bound $\Circled{2}$ as follows:
\begin{align}
    \Circled{2} &= \left\lVert \mathrm{diag}\left( \nabla_{z_t'} [g(z_t')^2] \right) - \mathrm{diag}\left( \nabla_{z_t} [g(z_t)^2] \right) \right\rVert \\
    &= 2 \cdot \left\lVert g(z_t') \odot \mathrm{diag}\left( \nabla_{z_t'} g(z_t') \right) - g(z_t) \odot \mathrm{diag}\left( \nabla_{z_t} g(z_t) \right) \right\rVert \\
    &= 2 \cdot \left\lVert \left( g(z_t') - g(z_t) \right) \odot \mathrm{diag}\left( \nabla_{z_t'} g(z_t') \right) - g(z_t) \odot \left( \mathrm{diag}\left( \nabla_{z_t} g(z_t) \right) - \mathrm{diag}\left( \nabla_{z_t'} g(z_t') \right) \right) \right\rVert \\ 
    &\leq 2 \cdot \left\lVert \left( g(z_t') - g(z_t) \right) \odot \mathrm{diag}\left( \nabla_{z_t'} g(z_t') \right) \right\rVert + 2 \cdot \left\lVert g(z_t) \odot \left( \mathrm{diag}\left( \nabla_{z_t} g(z_t) \right) - \mathrm{diag}\left( \nabla_{z_t'} g(z_t') \right) \right) \right\rVert \\
    &\leq 2 \cdot \left\lVert g(z_t') - g(z_t) \right\rVert \cdot \left\lVert \mathrm{diag}\left( \nabla_{z_t'} g(z_t') \right) \right\rVert + 2 \cdot \left\lVert g(z_t) \right\rVert \cdot \left\lVert \mathrm{diag}\left( \nabla_{z_t} g(z_t) \right) - \mathrm{diag}\left( \nabla_{z_t'} g(z_t') \right) \right\rVert \\
    &\leq 2 \cdot C_T \cdot \lVert z_t - z_t' \rVert \cdot \underbrace{\left\lVert \mathrm{diag}\left( \nabla_{z_t'} g(z_t') \right) \right\rVert}_{\text{bounded by const.~via (ii)}} +2 \cdot M_g \cdot \underbrace{\left\lVert \mathrm{diag}\left( \nabla_{z_t} g(z_t) \right) - \mathrm{diag}\left( \nabla_{z_t'} g(z_t') \right) \right\rVert}_{\leq C_T \cdot \lVert z_t - z_t' \rVert \text{ via (ii)}}
\end{align}
We similarly bound $\Circled{3}$ as follows:
\begin{align}
    \Circled{3}
    &= \left\lVert g(z_t)^2 \odot \nabla_{z_t} \log \tilde{p}(z_t) 
        - g(z_t')^2 \odot \nabla_{z_t'} \log \tilde{p}(z_t') \right\rVert \\
    &= \left\lVert g(z_t) \odot \bigl(g(z_t) \odot \nabla_{z_t} \log \tilde{p}(z_t)\bigr) 
        - g(z_t') \odot \bigl(g(z_t') \odot \nabla_{z_t'} \log \tilde{p}(z_t')\bigr) \right\rVert.
\end{align}
Using the trick by~\citet{mo2171798},
\begin{align}
    \Circled{3}
    &\leq \underbrace{\left\lVert g(z_t) - g(z_t') \right\rVert}_{\leq  C_T \cdot \lVert z_t - z_t'\rVert\;\text{via (ii)}}
        \cdot \underbrace{\left\lVert g(z_t) \odot \nabla_{z_t} \log \tilde{p}(z_t) \right\rVert}_{\leq M_{gs}} \\
    &\quad + \underbrace{\left\lVert g(z_t') \right\rVert}_{\leq M_g} \cdot \underbrace{\left\lVert g(z_t) \odot \nabla_{z_t} \log \tilde{p}(z_t) 
            - g(z_t') \odot \nabla_{z_t'} \log \tilde{p}(z_t') \right\rVert}
            _{\leq  C_T \cdot \lVert z_t - z_t'\rVert\;\text{via (iii)}} \\
    &\leq \left( M_{gs} + M_g \right) \cdot C_T \cdot \lVert z_t - z_t' \rVert,
\end{align}

\paragraph{Proof that $h$ satisfies (iii) from \cref{thm:milian}.}
Here, we prove that for each $z_t \in K$, $h(z_t)$, defined for $t \geq 0$, is continuous. 

Since continuous functions are closed under all operations used to define $h(z_t)$, and since $h(z_t)$ is defined in terms of other continuous functions, it is also continuous.

\paragraph{Proof that $h$ satisfies (a) from \cref{thm:milian}.}
Here, we prove that, for all $s \in \{ 1, \dots, S \}$ and $z_t \in K$ such that $\langle z_t - u_s, v_s \rangle = 0$, we have $\langle h(z_t), v_s \rangle \geq 0$. 
\begin{align}
\langle h(z_t), v_s \rangle
    &= \langle g(z_t) \odot \mathrm{diag}\left( \nabla_{z_t} g(z_t) \right), v_s \rangle + \frac{1}{2} \cdot \langle g(z_t)^2 \odot \nabla_{z_t} \log \tilde{p}(z_t), v_s \rangle \\
    &= v_s^\intercal \cdot \left( g(z_t) \odot \mathrm{diag}\left( \nabla_{z_t} g(z_t) \right) \right) + \frac{1}{2} \cdot v_s^\intercal \cdot \left( g(z_t)^2 \odot \nabla_{z_t} \log \tilde{p}(z_t) \right) \\
    &= v_s^\intercal \cdot \left( \sum\limits_{d=1}^{D_z} (g(z_t) \odot e_d) \cdot \nabla_{z_t^d} g^d(z_t) \right) + \frac{1}{2} \cdot v_s^\intercal \cdot \left( g(z_t)^2 \odot \nabla_{z_t} \log \tilde{p}(z_t) \right) \\
    &= \left( \sum\limits_{d=1}^{D_z} v_s^\intercal \cdot (g(z_t) \odot e_d) \cdot \nabla_{z_t^d} g^d(z_t) \right) + \frac{1}{2} \cdot v_s^\intercal \cdot \left( g(z_t)^2 \odot \nabla_{z_t} \log \tilde{p}(z_t) \right) \\
    &= \frac{1}{2} \cdot v_s^\intercal \cdot \left( g(z_t)^2 \odot \nabla_{z_t} \log \tilde{p}(z_t) \right) \quad (\text{since $\langle g(z_t) \odot e_d, v_s \rangle = 0$}) \\
    &= \frac{1}{2} \cdot v_s^\intercal \cdot \left( \sum\limits_{d=1}^{D_z} (g(z_t) \odot e_d) \cdot g^d(z_t) \cdot \nabla_{z_t^d} \log \tilde{p}(z_t) \right) \\
    &= \frac{1}{2} \cdot \left( \sum\limits_{d=1}^{D_z} v_s^\intercal \cdot (g(z_t) \odot e_d) \cdot g^d(z_t) \cdot \nabla_{z_t^d} \log \tilde{p}(z_t) \right) \\
    &= 0
\end{align}

\section{Proof of \cref{thm:wsp}} \label{apx:proof-wsp}

\subsection{Proof that WSP satisfies \cref{thm:milian}} \label{apx:wsp-satisfies-milian}

\begin{proof}

To prove \cref{thm:wsp}, we will show that $h(t, z_t)$ and $g(t, z_t)$, defined in \cref{eq:wsp}, satisfy (i)-(iii) and (a)-(b) in \cref{thm:milian}. 

\paragraph{Proof that WSP satisfies (i) from \cref{thm:milian}.}

Here, we prove that for each $T > 0$, there exists $C_T > 0$ such that for all $z_t \in K$ and $t \in [0, T]$, $\lVert h(t, z_t) \rVert^2 + \lVert g(t, z_t) \rVert^2 \leq C_T \cdot ( 1 + \lVert z_t \rVert^2)$.

First, we show that $w(z_t)$, defined in \cref{eq:w}, lies in $[0, 1]$. 
Since $\alpha > 0$, and for any $z_t \in K$, $d_s(z_t) \geq 0$ (since distances are non-negative), we have,
\begin{align}
    0 \leq\tanh\left( \alpha \cdot d_s(z_t) \right) \leq 1.
\end{align}
Next, since $\beta > 0$, we know that:
\begin{align}
    0 \leq \beta \cdot \prod_s \underbrace{\frac{e^{-d_s(z_t)}}{\sum_{s'} e^{-d_{s'}(z_t)}}}_{\in [0, 1]} \cdot \underbrace{\tanh\left( \alpha \cdot d_s(z_t) \right)}_{\in [0, 1]}
\end{align}
This then gives us,
\begin{align}
     0 \leq \tanh \underbrace{\left( \beta \cdot \prod_s \frac{e^{-d_s(z_t)}}{\sum_{s'} e^{-d_{s'}(z_t)}} \cdot \tanh\left( \alpha \cdot d_s(z_t) \right) \right)}_{\geq 0}  \leq 1,
\end{align}
thereby showing that $w(z_t) \in [0, 1]$.
Using this, we go on to show that $h$ and $g$ satisfy condition (i) from \cref{thm:milian}.
\begin{align}
    \lVert h(t, z_t) \rVert
    &= \lVert \mathrm{WSP}(\tilde{h}, c_h, t, z_t) \rVert\\
    &= \lVert w(z_t) \cdot \tilde{h}(t, z_t) + (1 - w(z_t)) \cdot c_h(z_t) \rVert\\
    &\leq \lVert w(z_t) \cdot \tilde{h}(t, z_t)\rVert + \lVert (1 - w(z_t)) \cdot c_h(z_t) \rVert\\
    &\leq \lVert 1 \cdot \tilde{h}(t, z_t)\rVert + \lVert (1 - 0) \cdot c_h(z_t) \rVert\\
    &= \lVert \tilde{h}(t, z_t)\rVert + \lVert c_h(z_t) \rVert\\
    &= \lVert \tilde{h}(t, z_t)\rVert + \left\lVert \gamma \cdot \frac{z^* - z_t}{\lVert z^* - z_t \rVert + \epsilon} \right\rVert \\
    &= \lVert \tilde{h}(t, z_t)\rVert + \gamma \cdot \underbrace{\left\lVert \frac{z^* - z_t}{\lVert z^* - z_t \rVert + \epsilon} \right\rVert}_{< 1} \\
    &\leq \lVert \tilde{h}(t, z_t)\rVert + \gamma \\
    &\leq \sqrt{C_T' \cdot (1 + \lVert z_t \rVert^2)} + \gamma
\end{align}
Thus,
\begin{align}
    \lVert h(t, z_t) \rVert^2 
    &\leq \left( \sqrt{C_T' \cdot (1 + \lVert z_t \rVert^2)} + \gamma \right)^2 \\
    &= C_T' \cdot (1 + \lVert z_t \rVert^2) + \gamma^2 + 2 \cdot \gamma \cdot \sqrt{C_T' \cdot (1 + \lVert z_t \rVert^2)} \\
    &\leq  C_T \cdot (1 + \lVert z_t \rVert^2)
\end{align}
for some $C_T > 0$.

Similarly for $\lVert g(t, z_t) \rVert^2$, we have
\begin{align}
    \lVert g(t, z_t) \rVert
    &= \lVert \mathrm{WSP}(\tilde{g}, c_g, t, z_t) \rVert\\ 
    &=\lVert w(z_t) \cdot \tilde{g}(t, z_t) + (1 - w(z_t)) \cdot c_g(z_t) \rVert\\
    &\leq \lVert w(z_t) \cdot \tilde{g}(t, z_t)\rVert + \lVert (1 - w(z_t)) \cdot c_g(z_t) \rVert\\
    &\leq \lVert 1 \cdot \tilde{g}(t, z_t)\rVert + \lVert (1 - 0) \cdot c_g(z_t) \rVert\\
    &= \lVert \tilde{g}(t, z_t)\rVert + \lVert c_g(z_t) \rVert\\
    &= \lVert \tilde{g}(t, z_t)\rVert + \lVert 0 \rVert\\
    &= \lVert \tilde{g}(t, z_t)\rVert\\
    &\leq \sqrt{C_T \cdot (1 + \lVert z_t \rVert^2)}
\end{align}
Thus, $\lVert g(t, z_t) \rVert^2 \leq C_T \cdot (1 + \lVert z_t \rVert^2)$.

\paragraph{Proof that WSP satisfies (ii) from \cref{thm:milian}.}
We now prove that for all $T > 0$, $z_t, z_t' \in K$, and $t \in [0, T]$, $\lVert h(t, z_t) - h(t, z_t') \rVert + \lVert g(t, z_t) - g(t, z_t') \rVert \leq C_T \cdot \lVert z_t - z_t' \rVert$.

We do this as follows:
\begin{align}
    \lVert h(t, z_t) - h(t, z_t') \rVert
    &= \lVert \mathrm{WSP}(\tilde{h}, c_h, t, z_t) - \mathrm{WSP}(\tilde{h}, c_h, t, z_t') \rVert \\
    &= \left\lVert \left( w(z_t) \cdot \tilde{h}(t, z_t) + (1 - w(z_t)) \cdot c_h(z_t) \right) - \left(w(z_t') \cdot \tilde{h}(t, z_t') + (1 - w(z_t')) \cdot c_h(z_t') \right) \right\rVert \\
    &= \left\lVert \left(w(z_t) \cdot \tilde{h}(t, z_t)- w(z_t') \cdot \tilde{h}(t, z_t') \right) + \left((1 - w(z_t)) \cdot c_h(z_t)-(1 - w(z_t')) \cdot c_h(z_t') \right) \right\rVert \\
    &\leq \lVert w(z_t) \cdot \tilde{h}(t, z_t)- w(z_t') \cdot \tilde{h}(t, z_t') \rVert + \lVert(1 - w(z_t)) \cdot c_h(z_t)-(1 - w(z_t')) \cdot c_h(z_t') \rVert
\end{align}
Using the trick by~\citet{mo2171798}, we have:
\begin{align}
    \lVert h(t, z_t) - h(t, z_t') \rVert 
    &\leq \lVert w(z_t) - w(z_t')\rVert \cdot \lVert\tilde{h}(t, z_t) \rVert + \lVert w(z_t') \rVert \cdot \lVert \tilde{h}(t, z_t) - \tilde{h}(t, z_t')\rVert \\ \nonumber
    &\quad + \lVert (1 - w(z_t)) - (1 - w(z_t')) \rVert \cdot \lVert c(z_t) \rVert + \lVert 1 - w(z_t') \rVert \cdot \lVert c(z_t) - c(z_t') \rVert \\
    &\leq \lVert w(z_t) - w(z_t')\rVert \cdot \lVert\tilde{h}(t, z_t) \rVert + 1 \cdot \lVert \tilde{h}(t, z_t) - \tilde{h}(t, z_t')\rVert \\ \nonumber
    &\quad + \lVert w(z_t) - w(z_t') \rVert \cdot \lVert c(z_t) \rVert + 1 \cdot \lVert c(z_t) - c(z_t') \rVert \\
    &= \lVert w(z_t) - w(z_t')\rVert \cdot \underbrace{( \lVert \tilde{h}(t, z_t) \rVert + \lVert c(z_t) \rVert )}_{\text{bounded by const.}} + \underbrace{\lVert \tilde{h}(t, z_t) - \tilde{h}(t, z_t')\rVert}_{< C_T' \cdot \lVert z_t - z_t' \rVert} + \lVert c(z_t) - c(z_t') \rVert
\end{align}
Since $K$ is compact and $\tilde{h}(z_t)$ is continuous and linearly bounded (i.e. $\lVert \tilde{h}(z_t) \rVert^2 \leq C_T ( 1 + \lVert z_t \rVert^2)$), we know that $\lVert \tilde{h}(t, z_t) \rVert$ is bounded above by a constant.
Similarly, $c(z_t)$ is continuous and bounded:
\begin{align}
    \lVert c(z_t) \rVert^2 
    &= \gamma^2 \cdot \left\lVert \frac{z^* - z_t}{\lVert z^* - z_t \rVert + \epsilon} \right\rVert^2
    \leq \gamma^2 \cdot \left\lVert \frac{z^* - z_t}{\epsilon} \right\rVert^2 
    = \frac{\gamma^2}{\epsilon^2} \cdot \lVert z^* - z_t \rVert^2,
\end{align}
so $\lVert c(z_t) \rVert$ is bounded above by a constant.
This leaves us to show that $w(z_t)$ and $c(z_t)$ are Lipschitz.
This is true since both functions are comprised of either composition of Lipschitz functions, or of multiplications of bounded Lipschitz functions, and both of these operations are closed under Lipschitz continuity.

Similarly for $\lVert g(t, z_t) - g(t, z_t') \rVert$, we have,
\begin{align}
    \lVert g(t, z_t) - g(t, z_t') \rVert 
    &= \lVert \mathrm{WSP}(\tilde{g}, c_g, t, z_t) - \mathrm{WSP}(\tilde{g}, c_g, t, z_t') \rVert\\
    &= \lVert \left(w(z_t) \cdot \tilde{g}(t, z_t) + (1 - w(z_t)) \cdot c_g(z_t) \right) - \left(w(z_t') \cdot \tilde{g}(t, z_t') + (1 - w(z_t')) \cdot c_g(z_t') \right) \rVert \\
    &= \lVert w(z_t) \cdot \tilde{g}(t, z_t) - w(z_t') \cdot \tilde{g}(t, z_t') \rVert
\end{align}
Since $w(z_t) \cdot \tilde{g}(t, z_t)$ is the product of a bounded Lipschitz function and a Lipschitz function, we know that $g(t, z_t)$ is also Lipschitz.

\paragraph{Proof that WSP satisfies (iii) from \cref{thm:milian}.}

Here, we prove that for each $z_t \in K$, $h(t, z_t)$ and $g(t, z_t)$ are continuous. 

Since all functions involved are continuous and continuity is closed under addition, subtraction, multiplication and composition, $h(t, z_t)$ and $g(t, z_t)$, defined for $t \geq 0$, are continuous for each $z_t \in K$.

\paragraph{Proof that WSP satisfies (a) from \cref{thm:milian}.}

Here we prove that for all $s \in \{1, \dots, S\}$ and $z_t \in K$ such that $\langle z_t - u_s, v_s \rangle = 0$, we have $\langle h(t, z_t), v_s \rangle \geq 0$.

First, when $\langle z_t - u_s, v_s \rangle = 0$, 
\begin{align}
    d_s(z_t) = \frac{\langle z_t - u_s, v_s \rangle}{\lVert v_s \rVert} = \frac{0}{\lVert v_s \rVert} = 0.
\end{align}
This means that,
\begin{align}
    w(z_t) &= \tanh\left( \beta \cdot \prod_s \frac{e^{-d_s(z_t)}}{\sum_{s'} e^{-d_{s'}(z_t)}} \cdot \tanh\left( \alpha \cdot d_s(z_t) \right) \right)= 0.
\end{align}
Plugging this into $\langle h(t, z_t), v_s \rangle$, we get:
\begin{align}
    \langle h(t, z_t), v_s \rangle 
    &= \langle \mathrm{WSP}(\tilde{h},c_h, t, z_t), v_s \rangle \\
    &=\langle w(z_t) \cdot \tilde{h}(t, z_t) + (1 - w(z_t)) \cdot c_h(z_t) , v_s \rangle \\
    &= \langle 0 \cdot \tilde{h}(t, z_t) + (1 - 0) \cdot c_h(z_t), v_s \rangle \\
    &= \langle  c_h(z_t) , v_s \rangle \\
    &= \left\langle \gamma \cdot \frac{z^* - z_t}{\lVert z^* - z_t \rVert + \epsilon}, v_s \right\rangle \\
    &= \underbrace{\frac{\gamma}{\lVert z^* - z_t \rVert + \epsilon}}_{> 0} \cdot \left\langle z^* - z_t, v_s \right\rangle
\end{align}
Because $z^*$ is the Chebyshev center, it lies strictly in $\operatorname{int}(K)$, so $\langle z^* - u_s, v_s\rangle > 0$.
Thus, when $z_t \in \partial K$, $\langle z^* - z_t, v_s\rangle = \langle z^* - u_s, v_s\rangle - \langle z_t - u_s, v_s\rangle = \langle z^* - u_s, v_s\rangle > 0$, completing the proof.

\paragraph{Proof that WSP satisfies (b) from \cref{thm:milian}.}
Here we prove that for all $s \in \{1, \dots, S\}$ and $z_t \in K$ such that $\langle z_t - u_s, v_s \rangle = 0$, we have $\langle g(t, z_t) \odot e_d, v_s \rangle = 0$ for $t \geq 0$ and $d \in \{1, \dots, D_z\}$.
We do this as follows:
\begin{align}
    \langle g(t, z_t) \odot e_d, v_s \rangle &= \langle \mathrm{WSP}(\tilde{g}, c_g, t, z_t) \odot e_d, v_s \rangle\\ 
    &= \langle \left( w(z_t) \cdot \tilde{g}(t, z_t) + (1 - w(z_t)) \cdot c_g(z_t) \right) \odot e_d, v_s \rangle\\
    &= \langle \left( 0 \cdot \tilde{g}(t, z_t) + (1 - 0) \cdot c_g(z_t) \right) \odot e_d, v_s \rangle\\
    &= \langle c_g(z_t) \odot e_d, v_s \rangle\\
    &= \langle 0 \odot e_d, v_s \rangle\\
    &= \langle 0, v_s \rangle\\
    &= 0
\end{align}

\end{proof}

\subsection{Proof that WSP satisfies \cref{thm:stationary}}

\begin{proof}
Since $c_g = 0$, we have $g(t, z_t) = w(z_t) \cdot \tilde{g}(t, z_t)$ throughout.
Note that condition (iii) concerns the score function $\tilde{p}$ and is
independent of $g$; it must be verified separately by the practitioner.

\paragraph{Proof that $g$ satisfies (i) from \cref{thm:stationary}.}
This follows immediately from the proof of (i) from \cref{thm:milian} in \cref{apx:wsp-satisfies-milian},
which already establishes $\lVert g(t, z_t) \rVert^2 \leq C_T \cdot (1 + \lVert z_t \rVert^2)$.

\paragraph{Proof that $g$ satisfies (ii) from \cref{thm:stationary}.}
The Lipschitz condition $\lVert g(t, z_t) - g(t, z_t') \rVert \leq C_T \cdot \lVert z_t - z_t' \rVert$
follows from the proof of (ii) from \cref{thm:milian} in \cref{apx:wsp-satisfies-milian}.
Now, we prove the Lipschitz condition on $\mathrm{diag}(\nabla_{z_t} g(t, z_t))$.
Since $g(t, z_t) = w(z_t) \cdot \tilde{g}(t, z_t)$, the product rule gives:
\begin{align}
    \mathrm{diag}(\nabla_{z_t} g(t, z_t))
    = \nabla_{z_t} w(z_t) \odot \tilde{g}(t, z_t)
    + w(z_t) \cdot \mathrm{diag}(\nabla_{z_t} \tilde{g}(t, z_t)).
\end{align}
Therefore,
\begin{align}
    &\lVert \mathrm{diag}(\nabla_{z_t} g(t, z_t)) - \mathrm{diag}(\nabla_{z_t'} g(t, z_t')) \rVert \\
    &\leq \underbrace{\lVert \nabla_{z_t} w(z_t) \odot \tilde{g}(t, z_t)
        - \nabla_{z_t'} w(z_t') \odot \tilde{g}(t, z_t') \rVert}_{\Circled{A}}
    + \underbrace{\lVert w(z_t) \cdot \mathrm{diag}(\nabla_{z_t} \tilde{g}(t, z_t))
        - w(z_t') \cdot \mathrm{diag}(\nabla_{z_t'} \tilde{g}(t, z_t')) \rVert}_{\Circled{B}}.
\end{align}
Using the trick by~\citet{mo2171798}, we bound $\Circled{A}$ as follows:
\begin{align}
    \Circled{A}
    &\leq \underbrace{\lVert \nabla_{z_t} w(z_t) - \nabla_{z_t'} w(z_t') \rVert}
        _{\leq C_T \cdot \lVert z_t - z_t' \rVert}
      \cdot \underbrace{\lVert \tilde{g}(t, z_t) \rVert}_{\leq M}
    + \underbrace{\lVert \nabla_{z_t'} w(z_t') \rVert}_{\text{bounded by const.}}
      \cdot \underbrace{\lVert \tilde{g}(t, z_t) - \tilde{g}(t, z_t') \rVert}
        _{\leq C_T \cdot \lVert z_t - z_t' \rVert \text{ via (ii)}},
\end{align}
where $\nabla_{z_t} w(z_t)$ is Lipschitz since $w$ is a smooth composition of smooth
functions (tanh, softmin, product), and $\lVert \nabla_{z_t} w(z_t) \rVert$ is bounded
on the compact set $K$.
Similarly, we bound $\Circled{B}$ as follows:
\begin{align}
    \Circled{B}
    &\leq \underbrace{\lVert w(z_t) - w(z_t') \rVert}_{\leq C_T \cdot \lVert z_t - z_t' \rVert}
      \cdot \underbrace{\lVert \mathrm{diag}(\nabla_{z_t} \tilde{g}(t, z_t)) \rVert}
        _{\text{bounded by const.~via (ii)}}
    + \underbrace{\lVert w(z_t') \rVert}_{\leq 1}
      \cdot \underbrace{\lVert \mathrm{diag}(\nabla_{z_t} \tilde{g}(t, z_t))
        - \mathrm{diag}(\nabla_{z_t'} \tilde{g}(t, z_t')) \rVert}
        _{\leq C_T \cdot \lVert z_t - z_t' \rVert \text{ via (ii)}}.
\end{align}
Altogether, $\lVert \mathrm{diag}(\nabla_{z_t} g(t, z_t))
- \mathrm{diag}(\nabla_{z_t'} g(t, z_t')) \rVert \leq C_T \cdot \lVert z_t - z_t' \rVert$
for some $C_T > 0$.

\paragraph{Proof that $g$ satisfies (iv) from \cref{thm:stationary}.}
We must show $g^d(t, z_t) > 0$ for all $d \in \{1, \dots, D_z\}$ and
$z_t \in \mathrm{int}(K)$.
Since $g(t, z_t) = w(z_t) \cdot \tilde{g}(t, z_t)$, it suffices to show
$w(z_t) > 0$ and $\tilde{g}^d(t, z_t) > 0$ separately.
The latter holds by condition (iv) of \cref{thm:stationary} for $\tilde{g}$.
For the former, since $z_t \in \mathrm{int}(K)$, we have $d_s(z_t) > 0$ for all
$s \in \{1, \dots, S\}$, and therefore:
\begin{align}
    \frac{e^{-d_s(z_t)}}{\sum_{s'} e^{-d_{s'}(z_t)}} > 0 \qquad \text{and} \qquad
    \tanh\left( \alpha \cdot d_s(z_t) \right) > 0,
\end{align}
so the product $\prod_s \frac{e^{-d_s(z_t)}}{\sum_{s'} e^{-d_{s'}(z_t)}} \cdot
\tanh(\alpha \cdot d_s(z_t)) > 0$, and hence
$w(z_t) = \tanh(\beta \cdot \mathrm{positive}) > 0$.

\paragraph{Proof that $g$ satisfies (v) from \cref{thm:stationary}.}
We must show that for each $s \in \{1, \dots, S\}$, there exists $L_s < \infty$
such that $\lVert n_s \odot g(z_t) \rVert_2 \leq L_s \cdot d_s(z_t)$ for all
$z_t \in K$.
Let $M_{\tilde{g}} = \sup_{z_t \in K} \lVert \tilde{g}(t, z_t) \rVert_2$, which is finite
since $K$ is compact and $\tilde{g}$ is continuous.
We first bound $w(z_t)$ linearly in $d_s(z_t)$:
\begin{align}
    w(z_t) 
    &= \tanh\left( \beta \cdot \prod_{s} \frac{e^{-d_{s}(z_t)}}{\sum_{s'} e^{-d_{s'}(z_t)}} \cdot \tanh\left( \alpha \cdot d_{s}(z_t) \right) \right) \\
    &\leq \beta \cdot \prod_{s} \frac{e^{-d_{s}(z_t)}}{\sum_{s'} e^{-d_{s'}(z_t)}} \cdot \tanh\left( \alpha \cdot d_{s}(z_t) \right) 
    \qquad \text{since } \tanh(x) \leq x \text{ for } x \geq 0 \\
\end{align}
Since all factors in the product above lie in $[0, 1]$, the product can be bounded by any single factor:
\begin{align}    
    w(z_t) 
    &\leq \beta \cdot \underbrace{\frac{e^{-d_s(z_t)}}{\sum_{s'} e^{-d_{s'}(z_t)}}}_{\leq 1} \cdot \tanh\left( \alpha \cdot d_s(z_t) \right) \\
    &\leq \beta \cdot \tanh\left( \alpha \cdot d_s(z_t) \right) \\
    &\leq \alpha\beta \cdot d_s(z_t)
    \qquad \text{since } \tanh(x) \leq x \text{ for } x \geq 0.
\end{align}
Therefore,
\begin{align}
    \lVert n_s \odot g(z_t) \rVert_2
    \leq \lVert g(z_t) \rVert_2
    = w(z_t) \cdot \lVert \tilde{g}(t, z_t) \rVert_2
    \leq \alpha\beta \cdot d_s(z_t) \cdot M_{\tilde{g}}
    = L_s \cdot d_s(z_t),
\end{align}
where $L_s = \alpha\beta M_{\tilde{g}} < \infty$.

\paragraph{Proof that $g$ satisfies (b) from \cref{thm:stationary}.}
This is identical to the proof of (b) from \cref{thm:milian} in \cref{apx:wsp-satisfies-milian}, since
the condition is the same.

\end{proof}

\section{Discussion of Assumptions} \label{apx:assumptions}

The assumptions in \cref{thm:milian,thm:stationary} are easily satisfied when $h$, $g$, and $\log \tilde{p}(z_t)$ are parameterized by NNs.

\paragraph{Lipschitz Continuity with Respect to Inputs.}
Lipschitz continuous functions are closed under composition, making a large class of NNs Lipschitz continuous by construction.
Additionally, there exist many easy and empirically effective methods for explicitly obtaining Lipschitz continuity, for example via weight normalization (e.g.~\citet{miyato2018spectral}), regularization (e.g.~\citet{liu2022learning}), and architecture design (e.g.~\citet{anil2019sorting}).
Altogether, this allows us to conveniently satisfy the Lipschitz continuity assumptions for $h$, $g$, and $\log \tilde{p}(z_t)$---(ii) in \cref{thm:milian}, and (ii) and (iii) in \cref{thm:stationary}.

Next, \citet{hurault2022gradient} (Proposition 2) proved that if every function in a collection is differentiable with uniformly bounded, Lipschitz gradients with respect to the inputs, the gradients of their composition are themselves Lipschitz with respect to the inputs.
This makes a large class of NNs satisfy the second half of (ii) from \cref{thm:stationary}.
This property is known as ``Lipschitz smoothness,'' and can also be encouraged explicitly, for example via mixup regularization~\citep{gyawali2020enhancing}.

\paragraph{Linearly Bounded NNs.} 
We parameterize all NNs here with a composition of continuous functions, thereby making them continuous.
And since continuous functions on compact spaces are bounded, we easily satisfy (i) from \cref{thm:milian,thm:stationary}.

\paragraph{Differentiability and Continuity of Partials of NNs.} 
All NNs here use continuously differentiable activation functions, so they are continuously differentiable with continuous partials.

\section{Latent SDEs with Pathwise Expansions for Modeling Suicide Risk} \label{apx:latent-sde}

\paragraph{Notation.} 
We observe each patient $n \in [1, \dots, N]$ at times $t_1, \dots, t_M$. 
Note that the observation times and the number of observations differ by patient, but for notational simplicity, we will denote them as if each patient has observations at the same times.
Let $x_t^{n,d} \in \{0, \dots, 10\}$ denote patient $n$'s response to Likert-scale question $d$ at time $t$.
Similarly, denote $z_t^{n,d}$ as the $d$th component of patient $n$'s latent psychological state at time $t$.
Let $x_t^n \in \{ 0, \dots, 10 \}^{D_x}$ denote patient $n$'s response to all $D_x$ survey questions, and let $z_t^n \in [0, 1]^{D_z}$ denote the patient's $D_z$-dimensional latent space, confined to the unit cube.

\paragraph{Generative Process.}
We assume patient $n$'s data is generated via:
\begin{align}
z_{t_0}^n &\sim \mathcal{N}_{[0, 1]}(\mu_0, \sigma^2_0 \cdot I_{D_z}), \qquad (\text{initial state, drawn from a $[0, 1]$-truncated normal}) \\
z_{t_m}^n | z_{t_{m-1}}^n &\sim p(\cdot | z_{t_{m-1}}^n) = z_{t_{m-1}}^n + \int_{t_{m-1}}^{t_m} \underbrace{h(z_t; \theta)}_{\text{prior drift}} \cdot dt + \underbrace{g(z_t; \theta)}_{\text{ prior diffusion}} \circ dB_t, \label{eq:prior-z} \\
x_{t_m}^{n,d} | z_{t_m}^{n, d} &\sim \mathrm{Cat}(\lambda(z_{t_m}^{n,d}; \sigma_\epsilon)), \qquad (\text{ordinal likelihood of survey data given latent state})
\end{align}
wherein the dynamics, $h: K \to \mathbb{R}^{D_z}$ and $g: K \to \mathbb{R}_{\geq 0}^{D_z}$, are time-independent, and where,
\begin{align*}
\lambda(z_t^d; \sigma_\epsilon) &= \Phi\left( \frac{ b^{1:12} - z_t^d}{\sigma_\epsilon} \right) - \Phi\left( \frac{b^{0:11} - z_t^d}{\sigma_\epsilon} \right), \quad
b = \begin{bmatrix}
-\infty & 0.05 & 0.15 & 0.25 & \dots & 0.95 & \infty
\end{bmatrix},
\end{align*}
in which $\Phi(\cdot)$ is the CDF of a standard normal and $b$ represents the boundaries for 0-10 range ordinal likelihood.

\paragraph{Latent Dimensionality.}
Note that in these experiments, we set the latent dimensionality equal to the data dimensionality, $D_z = D_x$. 
In many applications, one typically chooses $D_z < D_x$, treating the latent space as a compressed representation of the observations. 
However, when modeling psychological state, there is often no single low-dimensional ``underlying condition'' that generates the observed symptoms; instead, mental health challenges are characterized by the way in which the symptoms themselves interact, regulate, or reinforce one another---this is known as the ``networks approach'' in clinical psychology~\citep{borsboom2022systems,robinaugh2020network}.
For this reason, each latent dimension directly corresponds to an observed dimension.

\paragraph{Eliminating SDE Solvers with a Pathwise Series Expansion.}
Fitting the Latent SDE above requires backpropagation through a slow, numerically unstable SDE solver.
In contrast, ODE solvers are known to be more numerically stable, accurate, and well-behaved when used with adaptive step-sizes (e.g.~\citet{lou2023reflected}).
Thus, we replace Brownian motion with the Karhunen-Lo\`eve expansion~\citep{sarkka2019applied}:
\begin{align}
d\widehat{B}_t | \xi &= \sum\limits_{r=1}^R \sqrt{\frac{2}{T}} \cos\left( \frac{(2 \cdot r - 1) \cdot \pi \cdot t}{2T} \right) \cdot \xi^r \cdot dt, \qquad \xi \sim \mathcal{N}(0, I_R),  \label{eq:kl-expansion}
\end{align} 
where $T$ is the end-time of the process (in this work, it's the end-time of the EMA study) and $I_R$ is an $R$-dimensional identity matrix.
This expansion consists of a sum of $R$ randomly weighted ODEs that, as $R \to \infty$, converge to $dB_t$, leading the overall equation to converge to the Stratonovich SDE~\citep{wong1965convergence}.
Using a finite $R$, we obtain an approximation of the above model that we can fit using ODE solvers.
Following \citet{ghosh2022differentiable}, we replace \cref{eq:prior-z} above with:
\begin{align}
    z_{t_m}^n | z_{t_{m-1}}^n, \xi^n &= z_{t_{m-1}}^n + \int_{t_{m-1}}^{t_m} h(z_t; \theta) \cdot dt + g(z_t; \theta) \cdot d\widehat{B}_t, \qquad \xi^n \sim \mathcal{N}(0, I_R),
\end{align}
wherein $z_t^n$ is now a \emph{deterministic} function of a new latent variable, $\xi^n$.

\paragraph{Fitting Latent SDEs to Data.} \label{apx:model-fitting}
Our goal is to find parameters, $\Theta = \{ \mu_0, \sigma_0, \theta, \sigma_\epsilon \}$, that maximize the log marginal likelihood (LML) of the observed data:
\begin{align*}
    \Theta^* &= \mathrm{argmax}_\Theta \frac{1}{N} \sum\limits_{n=1}^N \log p(X^n; \Theta) 
    = \mathrm{argmax}_\Theta \frac{1}{N} \sum\limits_{n=1}^N \log \int_{z_{t_0}} \int_\xi p(X^n, \xi, z_{t_0}; \Theta) \cdot d\xi \cdot dz_{t_0},
\end{align*}
where $X^n$ represents all of patient $n$'s training data.
Since the above integrals are intractable, we compute a variational lower bound to the LML instead:
\begin{align}
    \log p(X^n; \Theta)    
    \geq \mathbb{E}_{q(\xi, z_{t_0} | X^n; \Phi)} \left[ \log \frac{p(X^n, \xi, z_{t_0}; \Theta)}{q(\xi, z_{t_0} | X^n; \Phi)} \right] = \text{ELBO}(X^n; \Theta, \Phi),
\end{align}
where 
\begin{align}
    q(\xi, z_{t_0} | X^n; \Phi) = \mathcal{N}(\mu_\xi^n, \sigma_\xi^n \cdot \sigma_\xi^n \cdot I_R) \cdot \mathcal{N}_{[0, 1]}( \mu_{z_0}^n, \sigma_{z_0}^n \cdot \sigma_{z_0}^n \cdot I_{D_z}),
\end{align}
is our variational family, and $\Phi = \{ \mu_\xi^n, \sigma_\xi^n, \mu_{z_0}^n, \sigma_{z_0}^n \}_{n=1}^N$ is the set of all variational parameters.
We maximize the ELBO with respect to the model and inference parameters, $\Theta$ and $\Phi$, using stochastic gradient descent.

\paragraph{Approximate Predictive Log-Likelihood.}
Let $\Theta^*$ and $\Phi^*$ denote the model and inference parameters that maximize the ELBO as obtained by gradient descent. 
Then the predictive log-likelihood is,
\begin{align}
    \log p(x_{t^*}^n | X^n; \Theta^*) = \log \mathbb{E}_{p(\xi, z_{t_0} | X^n; \Theta^*)} \left[ p(x_{t^*}^n | \xi, z_{t_0}; \Theta^*) \right] 
    \approx \log \mathbb{E}_{  q(\xi, z_{t_0} | X^n; \Phi^*)} \left[ p(x_{t^*}^n | \xi, z_{t_0}; \Theta^*) \right],
\end{align}
obtained via Monte Carlo approximation.

\paragraph{Forecasting.}
We sample from the approximate posterior and solve the corresponding differential equation to obtain forecasts.
That is, for each draw from $q(\xi, z_{t_0} | X^n; \Phi^*)$, we evaluate:
\begin{align}
    z_{t^*}^n | z_{t_0}, \xi &= z_{t_0} + \int_{t_0}^{t^*} h(z_t; \theta) \cdot dt + g(z_t; \theta) \cdot d\widehat{B}_t.
\end{align}

\section{Experimental Setups} \label{apx:setup}

\paragraph{Software.}
All experiments were conducted in Jax~\citep{bradbury2018jax} with NumPyro~\citep{phan2019composable}, Diffrax~\citep{kidger2021on} and Chex.

\subsection{Experimental Setup for \cref{fig:intuition}} 
\label{apx:expert-dynamics-setup}

Suppose we were given dynamics from a domain expert and we wanted to transform them via WSP to ensure they remained viable in any polyhedron.

\paragraph{Expert-Given Dynamics.}
Consider the dynamics,
\begin{align}
\tilde{h}(t, z_t) = \begin{bmatrix}
    2 \cdot z_t^0 + 5.25 \cdot z_t^1 - 3.625 \\
    5.25 \cdot z_t^0 - 2 \cdot z_t^1 - 1.625
\end{bmatrix}, \qquad
\tilde{g}(t, z_t) = \begin{bmatrix}
    0.3 \\
    0.3
\end{bmatrix}, \label{eq:spiral-dynamics}
\end{align}
which spiral counterclockwise outward from the point $(0.5, 0.5)$.

\paragraph{Polyhedra.}
We instantiate \cref{def:polyhedron} as follows to obtain three polyhedra:
\begin{itemize}
    \item Right-Angle Triangle:
    \begin{align*}
        u_1 &= \begin{bmatrix} 0.0 & 0.0 \end{bmatrix}, \qquad 
        u_2 = \begin{bmatrix} 0.0 & 0.0 \end{bmatrix}, \qquad 
        u_3 = \begin{bmatrix} 0.5 & 0.5 \end{bmatrix}, \\
        v_1 &= \begin{bmatrix} 1.0 & 0.0 \end{bmatrix}, \qquad 
        v_2 = \begin{bmatrix} 0.0 & 1.0 \end{bmatrix}, \qquad 
        v_3 = \begin{bmatrix} -\sqrt{0.5} & -\sqrt{0.5} \end{bmatrix}.
    \end{align*}

    \item Unit Square:
    \begin{align*}
        u_1 &= \begin{bmatrix} 0.0 & 0.0 \end{bmatrix}, \quad 
        u_2 = \begin{bmatrix} 0.0 & 0.0 \end{bmatrix}, \quad 
        u_3 = \begin{bmatrix} 1.0 & 1.0 \end{bmatrix}, \quad
        u_4 = \begin{bmatrix} 1.0 & 1.0 \end{bmatrix}, \\
        v_1 &= \begin{bmatrix} 1.0 & 0.0 \end{bmatrix}, \quad 
        v_2 = \begin{bmatrix} 0.0 & 1.0 \end{bmatrix}, \quad 
        v_3 = \begin{bmatrix} -1.0 & 0.0 \end{bmatrix}, \quad
        v_4 = \begin{bmatrix} 0.0 & -1.0 \end{bmatrix}.
    \end{align*}
    
    \item Lopsided Pentagon:
    \begin{align*}
        u_1 &= \begin{bmatrix} 0.1 & 0.1 \end{bmatrix}, 
        u_2 = \begin{bmatrix} 0.1 & 0.1 \end{bmatrix}, 
        u_3 = \begin{bmatrix} 1.1 & 1.1 \end{bmatrix}, 
        u_4 = \begin{bmatrix} 1.1 & 1.1  \end{bmatrix}, 
        u_5 = \begin{bmatrix} 0.8 & 0.8  \end{bmatrix}, \\
        v_1 &= \begin{bmatrix} 1.0 & 0.1 \end{bmatrix}, 
        v_2 = \begin{bmatrix} 0.1 & 1.0 \end{bmatrix}, 
        v_3 = \begin{bmatrix} -1.0 & 0.2 \end{bmatrix}, 
        v_4 = \begin{bmatrix} 0.2 & -1.0 \end{bmatrix},
        v_5 = \begin{bmatrix} -\sqrt{0.5} & -\sqrt{0.5} \end{bmatrix}.
    \end{align*}
    
\end{itemize}

\paragraph{Dynamics.}
We transform the dynamics from \cref{eq:spiral-dynamics} via WSP in \cref{eq:wsp} to remain within each of the above polyhedra.
We use the following hyper-parameters:
\begin{itemize}
    \item Right-Angle Triangle: $\alpha = 5.0$, $\beta = 100.0$, $\gamma = 2.0$, $\epsilon = 0.1$.
    \item Unit Square: $\alpha = 5.0$, $\beta = 1000.0$, $\gamma = 2.0$, $\epsilon = 0.1$.
    \item Lopsided Pentagon: $\alpha = 10.0$, $\beta = 8000.0$, $\gamma = 2.0$, $\epsilon = 0.1$.
\end{itemize}

\paragraph{Differential Equation Solver.}
All SDE trajectories began at $z_0 = \begin{bmatrix} 0.05 & 0.85 \end{bmatrix}^\intercal$ and were solved for time interval $t \in [0.0, 5.0]$ via the It\^o-Milstein solver~\citep{itoMilsteinSolver} with a step-size of $0.01$.

\subsection{Experimental Setup for \cref{fig:ito-inductive-bias,fig:stationary-inductive-bias,fig:strat-inductive-bias}} \label{apx:synthetic-setup}

\paragraph{Dynamics.}
Due to Limitation 2 (\cref{sec:intro}), initialization plays a crucial role in the success of expressive SDE-based models.
This is because many datasets (e.g.~images) lie in compact Euclidean subspaces.
In early-stage training, SDE trajectories often leave the region, requiring a large number of gradient steps just to return to it (while not necessarily fitting the data well), causing optimization to get stuck in poor local optima.
In late-stage training, small perturbations to the dynamics may, again, yield trajectories that lie outside the region.
As such, to empirically compare the inductive bias of WSP (\cref{eq:wsp}) against baselines (\cref{eq:unconstrained,eq:ito,eq:ito-absorbed}), we solve SDEs given by NNs $h$ and $g$ with randomly sampled weights.
We define the viable region, $K = [0, 1]$, to be an interval, and specifically choose to set $z_0 = 0.99$ near the boundary to stress-test the chain-rule based SDEs in \cref{eq:ito,eq:ito-absorbed} to show that once close to the boundary, they will struggle to return to the interior of $K$. 
While simple, \emph{these preliminary experiments already show WSP boasts a stark improvement in inductive bias in comparison to baselines.}

\paragraph{Architecture.}
In all experiments presented here, we used 3-layer NNs with 64 hidden units and CELU activation~\citep{barron2017continuously}.
We repeated these experiments with 2-layer and 4-layer NNs and observed the \emph{exact same behavior}, so we have omitted them for brevity.
We also repeated these experiments with other continuous activation functions---GeLU~\citep{hendrycks2016gaussian}, ELU~\citep{clevert2015fast}, SELU~\citep{klambauer2017self}, and SiLU~\citep{elfwing2018sigmoid}---and we observed the \emph{exact same type of behavior}, so we have omitted them for brevity.

\paragraph{Random Restarts.}
For each SDE in \cref{eq:unconstrained,eq:ito,eq:ito-absorbed,eq:wsp}, we randomly drew the weights using the Glorot normal initialization~\citep{glorot2010understanding}.
In each plot, we repeated this initialization 5 times, drawing 3 samples for each initialization.

\paragraph{Differential Equation Solver.}
In \cref{fig:ito-inductive-bias}, we used the It\^o-Milstein SDE solver~\citep{itoMilsteinSolver}.
In \cref{fig:strat-inductive-bias}, we used the Dormand-Prince 8/7 ODE solver~\citep{dormand1980family}.
In all experiments, we simulated the dynamics for $t \in [0, 5]$ with a step size of $0.001$.
We purposefully chose a small step size to ensure the faithfulness of the SDE solutions to the dynamics.

\paragraph{Pathwise Expansion.}
We used a truncation of $R = 40$ terms in the pathwise expansion (\cref{eq:kl-expansion}) for the experiment in \cref{fig:strat-inductive-bias}.
We repeated the experiments with $R = 20$, $100$, and $200$ and observed the \emph{exact same behavior}, so we have omitted them for brevity.

\subsection{Experimental Setup for \cref{table:performance,fig:wsp-qualitative-inline,fig:wsp-qualitative-185,fig:wsp-qualitative-114,fig:wsp-qualitative-143,fig:wsp-qualitative-144,fig:wsp-qualitative-149,fig:wsp-qualitative-90,fig:wsp-qualitative-56,fig:wsp-qualitative-5,fig:wsp-qualitative-15,fig:wsp-qualitative-17,fig:wsp-qualitative-88,fig:wsp-qualitative-23,fig:wsp-prior-samples}} \label{apx:real-setup}

\paragraph{Dynamics and Architecture.}
In all experiments, we used 3-layer NNs with 64 hidden units and GELU activations for the unconstrained drift and diffusion, $\tilde{h}$ and $\tilde{g}$. 
For the diffusion, we additionally applied a softplus activation~\citep{dugas2000incorporating} at the output to enforce positivity.

When using WSP to transform $\tilde{h}$ and $\tilde{g}$ so that the SDE solution remains viable, we additionally clipped the state $z_t$ to ensure it remains viable before passing it into the dynamics. 
This was necessary because, although our dynamics are viable, standard ODE/SDE solvers are not. 
Developing solvers that are themselves viable is an important direction for future work.

\paragraph{Differential Equation Solver.} 
Thanks to the pathwise expansion (described in \cref{apx:latent-sde}), we used the Dormand-Prince 8/7 ODE solver~\citep{dormand1980family} using an adaptive step size initialized at $0.01$, with a minimum of $0.001$, and with a tolerance of $0.001$.

\paragraph{Warm Starts.} 
Since optimizing the ELBO for neural latent SDEs is challenging, we propose to start optimization with the following warm starts:
\begin{enumerate}
    \item \textbf{Drift.} We encourage the drift to be $0$ to encourage the model to explore both positive and negative values of drift. We do this by minimizing the following loss function for $5000$ gradient steps:
    \begin{align}
        \mathrm{argmin}_\theta \lVert \tilde{h}(z_t; \theta) - 0.0 \rVert_2^2,
    \end{align}
    wherein $\tilde{h}$ is the drift before applying WSP.
    
    \item \textbf{Diffusion.} We initialized the diffusion term at $0.5$ to encourage the model to learn a nonzero diffusion. 
    Without this initialization, the model frequently collapsed to solutions with diffusion equal to 0. 
    In that case, the noise variable $\xi$ is unused in the generative process, so its posterior matches its prior, the KL term in the ELBO becomes zero, and optimization gets trapped in a local optimum where only the reconstruction term contributes.
    We do this by minimizing the following loss function for 5000 gradient steps:
    \begin{align}
        \mathrm{argmin}_\theta \lVert \tilde{g}(z_t; \theta) - 0.5 \rVert_2^2,
    \end{align}
    wherein $\tilde{g}$ is the diffusion before applying WSP.

\end{enumerate}

\paragraph{Optimization.} We used the Adam optimizer~\citep{kingma2014adam} with a linear learning rate scheduler going from 1e-4 to 5e-5 and a total of 80,000 gradient steps. To be able to evaluate the inductive bias of the model, we also fixed the variational standard deviations for all $N$ posteriors to a small value of $\sigma^n_\xi = 0.05$ throughout optimization, such that the model does not simply increase uncertainty to match the stochasticity in the data (see \cref{sec:results} for additional discussion).

\paragraph{Random Restarts.}
We repeated each experiment 5 times, each time randomly drawing the weights of the drift and diffusion NNs using a Glorot normal initializer~\citep{glorot2010understanding}, which adapts scaling to the arithmetic average of the numbers of inputs and outputs.

\paragraph{Train/Interpolation/Forecast Data Split.}
We split the study time horizon into interpolation and forecasting regimes. To create a challenging forecasting task, we defined the test set as all observations after the median time step (computed across all patients and time points) and evaluated models on predicting this held-out second half, thereby stress-testing their inductive bias. For interpolation, we randomly held out 20\% of the time steps in the first half of the time horizon for each patient (10\% of all time steps). This interpolation set lies within the training window and evaluates how well the learned dynamics fit observed data.

\paragraph{Model Selection.} 
Across the random restarts, we selected the model with the highest posterior predictive on the interpolation set, treating it like a validation set.

\paragraph{Model Evaluation.}
We assessed the quality of learned models along several axes:
\begin{itemize}
    \item \textbf{Inductive Bias.} 
    We evaluated each model's posterior predictive on the test set.
    
    \item \textbf{Learning Dynamics.} 
    We evaluated each model's posterior predictive on the interpolation set.
    Optimization for models that have worse interpolation set performance got stuck in poor local optima. 

    \item \textbf{Constraint Satisfaction.}
    We evaluated how well each model satisfied the constraints proposed in \cref{thm:milian}.
    Since our real data experiments all require the SDE solution to lie in the unit cube, the constraints are: (a) $h^d(z) \geq 0$ if $z^d = 0$, and $h^d(z) \leq 0$ if $z^d = 1$, and (b) $g^d(z) = 0$ if $z^d \in \{ 0, 1 \}$.
    We do this via the metrics proposed below, which we approximate via 100 Monte Carlo samples.  
    In these metrics, $z \cdot (1 - e_d) + e_d$ is $z$ with dimension $d$ set to 1, and $z \cdot (1 - e_d)$ is $z$ with dimension $d$ set to 0.
    \begin{itemize}
        \item Drift Validity Proportion (DRVP):
        \begin{align}
            \text{DRVP} = \frac{1}{2 D_z} \cdot \sum\limits_{d=1}^{D_z} \mathbb{E}_{z \sim \mathcal{U}[0, 1]} \left[ \mathbb{I}\left( h^d(z \cdot (1 - e_d) + e_d ) \leq 0 \right) + \mathbb{I}\left( h^d(z \cdot (1 - e_d)) \geq 0 \right) \right].
        \end{align}        
        This computes the fraction of boundary for which the drift satisfies the constraints (higher is better).
        
        \item Diffusion Validity Proportion (DIVP):
        \begin{align}
            \text{DIVP} = \frac{1}{2 D_z} \cdot \sum\limits_{d=1}^{D_z} \mathbb{E}_{z \sim \mathcal{U}[0, 1]} \left[ \mathbb{I}\left( g^d(z \cdot (1 - e_d) + e_d ) < \epsilon \right) + \mathbb{I}\left( g^d(z \cdot (1 - e_d)) < \epsilon \right) \right],
        \end{align}
        for some small $\epsilon = 0.001$ to allow for very small positive values, since $g(z)$ uses a softplus activation.
        This computes the fraction of boundary for which the diffusion satisfies the constraints (higher is better).
        
        \item Diffusion Distance to Valid (DIDV):
        \begin{align}
            \text{DIDV} = \frac{1}{2 D_z} \cdot \sum\limits_{d=1}^{D_z} \mathbb{E}_{z \sim \mathcal{U}[0, 1]} \left[ |g^d(z \cdot (1 - e_d) + e_d)| + |g^d(z \cdot (1 - e_d))| \right].
        \end{align}      
        This metric computes the average distance from the diffusion's value on the boundary to 0 (lower is better).
        
    \end{itemize}
    
\end{itemize}

\section{Descriptions of Real Data} \label{apx:real-data}

\paragraph{The PhysioNet GLOBEM Data.} 
GLOBEM is a public, multi-year collection of datasets for longitudinal human behavior collected from 2018 to 2022~\citep{goldberger2000physiobank,globem}. 
A total of 497 unique undergraduates in the United States were recruited via email to participate in the study. 
The collection is split into 4 datasets, one for each year, and each containing 155, 218, 137, and 195 participants, respectively. 

In their study, participants downloaded a mobile app and wore a fitness tracker for 10 weeks. 
In this work, we specifically focused on the EMA data collected via mobile surveys pertaining directly to mental health, which consisted of 6 different psychometric instruments:
\begin{itemize}
    \item Perceived Stress Scale (PSS-4)~\citep{pss4}: measures stress levels during the last month, ranging from 0 to 16, with higher values indicating more perceived stress. This was collected every Wednesday.
    \item Positive Affect and Negative Affect Schedule (PANAS)~\citep{panas}: captures positive and negative affects, respectively, each on a range from 0 to 20. Higher values indicate higher levels of positive/negative affect, respectively. This was collected every Wednesday and Sunday.
    \item Patient Health Questionnaire Mental Health (PHQ-4-MH)~\citep{phq4} ranges from 0 to 12, with higher values indicating a higher risk of mental health struggles. This was collected every Sunday.
    \item Patient Health Questionnaire Anxiety (PHQ-4-A)~\citep{phq4} ranges from 0 to 6, with higher values indicating a higher risk of anxiety. This was collected every Sunday. 
    \item Patient Health Questionnaire Depression (PHQ-4-D)~\citep{phq4} ranges from 0 to 6, with higher values indicating a higher risk of depression. This was collected every Sunday.
\end{itemize}
Of the four years of data available, we chose to use ``DS2'' from 2019, ``DS3'' from 2020, and ``DS4'' from 2021 because these three datasets share the same EMA questions (``DS1'' from 2018 did not use the same EMA questions as the other 3 years).
To ensure that we had sufficient data from each patient used in training, we only included participants who had at least 10 observations recorded, omitting 8 participants from DS2, 6 participants from DS3, and 4 participants from DS4. 

For instruments with response scales extending beyond 0–10, we observed a small proportion of values exceeding 10; specifically,  across all patients and all time steps in the 3 datasets, we found 112 PHQ-4 mental health scores ($0.1\%$), 400 PSS-4 scores ($0.3\%$), 500 positive affect scores ($0.3\%$), and 2,445 negative affect scores ($1.5\%$) greater than 10.
As such, we truncated all values greater than 10 to 10, such that a score of 10 represents the category ``$\geq 10$.''

Because participants entered the study on different calendar dates, we converted all dates to patient-specific, zero-anchored time indices, with time step 0 corresponding to each participant’s own study start date.
In our model, we measured time in days and then rescaled it by a factor of 1/5, compressing the overall time horizon to reduce the computational cost of solving the SDEs.

\paragraph{The ``SMART'' EMA Data.}
A total of 623 individuals reporting suicidal thoughts and/or recent suicidal behavior were enrolled from two Boston-area hospitals, who participated in the Sensing and Mobile Assessment in Real Time (SMART) study at Harvard University~\citep{nock2026using}. This sample included 315 adults (18 years and older) recruited from a psychiatric emergency service and 308 adolescents (ages 12–19) recruited from a psychiatric inpatient unit.

Individuals were excluded if they did not own an iOS or Android smartphone, were unable to provide informed consent or assent, lacked fluency in spoken or written English, or exhibited substantial cognitive or behavioral impairment (e.g.~florid psychosis, intellectual disability, dementia, acute intoxication, or marked agitation or violence).

After enrollment, participants provided written consent/assent, completed a baseline questionnaire, and installed the LifeData app on their smartphones. The app delivered brief self-report surveys. Participants received \$10 for the baseline assessment and \$1 for each ecological momentary assessment (EMA) survey they completed. All procedures were reviewed and approved by the institutional review boards of the participating institutions.

EMA surveys captured momentary suicidal thinking---specifically the urge to engage in suicidal behavior, suicidal intent, and perceived ability to resist suicidal urges---as well as 17 affective states (negative, hopeless, trapped, isolated, burdensome, angry, self-hate, agitated, worried, numb, fatigued, humiliated, desire to escape, desire to avoid, energetic, and positive) rated on a 0–10 Likert scale. 
In this work, we modeled four EMA survey items, all on a 0-10 Likert scale, with higher values indicating higher intensity: stress, negative affect, desire to escape, and urge to die.
Surveys were administered six times per day over a three-month period. The first and last surveys of each day occurred at fixed times selected collaboratively with each participant, whereas the remaining surveys were delivered at random times between these anchors.

Participants also had the option to initiate additional surveys at any time, for instance to report a suicide attempt, non-suicidal self-injury, or other significant events. A dedicated risk-monitoring team reviewed incoming data in real time and intervened when participants reported elevated suicidal intent (further details available upon request).

Similar to the PhysioNet GLOBEM dataset, we also converted all dates to zero-anchored time indices. Since surveys were delivered at random times, we discretized time into 6 intervals per day such that responses within the same interval were considered to be administered in the same time bucket. 
If multiple responses from the same patient fell into the same bucket ($84\%$ of observations), we took the maximum to capture peak distress. 
This reduces the amount of noisy data while preserving critical psychological signals. 
The final time indices were rescaled by a factor of 1/20 to compress the time horizon and reduce the computational cost of solving the SDEs.
Finally, after this preprocessing, to ensure that we had sufficient data from each patient used in training, we only included participants who had at least 20 observations recorded, resulting in a final cohort of 201 patients.

\FloatBarrier

\section{Results} \label{apx:results}

\subsection{Inductive Bias with Smooth, Pathwise Expansion of Brownian Motion} \label{apx:results-kl-expansion}

\begin{figure*}[t]
    \centering

    \includegraphics[width=0.48\textwidth]{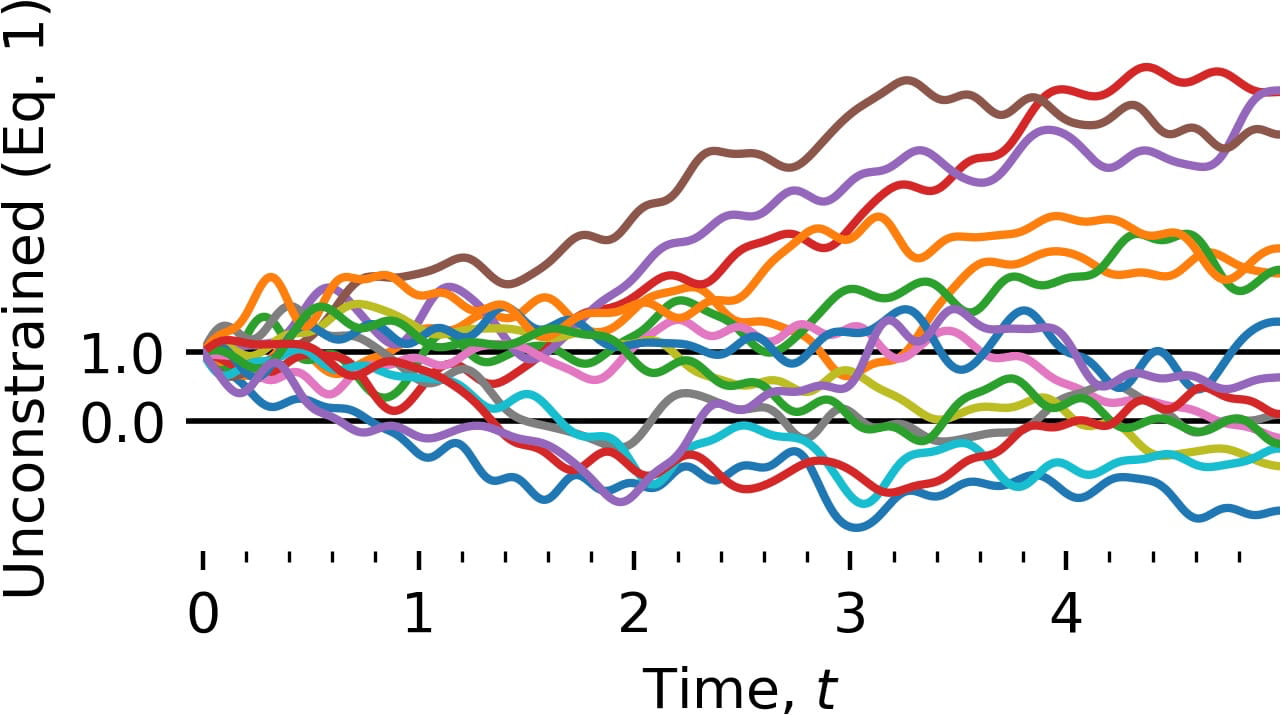}
    ~
    \includegraphics[width=0.48\textwidth]{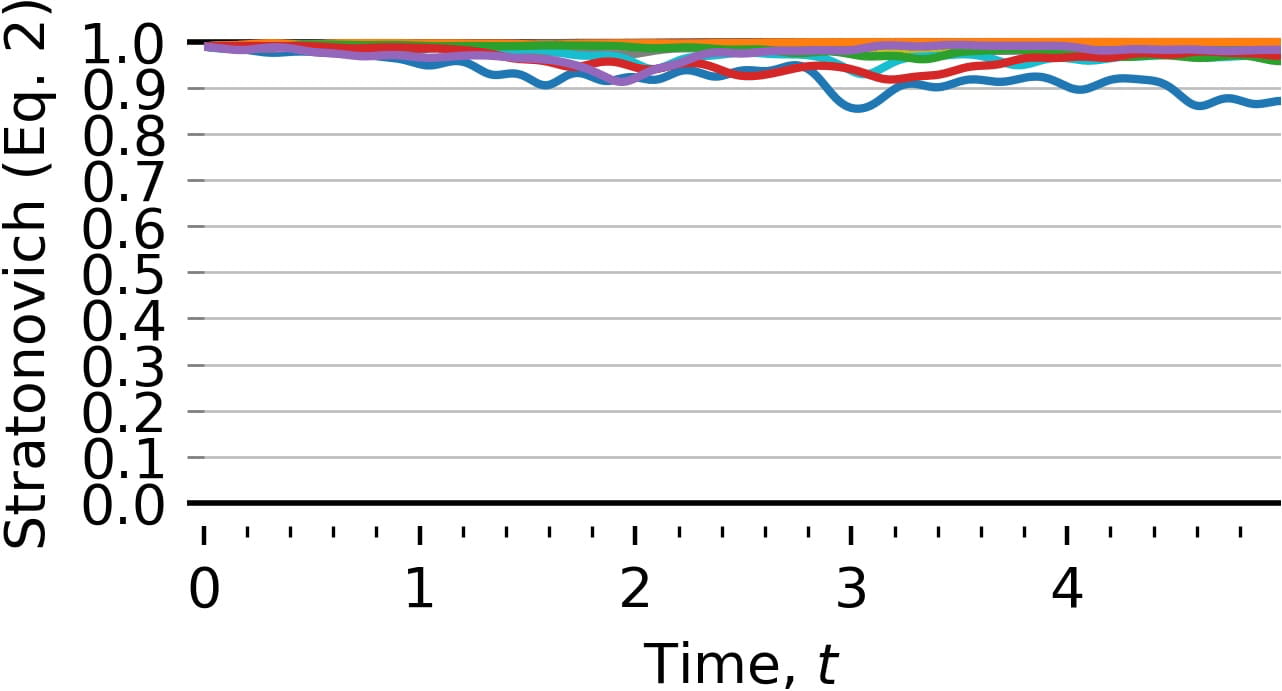}

    \vspace{1mm}

    \includegraphics[width=0.48\textwidth]{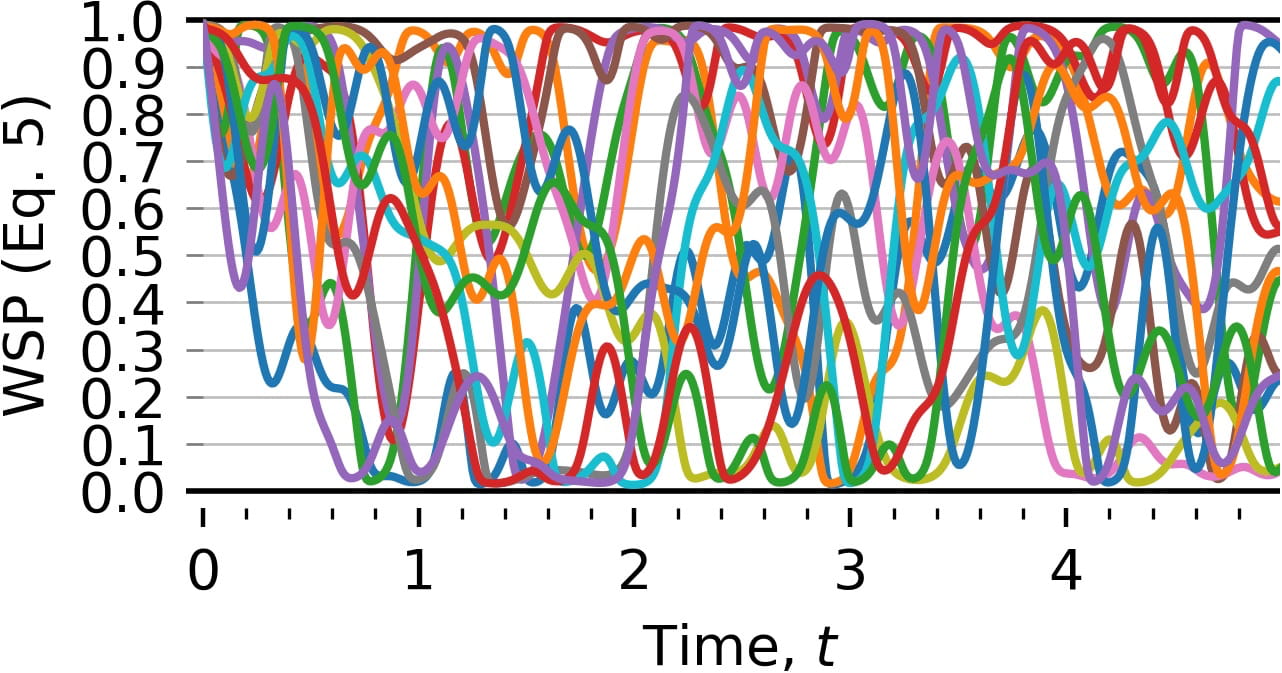}      
    ~
    \includegraphics[width=0.48\textwidth]{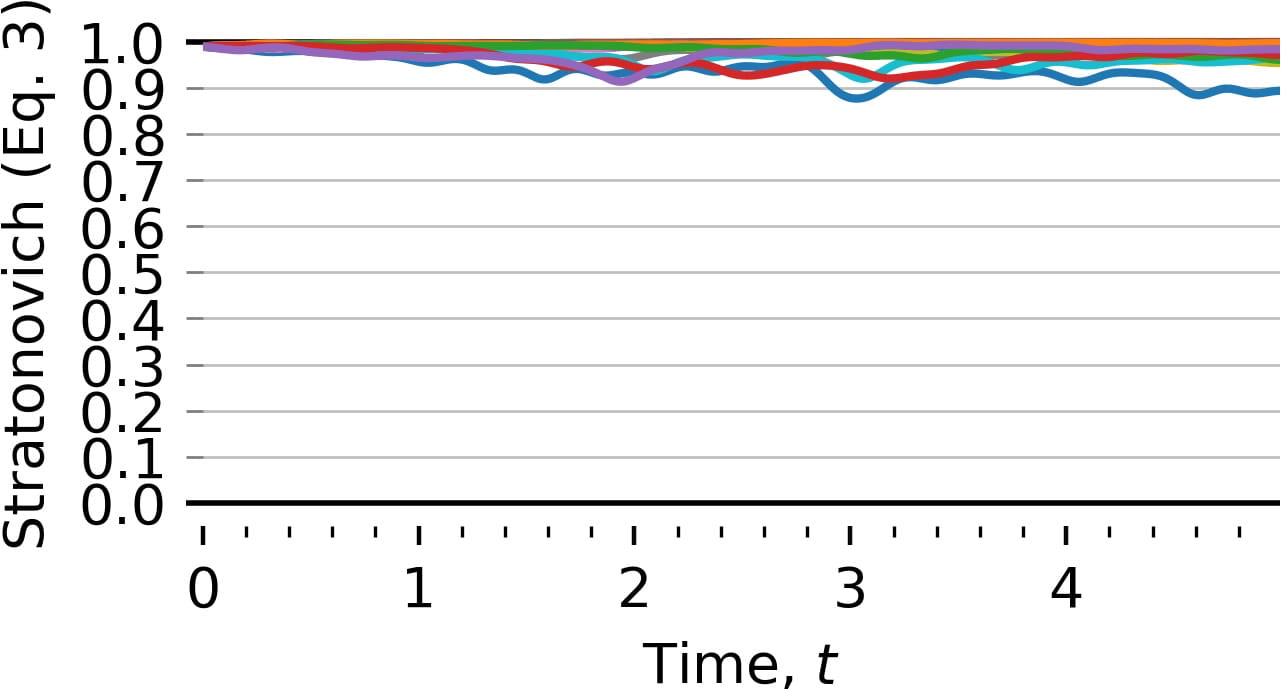}
    
    \caption{\textbf{WSP exhibits better inductive bias than baselines given smooth, pathwise expansion of Brownian motion.} Top left: Stratonovich-SDE with NN quickly leaves $K = [0, 1]$. Top \& bottom right: Stratonovich-SDE transformed via sigmoid sticks to the boundary. Bottom left: Stratonovich-SDE with WSP successfully remains in $K$. Note: for \cref{eq:ito,eq:ito-absorbed}, we used the Stratonovich chain-rule instead of It\^o's lemma.}
    \label{fig:strat-inductive-bias}
\end{figure*}

Since SDE solvers are slow and unstable, prior work focused on finding mechanisms to use ODE solvers instead.
ODE solvers are known to be more numerically stable, accurate, and well-behaved when used with adaptive step-sizes.
Prior work has used several strategies to accomplish this.
For example, prior work approximates the first two moments of the time-marginal using the Fokker-Planck-Kolmogorov (FPK) equation, which can then be solved using an ODE solver---this is known as the ``Gaussian Assumed Approximation'' \citep{sarkka2019applied}.
In diffusion models, prior work derived an FPK-based, fast, numerically stable process that samples from the same distribution as the SDE using an ODE solver---this is called the ``probability flow ODE''~\citep{song2021scorebased}.
Finally, prior work (e.g.~\citet{ghosh2022differentiable}) replaces Brownian motion with the Karhunen-Lo\`eve Expansion~\citep{sarkka2019applied}---see \cref{eq:kl-expansion}.
In \cref{fig:strat-inductive-bias}, we empirically demonstrate that WSP exhibits the same inductive bias in comparison to baselines, even under this pathwise expansion.

\FloatBarrier

\subsection{Stationary SDEs on r-Polyhedra} \label{apx:results-stationary}

In \cref{fig:stationary-inductive-bias}, we show that, for any neural diffusion with WSP (\cref{eq:wsp}) with randomly generated weights, we can always construct a drift, given by \cref{thm:stationary}, with dynamics that are viable in $K = [0, 1]$ and induce the target time-marginal.
Moreover, like their general counterparts, these stationary dynamics overcome the shortcomings of the baseline dynamics in \cref{eq:unconstrained,eq:ito,eq:ito-absorbed}.

\begin{figure*}[t]
    \centering

    \includegraphics[width=1.0\textwidth]{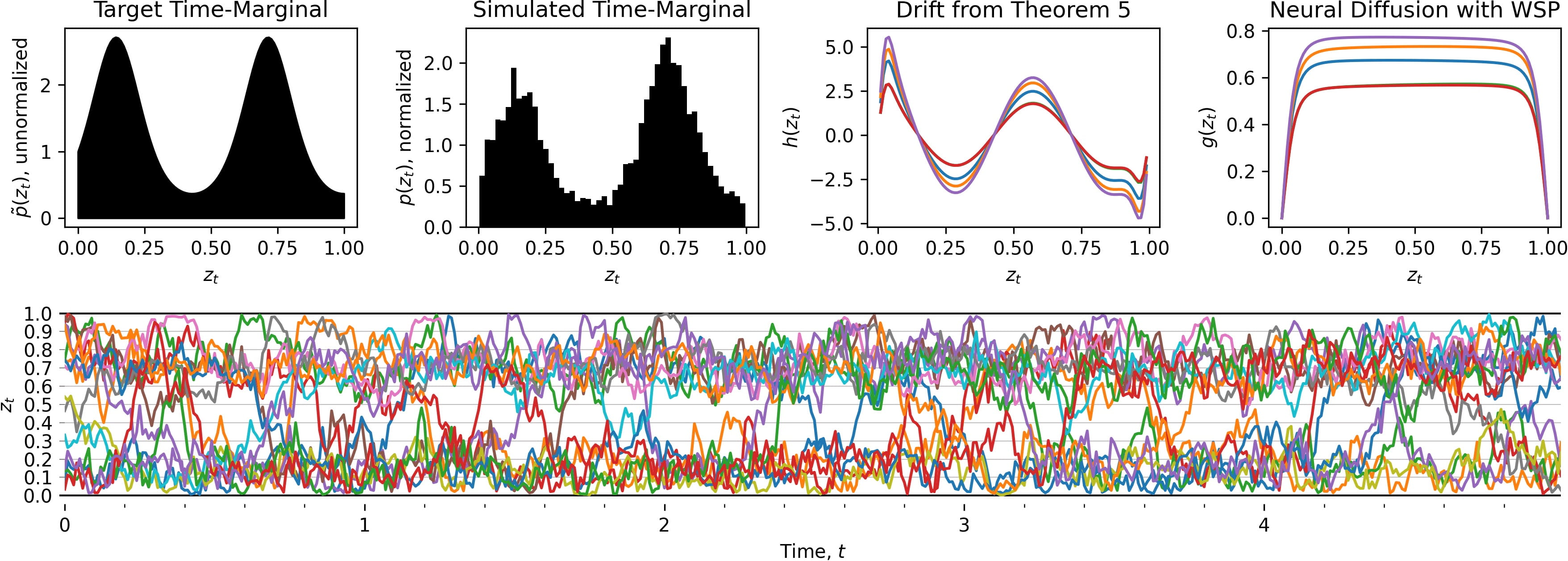}

    \vspace{3mm}
    
    \includegraphics[width=1.0\textwidth]{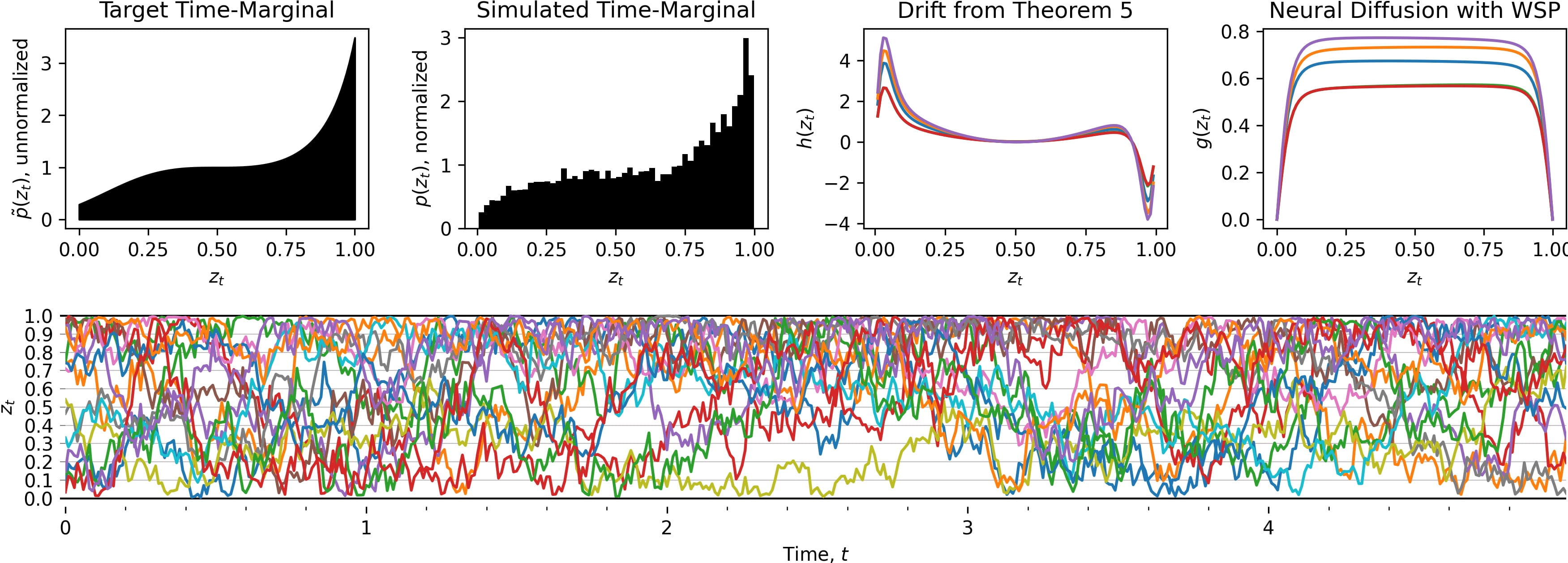}

    \vspace{3mm}
    
    \includegraphics[width=1.0\textwidth]{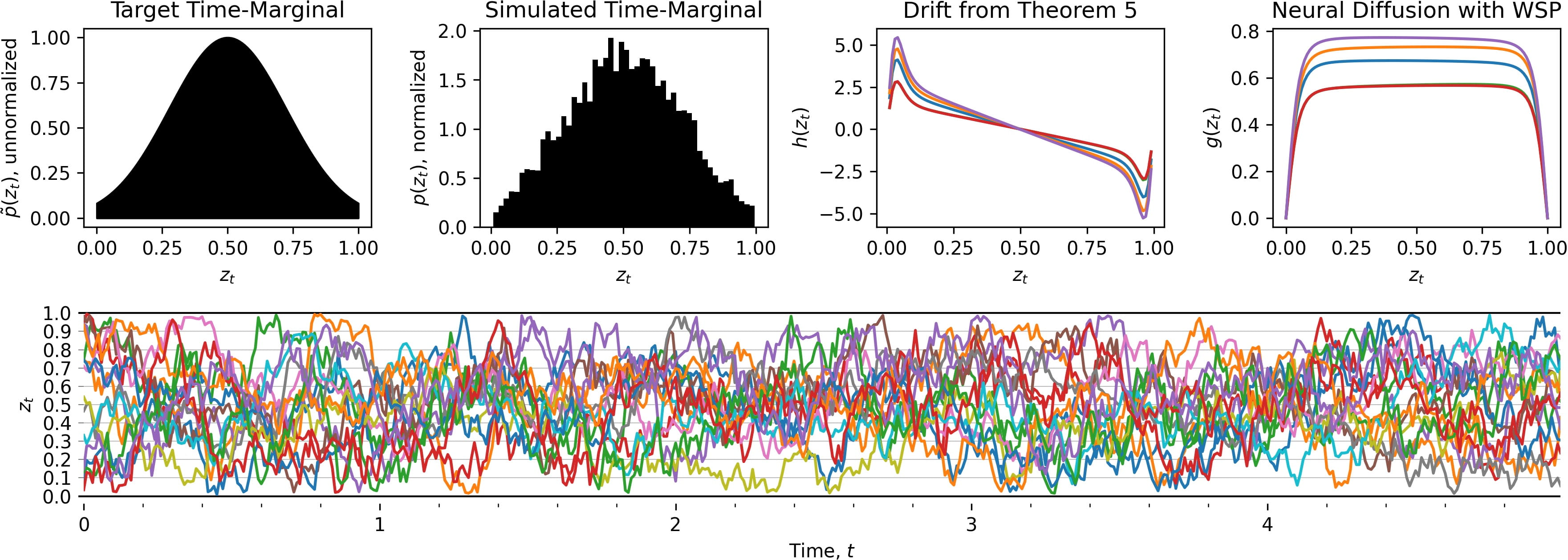}

    \caption{\textbf{Stationary SDE exhibits better inductive bias than baselines.} Given a target time-marginal, given any diffusion with WSP, we can always derive a corresponding drift via \cref{thm:stationary} that is viable in $K$ and has the target stationary distribution. Like the general dynamics, these dynamics overcome the shortcomings of the baseline dynamics in \cref{eq:unconstrained,eq:ito,eq:ito-absorbed}. Here, our diffusion is a NN with randomly initialized weights, with each color corresponding to a different seed. Note: the target time-marginal is \emph{not} normalized.}
    \label{fig:stationary-inductive-bias}
\end{figure*}

\FloatBarrier
\clearpage

\subsection{Visualizations of Latent SDE Posterior Samples from the ``SMART'' Data} \label{apx:viz-qualitative}

\cref{fig:wsp-qualitative-185,fig:wsp-qualitative-114,fig:wsp-qualitative-143,fig:wsp-qualitative-144,fig:wsp-qualitative-149,fig:wsp-qualitative-90,fig:wsp-qualitative-56,fig:wsp-qualitative-5,fig:wsp-qualitative-15,fig:wsp-qualitative-17,fig:wsp-qualitative-88,fig:wsp-qualitative-23} visualize the posteriors of latent neural SDEs of various patients in the ``SMART'' dataset across all baselines---\textcolor{colorVanillaStrong}{\textbf{Vanilla}}, \textcolor{colorVanillaClipStrong}{\textbf{Vanilla+Clip}}, \textcolor{colorWSPNoClipStrong}{WSP}, and \textcolor{colorWSPStrong}{\textbf{WSP+Clip}}.
These results consistently show that:
\begin{enumerate}
    \item WSP-based dynamics respect the specified clinical constraints and avoid assigning probability mass to impossible outcomes: the constraint-satisfaction metrics are perfect for WSP, by construction, and are far from perfect for Vanilla \emph{and} Vanilla+Clip. 

    \item These constraints guide optimization toward better minima: the WSP-based model exhibits better interpolation performance, indicating optimization converged to better optima.
    In contrast, the Vanilla and Vanilla+Clip model often struggles to fit the interpolation set, even though it lies in the same time region as the training data.

    \item Altogether, the WSP-based model exhibits a better inductive bias for forecasting.
\end{enumerate}
We, again, note that although WSP substantially improves forecasts, it still cannot reliably infer a patient's true state in the second half of the study just from the first half---EMA data is too stochastic for \emph{any} model to be this accurate.
\emph{Accordingly, we present examples where WSP forecasts are strikingly good (closely tracking the forecasting set or its overall trend) alongside examples where all dynamics miss the target entirely.}
Taken together, these results show that WSP's inductive bias can markedly improve forecasting, and suggest that incorporating additional clinical knowledge could yield similarly large gains.

\begin{figure*}[p]
    \centering

    \includegraphics[width=0.38\textwidth]{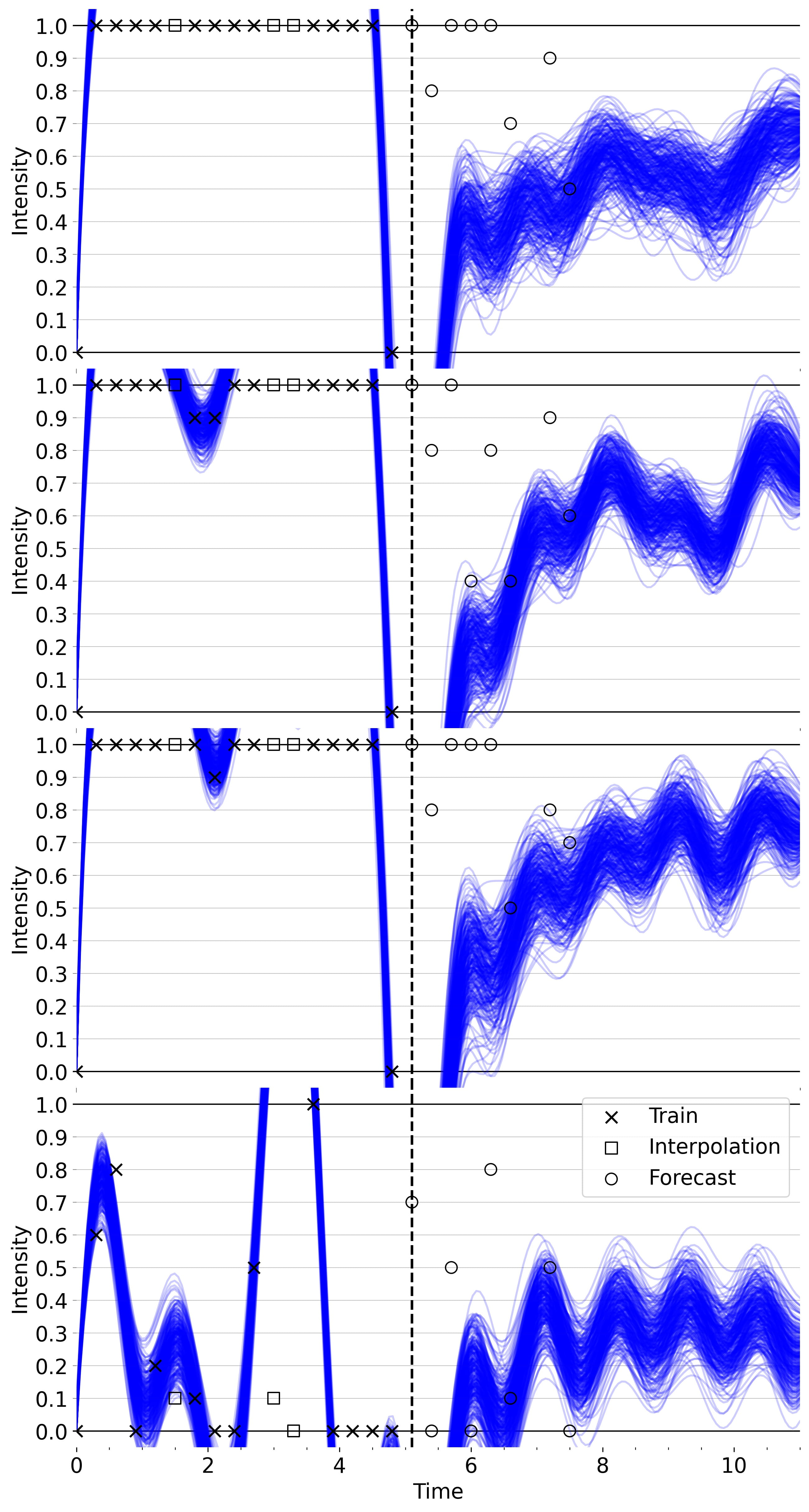}
    ~
    \includegraphics[width=0.38\textwidth]{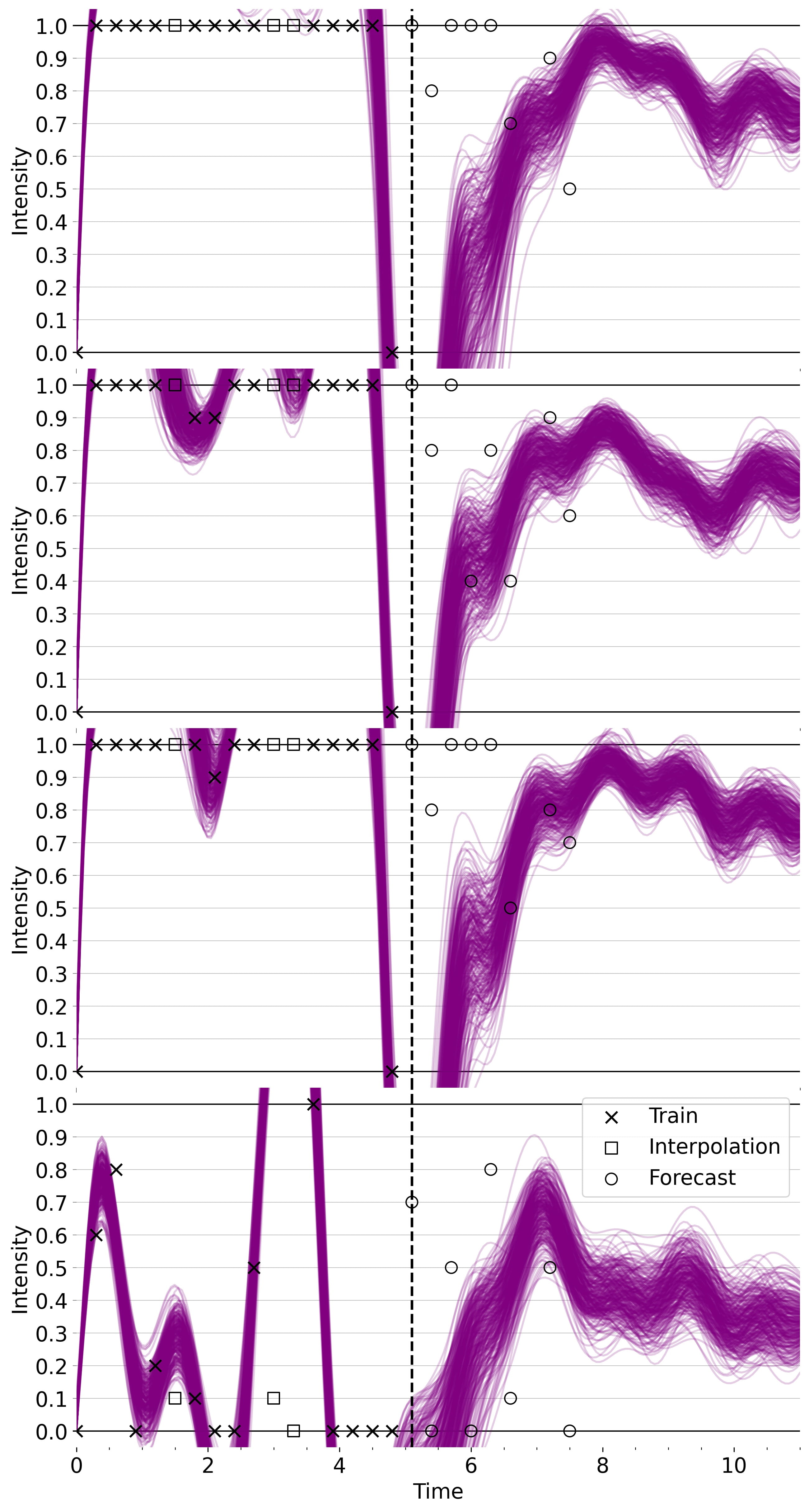}

    \includegraphics[width=0.38\textwidth]{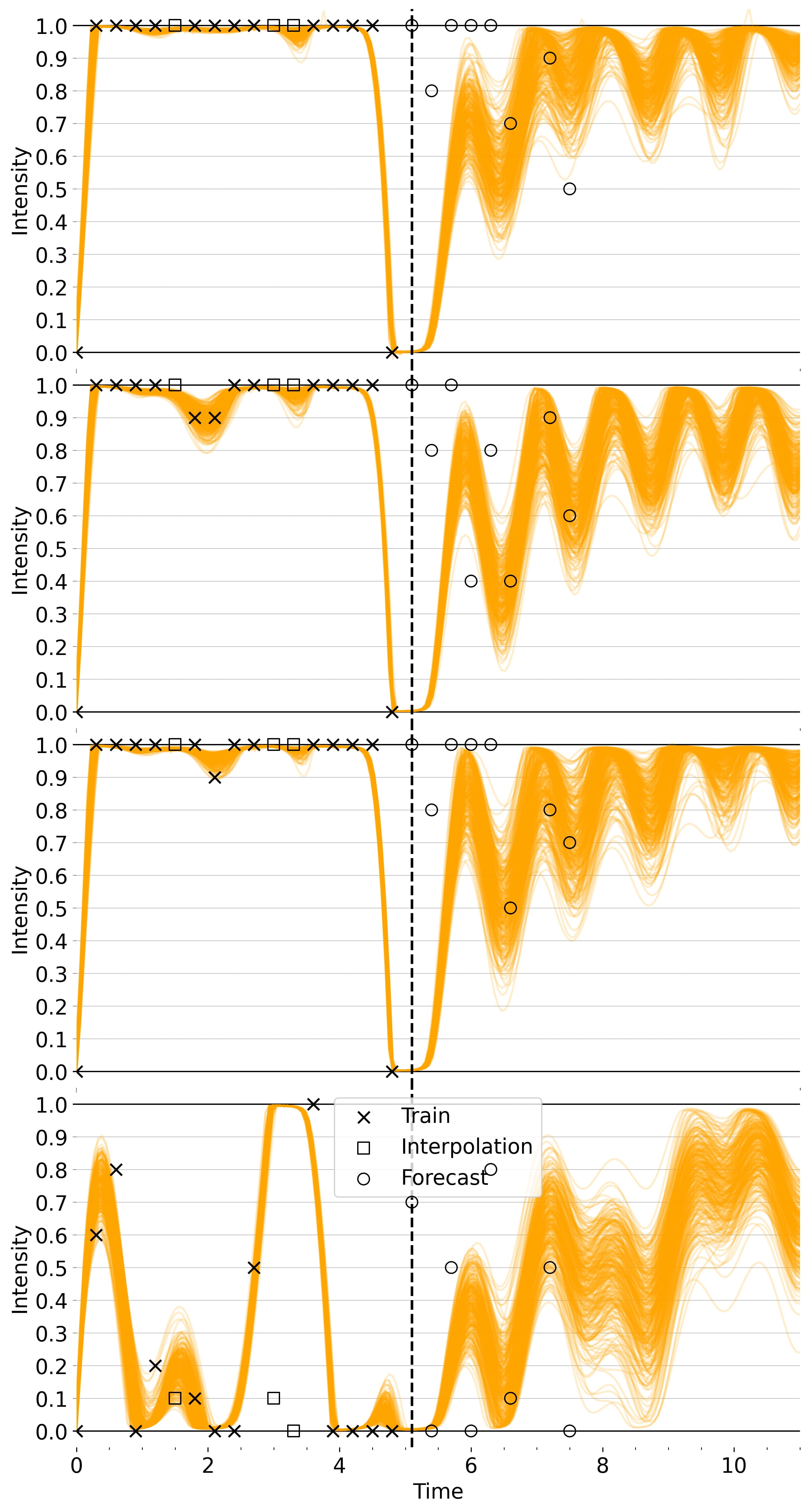}
    ~
    \includegraphics[width=0.38\textwidth]{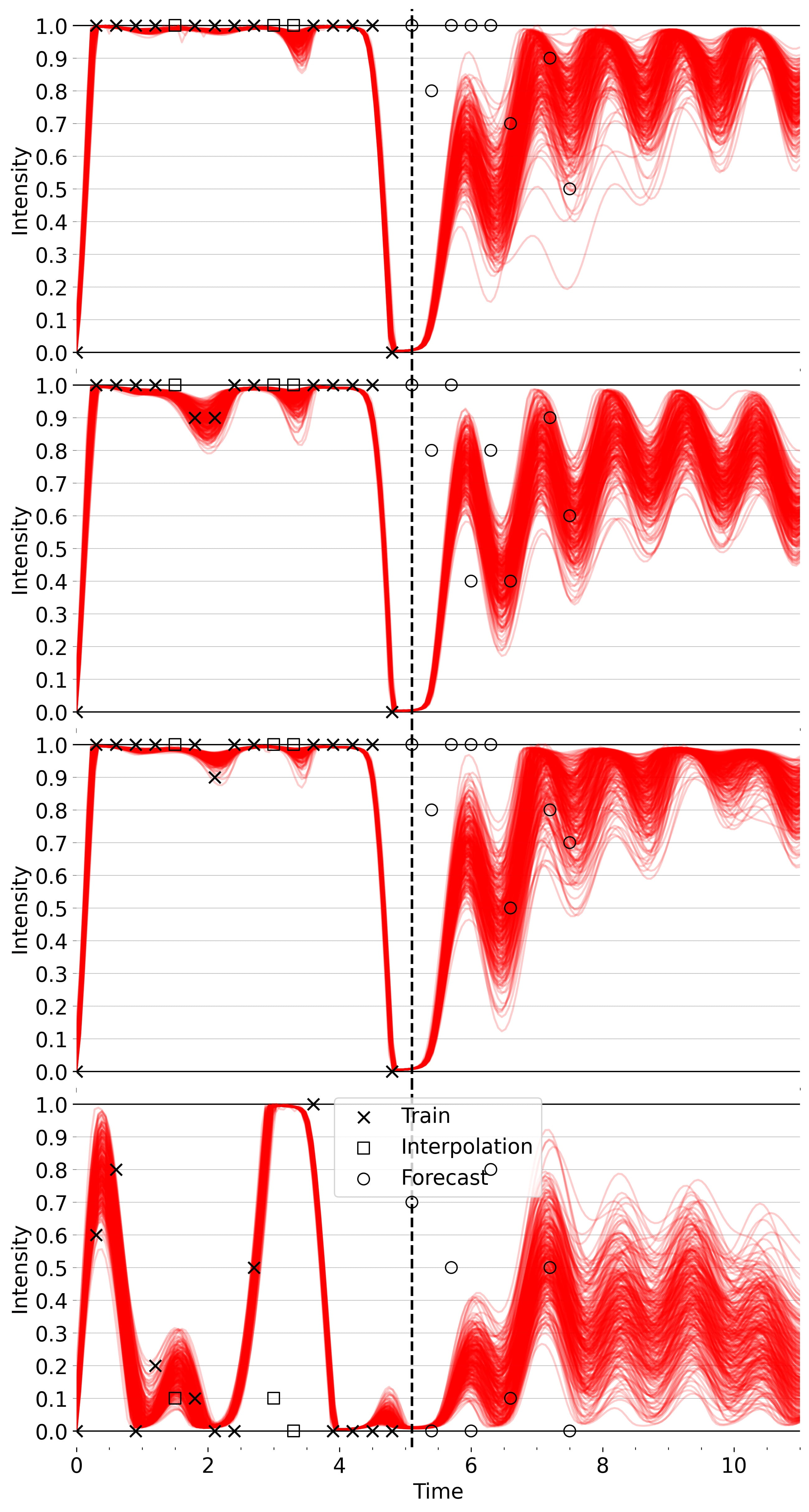}
    
    \caption{\textbf{Qualitative Comparison of Posterior for Specific Patient from \textcolor{colorVanillaStrong}{Vanilla} (top left), \textcolor{colorVanillaClipStrong}{Vanilla+Clip} (top right), \textcolor{colorWSPNoClipStrong}{WSP} (bottom left), and \textcolor{colorWSPStrong}{WSP+Clip} (bottom right).} Each row represents a different EMA survey item, listed in \cref{apx:real-data}. See discussion in \cref{apx:viz-qualitative}.}
    \label{fig:wsp-qualitative-185}
\end{figure*}

\begin{figure*}[p]
    \centering

    \includegraphics[width=0.38\textwidth]{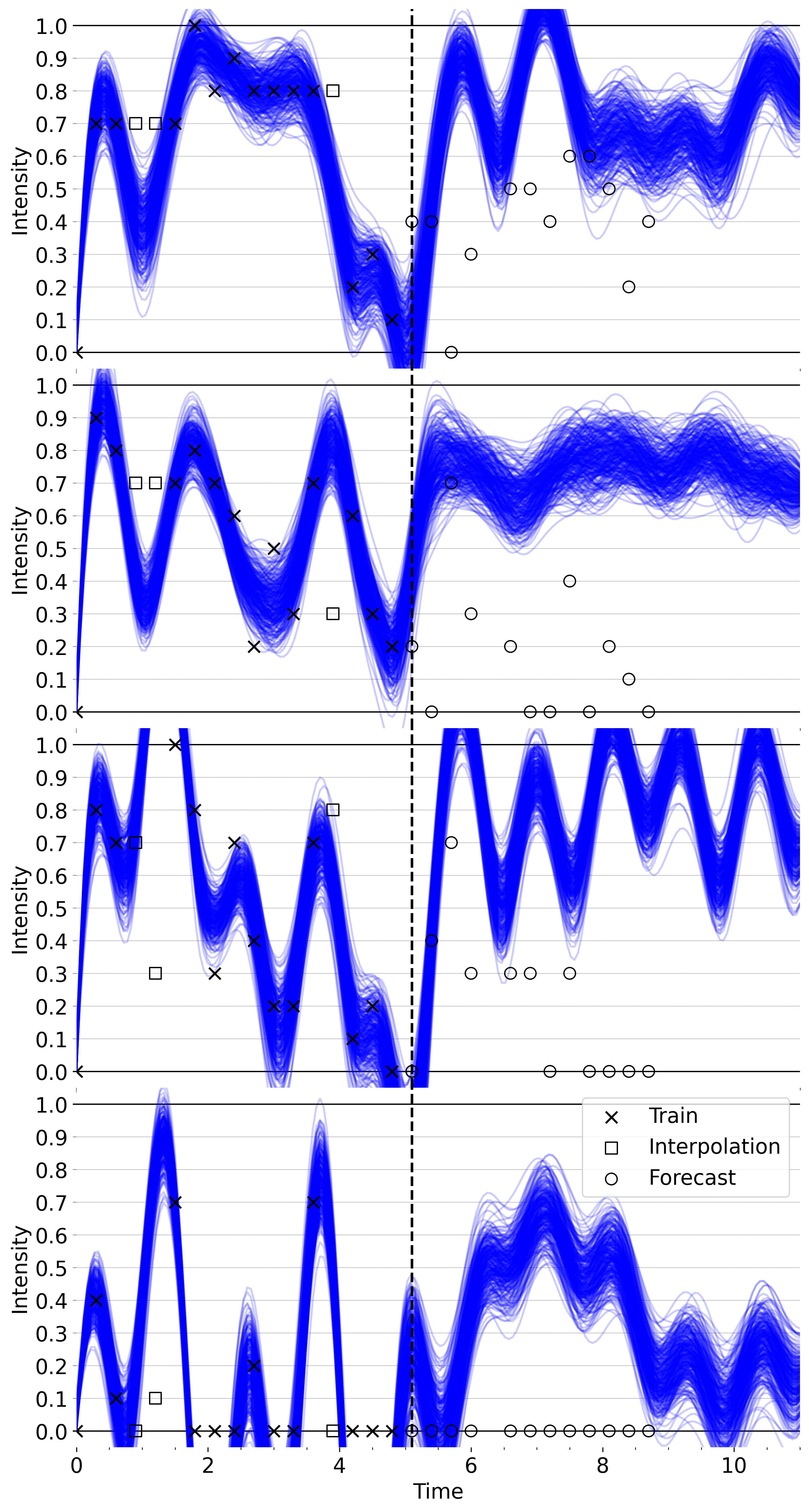}
    ~
    \includegraphics[width=0.38\textwidth]{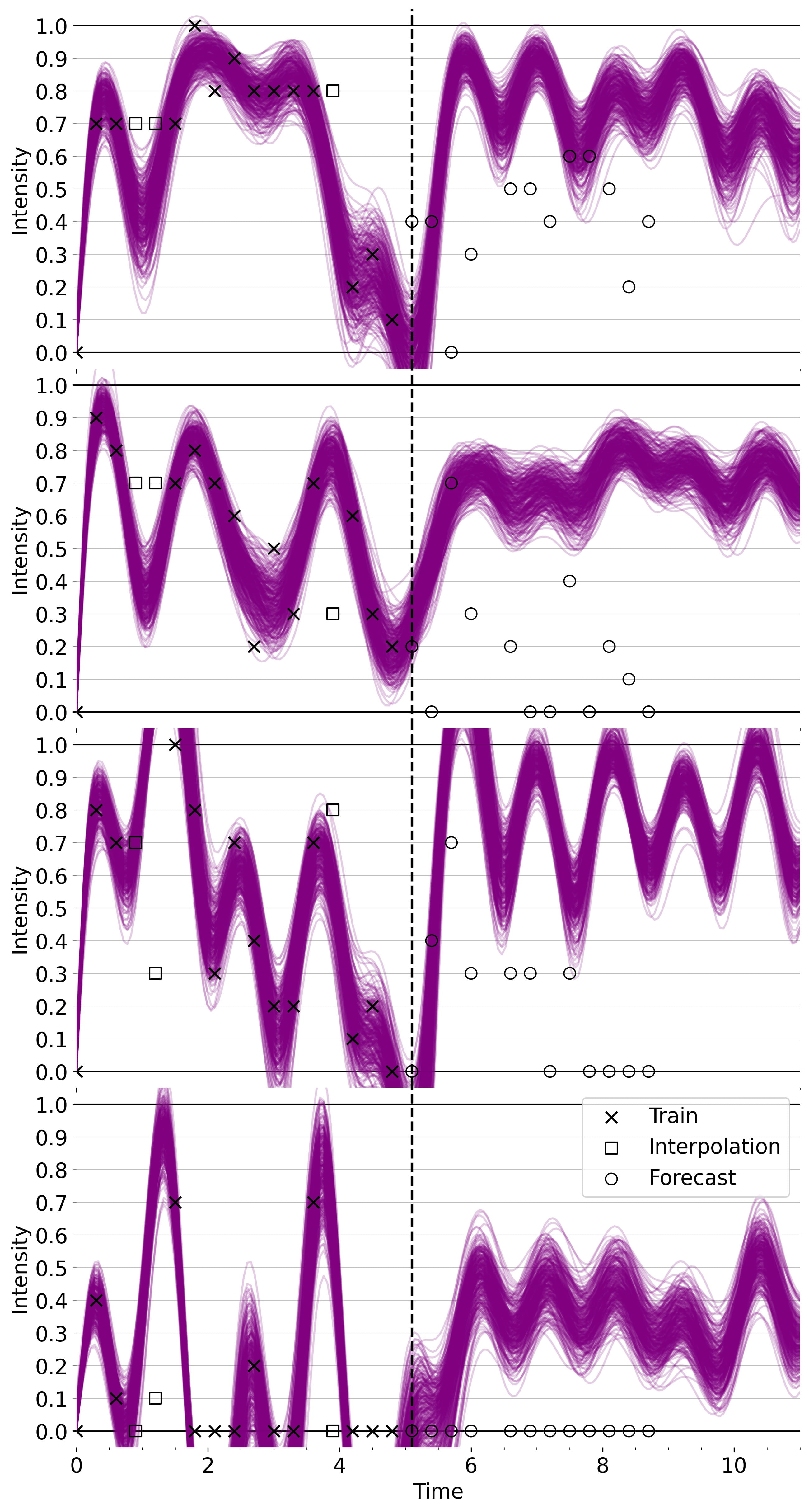}

    \includegraphics[width=0.38\textwidth]{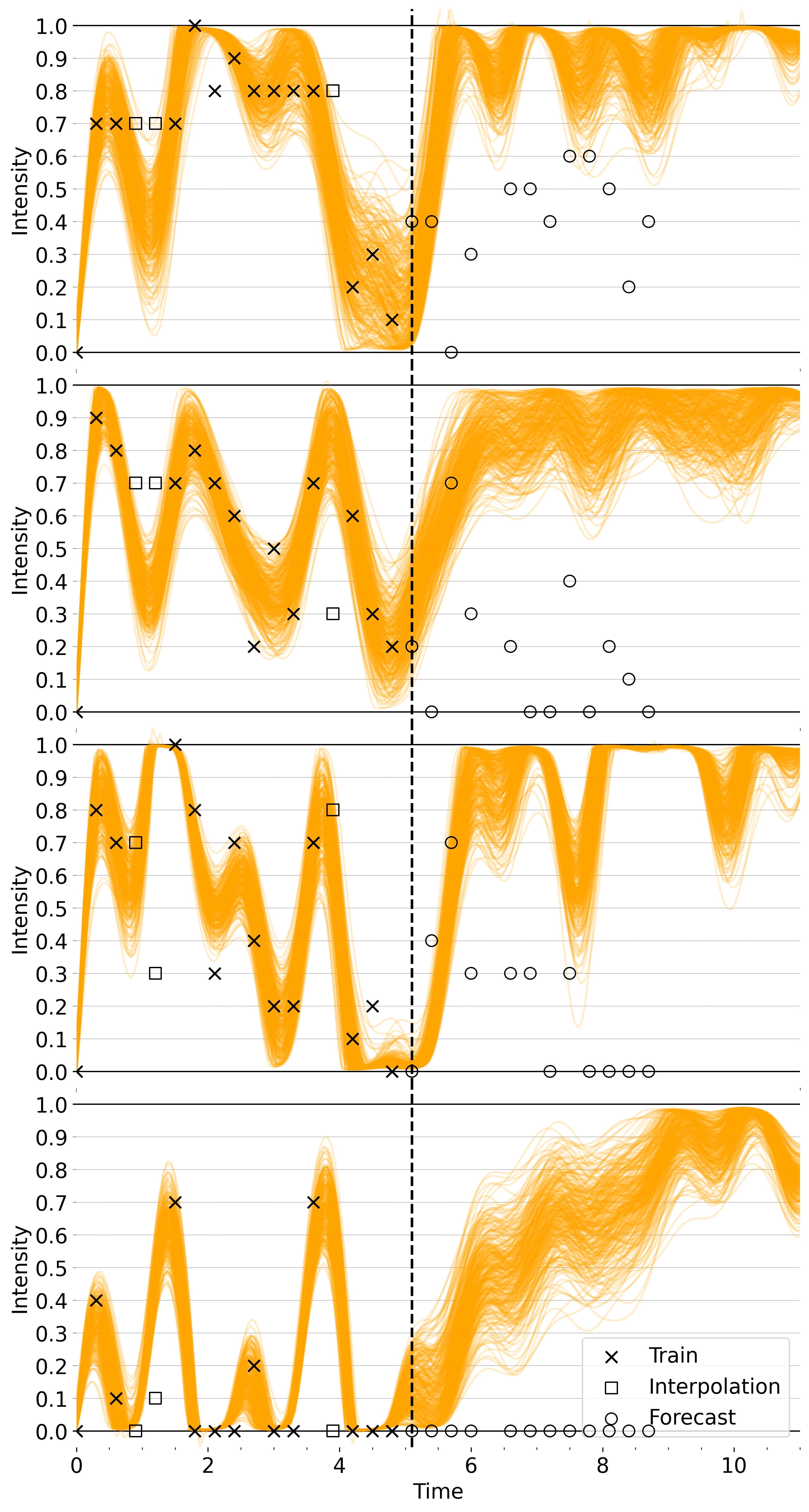}
    ~
    \includegraphics[width=0.38\textwidth]{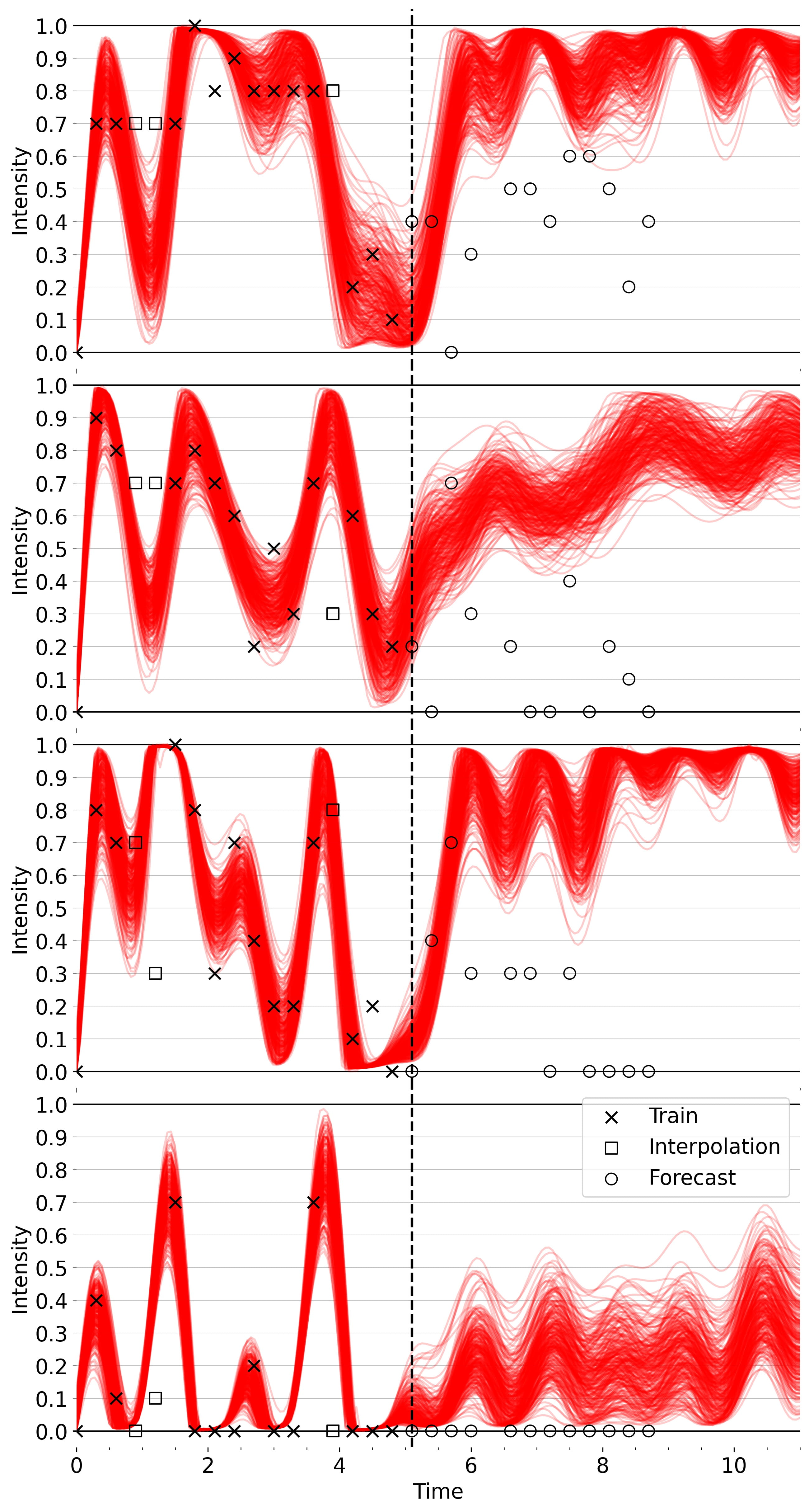}
    
    \caption{\textbf{Qualitative Comparison of Posterior for Specific Patient from \textcolor{colorVanillaStrong}{Vanilla} (top left), \textcolor{colorVanillaClipStrong}{Vanilla+Clip} (top right), \textcolor{colorWSPNoClipStrong}{WSP} (bottom left), and \textcolor{colorWSPStrong}{WSP+Clip} (bottom right).} Each row represents a different EMA survey item, listed in \cref{apx:real-data}. See discussion in \cref{apx:viz-qualitative}.}
    \label{fig:wsp-qualitative-114}
\end{figure*}

\begin{figure*}[p]
    \centering

    \includegraphics[width=0.38\textwidth]{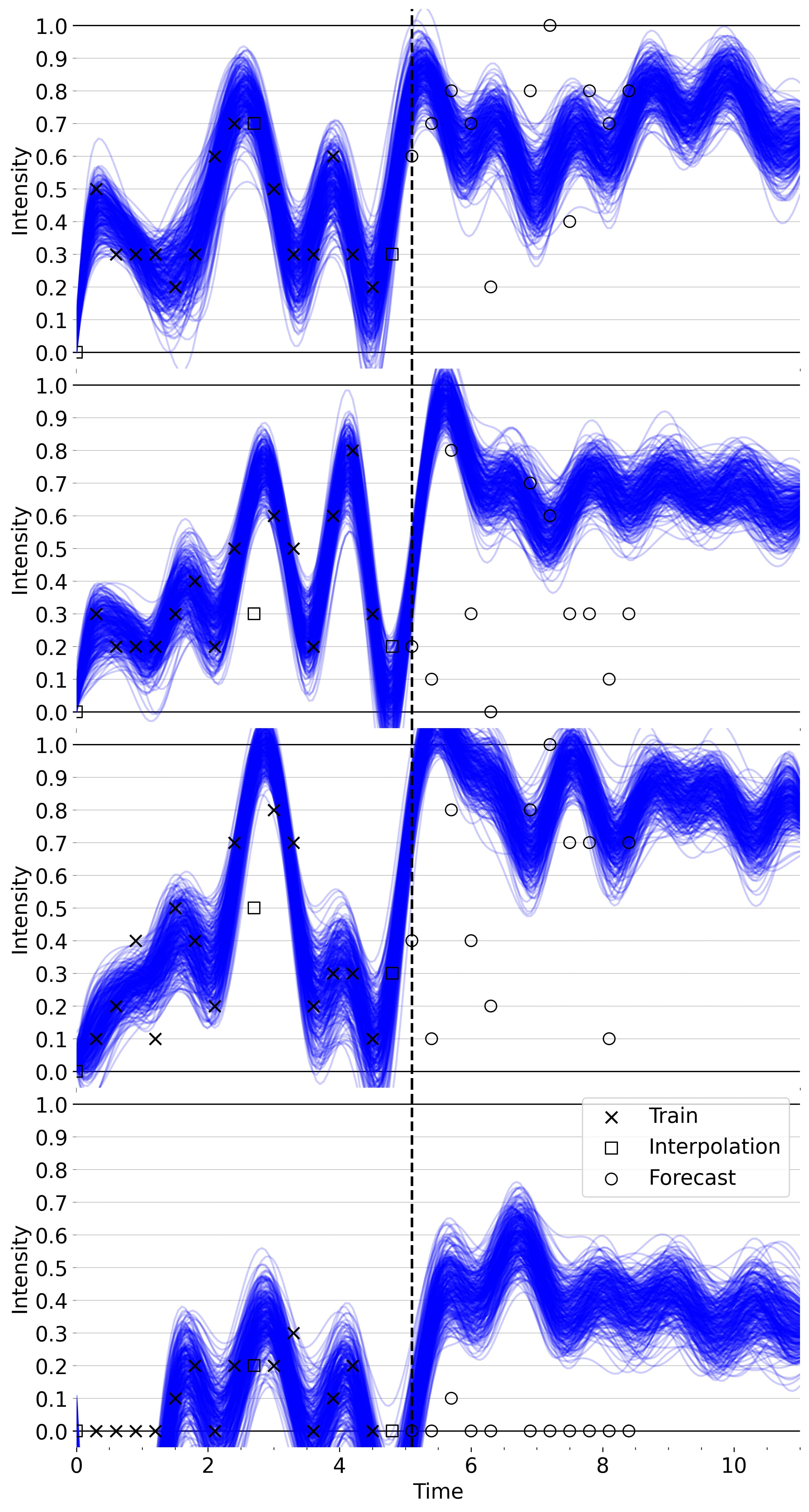}
    ~
    \includegraphics[width=0.38\textwidth]{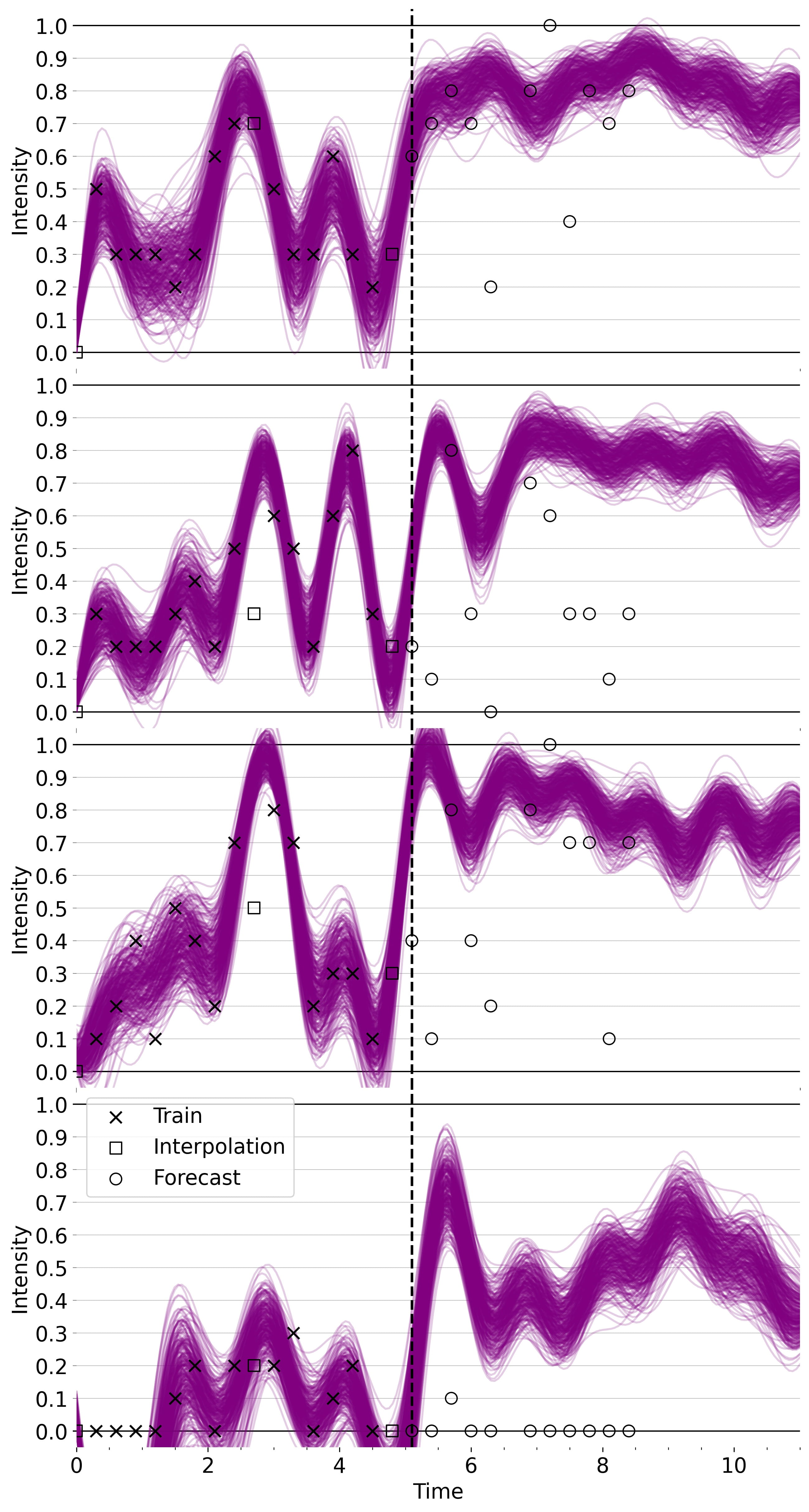}

    \includegraphics[width=0.38\textwidth]{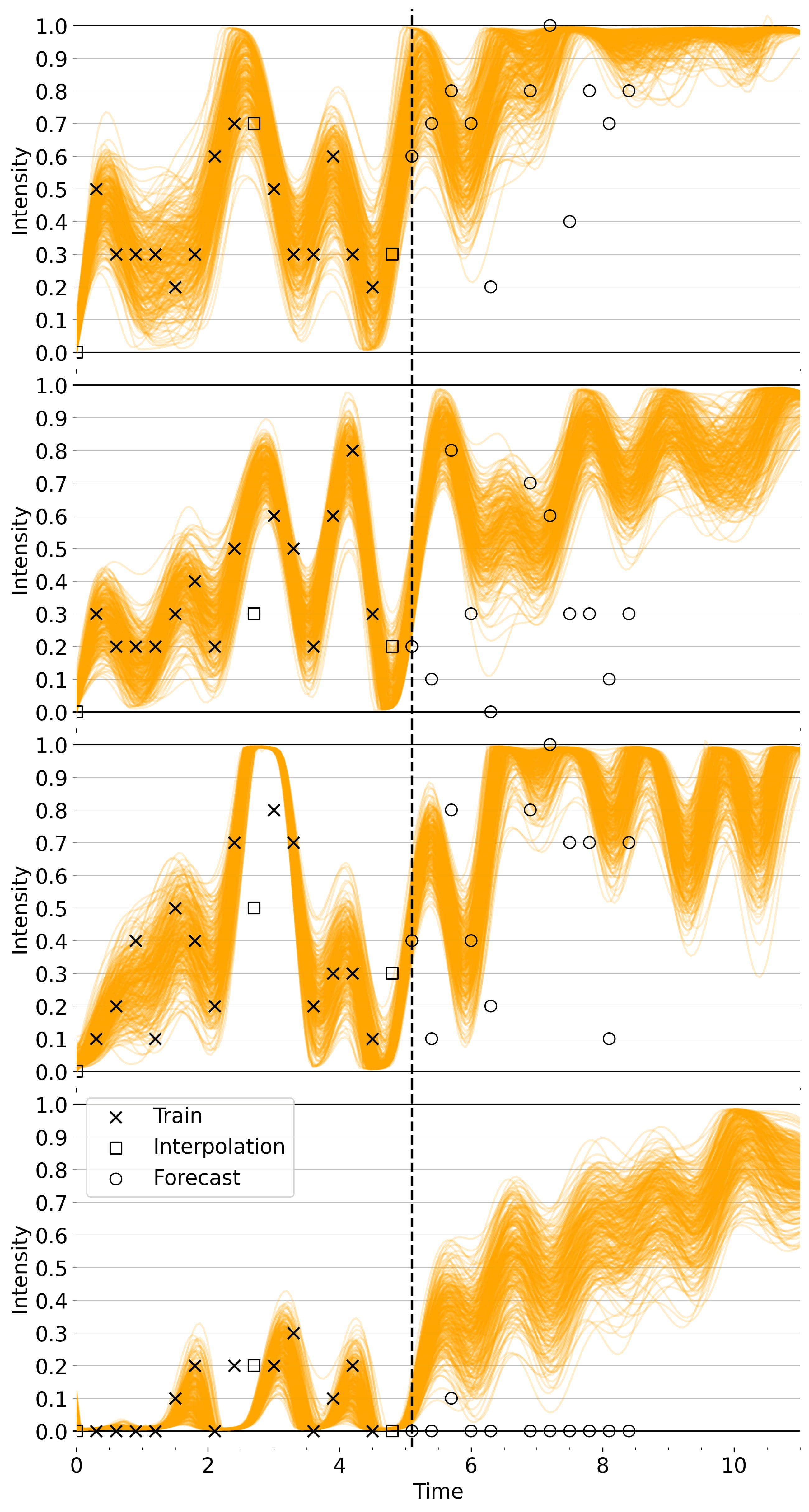}
    ~
    \includegraphics[width=0.38\textwidth]{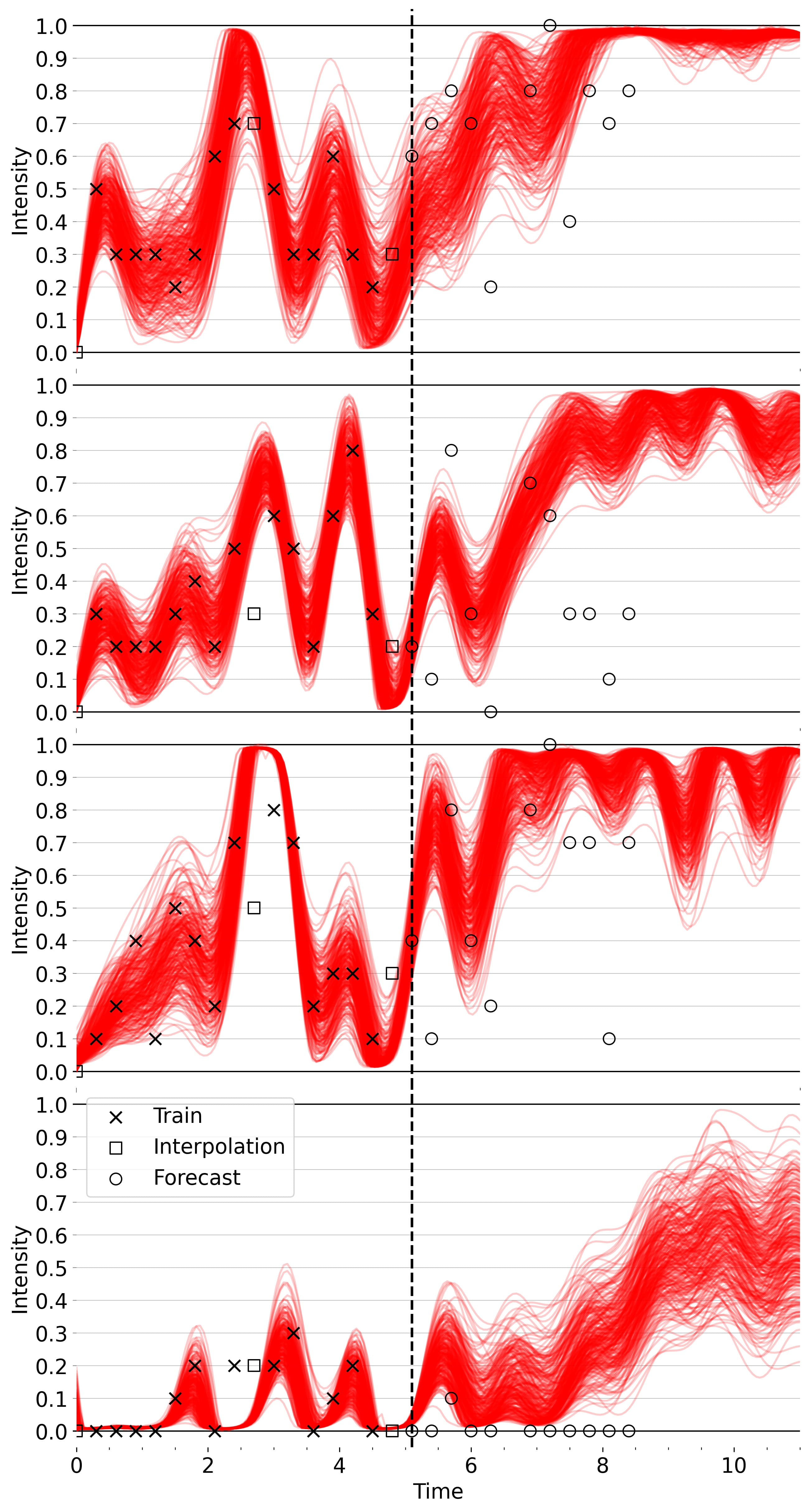}
    
    \caption{\textbf{Qualitative Comparison of Posterior for Specific Patient from \textcolor{colorVanillaStrong}{Vanilla} (top left), \textcolor{colorVanillaClipStrong}{Vanilla+Clip} (top right), \textcolor{colorWSPNoClipStrong}{WSP} (bottom left), and \textcolor{colorWSPStrong}{WSP+Clip} (bottom right).} Each row represents a different EMA survey item, listed in \cref{apx:real-data}. See discussion in \cref{apx:viz-qualitative}.}
    \label{fig:wsp-qualitative-90}
\end{figure*}

\begin{figure*}[p]
    \centering

    \includegraphics[width=0.38\textwidth]{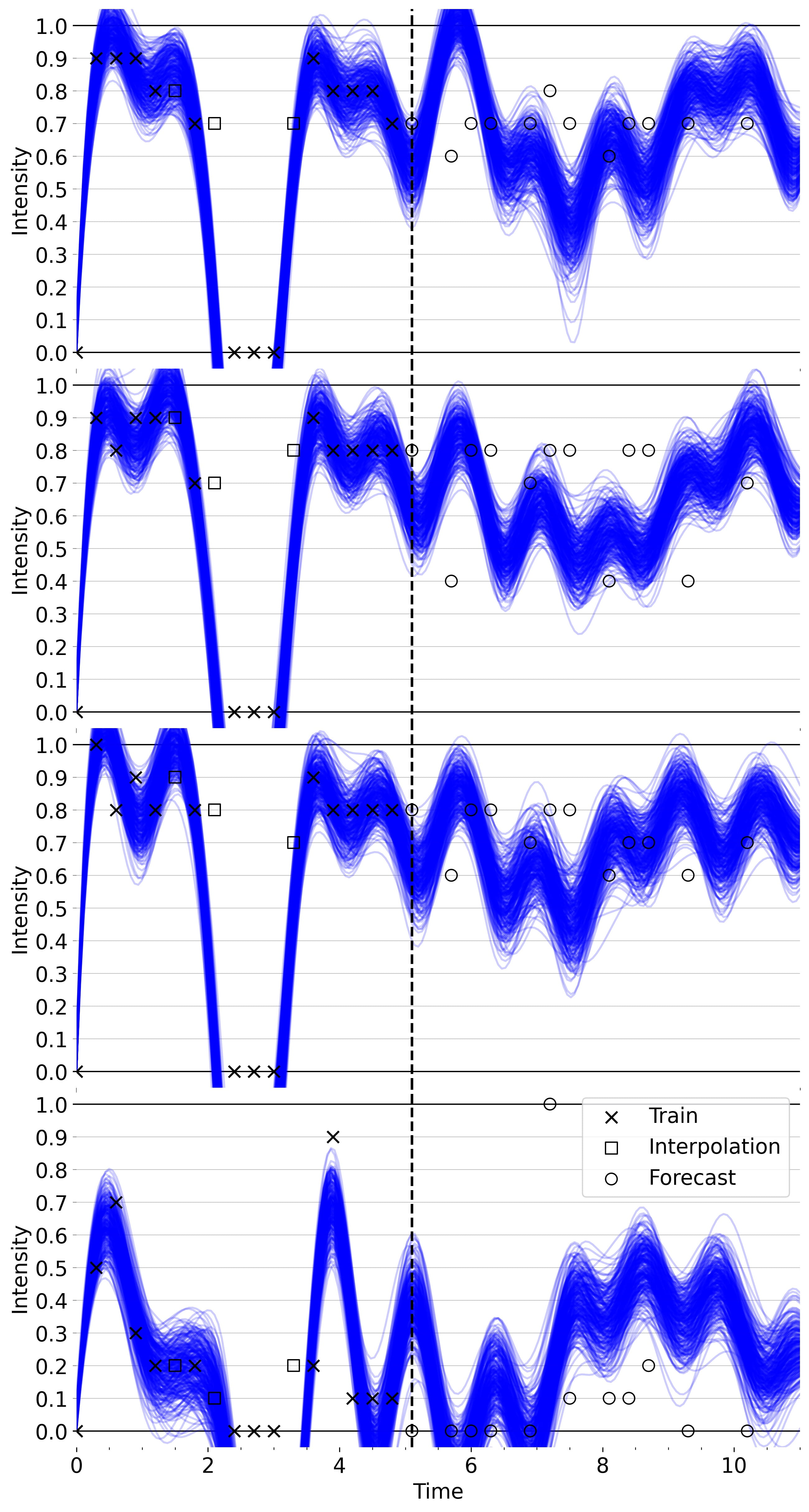}
    ~
    \includegraphics[width=0.38\textwidth]{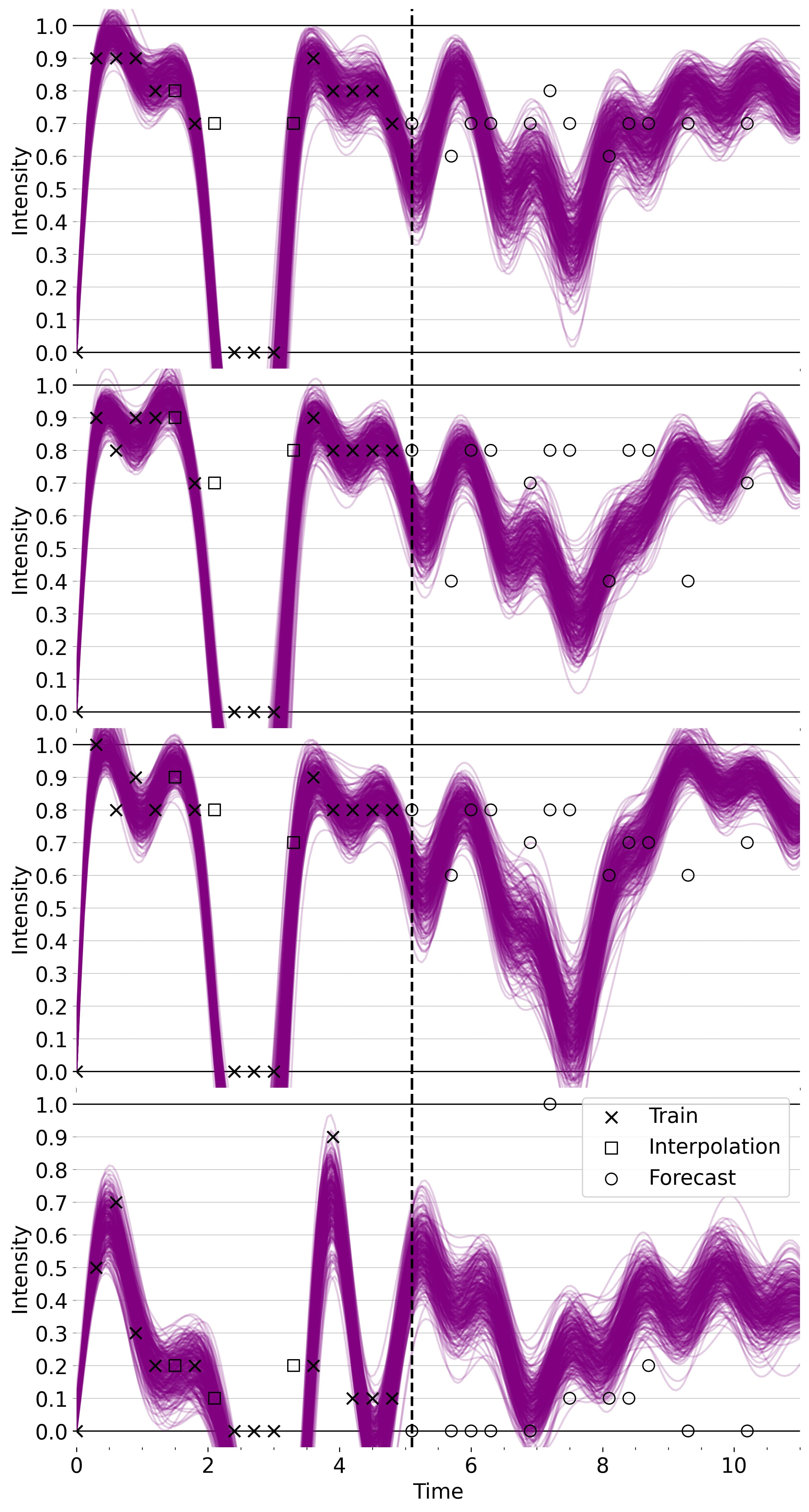}

    \includegraphics[width=0.38\textwidth]{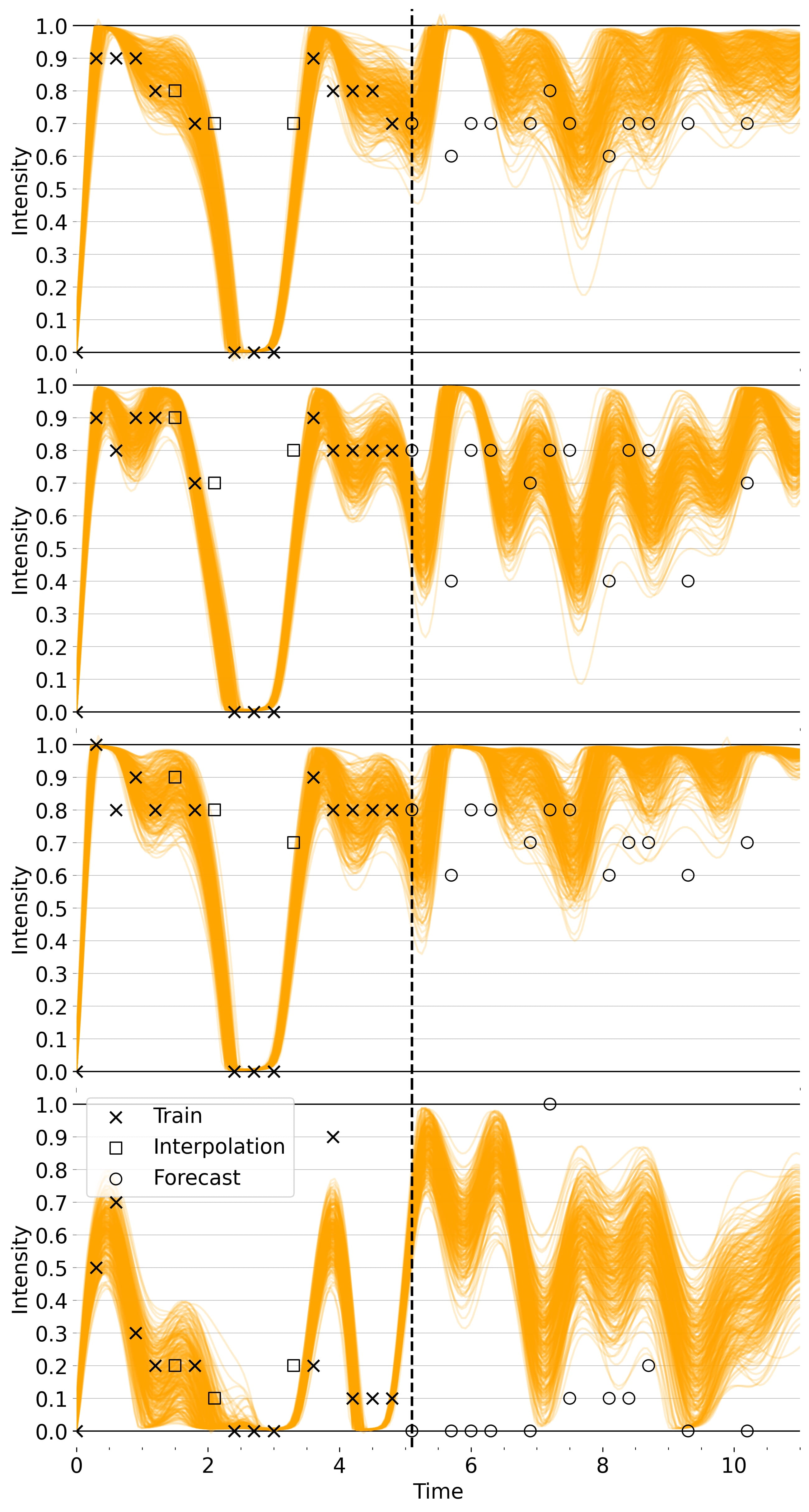}
    ~
    \includegraphics[width=0.38\textwidth]{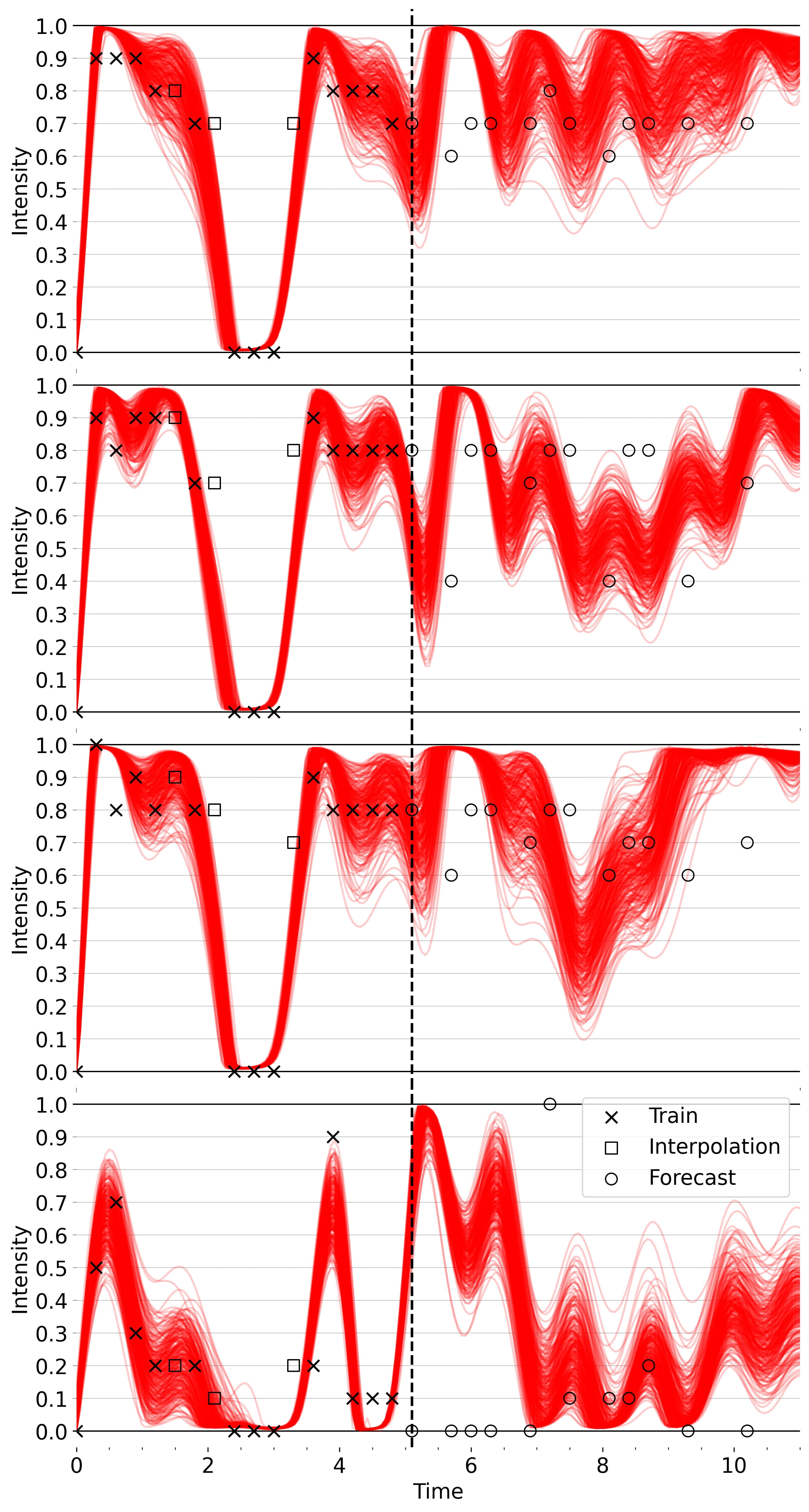}
    
    \caption{\textbf{Qualitative Comparison of Posterior for Specific Patient from \textcolor{colorVanillaStrong}{Vanilla} (top left), \textcolor{colorVanillaClipStrong}{Vanilla+Clip} (top right), \textcolor{colorWSPNoClipStrong}{WSP} (bottom left), and \textcolor{colorWSPStrong}{WSP+Clip} (bottom right).} Each row represents a different EMA survey item, listed in \cref{apx:real-data}. See discussion in \cref{apx:viz-qualitative}.}
    \label{fig:wsp-qualitative-143}
\end{figure*}

\begin{figure*}[p]
    \centering

    \includegraphics[width=0.38\textwidth]{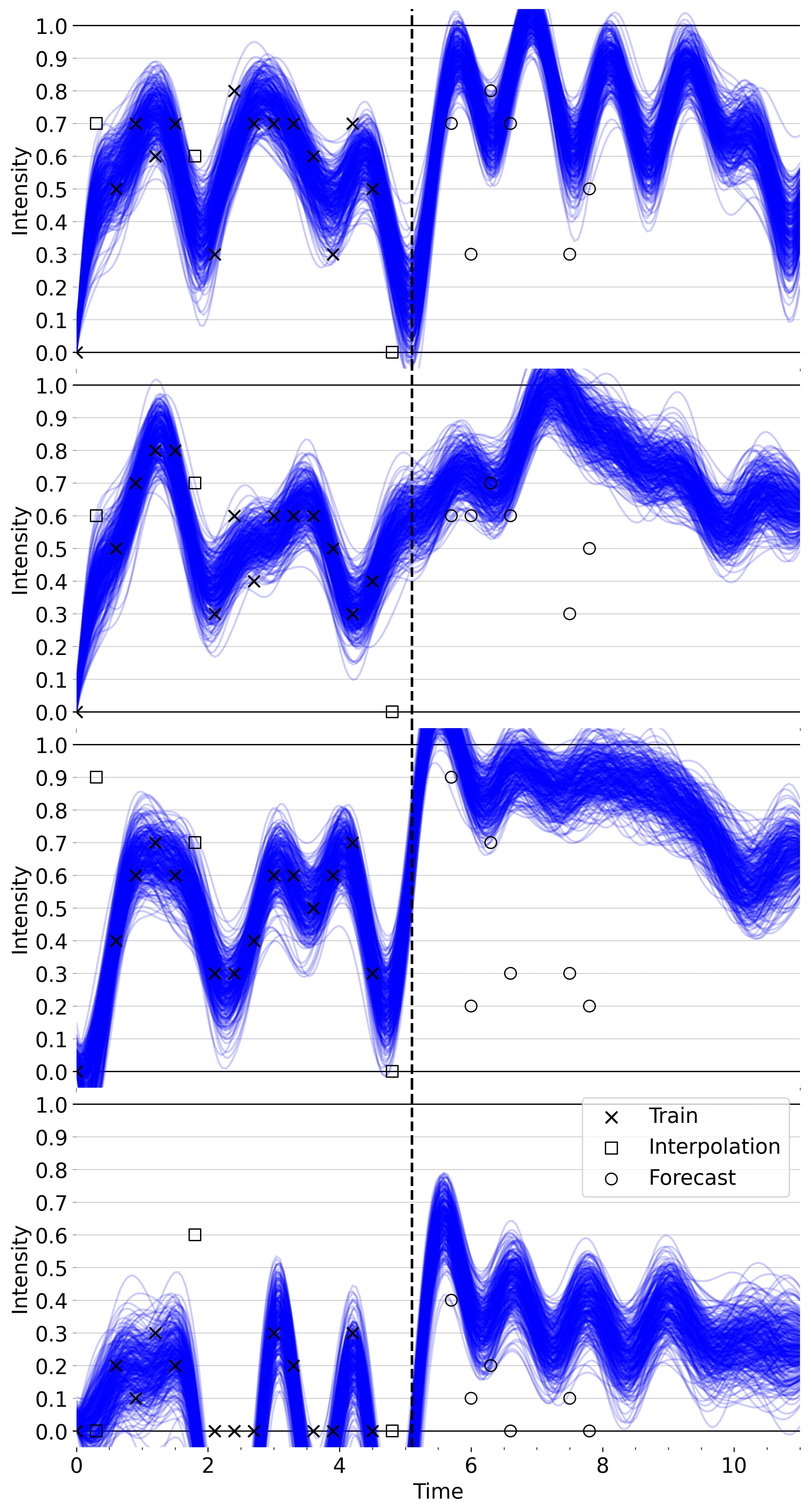}
    ~
    \includegraphics[width=0.38\textwidth]{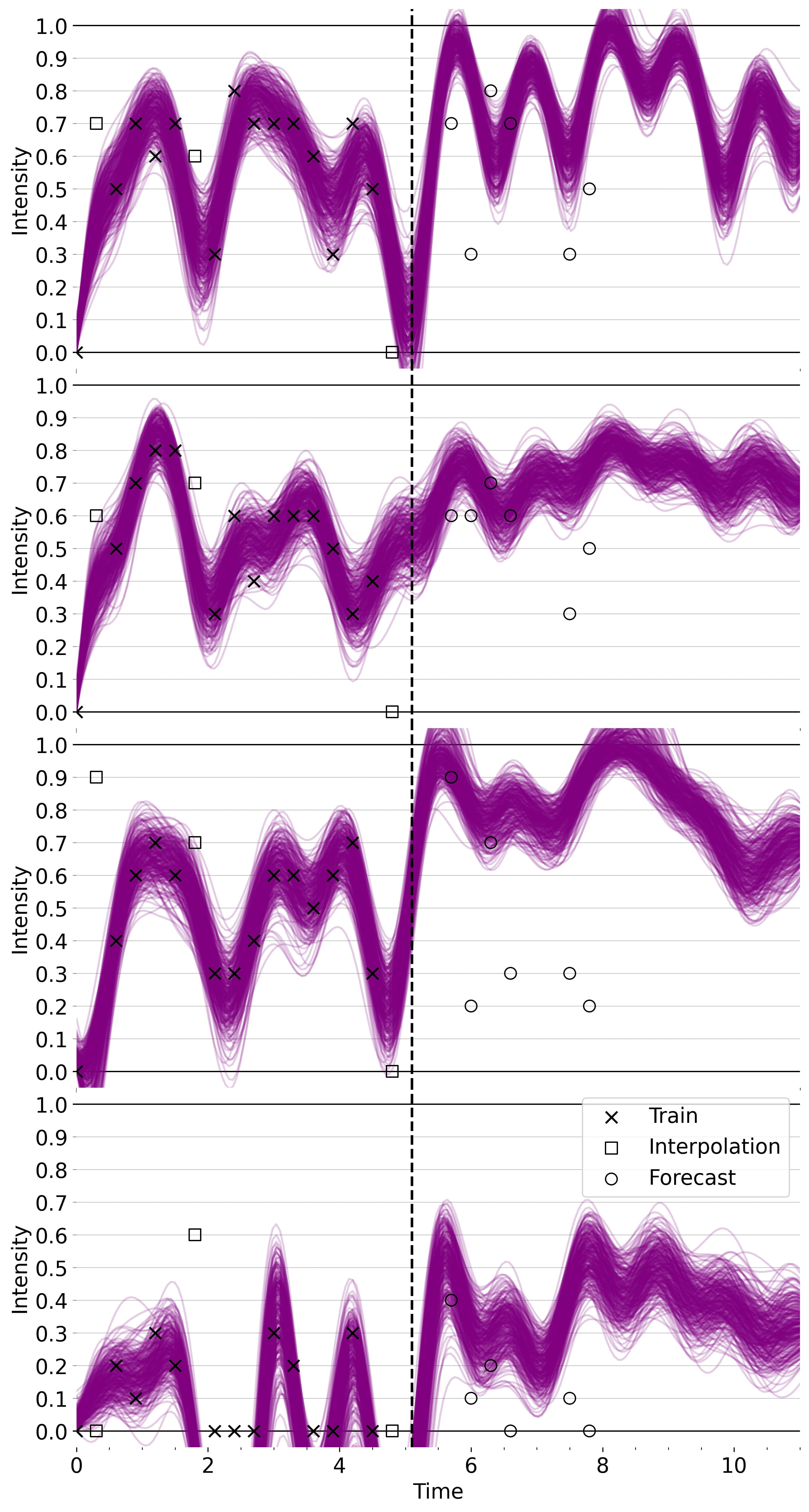}

    \includegraphics[width=0.38\textwidth]{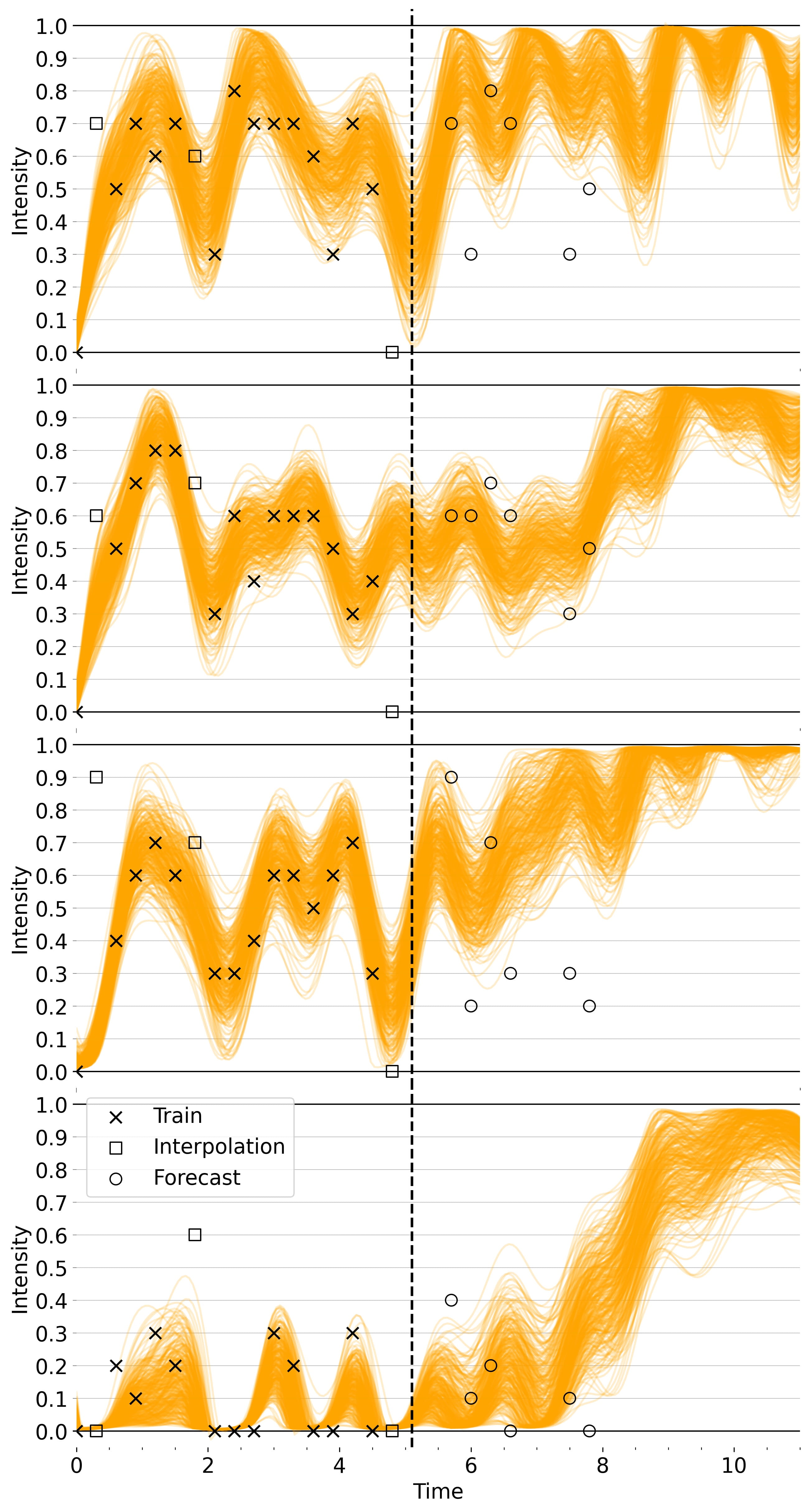}
    ~
    \includegraphics[width=0.38\textwidth]{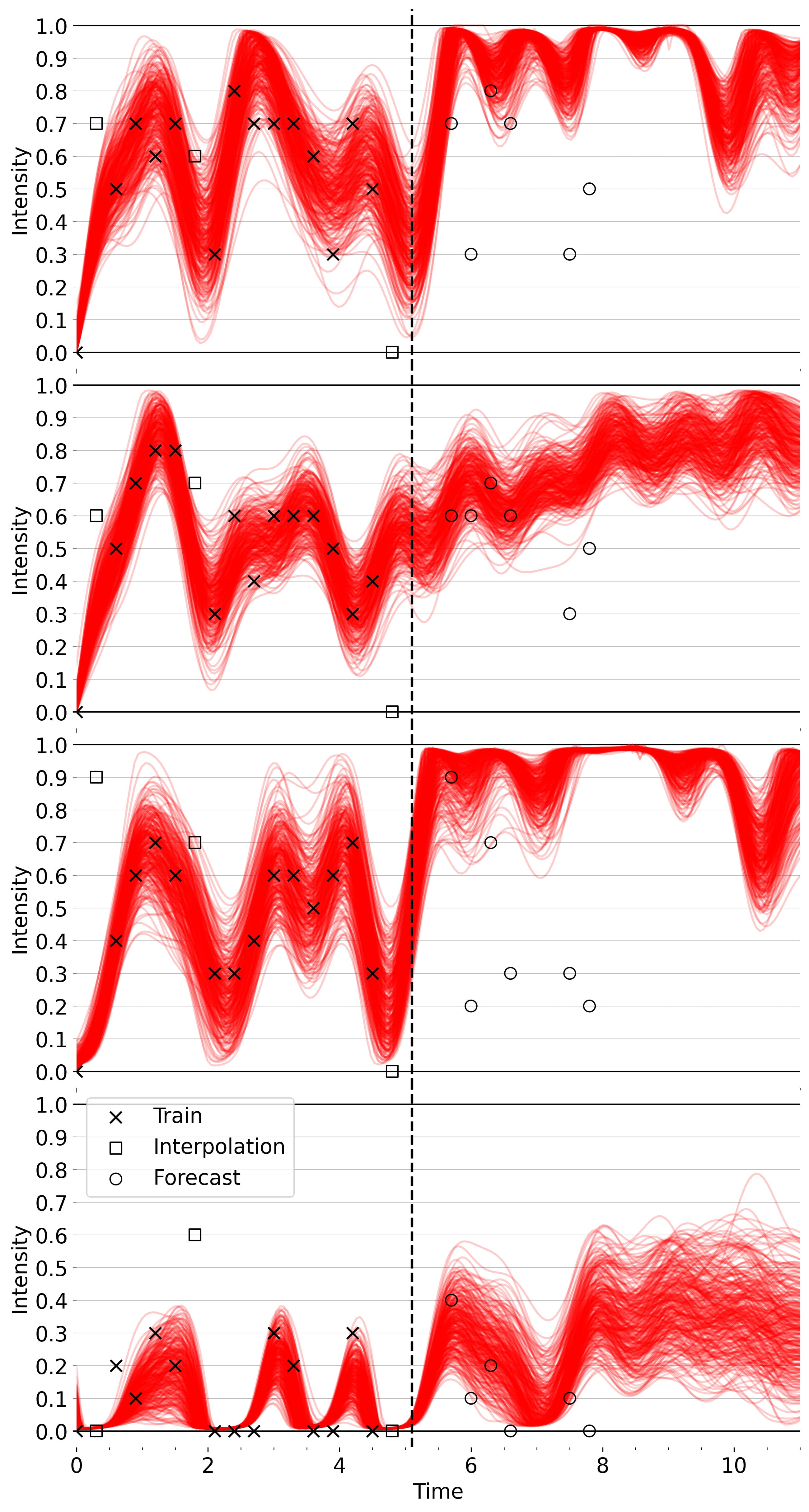}
    
    \caption{\textbf{Qualitative Comparison of Posterior for Specific Patient from \textcolor{colorVanillaStrong}{Vanilla} (top left), \textcolor{colorVanillaClipStrong}{Vanilla+Clip} (top right), \textcolor{colorWSPNoClipStrong}{WSP} (bottom left), and \textcolor{colorWSPStrong}{WSP+Clip} (bottom right).} Each row represents a different EMA survey item, listed in \cref{apx:real-data}. See discussion in \cref{apx:viz-qualitative}.}
    \label{fig:wsp-qualitative-144}
\end{figure*}

\begin{figure*}[p]
    \centering

    \includegraphics[width=0.38\textwidth]{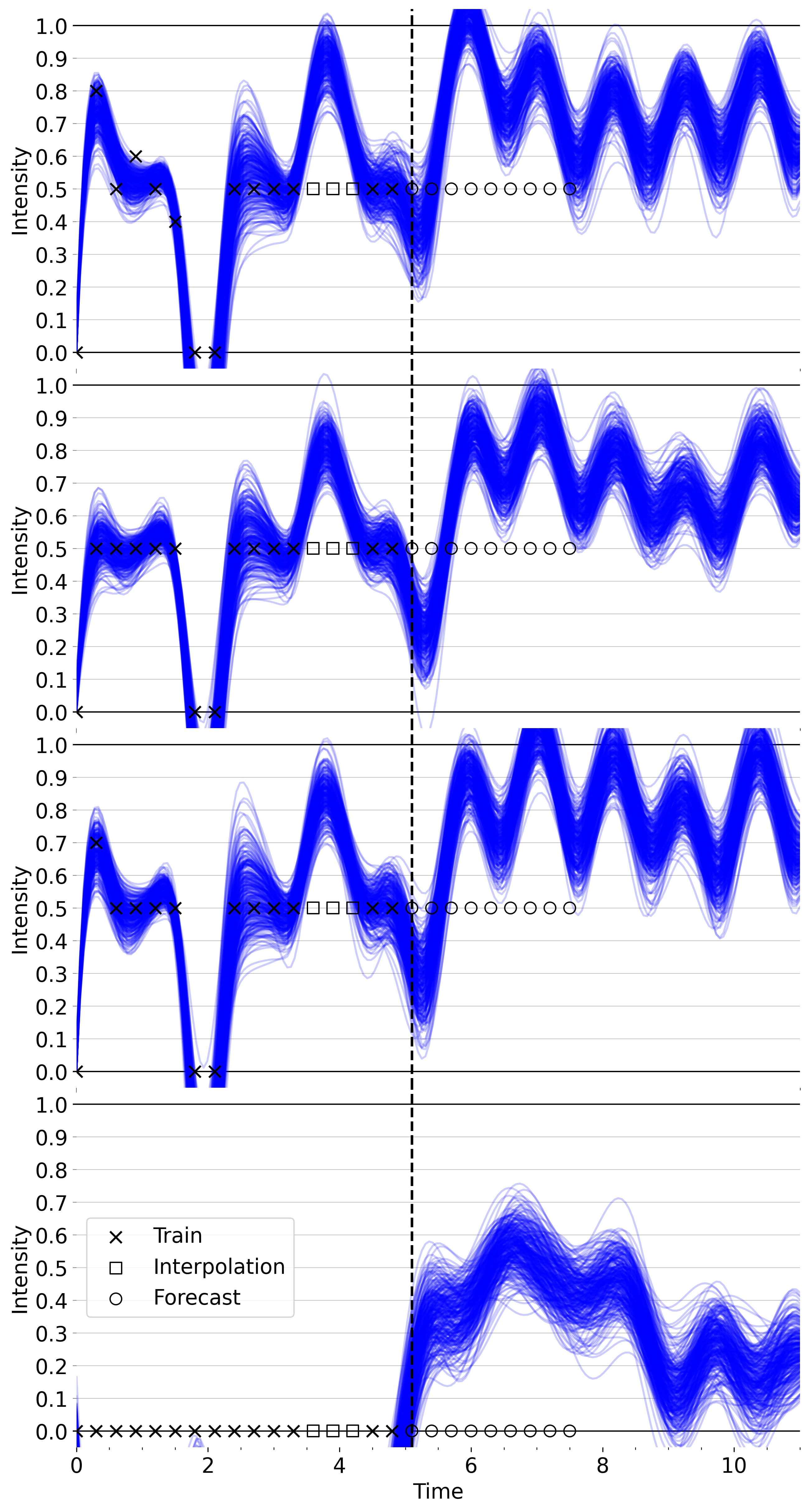}
    ~
    \includegraphics[width=0.38\textwidth]{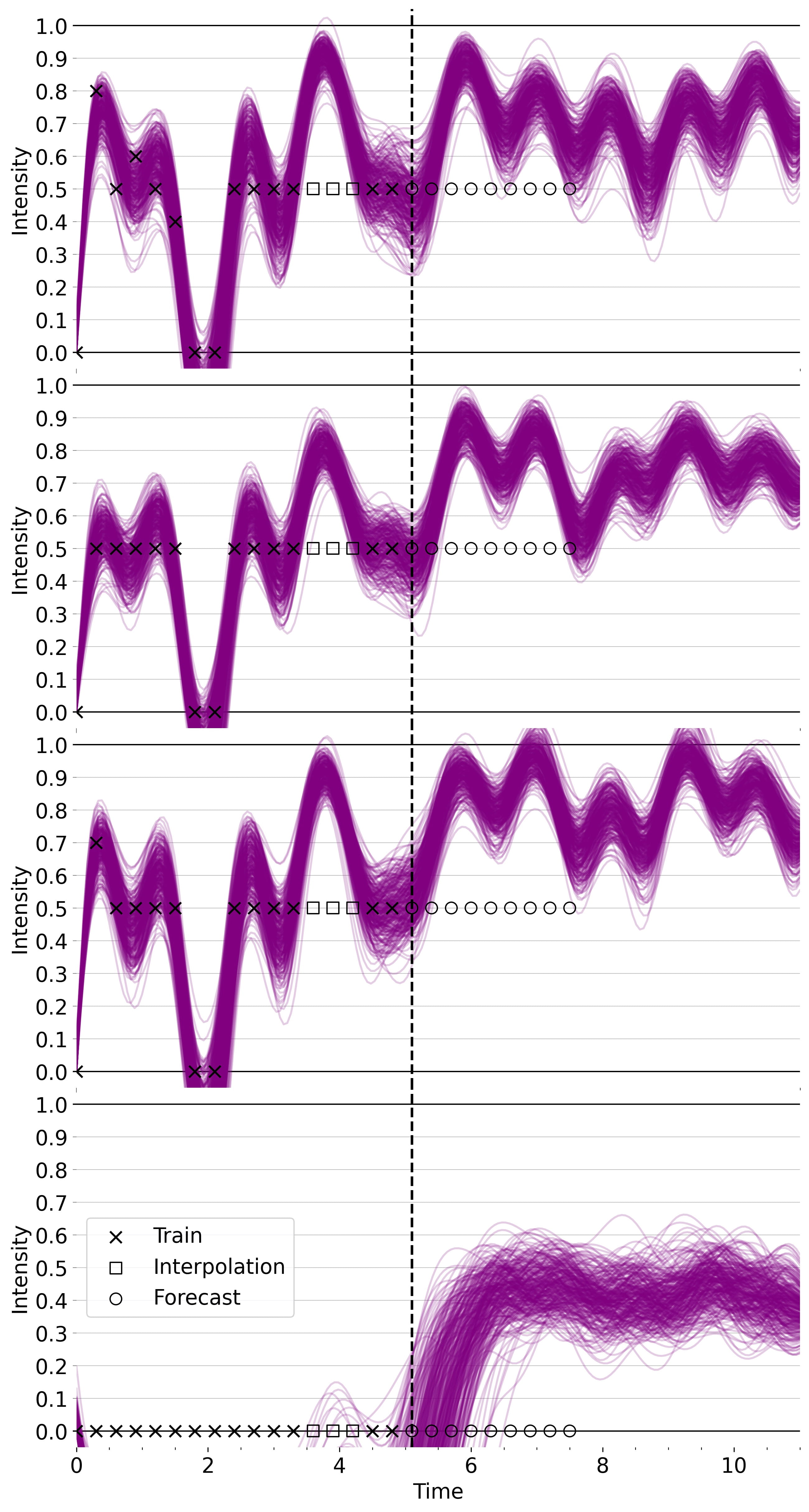}

    \includegraphics[width=0.38\textwidth]{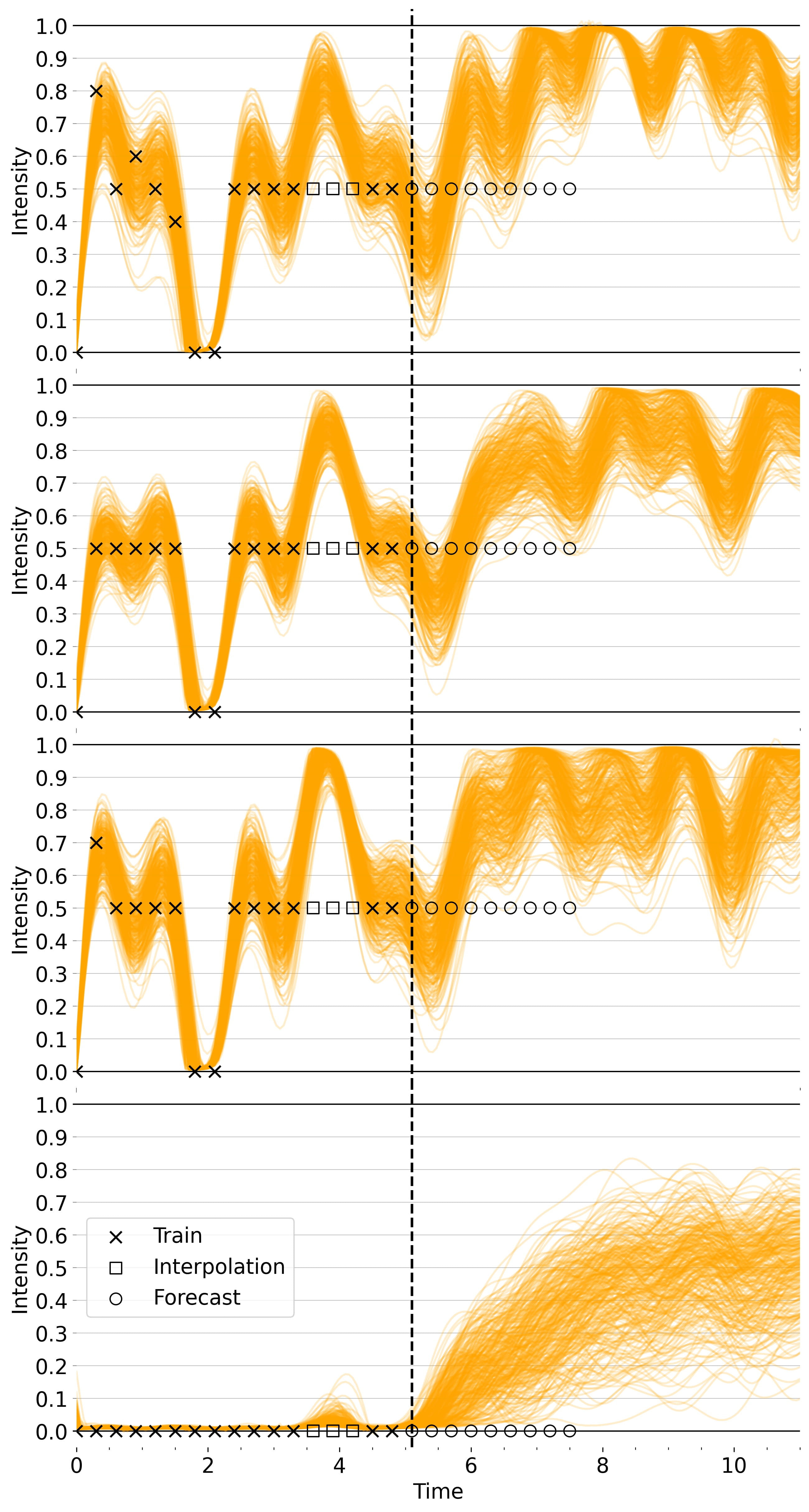}
    ~
    \includegraphics[width=0.38\textwidth]{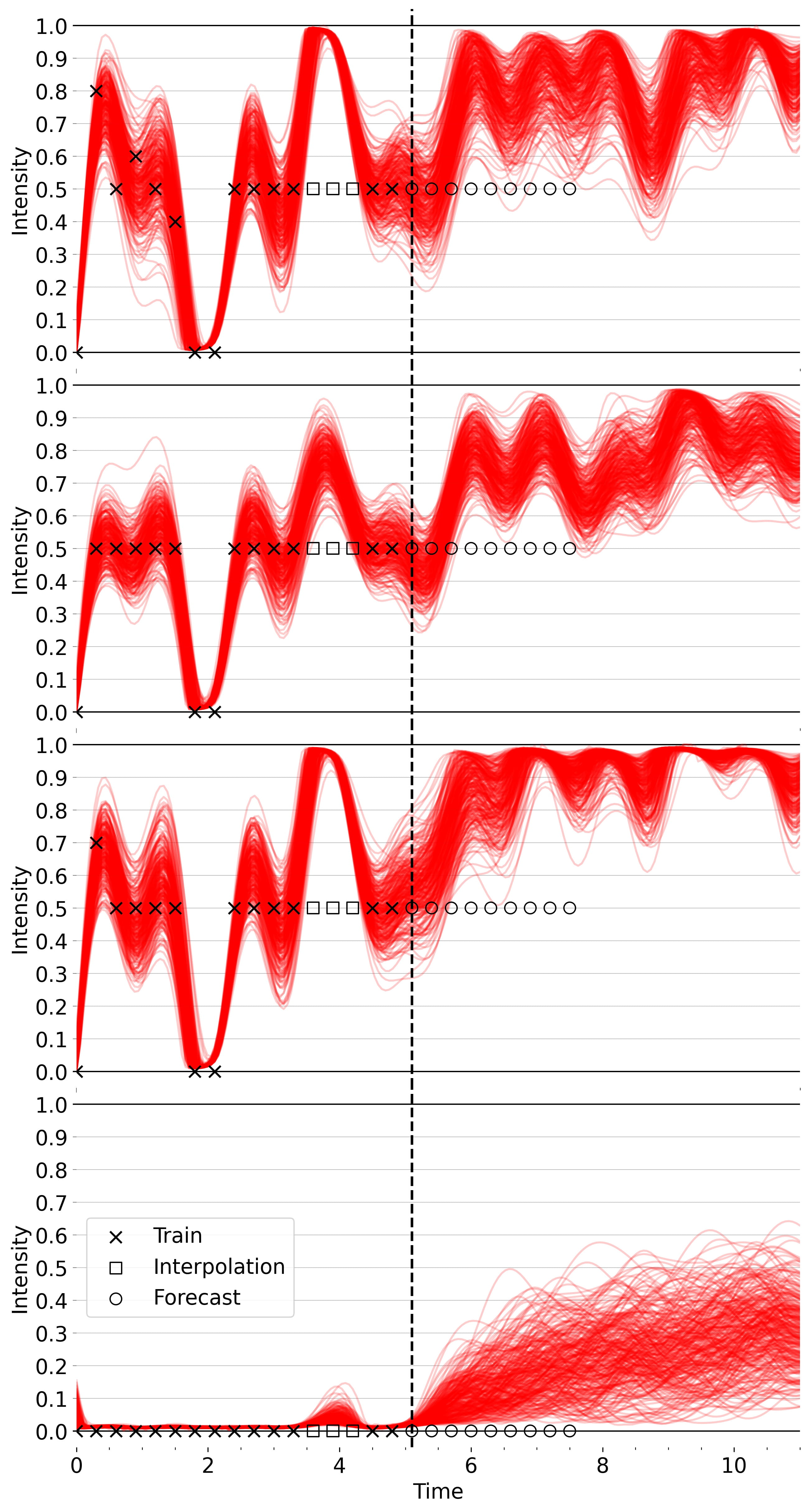}
    
    \caption{\textbf{Qualitative Comparison of Posterior for Specific Patient from \textcolor{colorVanillaStrong}{Vanilla} (top left), \textcolor{colorVanillaClipStrong}{Vanilla+Clip} (top right), \textcolor{colorWSPNoClipStrong}{WSP} (bottom left), and \textcolor{colorWSPStrong}{WSP+Clip} (bottom right).} Each row represents a different EMA survey item, listed in \cref{apx:real-data}. For this particular patient, all models make poor forecasts for the top three dimensions, but WSP does substantially improve forecasting for the fourth dimension (bottom row). See discussion in \cref{apx:viz-qualitative}.}
    \label{fig:wsp-qualitative-56}
\end{figure*}

\begin{figure*}[p]
    \centering

    \includegraphics[width=0.38\textwidth]{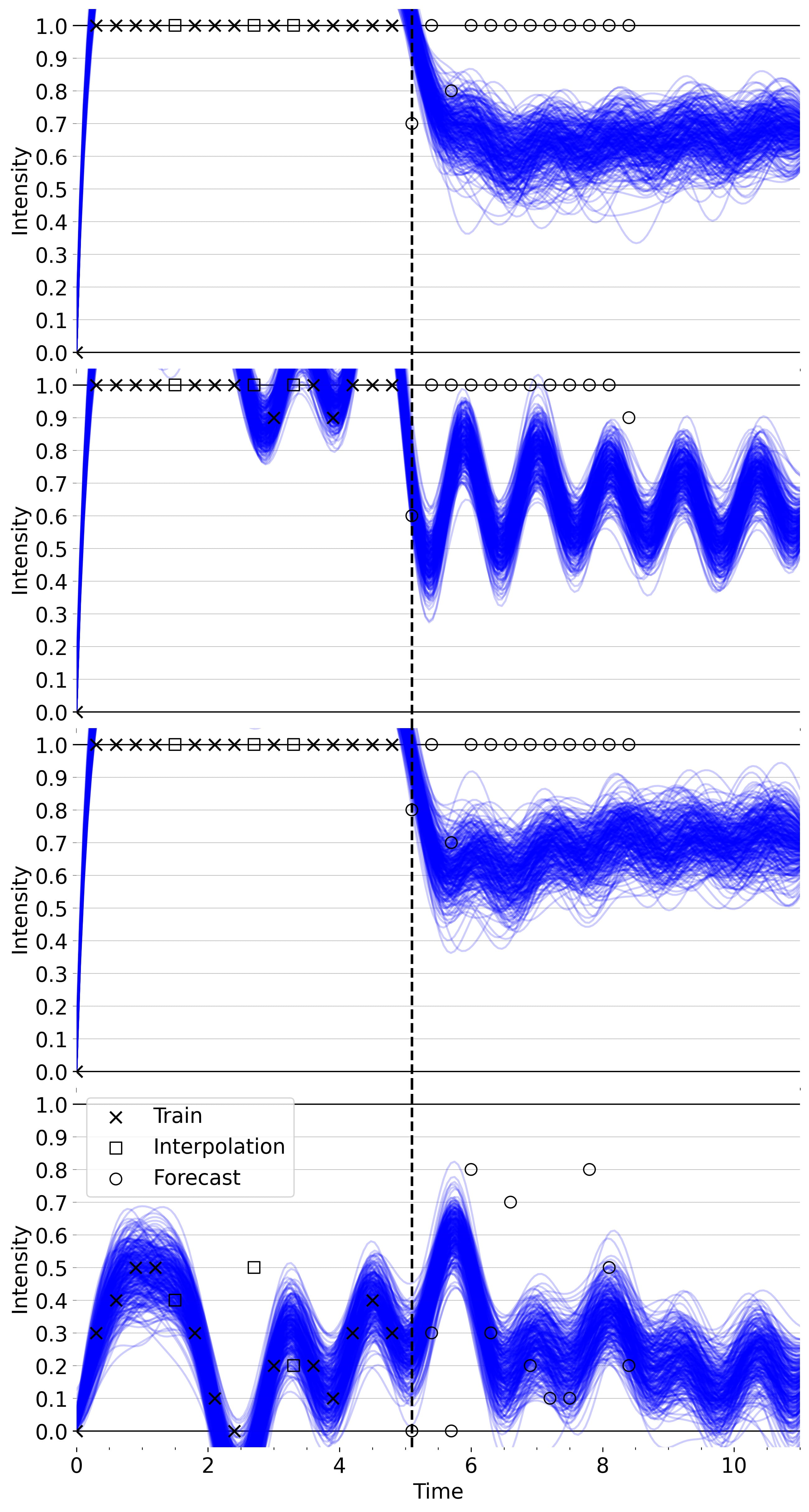}
    ~
    \includegraphics[width=0.38\textwidth]{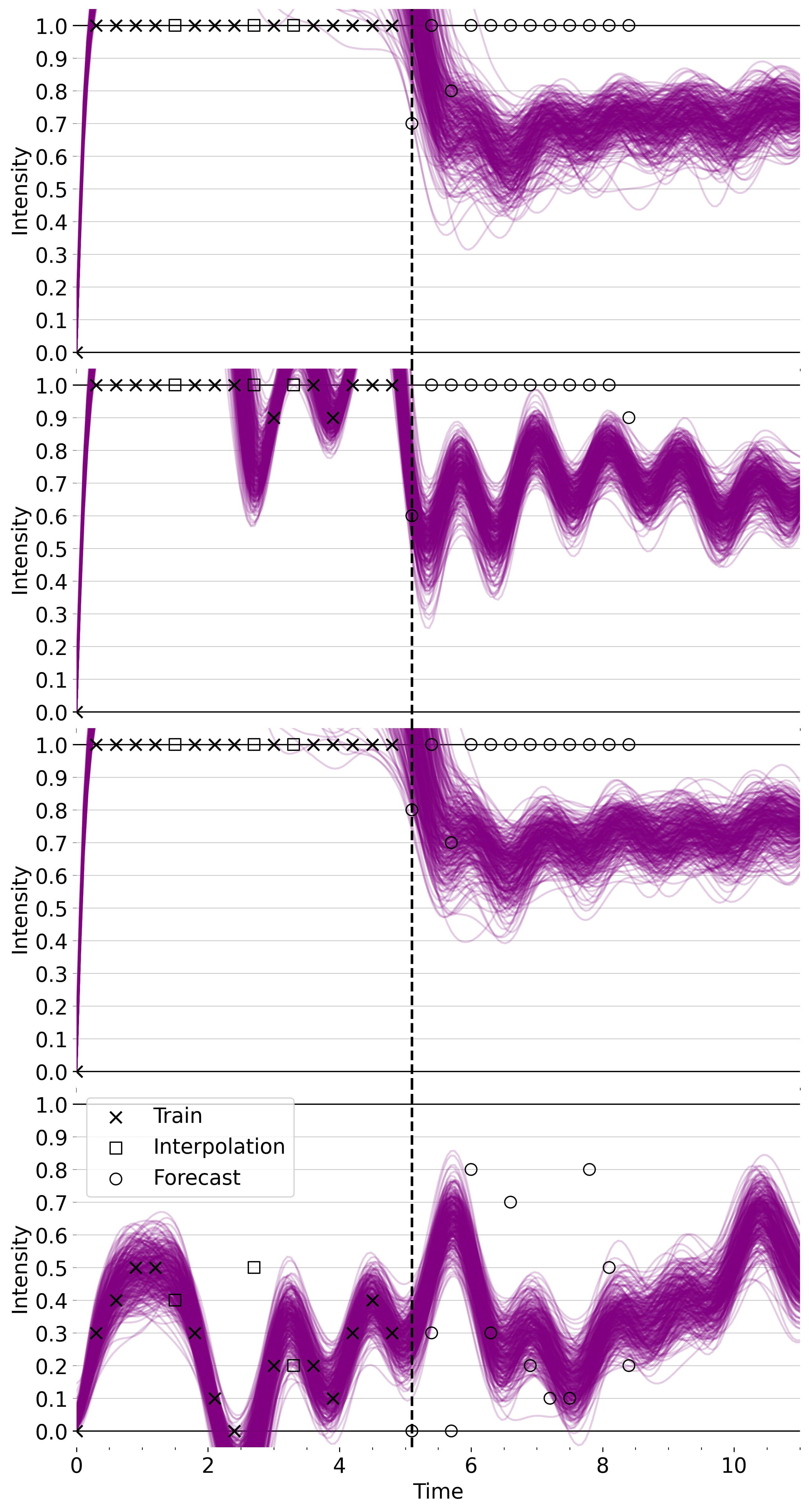}

    \includegraphics[width=0.38\textwidth]{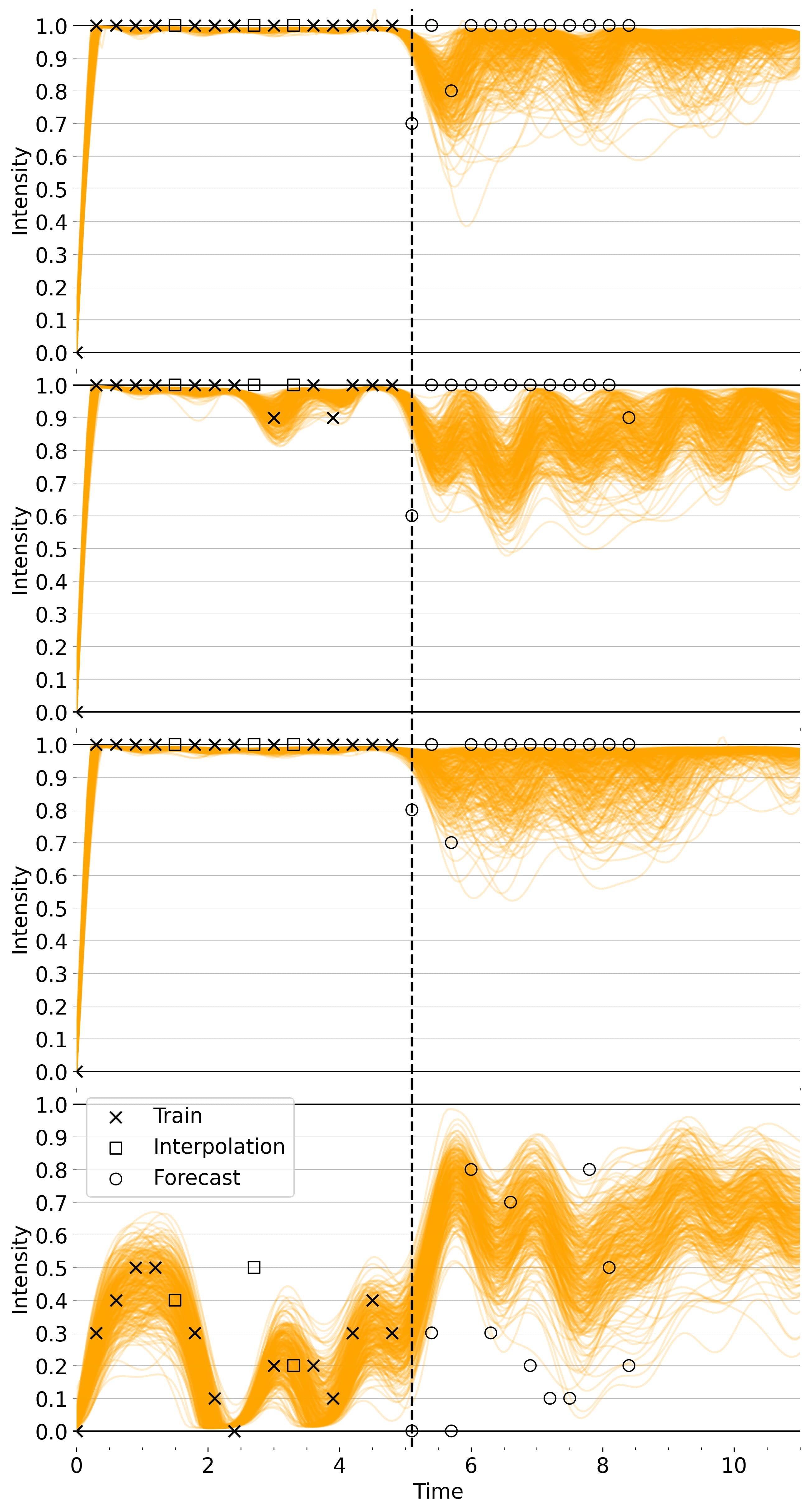}
    ~
    \includegraphics[width=0.38\textwidth]{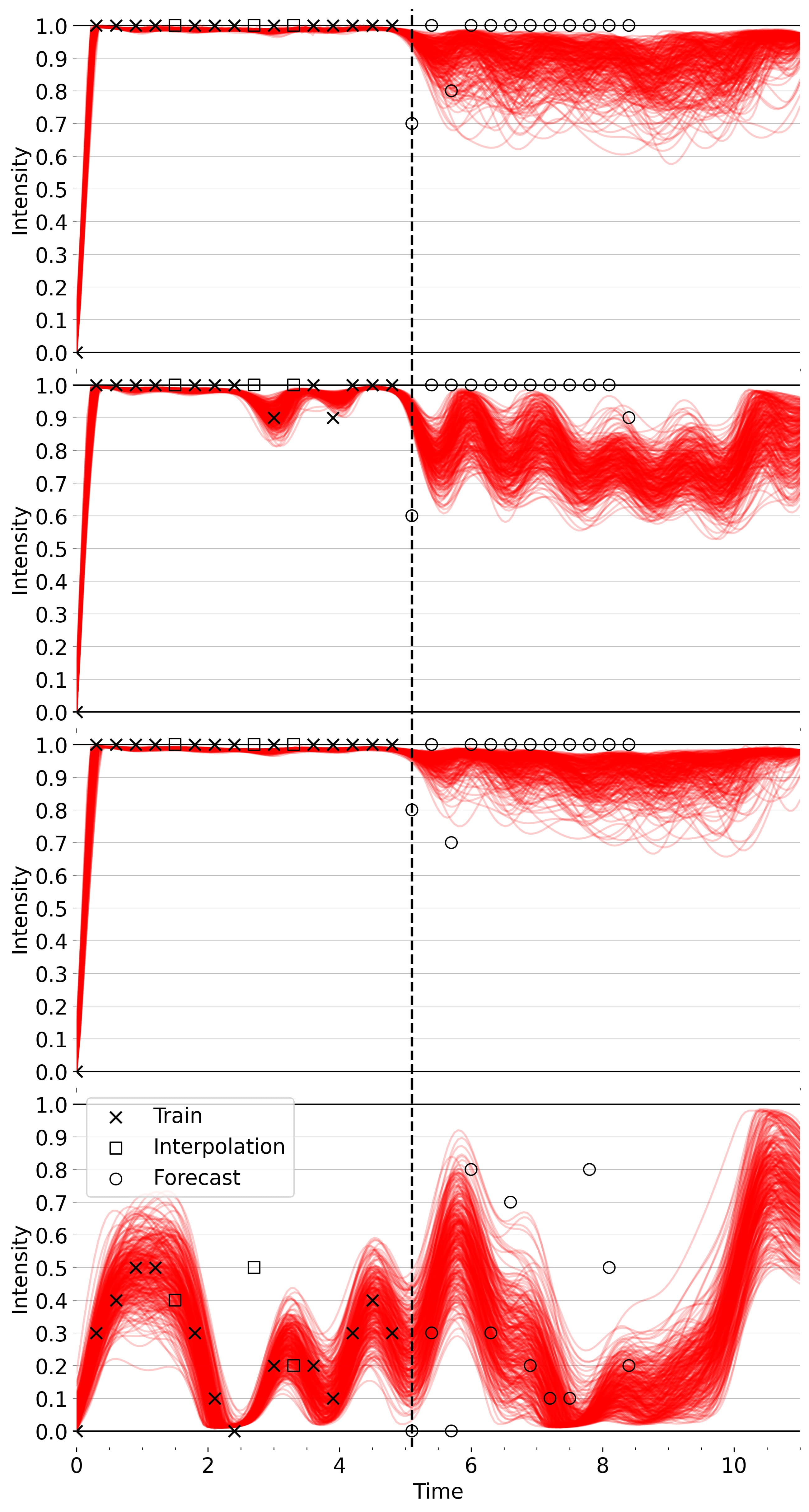}
    
    \caption{\textbf{Qualitative Comparison of Posterior for Specific Patient from \textcolor{colorVanillaStrong}{Vanilla} (top left), \textcolor{colorVanillaClipStrong}{Vanilla+Clip} (top right), \textcolor{colorWSPNoClipStrong}{WSP} (bottom left), and \textcolor{colorWSPStrong}{WSP+Clip} (bottom right).} Each row represents a different EMA survey item, listed in \cref{apx:real-data}. See discussion in \cref{apx:viz-qualitative}.}
    \label{fig:wsp-qualitative-149}
\end{figure*}

\begin{figure*}[p]
    \centering

    \includegraphics[width=0.38\textwidth]{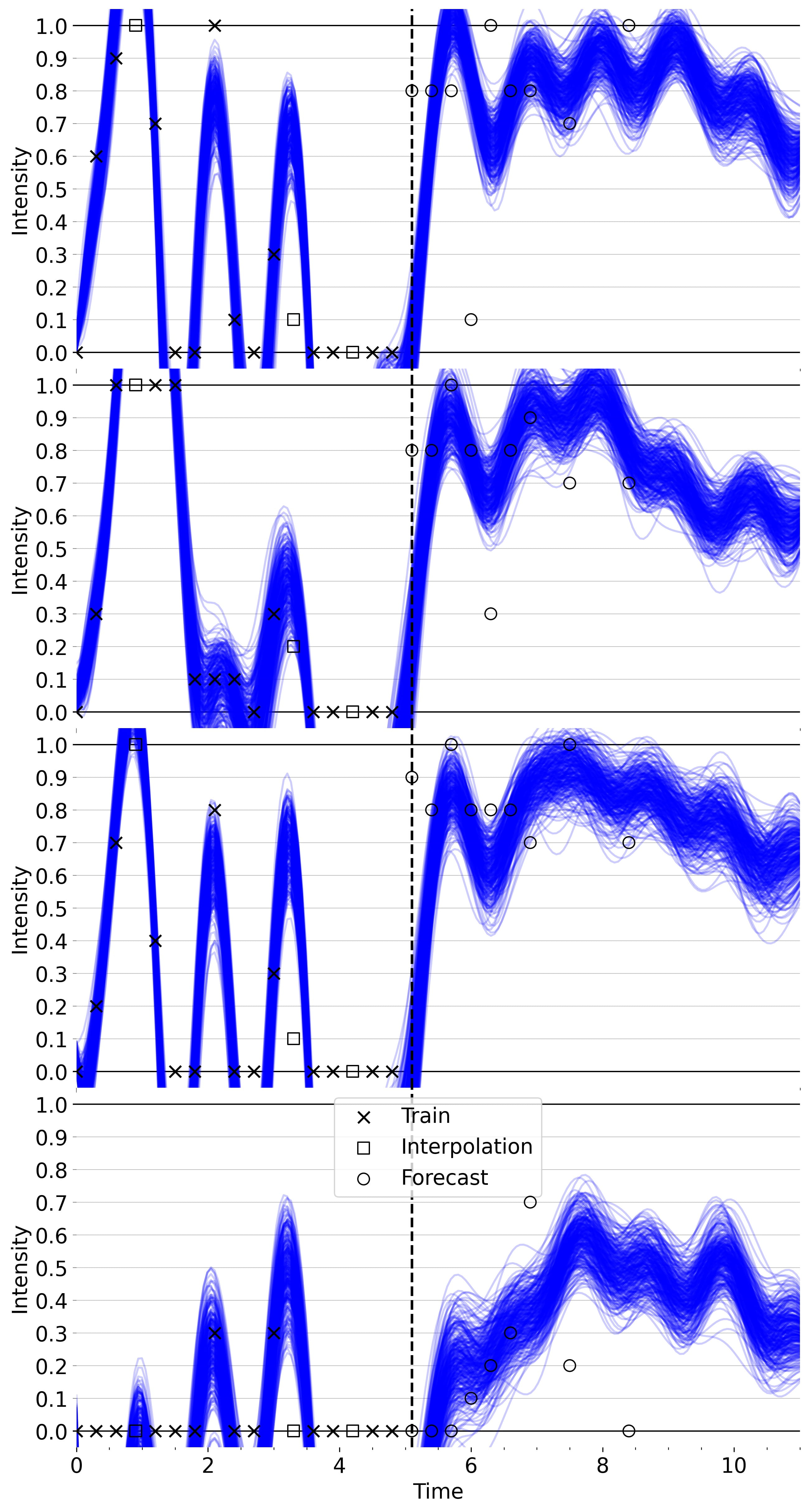}
    ~
    \includegraphics[width=0.38\textwidth]{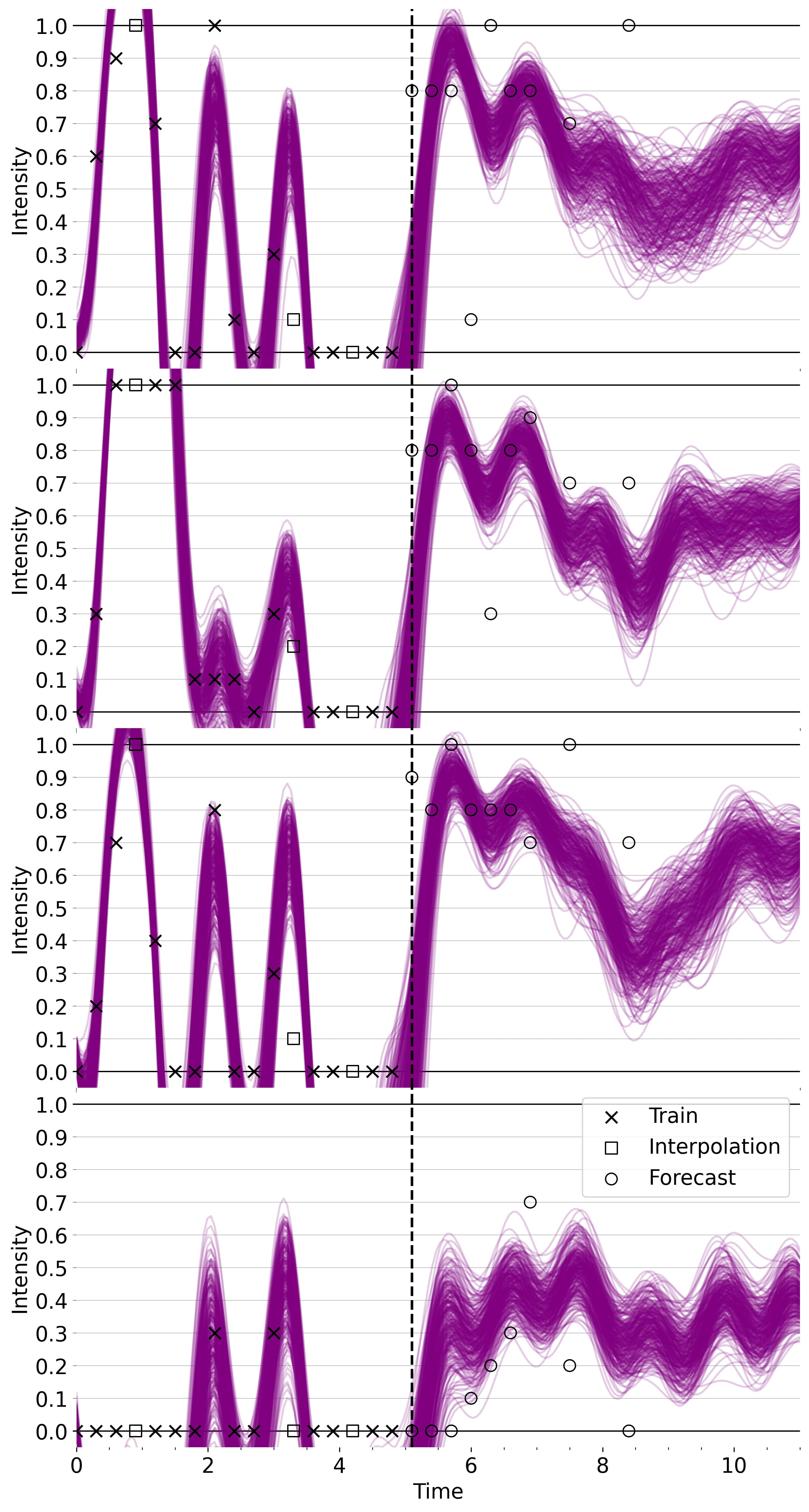}

    \includegraphics[width=0.38\textwidth]{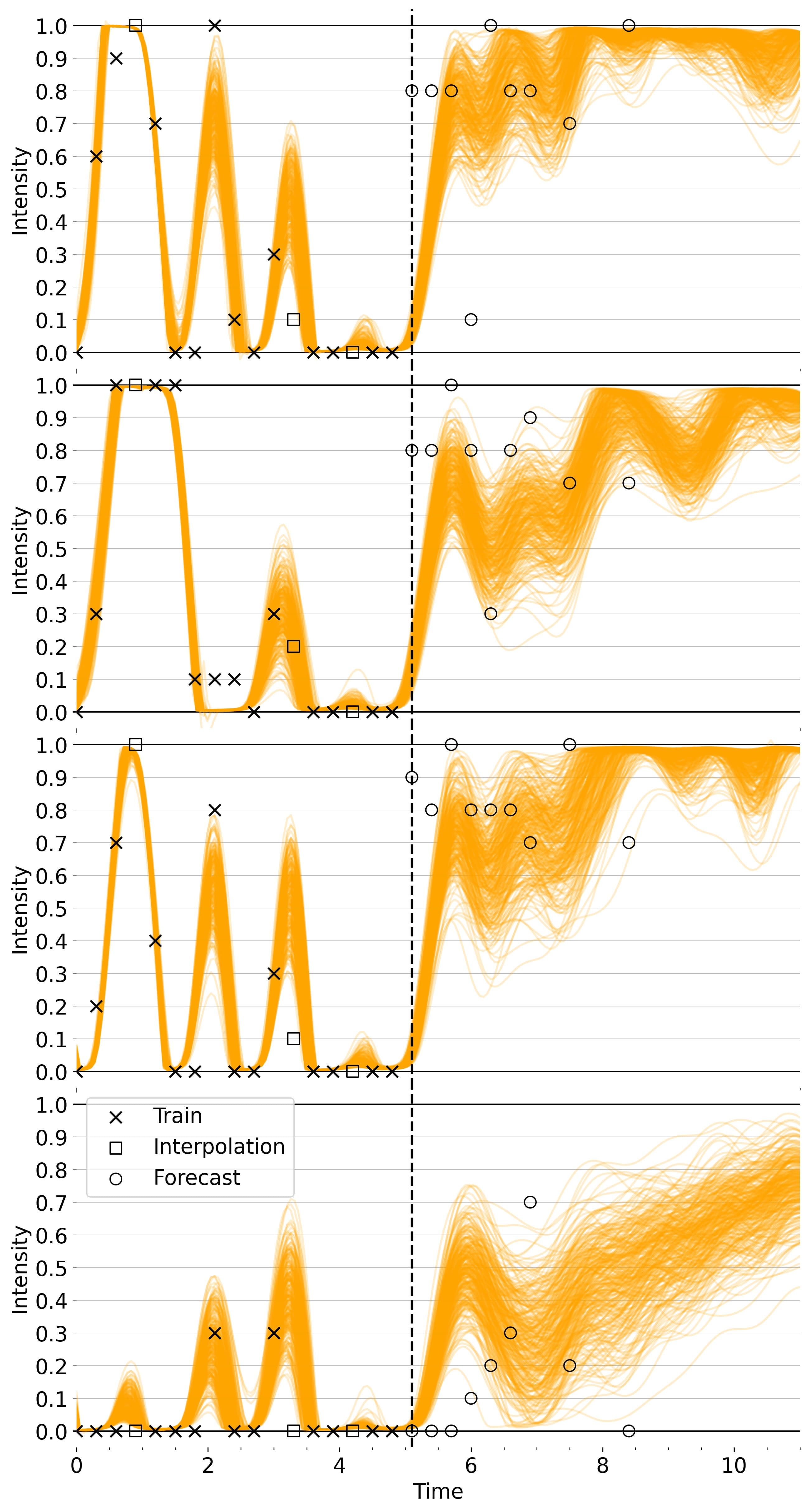}
    ~
    \includegraphics[width=0.38\textwidth]{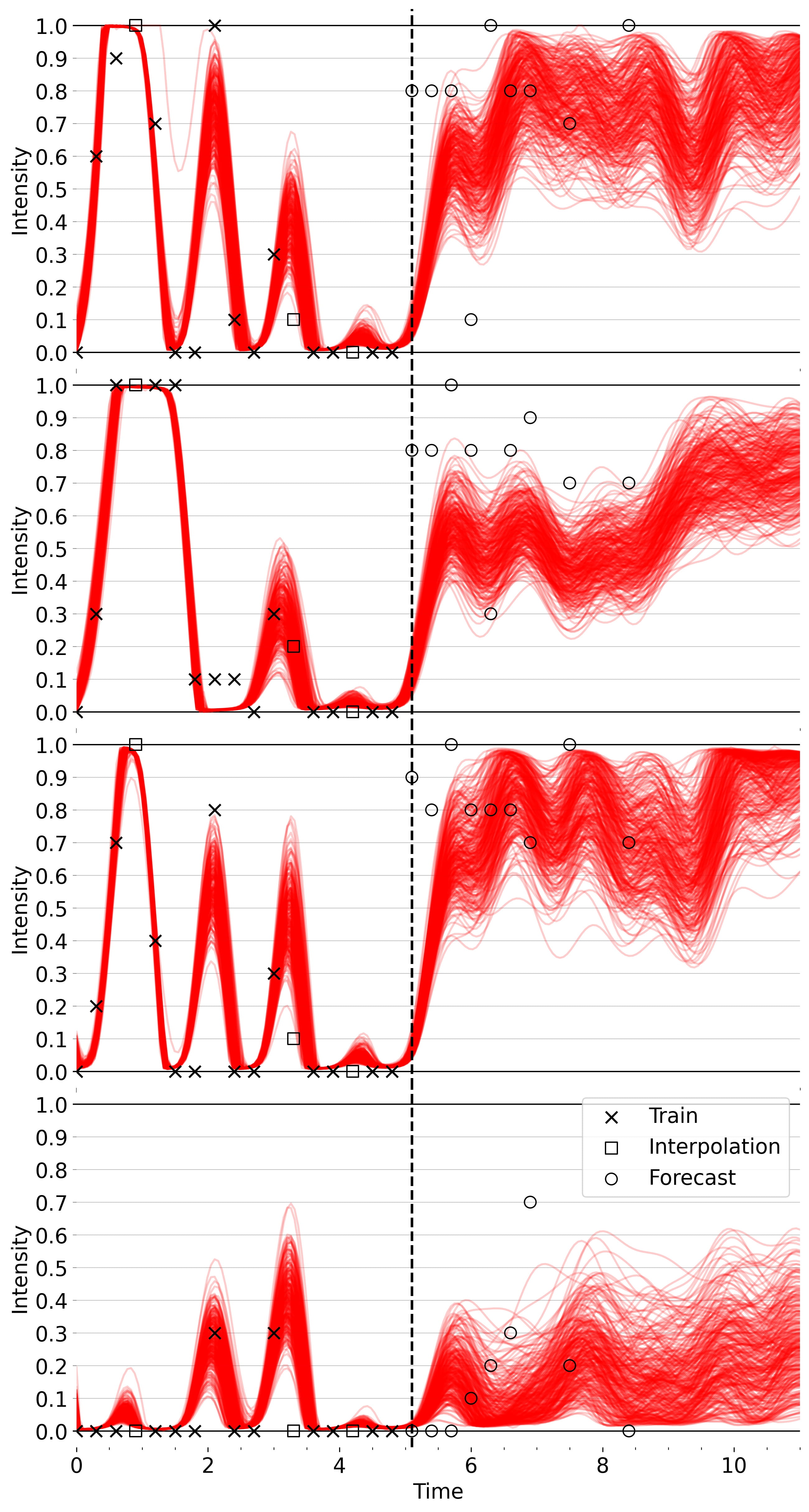}
    
    \caption{\textbf{Qualitative Comparison of Posterior for Specific Patient from \textcolor{colorVanillaStrong}{Vanilla} (top left), \textcolor{colorVanillaClipStrong}{Vanilla+Clip} (top right), \textcolor{colorWSPNoClipStrong}{WSP} (bottom left), and \textcolor{colorWSPStrong}{WSP+Clip} (bottom right).} Each row represents a different EMA survey item, listed in \cref{apx:real-data}. See discussion in \cref{apx:viz-qualitative}.}
    \label{fig:wsp-qualitative-5}
\end{figure*}

\begin{figure*}[p]
    \centering

    \includegraphics[width=0.38\textwidth]{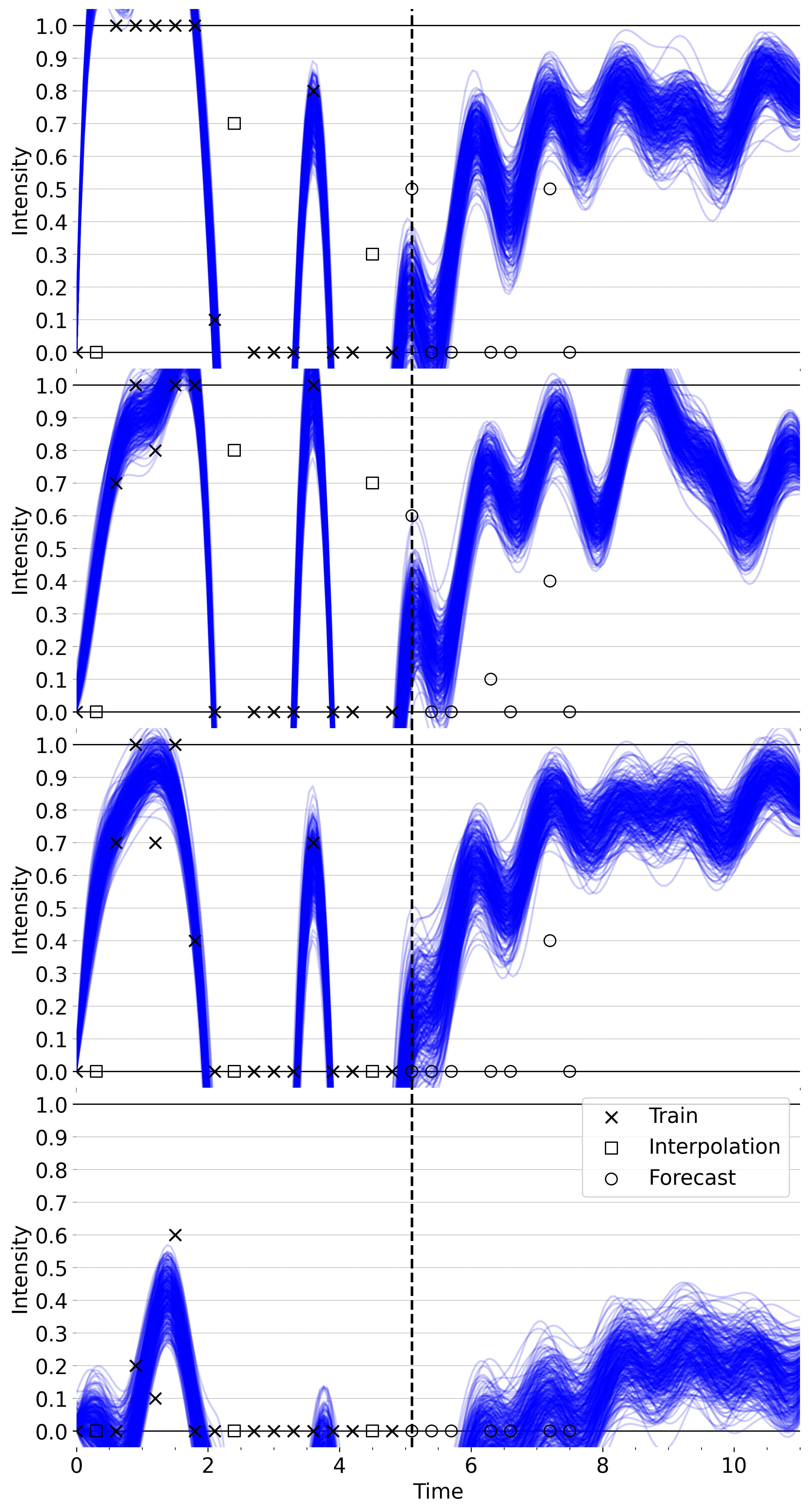}
    ~
    \includegraphics[width=0.38\textwidth]{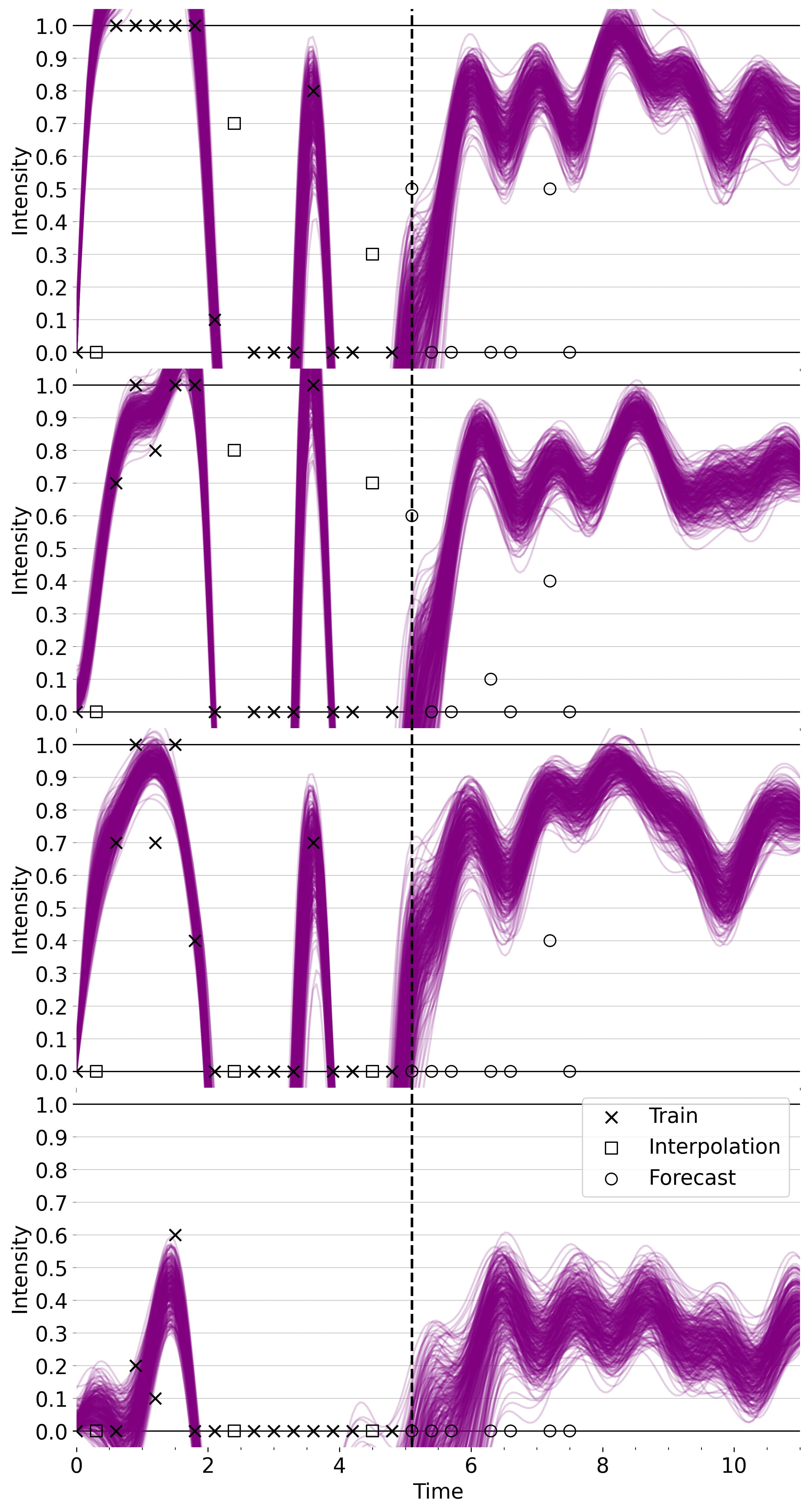}

    \includegraphics[width=0.38\textwidth]{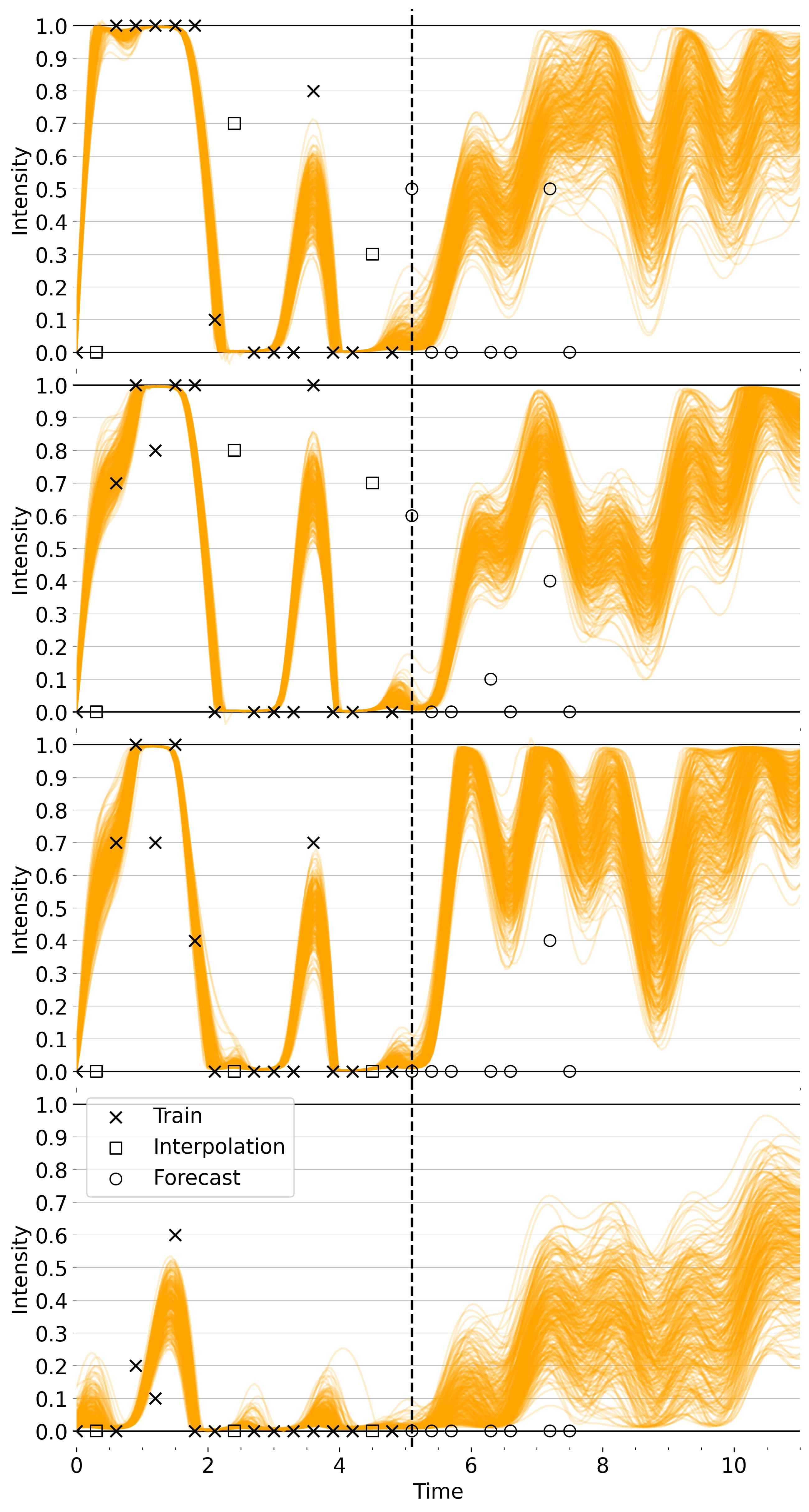}
    ~
    \includegraphics[width=0.38\textwidth]{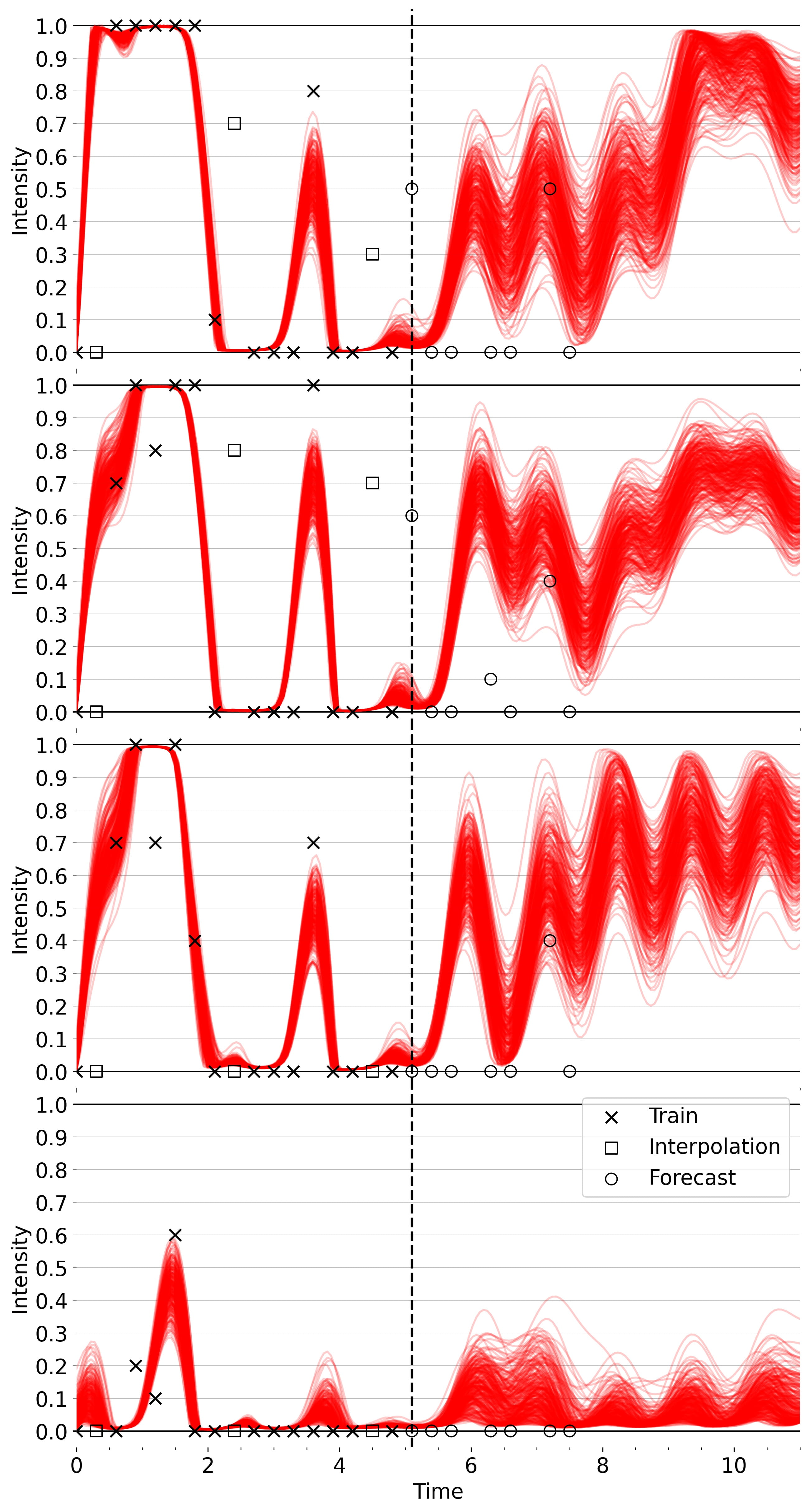}
    
    \caption{\textbf{Qualitative Comparison of Posterior for Specific Patient from \textcolor{colorVanillaStrong}{Vanilla} (top left), \textcolor{colorVanillaClipStrong}{Vanilla+Clip} (top right), \textcolor{colorWSPNoClipStrong}{WSP} (bottom left), and \textcolor{colorWSPStrong}{WSP+Clip} (bottom right).} Each row represents a different EMA survey item, listed in \cref{apx:real-data}. See discussion in \cref{apx:viz-qualitative}.}
    \label{fig:wsp-qualitative-15}
\end{figure*}

\begin{figure*}[p]
    \centering

    \includegraphics[width=0.38\textwidth]{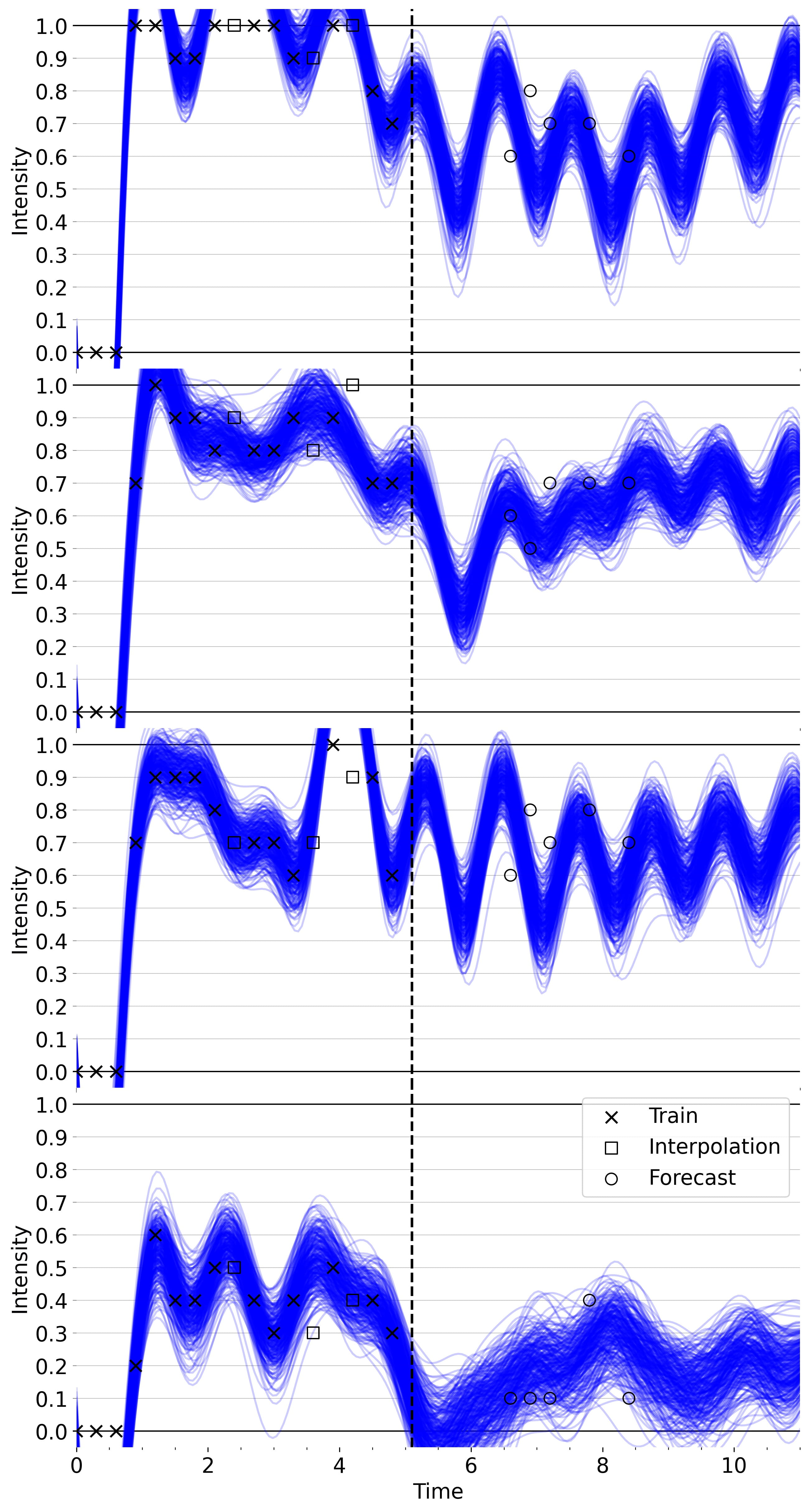}
    ~
    \includegraphics[width=0.38\textwidth]{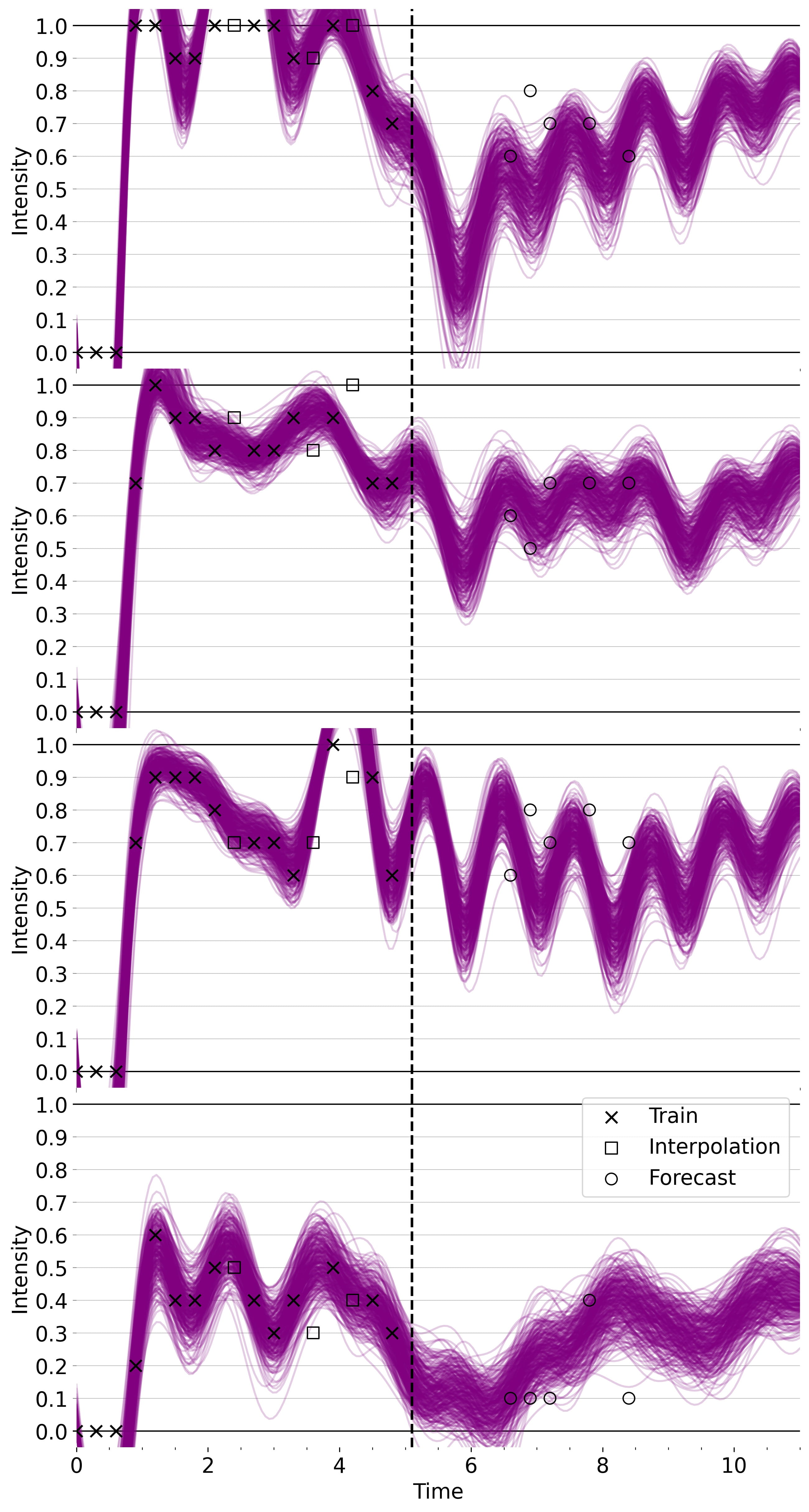}

    \includegraphics[width=0.38\textwidth]{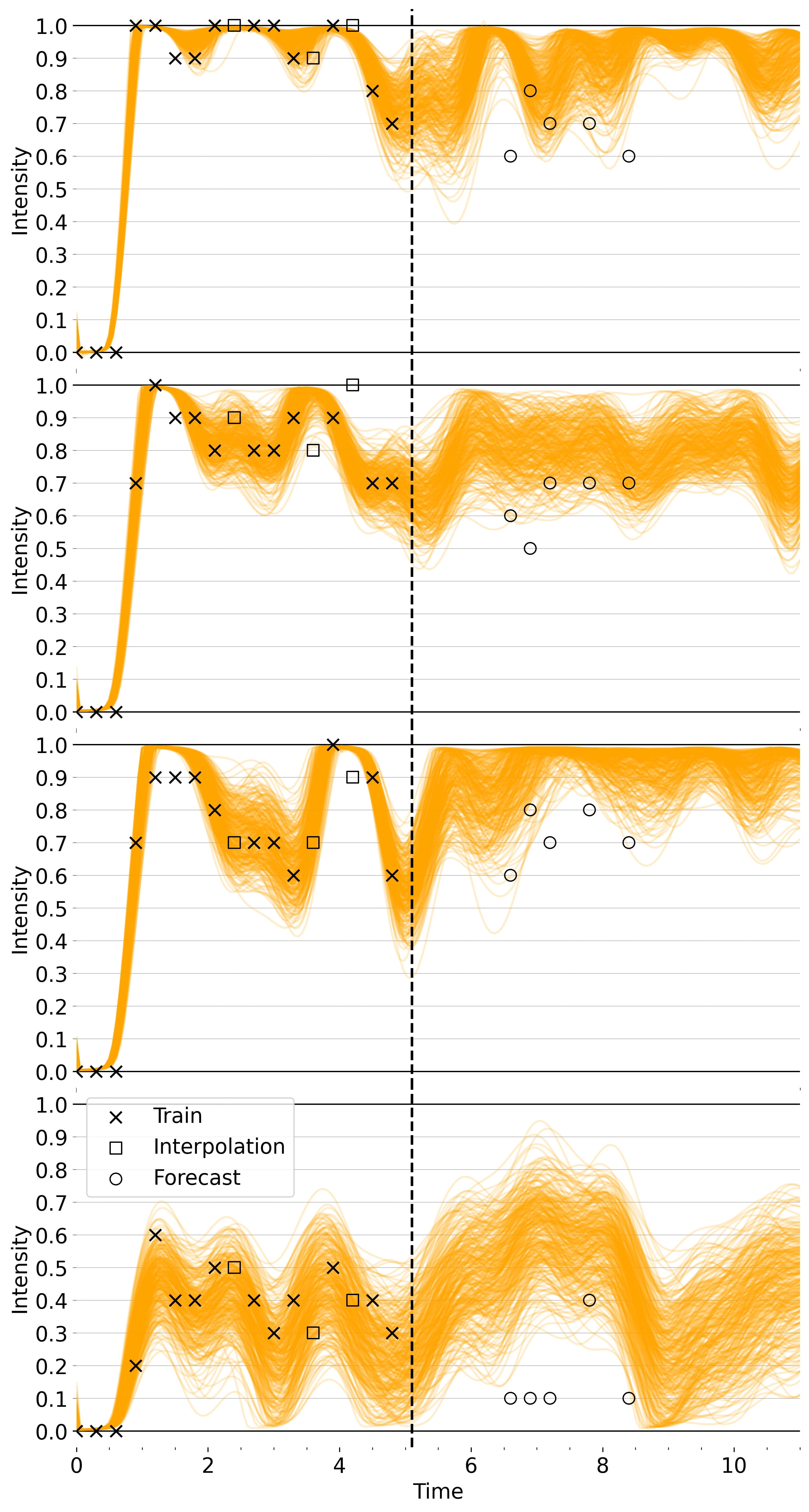}
    ~
    \includegraphics[width=0.38\textwidth]{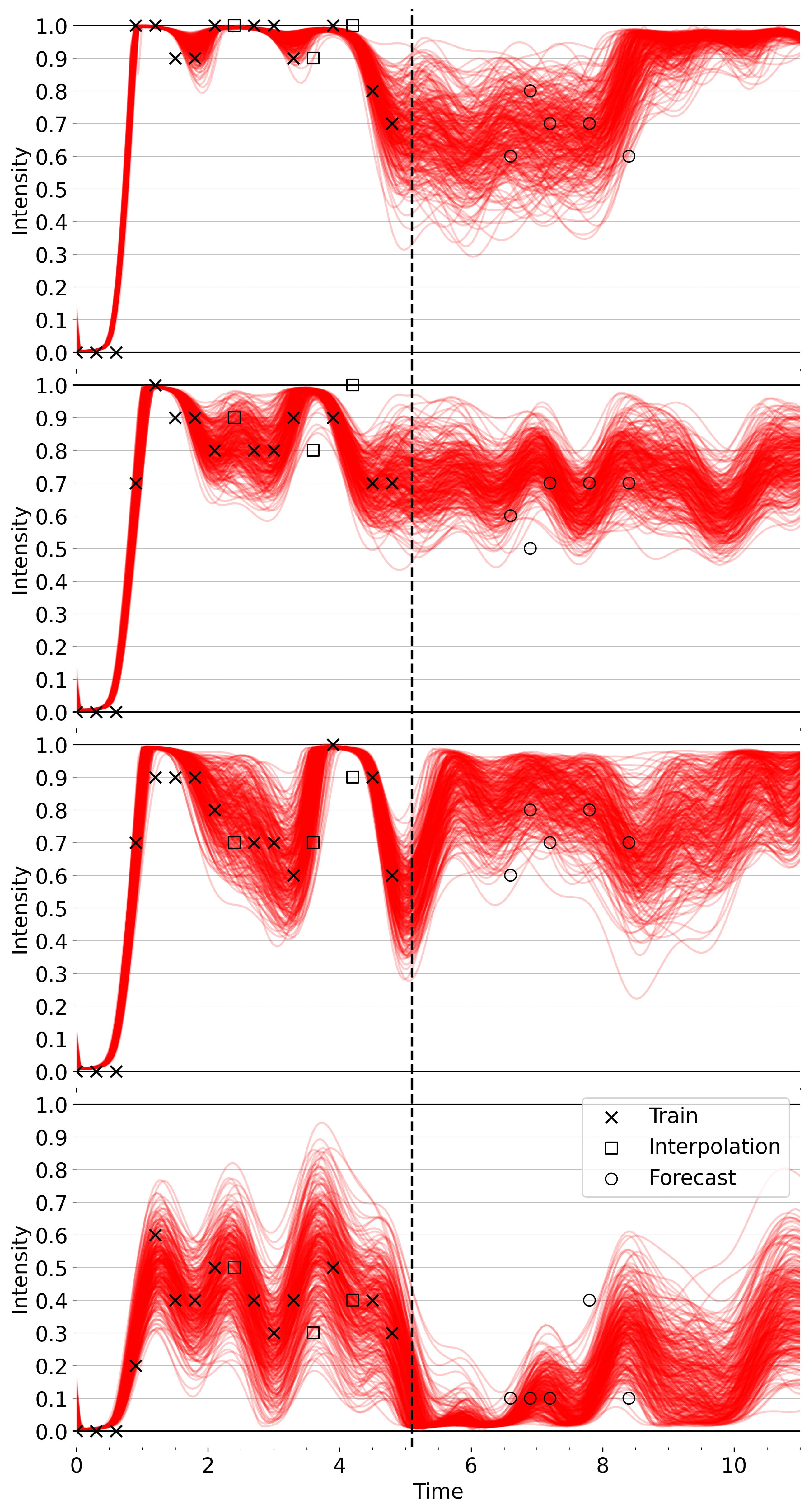}
    
    \caption{\textbf{Qualitative Comparison of Posterior for Specific Patient from \textcolor{colorVanillaStrong}{Vanilla} (top left), \textcolor{colorVanillaClipStrong}{Vanilla+Clip} (top right), \textcolor{colorWSPNoClipStrong}{WSP} (bottom left), and \textcolor{colorWSPStrong}{WSP+Clip} (bottom right).} Each row represents a different EMA survey item, listed in \cref{apx:real-data}. See discussion in \cref{apx:viz-qualitative}.}
    \label{fig:wsp-qualitative-17}
\end{figure*}

\begin{figure*}[p]
    \centering

    \includegraphics[width=0.38\textwidth]{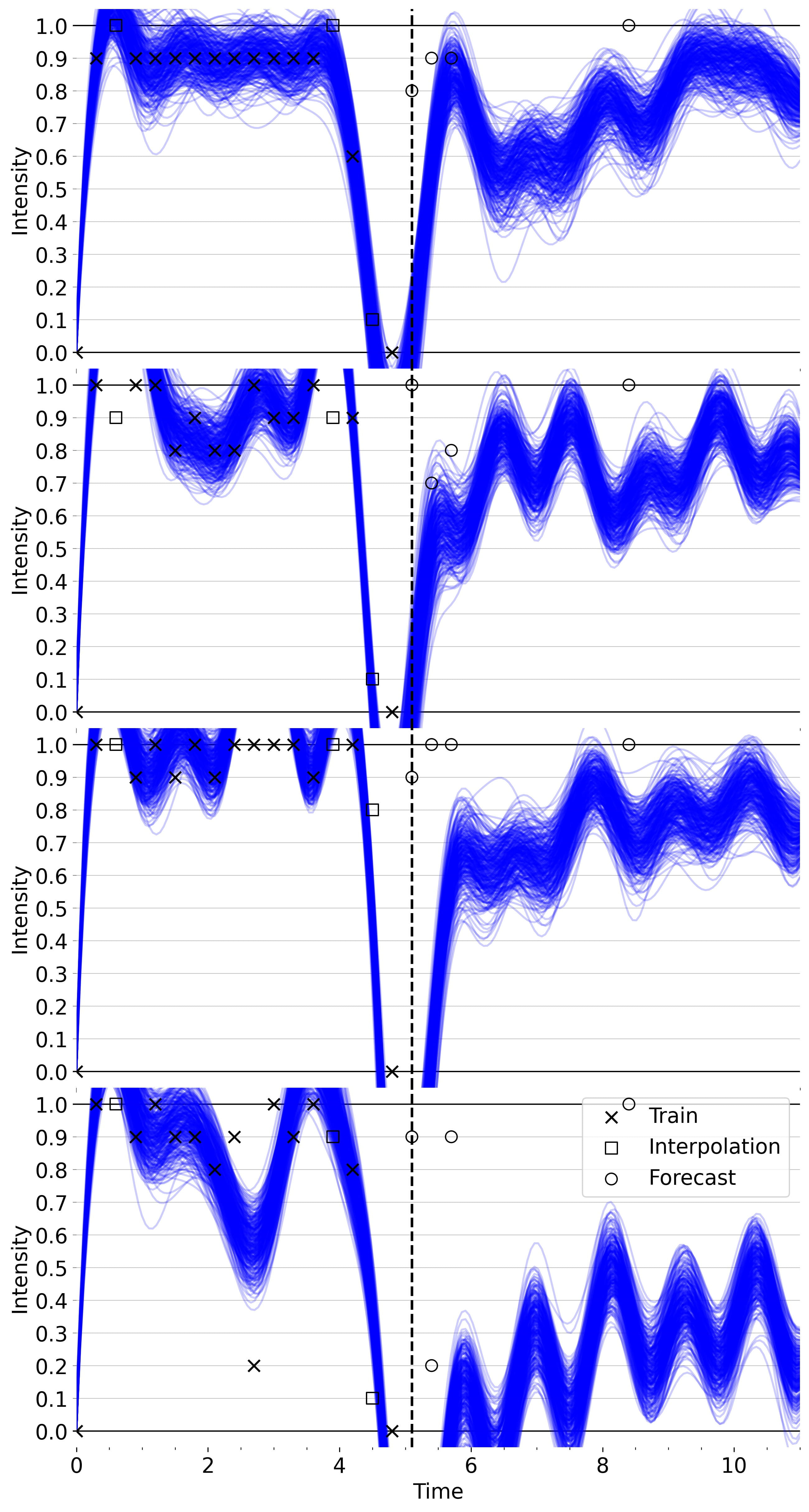}
    ~
    \includegraphics[width=0.38\textwidth]{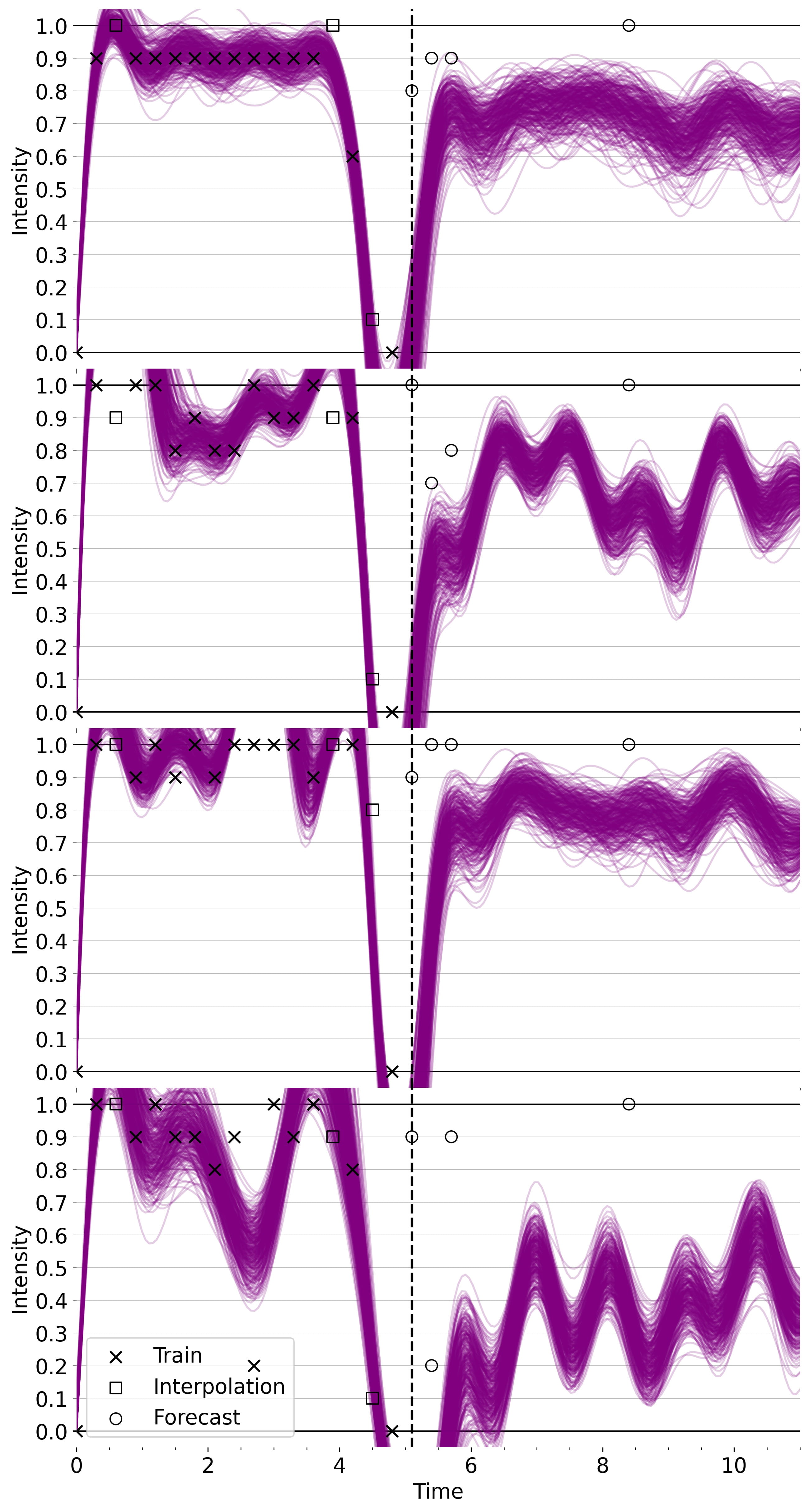}

    \includegraphics[width=0.38\textwidth]{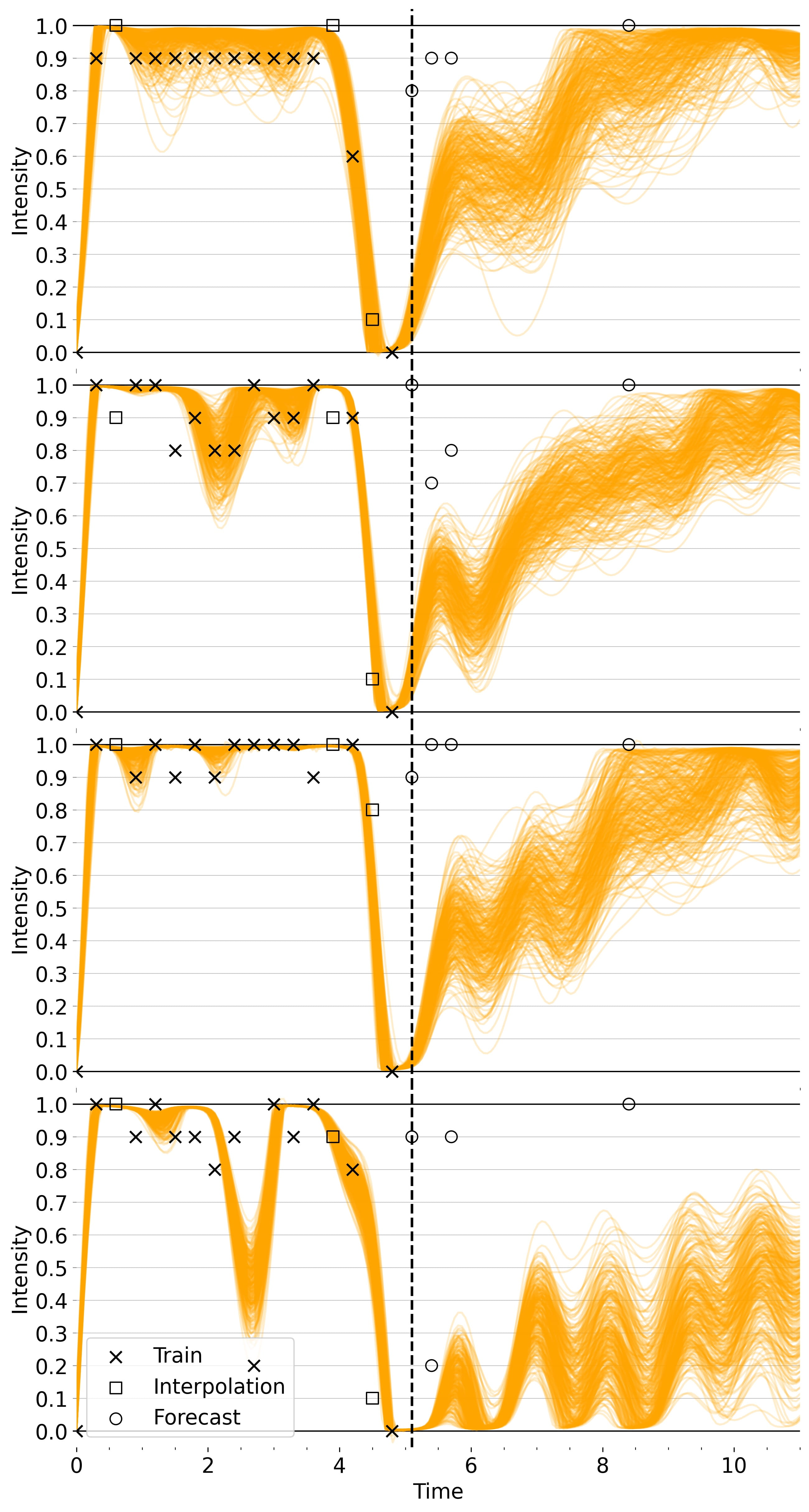}
    ~
    \includegraphics[width=0.38\textwidth]{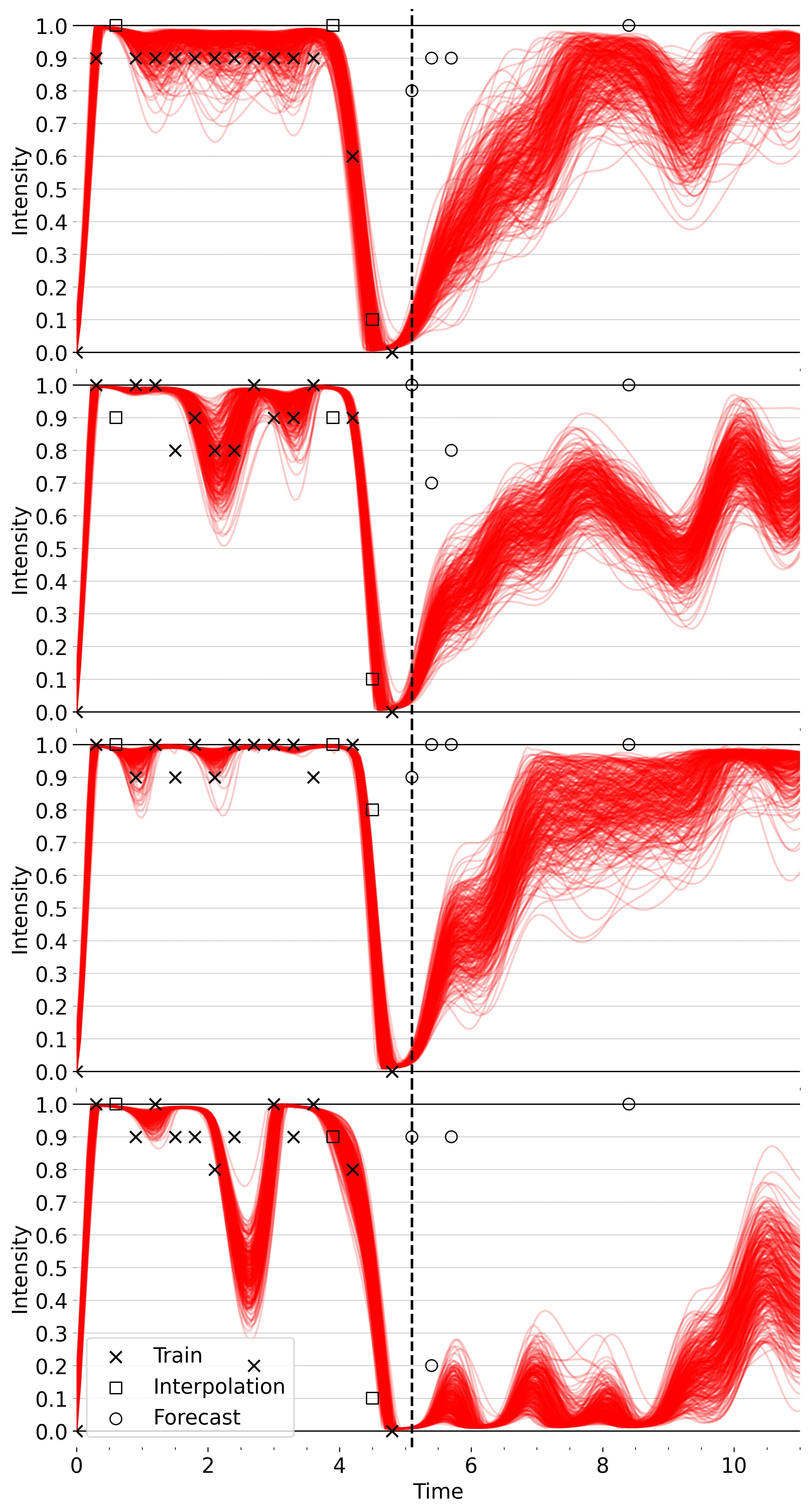}
    
    \caption{\textbf{Qualitative Comparison of Posterior for Specific Patient from \textcolor{colorVanillaStrong}{Vanilla} (top left), \textcolor{colorVanillaClipStrong}{Vanilla+Clip} (top right), \textcolor{colorWSPNoClipStrong}{WSP} (bottom left), and \textcolor{colorWSPStrong}{WSP+Clip} (bottom right).} Each row represents a different EMA survey item, listed in \cref{apx:real-data}. See discussion in \cref{apx:viz-qualitative}.}
    \label{fig:wsp-qualitative-88}
\end{figure*}

\begin{figure*}[p]
    \centering

    \includegraphics[width=0.38\textwidth]{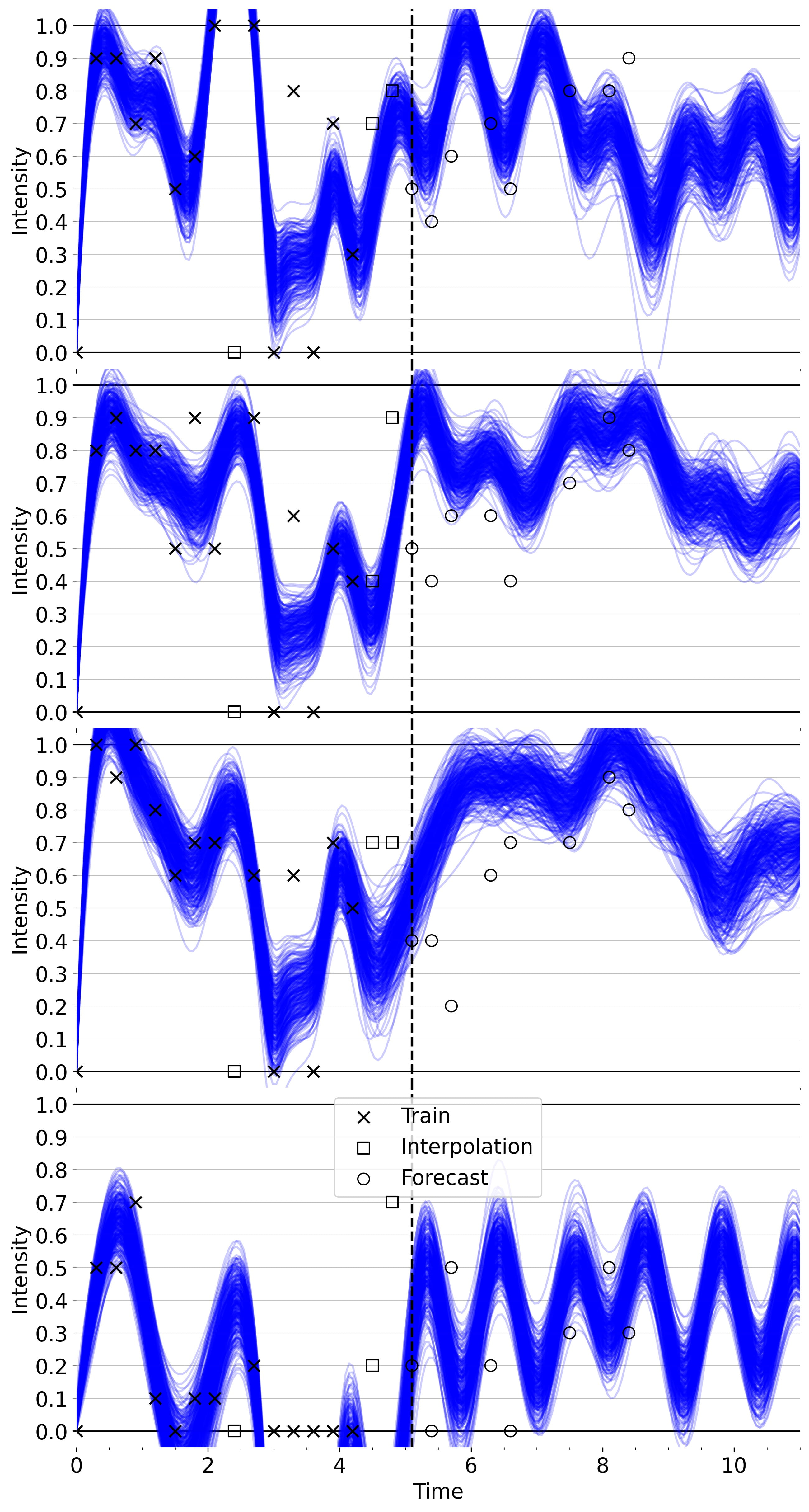}
    ~
    \includegraphics[width=0.38\textwidth]{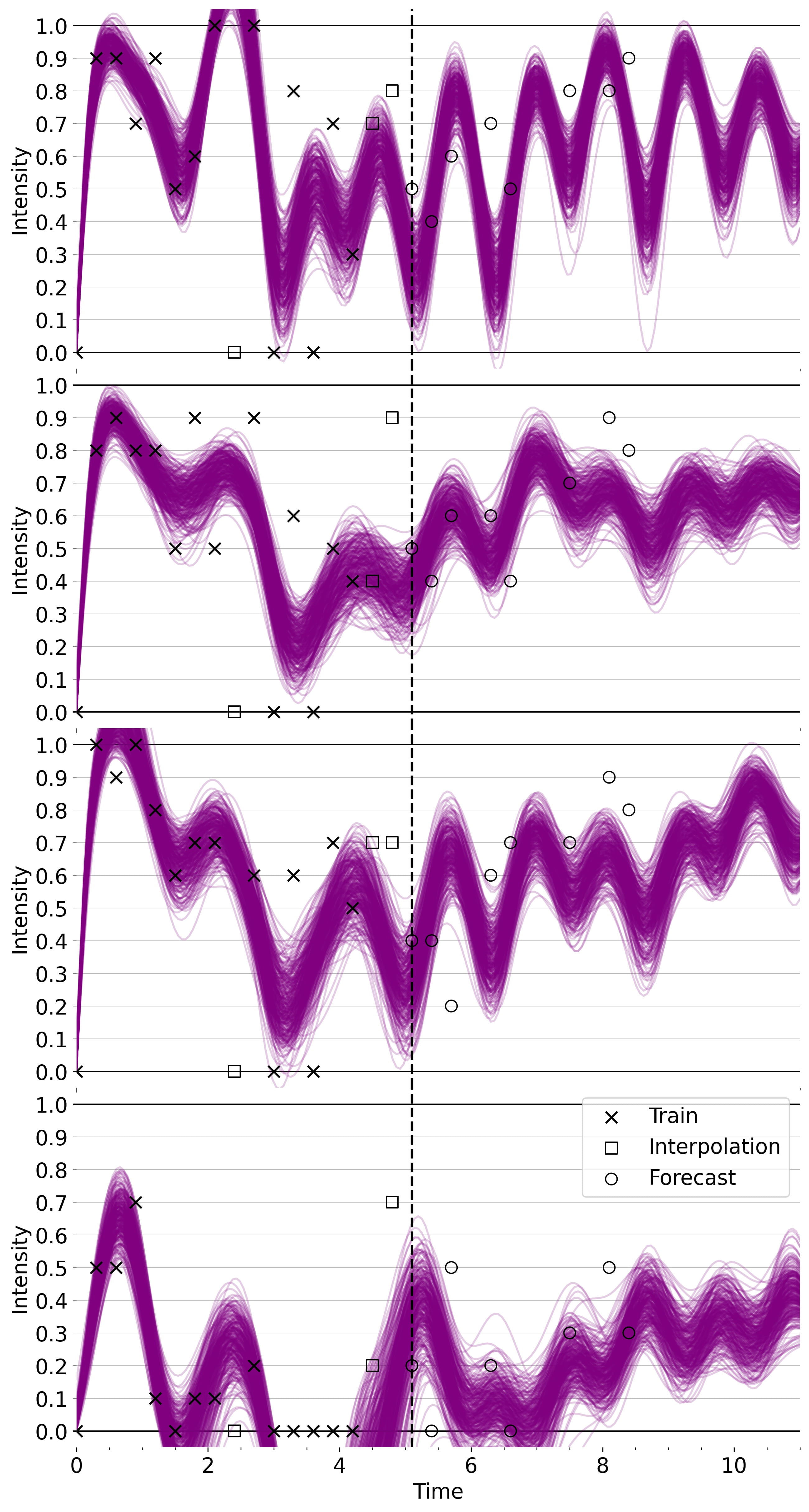}

    \includegraphics[width=0.38\textwidth]{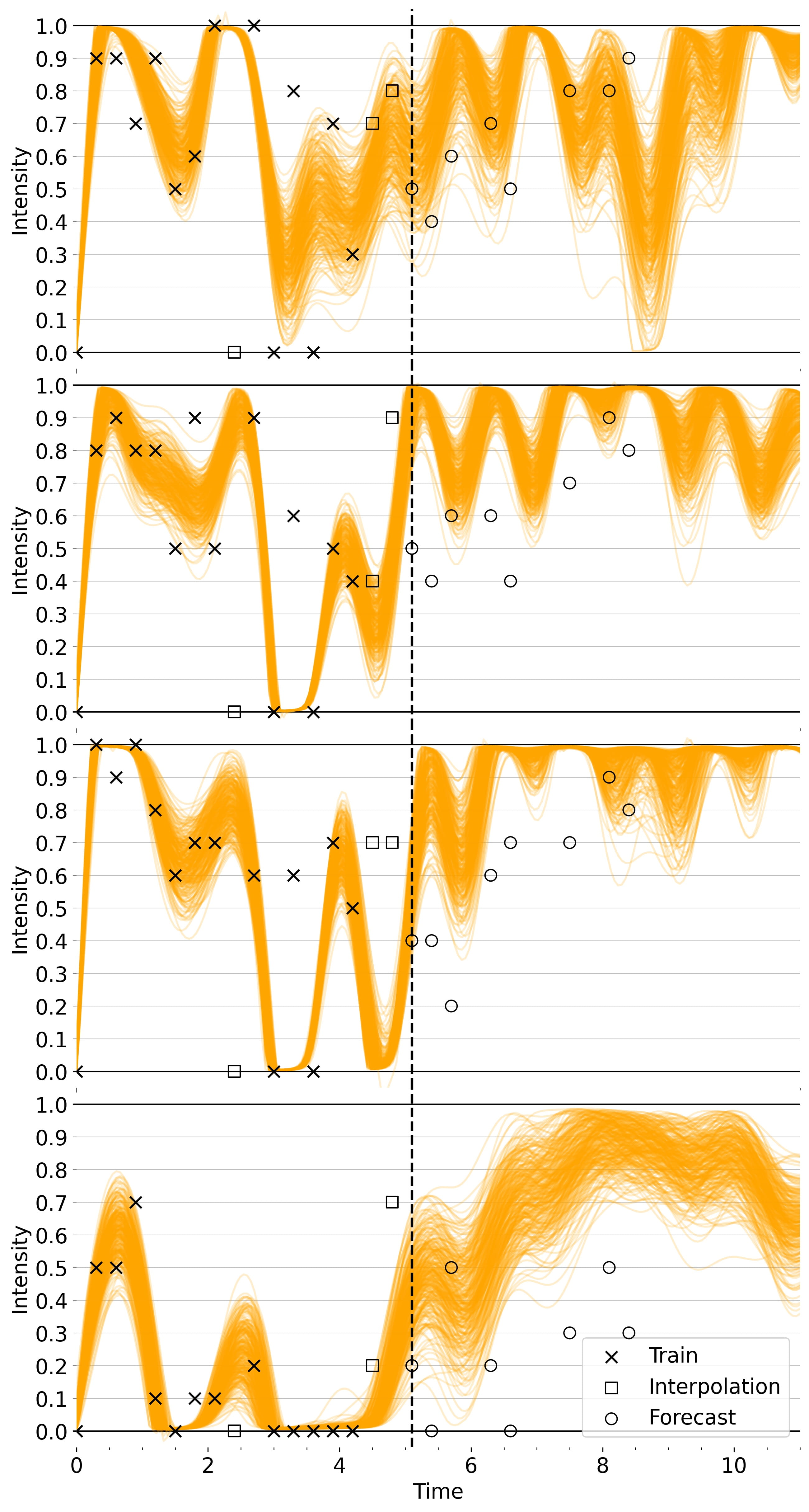}
    ~
    \includegraphics[width=0.38\textwidth]{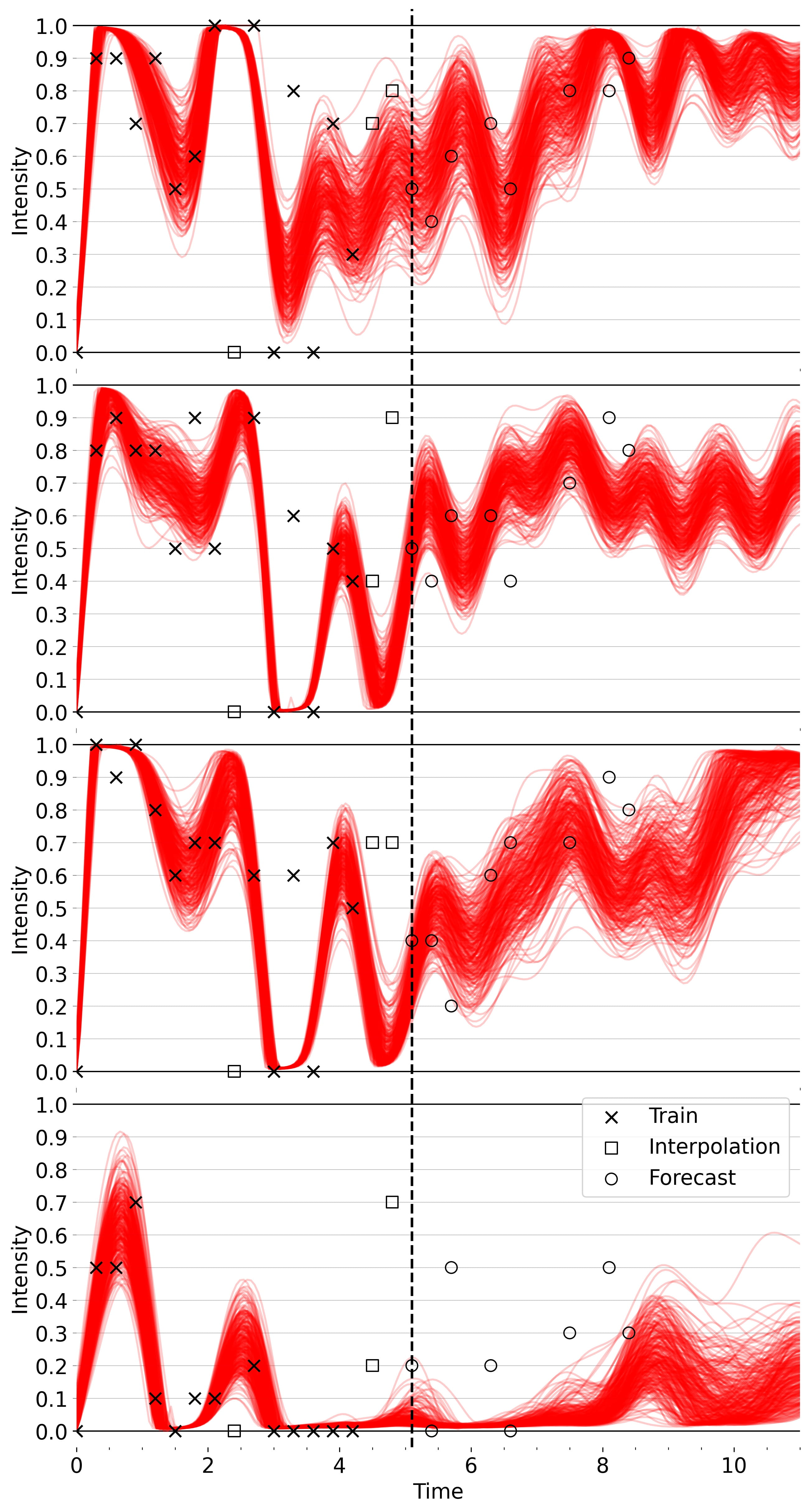}
    
    \caption{\textbf{Qualitative Comparison of Posterior for Specific Patient from \textcolor{colorVanillaStrong}{Vanilla} (top left), \textcolor{colorVanillaClipStrong}{Vanilla+Clip} (top right), \textcolor{colorWSPNoClipStrong}{WSP} (bottom left), and \textcolor{colorWSPStrong}{WSP+Clip} (bottom right).} Each row represents a different EMA survey item, listed in \cref{apx:real-data}. See discussion in \cref{apx:viz-qualitative}.}
    \label{fig:wsp-qualitative-23}
\end{figure*}

\FloatBarrier

\subsection{Visualizations of Latent SDE Prior Samples from the ``SMART'' Data}

\begin{figure*}[h]
    \centering

    \includegraphics[width=0.33\textwidth]{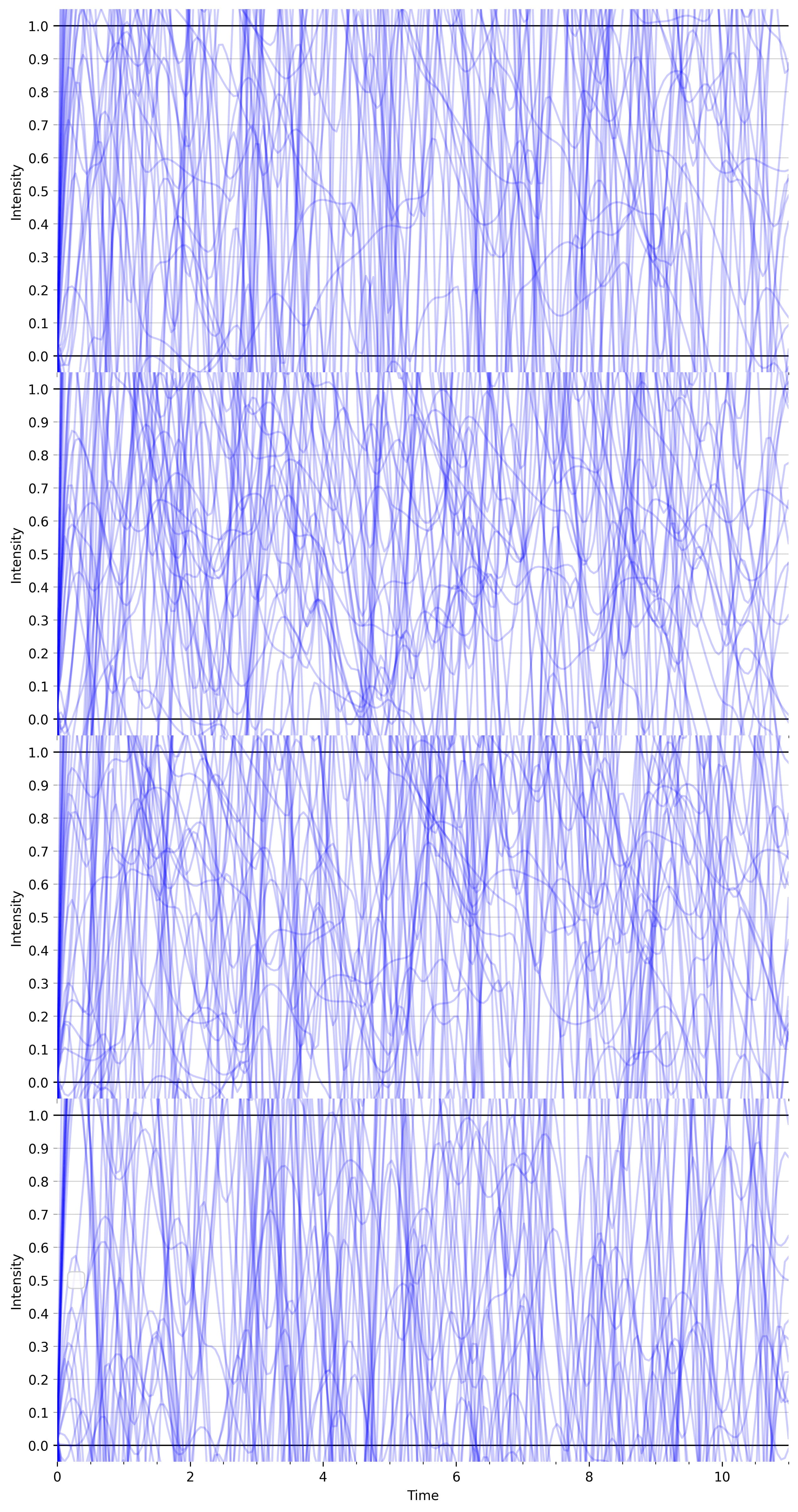}
    ~
    \includegraphics[width=0.33\textwidth]{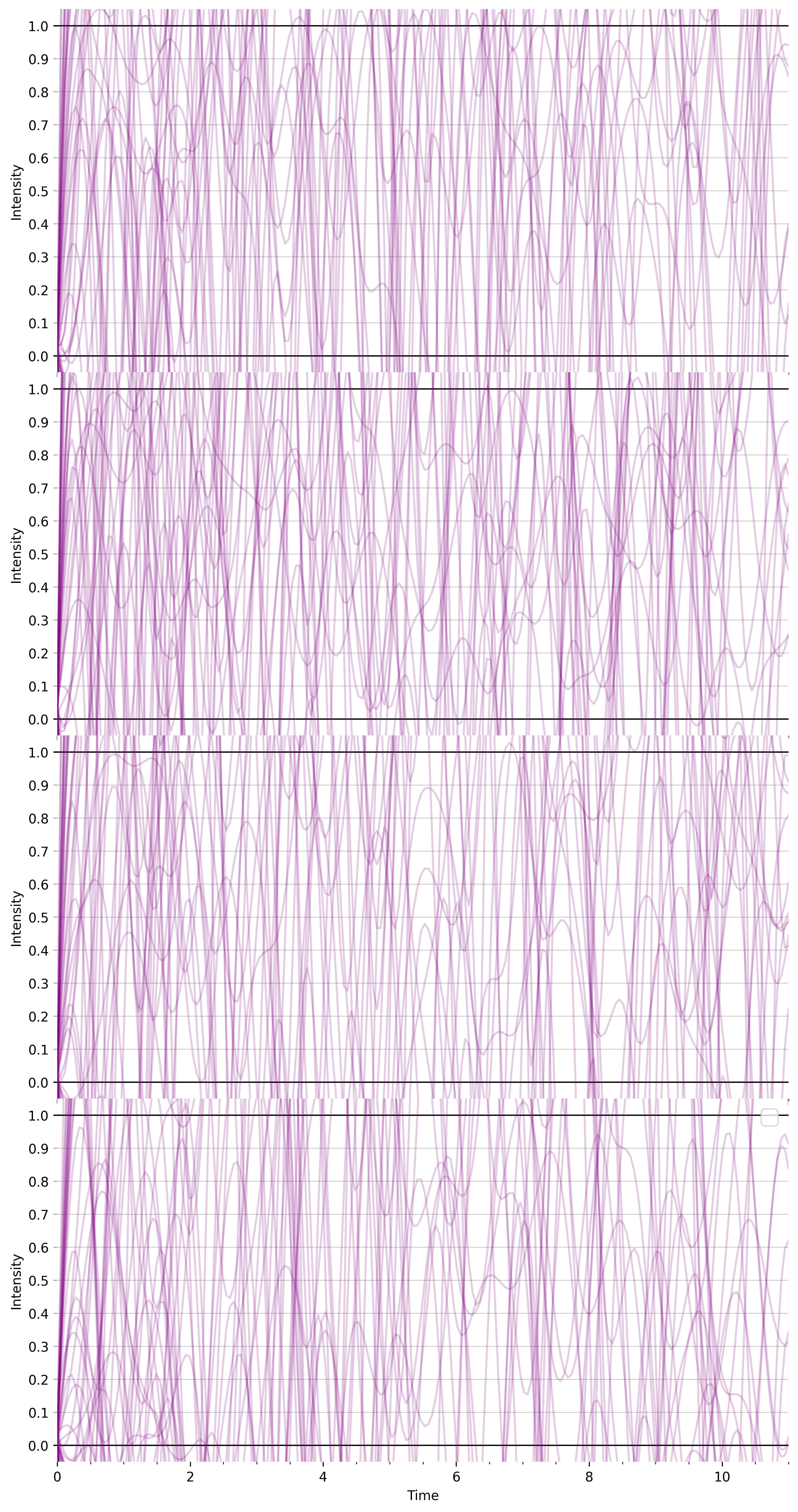}

    \includegraphics[width=0.33\textwidth]{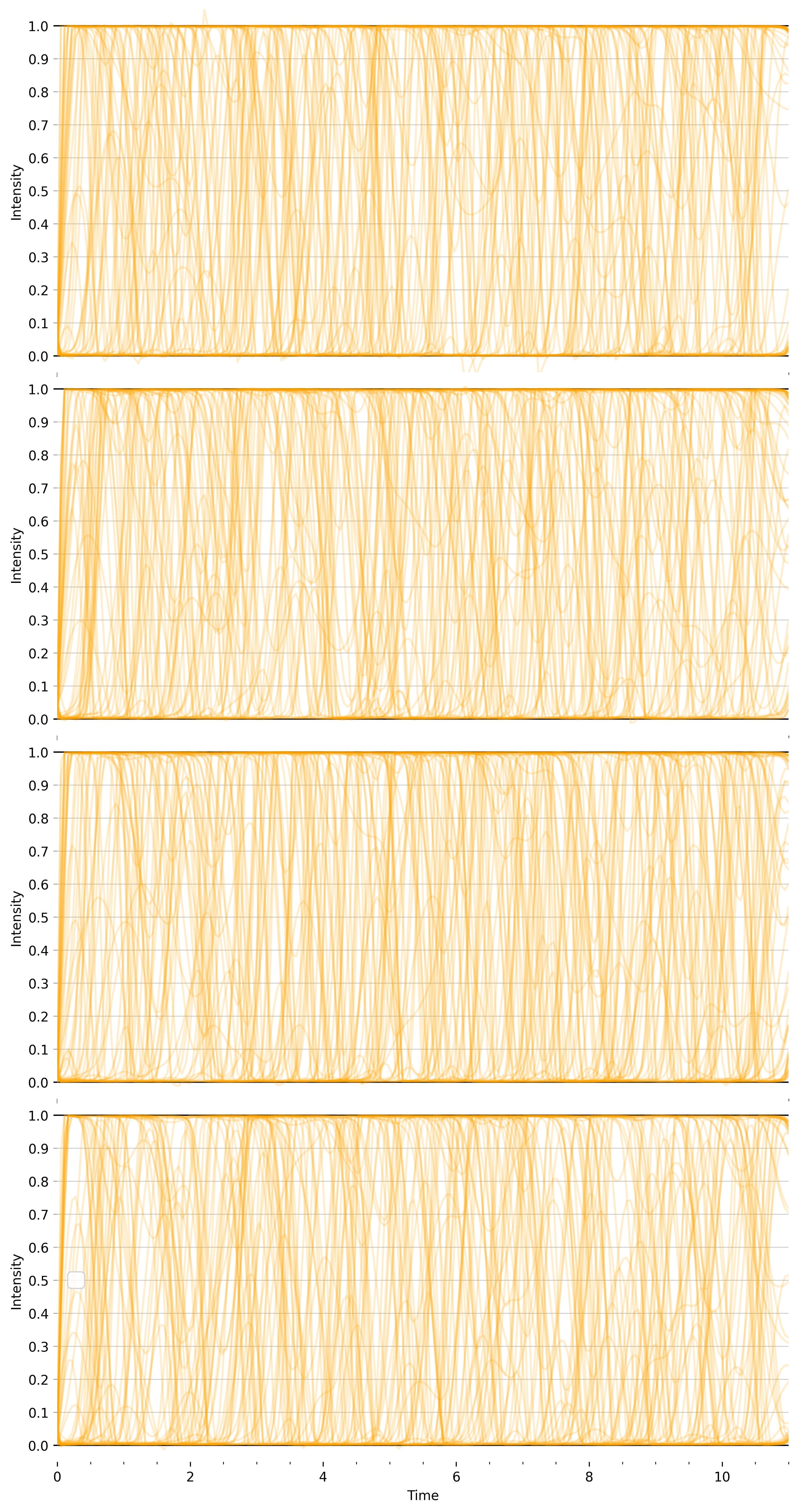}
    ~
    \includegraphics[width=0.33\textwidth]{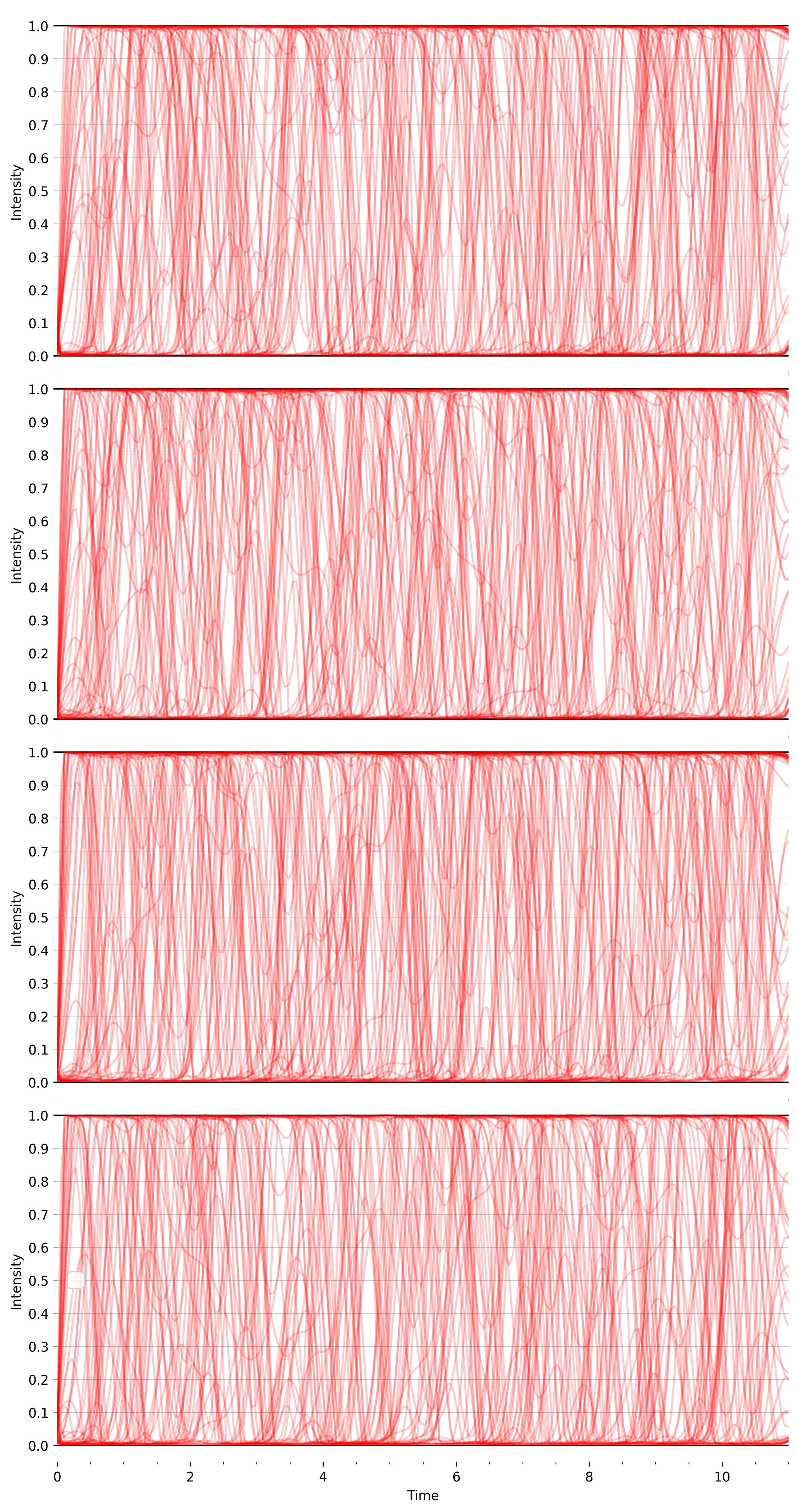}
    
    \caption{\textbf{Qualitative Comparison of Prior Samples from \textcolor{colorVanillaStrong}{Vanilla} (top left), \textcolor{colorVanillaClipStrong}{Vanilla+Clip} (top right), \textcolor{colorWSPNoClipStrong}{WSP} (bottom left), and \textcolor{colorWSPStrong}{WSP+Clip} (bottom right).} Each row represents a different EMA survey item, listed in \cref{apx:real-data}. WSP ensures prior samples remain viable whereas baselines do not.}
    \label{fig:wsp-prior-samples}
\end{figure*}

\end{document}